\documentclass[11pt]{article}

\usepackage[utf8]{inputenc} 
\usepackage[T1]{fontenc}    
\usepackage[margin=1in]{geometry}

\usepackage[hidelinks]{hyperref}       
\usepackage{url}            
\usepackage{booktabs}       
\usepackage{amsfonts}       
\usepackage{nicefrac}       
\usepackage{microtype}      
\usepackage{xcolor}         
\usepackage[shortlabels]{enumitem}
\usepackage{float}
\usepackage{wrapfig}
\usepackage{listings}

\usepackage[numbers]{natbib}

\usepackage{amsmath}
\usepackage{amssymb}
\usepackage{amsthm}
\usepackage{mathtools}
\usepackage{graphicx}
\usepackage{subcaption}
\usepackage{tikz, bbm}

\usepackage{microtype}
\usepackage{graphicx}
\usepackage{dirtytalk}
\usepackage{subfiles}
\usepackage{natbib}
\usepackage{enumitem}
\usepackage{xspace}
\usepackage{forloop}
\usepackage{multirow}
\usepackage{algorithm,algorithmic,refcount}
\usepackage{amsfonts}
\usepackage{comment}

\usepackage{algorithmic}
\newtheorem{theorem}{Theorem}[section]

\newtheorem{lemma}[theorem]{Lemma}
\newtheorem{corollary}[theorem]{Corollary}
\newtheorem{assumption}[theorem]{Assumption}

\newtheorem{remark}[theorem]{Remark}

\newtheorem{proposition}[theorem]{Proposition}

\DeclareMathOperator{\w}{\mathbf{w}}

\DeclareMathOperator{\E}{\mathbb{E}}

\let\Pr\relax\DeclareMathOperator{\Pr}{\mathbb{P}}

\DeclareMathOperator*{\argmin}{arg\,min}

\DeclareMathOperator{\erf}{erf}
\DeclareMathOperator{\sign}{sign}

\DeclareMathOperator{\diag}{diag}
\DeclareMathOperator{\poly}{poly}

\newcommand*\circled[1]{\tikz[baseline=(char.base)]{
    \node[shape=circle,draw,inner sep=1pt] (char) {#1};}}

\allowdisplaybreaks



\title{Provable Multi-Task Representation Learning by Two-Layer ReLU Neural Networks}

\author{Liam Collins\thanks{Chandra Family Department of Electrical and Computer Engineering, 
The University of Texas at Austin, Austin, TX,  USA.\qquad\{liamc@utexas.edu, mokhtari@austin.utexas.edu, sanjay.shakkottai@utexas.edu\}.}, \quad Hamed Hassani\thanks{Department of
Electrical and Systems Engineering, University of Pennsylvania, Philadelphia, PA, USA.\qquad\qquad\{hassani@seas.upenn.edu\}} , \quad Mahdi Soltanolkotabi\thanks{Ming Hsieh Department of
Electrical and Computer Engineering, University of Southern California, Los Angeles, CA, USA.\qquad\{soltanol@usc.edu\}},\\ Aryan Mokhtari$^*$,\quad Sanjay Shakkottai$^*$}

\date{}

\usepackage{setspace}

\setlength{\parindent}{0em}
\setlength{\parskip}{1em}

\begin{document}

\maketitle







\begin{abstract}
An increasingly popular machine learning paradigm is to pretrain a neural network (NN) on many tasks offline, then adapt it to downstream tasks, often by re-training only the last linear layer of the network. This approach yields strong downstream performance in a variety of contexts, demonstrating that multitask pretraining 
 leads to effective feature learning.
Although several recent theoretical studies have shown that 
shallow NNs learn meaningful features when  either (i) they are trained on a {\em single} task or (ii) they are {\em linear},
very little is known about the closer-to-practice case of {\em nonlinear} NNs trained on {\em multiple} tasks. 
In this work, we present the first results proving that feature learning occurs during training with a nonlinear model on multiple tasks. Our key insight is that multi-task pretraining induces a pseudo-contrastive loss that favors representations that align points that typically have the same label across tasks. Using this observation, we show that when the tasks are binary classification tasks with labels depending on the projection of the data onto an $r$-dimensional subspace within the $d\gg r$-dimensional input space, a simple gradient-based multitask learning algorithm on a two-layer ReLU NN recovers this projection, allowing for generalization to downstream tasks with sample and neuron complexity independent of $d$.
In contrast, we show that with high probability over the draw of a single task, training on this single task cannot guarantee to learn all $r$ ground-truth features.
\end{abstract}

\newpage








\section{Introduction} 
\label{sec:intro}


Recent empirical results have demonstrated huge successes in pretraining large neural networks (NNs) on many tasks with gradient-based algorithms \citep{
crawshaw2020multi,zhang2021survey,
wang2023multitask}.
These works suggest that the quality of the pretrained representation improves with the number of pretraining tasks, yet this phenomenon remains not well understood from a theoretical standpoint. 
Specifically, the natural questions of {\em why} nonlinear NNs learn effective feature representations when pretrained on multiple tasks with gradient-based methods and {\em how} the number of pretraining tasks affect the downstream performance of these representations remain largely unanswered.


Significant progress has been made in theoretically understanding the dynamics of NNs trained with gradient-based methods in recent years, especially in regards to proving that shallow NNs can learn meaningful features when trained with gradient descent and its variants \cite{damian2022neural,shi2022theoretical,abbe2020poly,abbe2022merged,allen2019learning,bai2019beyond,li2020learning,daniely2020learning,barak2022hidden,telgarsky2022feature,akiyama2022excess,zhou2021local}. 
However, these results are  limited to single-task settings, so they cannot explain the improvements in model performance seen by pretraining on many tasks. While a few studies show the representation learning benefits of multi-task pretraining with gradient-based algorithms \cite{argyriou2006multi,thekumparampil2021sample,collins2022maml,collins2022fedavg,saunshi2021representation,sun2021towards,chua2021fine,bullins2019generalize,chen2022active}, these analyses study only linear models; it is not clear whether they can generalize to even simple non-linear NNs. 

In this work, we aim to bridge this gap by 
analyzing the training dynamics of  a two-layer ReLU network pretrained with a generic gradient-based multi-task learning algorithm on  many binary classification tasks.  Following the aforementioned line of work, 
we suppose the existence of a ground-truth low-dimensional subspace that for all tasks preserves all information in the input data relevant to its label. 
We ask whether a variant of gradient descent applied to this  multi-task setting can learn a representation that projects the input data onto the ground-truth subspace. Learning such a representation entails successful pretraining, since it reduces the complexity of solving a downstream task to that of solving a classification problem in the low-dimensional space,  
rather than the potentially very high-dimensional input data  space.
Figure \ref{fig:multi} shows that gradient-based  multi-task learning with a two-layer ReLU NN with first-layer parameters (the representation) shared among all tasks and last-layer weights (the head) learned uniquely for each task indeed recovers the ground-truth subspace with error diminishing with the number of tasks. We theoretically justify this observation, providing the first known proofs of multi-task feature learning with a nonlinear model along with a new explanation for why multi-tasking aids feature learning.
 Our theoretical contributions are summarized below, and verified numerically  in Appendix \ref{sec:sims}.
\begin{itemize}[leftmargin=3mm]

\item \textbf{Proof of multi-task representation learning with two-layer ReLU network.} We consider binary classification tasks whose labels depend on only $r$ features of the input, where $r$ is much smaller than the ambient dimension $d$, and a large class of task distributions that includes, e.g., a uniform distribution over sparse parity tasks.
We prove that multi-task pretraining with a gradient-based learning algorithm on $T$ tasks drawn from such a distribution 
leads the first-layer ReLU weights to approximately project onto the ground-truth $r$-dimensional feature space, with error diminishing with $T$ and the number of samples per task $n$ as roughly $2^r\frac{\sqrt{d}}{\sqrt{T}}(1 + \frac{\sqrt{d}}{\sqrt{n}})$ (see Proposition \ref{prop:1} and Theorem \ref{thm:1}).  The key to this result is showing   that updating task-specific heads prior to the representation induces a {\em pseudo-contrastive loss function of the representation}, which encourages learning the ground-truth features  to align points likely to share a label on a randomly drawn task (see Section \ref{sec:sketch}).



\item \textbf{Generalization guarantees.} We show that we can add a random ReLU layer on top of the pretrained representation, then train a linear layer on top of this random layer with finite samples, to solve {\em any} downstream task with binary labels that are a function of the $r$ important features. 
Crucially, we prove that  the sample and neuron complexity of solving the downstream task are independent of the ambient dimension $d$ (see Theorem \ref{thm:downstream}).

\item \textbf{Negative results.} We confirm the necessity of multi-task pretraining by proving that using a random features model (no pretraining) or pretraining on only a {\em single}, randomly-selected task with high probability require neuron or sample complexity scaling polynomially in $d$ for solving a downstream task (see Theorems \ref{thm:lower} and \ref{thm:single}).


\end{itemize}

\begin{figure}
 \centering
  \includegraphics[width=0.56\linewidth]{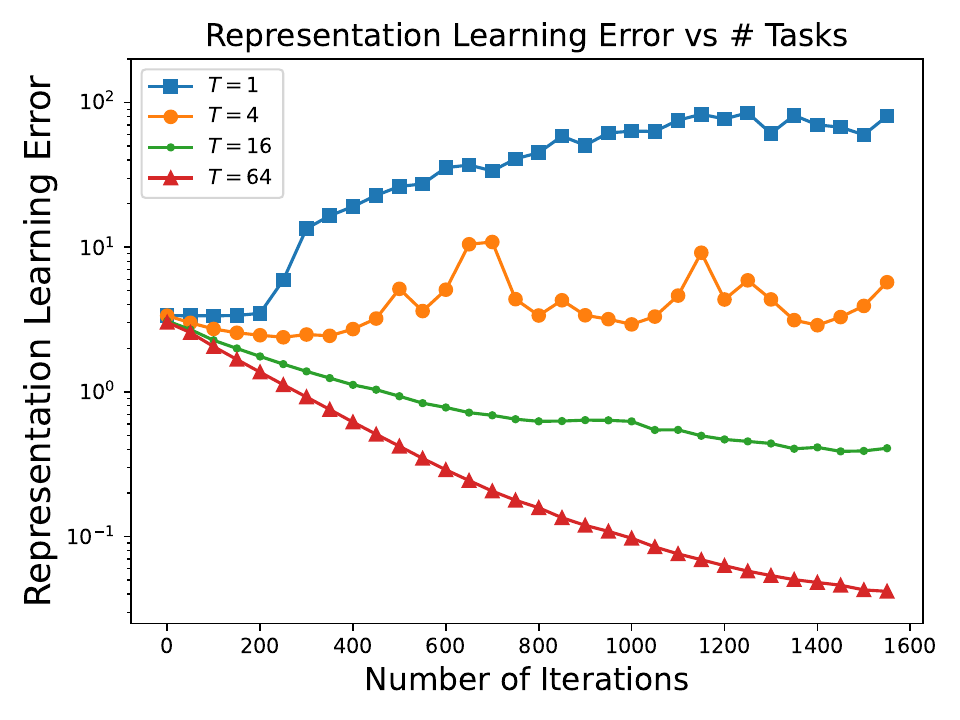}
 \caption{\textbf{Representation learning error vs  training iterations with varying numbers of tasks $T$.} 
 Here we sample tasks from the uniform distribution over sparse parity tasks on $r$ binary coordinates determined by $\sign(\mathbf{Mx})$ for some row-orthogonal matrix $\mathbf{M}$ and $d$-dimensional input $\mathbf{x}$. Here $d=32$, $r=3$,
 and the learning model  is a two-layer, $m$-neuron ReLU network with first-layer weights $\mathbf{W}\!\in\! \mathbb{R}^{m\times d}$ (the representation). All cases use the same total number of training samples, i.e. the number of samples/task is inversely proportional to the number of training tasks $T$, meaning all cases use the same total number of samples across tasks. Still, as $T$ increases, the row space of  $\mathbf{W}$ approaches that of $\mathbf{M}$ (smaller representation learning error).  Please see Appendix \ref{sec:sims} for  details.}\label{fig:multi}
\end{figure}

\textbf{Notations.} We uppercase boldface to denote matrices, lowercase boldface to denote vectors, and standard typeface to denote scalars.
We employ $\text{Unif}(\mathcal{S})$ to denote the uniform distribution over the set $\mathcal{S}$.
We  denote the zero vector in $\mathbb{R}^d$ as $\mathbf{0}_d$, the  identity matrix in $\mathbb{R}^{d\times d}$ as $\mathbf{I}_d$, the standard multivariate normal distribution over $\mathbb{R}^d$ as $\mathcal{N}(\mathbf{0}_d,\mathbf{I}_d)$, and the Rademacher hypercube in $\mathbb{R}^d$ as $\mathcal{H}^d \coloneqq \{-1,1\}^d$. We denote the space of $r \times d$ matrices with orthonormal rows as $\mathbb{O}^{r\times d}$, and use $\chi\{A\}$ as the indicator function for the event $A$. We denote the set $\{1,\dots,r\}$ as $[r]$. We use ${\Omega}(\cdot),{\Theta}(\cdot)$ and ${O}(\cdot)$  in the standard fashion, and   $\tilde{\Omega}(\cdot),\tilde{\Theta}(\cdot)$ and $\tilde{O}(\cdot)$ to denote scalings up to logarithmic factors.


\subsection{Related Work} \label{sec:related}




\textbf{Single-task learning with neural networks.} 
A plethora of works have studied the behavior of gradient-based algorithms for optimizing NNs on single tasks in recent years. 
Many of these studies consider the neural tangent kernel (NTK) regime 
\citep{jacot2018neural,arora2019exact,du2019gradient,allen2019convergence,oymak2020toward,ji2019polylogarithmic,li2018learning,du2020fewshot,zou2018stochastic,lee2019wide,chizat2019lazy},
in which a large initialization and small step size mean that early layer model weights barely change during training, so the algorithm dynamics reduce to those of linear regression on fixed features. 
We are interested in the feature learning regime of training neural networks, wherein the representation weights change significantly and the dynamics are nonlinear. 

Numerous works have studied  feature learning in NNs, but the vast majority consider optimizing only a single task from a particular class of functions \cite{allen2019learning,bai2019beyond,li2020learning,daniely2020learning,barak2022hidden,telgarsky2022feature,akiyama2022excess,zhou2021local,abbe2022merged,ba2022high,mousavi2022neural,shi2022theoretical,damian2022neural,damian2023smoothing,wang2023learning,abbe2023sgd,dandi2023two}. 
Among studies most similar to ours, \citet{abbe2022merged,abbe2023sgd,wang2023learning,dandi2023two} showed that gradient-based algorithms can learn  hierarchical features when training on single polynomial tasks.
\citet{damian2022neural} proved that a gradient-based method on a single $r$-index polynomial regression task with two-layer ReLU network can learn all $r$ relevant indices as long as this single task satisfies a Hessian lower bound assumption, and \citet{nichani2023provable} extended this line of  work to three-layer networks. 
\citet{shi2022theoretical} showed that two-layer ReLU networks with activation noise can learn functions of the sum of $r$ inputs. 
Additional works consider single-task feature learning  in the mean-field regime with infinitely-wide networks \cite{chizat2018global,mei2018mean,sirignano2020mean,nguyen2019mean}.

\textbf{Multitask feature learning.} Several works have studied whether  multitask learning algorithms recover expressive low-dimensional data representations shared across tasks, but only consider linear models \cite{argyriou2006multi,thekumparampil2021sample,collins2022maml,collins2021exploiting,collins2022fedavg,saunshi2021representation,sun2021towards,chua2021fine,bullins2019generalize,chen2022active,yuksel2023model}.
\citet{kao2021maml}  noticed a similar phenomenon as this work in that adapting task-specific heads induces a contrastive loss, but in the context of a particular meta-learning algorithm, and they did not provide feature learning results.
 Further studies including \cite{maurer2016benefit, tripuraneni2020provable, du2020fewshot, tripuraneni2020theory, xu2021representation} have provided statistical bounds on the downstream task loss for multitask-pretrained representations. However, these representations are learned
 by exactly solving an empirical risk minimization problem on the pretraining tasks, not by executing a gradient-based algorithm.









\section{Formulation} \label{sec:formulation}


In this section, we formally define the multi-task learning problem and the algorithms analyzed in Section \ref{sec:results}. Our motivation is drawn from classification problems where the input data for all tasks share a common representation, i.e. a small set of features that determine their labels. However, the mapping from these features to labels varies across tasks. Ultimately, the goal of the multi-task learner is to leverage the large set of pretraining tasks to learn a representation that captures the small set of label-relevant features, thereby enabling strong performance on downstream tasks.

\subsection{Pretraining tasks and data generating model} 

We consider pretraining on a set of $T$ binary classification tasks, each drawn independently from a distribution $\mathcal{T}$ over tasks. All these tasks share a common characteristic: their labeling function depends solely on a projection of the input features onto a low-dimensional subspace. Specifically, each task $i$ consists of a distribution $\mathcal{D}_i$ on $\mathcal{X}\times \mathcal{Y}$, where the input space is $\mathcal{X}=\mathbb{R}^d$ and the label space is $\mathcal{Y}= \{-1,1\}$.
Samples are drawn from $\mathcal{D}_i$ by first selecting a Gaussian feature vector $\mathbf{x} \in \mathcal{X}$, then computing its label $f_i(\mathbf{x}) \in \mathcal{Y}$ as follows:
\begin{equation} \label{eq1}
    \mathbf{x}\sim \mathcal{N}(\mathbf{0}_d,\mathbf{I}_d);  \;\;  f_i(\mathbf{x}) = {g}_i(\sign(\mathbf{M}\mathbf{x})) \in \{-1,1\}. 
\end{equation}
Here, $\mathbf{M} \coloneqq [\mathbf{m}_1,\dots, \mathbf{m}_r]^\top \in \mathbb{R}^{r \times d}$ is a matrix with orthonormal rows that captures the $r$ label-relevant features $(\sign(\mathbf{m}_1^\top\mathbf{x}),\dots, \sign(\mathbf{m}_r^\top\mathbf{x}))$ in $\mathbf{x}$, and  ${g}_i: \mathcal{H}^{r} \rightarrow \{-1,1\}$ is the ground-truth link function for task $i$ that maps vertices on the $r$-dimensional Rademacher hypercube to binary labels.  To model shared information across the tasks, we assume that $r\ll d$. Thus, the complexity of solving a new task can be drastically reduced by learning an appropriate low-dimensional projection onto the row space of $\mathbf{M}$. 
The question we ask here is whether gradient-based multi-task pretraining can  efficiently recover the $r$ label-relevant features expressed by $\mathbf{M}$.

This setting is similar to the sparse coding model studied by \citet{shi2022theoretical}, except  we assume the input data is continuous in $\mathbb{R}^d$, while  \citet{shi2022theoretical} assume the input data is on the hypercube $\mathcal{H}^d$. By assuming that the labelling function is a function of ``sign'' of the $r$ ground-truth features, we make our model more similar to the one in \cite{shi2022theoretical}, as they assume the label for each task is a function of the first $r$ coordinates of the integer input data. 




Critically, the set of label-relevant coordinates of the input data are shared among all tasks, while the link functions $g_i$ mapping from these coordinates to labels are specific to each task. Thus, the goal of the multi-task learner is to recover the shared label-relevant features, so that it may solve a downstream task with complexity scaling only with the number $r$ of label-relevant features, rather than the much larger ambient dimension of the data $d$.
This model draws inspiration from classification tasks in which labels are functions of the presence (or lack thereof) of a small number of features in the data. For example, whether or not a brain MRI reveals cancerous tissue depends on the presence of tumor-shaped structures in the image, indicated by a small number of features relative to the MRI dimension. 




More formally, the generative model in \eqref{eq1} implies that, for any task, the sample labels are a function of the projection of the data onto the $r$ ground-truth features in $\mathbf{M}$. We use $\mathcal{T}$ to denote the distribution over link functions $g_i:\mathcal{H}^r\rightarrow \{-1,1\}$, and $\mathcal{T}(\mathbf{M})$ to denote the distribution over functions mapping from $\mathbb{R}^d\rightarrow \{-1,1\}$, i.e. the distribution over $f_i$, where $f_i(\mathbf{x})= g_i(\sign(\mathbf{Mx}))$ and $g_i \sim \mathcal{T}$. 

However, in order to recover {\em all} the $r$ ground-truth features,  it is not sufficient that the labels simply depend on the $r$ ground-truth features -- they must depend on {\em all} of them in aggregate across tasks. For example, if the labels for all tasks can be written as functions of only the projection of the inputs onto the first $r-1$ rows of $\mathbf{M}$, there is no hope to recover the $r$-th row of $\mathbf{M}$. Thus, the tasks must be ``diverse'' in the sense that in aggregate they depend on {\em all} ground-truth features. 

To formalize this idea, we 
make the following assumption on the distribution of task link functions $\mathcal{T}$. 
Our condition entails 
that for any pair of points with different sign patterns on their $r$ label-relevant features, it is equally likely for them to have the same label as it is for them to have different labels on a task link function drawn from $\mathcal{T}$. 
\begin{assumption}\label{assump:diversity}
For any two points $\mathbf{z},\mathbf{z}'\in \mathcal{H}^r$ such that $\mathbf{z}\neq \mathbf{z}'$,
the probability that the labels of $\mathbf{z}$ and $\mathbf{z}'$ are the same for a task link function drawn from ${\mathcal{T}}$  satisfies: 
\begin{align}
       \Pr_{i \sim {\mathcal{T}}} \left[ {g}_i(\mathbf{z}) = {g}_i( \mathbf{z}')  \right]& 
    = \tfrac{1}{2}. \label{onehalff}
\end{align}
\end{assumption}
Assumption \ref{assump:diversity} is necessary to  ensure the task link functions depend equally on all inputs. To see this, suppose that all pairs of inputs $\mathbf{z},\mathbf{z}'$ with identical first $r-1$ coordinates but differing $r$-th coordinates had the same labels, i.e. $\Pr_{i \sim {\mathcal{T}}} \left[ {g}_i(\mathbf{z}) = {g}_i( \mathbf{z}')  \right]
    = 1. $ Then, all link functions in the support of $\mathcal{T}$ would in fact only depend on the first $r-1$ input coordinates, rather than all $r$ inputs. So, we need $\Pr_{i } \left[ {g}_i(\mathbf{z}) = {g}_i( \mathbf{z}')  \right] <1$ for some dependence on all inputs. We make a stronger  assumption of $\frac{1}{2}$ in the RHS of \eqref{onehalff} that  ensures perfectly  balanced  dependence  across all inputs. We note that our results do not strictly require perfect balance, rather it is useful for ease of exposition\footnote{Our results hold for {\em finitely-many} tasks drawn for training, so the empirical distribution of tasks does not assign equal importance to each of the $r$ input features, in the sense that Assumption \ref{assump:diversity} does not hold exactly on the empirical task distribution. This implies that our results can extend to cases in which the population distribution of tasks $\mathcal{T}$ does not satisfy Assumption \ref{assump:diversity} exactly, i.e. we can tolerate $\Pr_{i \sim {\mathcal{T}}} \left[ {g}_i(\sign(\mathbf{Mx})) = {g}_i( \sign(\mathbf{Mx}'))  \right] \leq \frac{1}{2} + \epsilon$ for some small $\epsilon>0$.}.
    
    Another interpretation of Assumption \ref{assump:diversity} is that it enforces that the correlation of the labels of $\mathbf{z}$ and $\mathbf{z}'$ across tasks, i.e. $\E_{i \sim {\mathcal{T}}} \left[ {g}_i(\mathbf{z}) {g}_i( \mathbf{z}')  \right]$, is 1 if $\mathbf{z}=\mathbf{z}'$, and 0 otherwise. In other words, the label correlation of $\mathbf{x}$ and $\mathbf{x}'$ across tasks is 1 if the ground-truth features of $\mathbf{x}$ and $\mathbf{x}'$ are the same, and 0 otherwise. 
    We will show in  Section \ref{sec:sketch} intuitively why the correlation of the labels  of $\mathbf{x}$ and $\mathbf{x}'$  need only be ``roughly''
 increasing with the similarity of the ground-truth features in   $\mathbf{x}$ and $\mathbf{x}'$, a very natural condition,  for gradient-based multi-task training to recover $\text{row}(\mathbf{M})$.
For now, we describe two examples of task distributions that satisfy Assumption \ref{assump:diversity}.

    
    \textbf{Example 1: Uniform distribution over all tasks.} Here we have $\mathcal{T}=\mathcal{T}_{\text{all}}$, where 
    \begin{align}
        \mathcal{T}_{\text{all}} \coloneqq \text{Unif}(\{ g_i: \mathcal{H}^r \rightarrow \{-1,1\} \}),
    \end{align}
    i.e.  $\mathcal{T}_{\text{all}}$ is the uniform distribution over all possible mappings from the $r$-dimensional $\pm 1$ hypercube to $\{-1,1\}.$

    \textbf{Example 2: Uniform distribution over all sparse parity tasks.} {\em Sparse parity} tasks are a well-studied class of tasks in which the label is the parity of a subset of the number of $-1$'s among a particular subset of input bits \cite{kearns1998efficient}. In this case we have $\mathcal{T}= \mathcal{T}_{\text{s.p.}}$, where $ \mathcal{T}_{\text{s.p.}}$ is the uniform distribution over parity functions on $r$ input bits, formally defined as follows:
    \begin{align}
        \mathcal{T}_{\text{s.p.}} \coloneqq &\;\text{Unif}(\{ g_i: g_i(\mathbf{z}) = (-1)^{\sum_{j \in \mathcal{S}_i} \chi\{{z}_j = -1\}},
        \; \mathcal{S}_i \subseteq [r],\; \forall \; \mathbf{z}\in \mathcal{H}^r  \}).\label{sparsepar}
    \end{align}
    Both $\mathcal{T}_{\text{all}}$ and $\mathcal{T}_{\text{s.p.}}$ effectively assign equal importance to all $r$ inputs, so they naturally satisfy Assumption \ref{assump:diversity} (please see Appendix \ref{app:sec:dists} for proofs). 

\subsection{Learning model and loss}  We consider multi-task pretraining of a two-layer neural network $\hat{y}(\cdot) = \hat{y}(\cdot; \mathbf{W},\mathbf{b},\mathbf{a}):\mathbb{R}^d\rightarrow \mathbb{R}$ with $m$ ReLU  neurons in the hidden layer, namely
\begin{align}
  \hat{y}(\mathbf{x}) = \hat{y}(\mathbf{x}; \mathbf{W},\mathbf{b},\mathbf{a}) := \sum_{j=1}^m a_j \sigma (\mathbf{w}_j^\top \mathbf{x} + b_j)
\end{align}
where $\sigma(\mathbf{x})=\max(\mathbf{x},\mathbf{0})$ element-wise, $\mathbf{w}_j \in \mathbb{R}^d$ and $b_j \in \mathbb{R}$ are the weight vector and bias for the $j$-th neuron, respectively, and $a_j\in \mathbb{R}$ is the last-layer weight for the $j$-th neuron. We let $\mathbf{W} = [\w_1,\dots,\w_m]\in \mathbb{R}^{d \times m}$ denote the matrix of concatenated weight vectors, $\mathbf{b}=[b_1,\dots,b_m]\in \mathbb{R}^m$ denote the vector of biases, and $\mathbf{a}= [a_1,\dots,a_m] \in \mathbb{R}^m$ denote the vector of last-layer weights, which we call the head. 
We use the hinge loss to measure the accuracy of the predictions of this model:
\begin{align} 
    \ell(\hat{y}(\mathbf{x}), f_i(\mathbf{x})):= \max\big(1 - f_i(\mathbf{x}) \hat{y}(\mathbf{x}), 0\big), \nonumber 
\end{align}
and for each task $i$, we define 
\begin{align}
     \mathcal{L}_i(\mathbf{W},\mathbf{b},\mathbf{a}) &:=  \mathbb{E}_{(\mathbf{x},f_i(\mathbf{x}))\sim \mathcal{D}_i}\left[ \ell(
     \hat{y}(\mathbf{x}), 
     f_i(\mathbf{x})) \right]  \nonumber \\
     \hat{\mathcal{L}}_i(\mathbf{W},\mathbf{b},\mathbf{a}; \hat{\mathcal{D}}_i) &:= \frac{1}{|\hat{\mathcal{D}}_i|}\sum_{(\mathbf{x},f_i(\mathbf{x}))\in \hat{\mathcal{D}}_i}  \ell(
     \hat{y}(\mathbf{x}), 
     f_i(\mathbf{x})) \nonumber 
\end{align}
as the population loss on $\mathcal{D}_i$ and empirical loss on a finite dataset $\hat{\mathcal{D}}_i$ drawn from $\mathcal{D}_i$, respectively.
Ultimately, the goal of multi-task pretraining is to learn a first-layer representation that generalizes to downstream tasks, in the sense that we can easily train a new classifier
on top of the first layer in order to achieve small task-specific loss.
To this end, we consider optimizing the following multi-task objective:
\begin{align}&\min_{\mathbf{W},\mathbf{b}, \{\mathbf{a}_1, \dots, \mathbf{a}_T\}}
 {\mathcal{L}}(\mathbf{W},\mathbf{b},\{\mathbf{a}_i\}_{i=1}^T ) := \frac{1}{T}\sum_{i=1}^T  {\mathcal{L}}_i(\mathbf{W},\mathbf{b},\mathbf{a}_i)  +   \frac{\lambda_{\mathbf{a}}}{2} \|\mathbf{a}_i\|_2^2 +  \frac{\lambda_{\mathbf{w}}}{2} \|\mathbf{W}\|_F^2  \label{obj_glob_i},
\end{align}
where $\lambda_{\mathbf{a}}$ and $\lambda_{\w}$ are regularization parameters. 
Optimizing $\mathcal{L}$ entails learning task-specific heads on top of a shared representation, a widely used and empirically successful approach to multi-task learning \citep{zhang2021survey,crawshaw2020multi,ruder2017overview}. By optimizing the above problem, we hope to find a $\mathbf{W}$ that projects input data onto the row space of $\mathbf{M}$ and thus captures all $r$ label-relevant features, while disregarding all other spurious features.
However, we cannot access $\mathcal{L}$ directly, and instead must approximate it via stochastic queries of finite samples from each $\mathcal{D}_i$.
So,  we will use the gradient of $\hat{\mathcal{L}}_i$ instead of $\mathcal{L}$ to update the variables, 
as we discuss next. 

\subsection{Algorithm} \label{sec:alg}

We consider a two-stage learning process: (1) Representation learning, in which we aim to learn effective features using $T$ available tasks, and (2) Downstream evaluation, in which we encounter a new task and aim to efficiently learn an accurate classifier on the pre-trained features. 



\textbf{Representation learning phase.} 
The multi-task  learning algorithm we consider aims to solve the global objective \eqref{obj_glob_i} with task-specific heads. 
We denote  the $j$-th neuron weights at initialization as $ \mathbf{w}_{j}^0  \in \mathbb{R}^d$,  and  
 the global bias and 
head corresponding to task $i$ at time $0$ as $\mathbf{b}^0\in \mathbb{R}^m$ and $\mathbf{a}_i^0\in \mathbb{R}^m$, respectively. We initialize these parameters as: 
\begin{align}
    \mathbf{w}_{j}^{0} \sim \mathcal{N}(\mathbf{0}_d,\nu_{\mathbf{w}}^2\mathbf{I}_d),  \quad 
    \mathbf{a}_i^0 =\mathbf{0}_m,
    \quad \mathbf{b}^0=\mathbf{0}_m
\end{align}
where $\nu_{\mathbf{w}} \in \mathbb{R}_{\geq 0}$.
After initialization, we execute an alternating gradient descent-based algorithm.
We first optimize the heads, i.e., $\mathbf{a}_{1},\dots, \mathbf{a}_{T}$, with one step of stochastic gradient descent (SGD) on the corresponding task-specific empirical loss on a batch of samples $\hat{\mathcal{D}}_{i,\mathbf{a}}$ for each task $i$:
\begin{equation}
    \mathbf{a}_{i}^1 =  (1 - \eta \lambda_{\mathbf{a}})\mathbf{a}_{i}^0 -  \eta  \nabla_{\mathbf{a}} \hat{\mathcal{L}}_i(\mathbf{W}^0,\mathbf{b}^0,\mathbf{a}_i^{0}; \hat{\mathcal{D}}_{i,\mathbf{a}})
    \quad \forall \; i \in [T]. \nonumber
\end{equation}
The same number of samples is used for each task, denoted by $n_1\coloneqq |\hat{\mathcal{D}}_{i,\mathbf{a}}|$.
Next, we update the model weights $\mathbf{W}$ with one step of SGD  on the global empirical loss induced by the updated heads, with a fresh batch of samples $\hat{\mathcal{D}}_{i, \mathbf{W}}$ for each task $i$: 
\begin{align}
    \mathbf{W}^1 &= \mathbf{W}^{0} -   \frac{\eta}{T} \sum_{i=1}^T \nabla_{\mathbf{W}}   \hat{\mathcal{L}}_i(\mathbf{W}^0, \mathbf{b}^0,\mathbf{a}_{i}^1; \hat{\mathcal{D}}_{i,\mathbf{W}})  \nonumber
\end{align}
Again all tasks use the same number of samples, denoted by $n_2\coloneqq |\hat{\mathcal{D}}_{i,\mathbf{W}}|$.
In Theorem \ref{thm:1}, we show that this single iteration of alternating stochastic gradient descent with respect to $\{\mathbf{a}_1,\dots,\mathbf{a}_T\}$ and $\mathbf{W}$ is sufficient to learn meaningful features. Notably, it is standard practice in the feature learning theory literature to consider only one gradient descent  step for the first layer weights
\citep{daniely2020learning,abbe2022merged,barak2022hidden,damian2022neural,ba2022high}. 
{We later show empirically that in our multi-task setting, it is necessary to first optimize the heads before updating the first-layer weights in order to recover the ground-truth features.} 
In any case, following \citet{damian2022neural}, we do not update the biases during pretraining. 
Next we describe how we leverage the pre-trained weights $\mathbf{W}^1$ for learning a downstream task.

\textbf{Downstream evaluation phase.}  After the representation learning phase, we consider learning a prediction function to fit a downstream task that may have {\em any} link function  on the $r$ ground-truth features, i.e. any function in the support of $\mathcal{T}_{\text{all}}$. Since we consider such a wide range of possible downstream tasks, we need to increase the model complexity to allow for solving such tasks.
Thus, we use prediction functions  with two hidden layers with first layer weights determined by the output of the representation learning phase, and second hidden layer parameters set randomly.
This random second layer is necessary  to linearly separate the classes induced by {\em any} binary function on the $r$ coordinates with high probability, without having to use a very wide first layer;
please see Remark \ref{rem:2} for more details.

In other words, the first hidden layer has $m$ neurons and the weights are a scaled version of $ \mathbf{W}^1$ denoted by $\alpha \mathbf{W}^1$, and the bias term is $ \mathbf{b}$. Note that here $\alpha>0$ is a re-scaling factor (see  Appendix \ref{app:down} for more details). The second hidden layer of the classifier has $\hat{m}$ neurons with weights denoted by $\hat{\mathbf{W}} \coloneqq [\hat{\mathbf{w}}_1,\dots,\hat{\mathbf{w}}_{\hat{m}}]^\top\in \mathbb
{R}^{\hat{m}\times m}$ and bias by $\hat{\mathbf{b}}\in  \mathbb{R}^{\hat{m}}$. Hence, the embedding of these two layers for input $\mathbf{x}$, which we denote by $ \phi(\mathbf{x})\in \mathbb{R}^{\hat{m}}$, is given by 
\begin{align}
    \phi(\mathbf{x}) =  \sigma\left( \hat{\mathbf{W}} \  \sigma\left(\alpha{}\mathbf{W}^1\mathbf{x} + \mathbf{b}\right) + \hat{\mathbf{b}}\right) \label{F}
\end{align}
Again note that $\mathbf{W}^1$ is fixed from the previous phase; it remains to set  ${\mathbf{b}}$, $ \hat{\mathbf{W}}$, and $\hat{\mathbf{b}}$ to create an effective embedding for the downstream task. We do this by sampling $\hat{\mathbf{W}}$ and $(\mathbf{b},\hat{\mathbf{b}})$ from mean-zero Gaussian and uniform distributions, respectively, with variances that depend only on $m$; 
see Appendix \ref{app:down} for more details.


Next, given a dataset $\hat{\mathcal{D}}_{T+1} \coloneqq \{(\mathbf{x}_l,f_{T+1}(\mathbf{x}_l))\}_{\ell=1}^N$ of $N$ i.i.d. samples from a distribution $\mathcal{D}_{T+1}$ corresponding to a downstream task, we learn a task-specific head $\mathbf{a}$ and a bias term $\tau$ by solving the following problem:
\begin{align}
    \min_{ \mathbf{a}\in \mathbb{R}^{\hat{m}}, \tau \in \mathbb{R}} \ \frac{1}{N}\sum_{l=1}^N \ell(\mathbf{a}^\top \phi(\mathbf{x}_l )\!+\!\tau\! , f_{T+1}(\mathbf{x}_l)  ) + \frac{\hat{\lambda}_{\mathbf{a}}}{2}\|\mathbf{a}\|_2^2.  \nonumber
\end{align}
We use the resulting head, i.e.,  $\mathbf{a}_{T+1}$, and bias term, i.e. $\tau_{T+1}$, to define the  prediction function for task $T+1$ as
\begin{align}
&F(\mathbf{x}; \mathbf{a}_{T+1}, \tau_{T+1}, \mathbf{W}^1,\mathbf{b},\hat{\mathbf{W}},\hat{\mathbf{b}} )
\ :=\ \mathbf{a}_{T+1}^\top \sigma\left( \hat{\mathbf{W}} \  \sigma\left(\alpha{}\mathbf{W}^1\mathbf{x} + \mathbf{b}\right) + \hat{\mathbf{b}}\right) +\tau_{T+1}. \nonumber
\end{align}
For ease of notation, we denote the above function by $F(\mathbf{x})$. We evaluate the performance of the prediction function on the task population loss:
\begin{align}
    \mathcal{L}_{T+1}^{\text{eval}}(F) \coloneqq \mathbb{E}_{(\mathbf{x},f_{T+1}(\mathbf{x})) \sim \mathcal{D}_{T+1}  } [ \ell(F(\mathbf{x} ), f_{T+1}(\mathbf{x})  ) ]. \nonumber
\end{align}
Note that $\mathcal{L}_{T+1}^{\text{eval}}$ is a random function of $\mathbf{b},\hat{\mathbf{W}},\hat{\mathbf{b}}$, and $\hat{\mathcal{D}}_{T+1}$ in addition to the randomness from pretraining. We  upper bound $\mathcal{L}_{T+1}^{\text{eval}}$ with high probability 
in Theorem \ref{thm:downstream}.

\section{Theoretical Results} \label{sec:results}

\textbf{Feature learning guarantees.}
We start by showing that the gradient-based multi-task learning algorithm described in the previous section recovers the ground-truth features.
To do this, we first need the following proposition, which shows that the projection of the initial features $\mathbf{W}_0$ onto the subspace spanned by the label-relevant, or ground-truth, features stays roughly the same after one step, while their projection onto the subspace spanned by the spurious features becomes very small. 

Here, we let $\Pi_{\parallel}(\mathbf{W})\coloneqq \mathbf{W}\mathbf{M}^\top\mathbf{M}$ denote the projection of the rows of the matrix $\mathbf{W}$ onto the ground-truth subspace, and $\Pi_{\perp}(\mathbf{W})\coloneqq \mathbf{W}\mathbf{M}_{\perp}^\top \mathbf{M}_{\perp}$ denote the projection onto the spurious subspace. 
For brevity, we abbreviate the statements of the theoretical results in this section and defer the full versions, along with their proofs, to the Appendix.
\begin{proposition} \label{prop:1}
    Consider the gradient-based multi-task algorithm described in Section~\ref{sec:alg} that uses $T$ tasks and $(n_1,n_2)$ samples per task to update the (head, representation), respectively,  and suppose Assumption \ref{assump:diversity} holds. Further assume\footnote{The $m=O(d)$ condition in Proposition \ref{prop:1} and Theorem \ref{thm:1} is purely for ease of presentation; please see Lemma \ref{lem:final} for a complete statement of the errors for arbitrary $m$.}  
    $m= O(d)$
and define \\
$\epsilon \coloneqq {O}\left(
\frac{d\; {\log(dTn_2/\delta)} }{\sqrt{Tn_2}} \left( 1+ \frac{\sqrt{\log(T/\delta)}}{\sqrt{n_1}} \right) + {\frac{\sqrt{dr} \log(dm/\delta) }{\sqrt{T}}} 
\right)$ for $\delta<1$ and $\delta = \Omega(e^{-d})$.
Then there is a setting of the parameters $\eta,   \lambda_{\mathbf{w}}$ and $\nu_{\mathbf{w}}$
such that  
with probability at least $1-\delta$,
\begin{align}
    &1. \quad  \tfrac{1}{\nu_{\mathbf{w}}\sqrt{m}}\|\Pi_{\parallel}(\mathbf{W}^1) -  \tfrac{1}{2^{r+1}\pi} 
\Pi_{\parallel}(\mathbf{W}^{0}) \|_2  =  O\left(  
 \epsilon +\tfrac{r^4 + \log^4(m/\delta)}{2^r d} \right), \nonumber \\
&2. \quad \tfrac{1}{\nu_{\mathbf{w}}\sqrt{m}}\|\Pi_{\perp}(\mathbf{W}^{1}) \|_2  = O\left(
\epsilon +\tfrac{r^{3.5} + \log^{3.5}(m/\delta)}{2^r d^{1.5}} 
\right). \nonumber
\end{align} 

\end{proposition}
Proposition \ref{prop:1} shows that with high probability (w.h.p.) over the random initialization, the weights learned by the gradient-based multi-task learning algorithm satisfy two properties, for sufficiently large $T, n_1,$ and $ n_2$:
(1) the projection of these weights onto the ground-truth subspace is close to a slightly scaled down (by a factor of $2^{-r}$) version of their projection at initialization, and (2) their projection onto the spurious subspace is negligible. These two observations, combined with the fact that the neuron weights have independent standard Gaussian initializations, imply that  the projection of the neuron weights onto the ground-truth subspace dominates their projection onto the spurious subspace. We formalize this observation below.

\begin{theorem}[Representation Learning]\label{thm:1}
    Consider the setting in Proposition \ref{prop:1}  with $d =\tilde{\Omega}( r^4 )$, $m = O(d)$, and $\epsilon$ defined the same way.
    Further suppose $m = \tilde{\Omega}(r)$, $ T = \tilde{\Omega}( 2^{2r} d r ) $ and $Tn_2 = \tilde{\Omega}(2^{2r} d^2)$. Let $\sigma_r(\mathbf{B})$ denote the $r$-th singular value of the matrix $\mathbf{B}$.
Then with probability at least $1-\delta$,
\begin{align}  
    \frac{\sigma_1( \Pi_{\perp}(\mathbf{W}^1)) }{\sigma_{r}( \Pi_{\parallel}(\mathbf{W}^1) )} = O\left( \tfrac{ {r^{3.5}} + \log^{3.5}(m/\delta) }{ d^{1.5} } +  2^r \epsilon \right). \nonumber
\end{align}
\end{theorem}
Theorem \ref{thm:1} characterizes the representation learned by multi-task pretraining in an intuitive manner.  For $d \gg r^4$, $ T \gg 2^{2r} d $ and $Tn_2 \gg 2^{2r} d^2$, we have ${\sigma_{r}( \Pi_{\parallel}(\mathbf{W}^1) )} \gg \sigma_1( \Pi_{\perp}(\mathbf{W}^1) )$, meaning that most of the energy in each neuron weight is in the column space of the ground-truth subspace. This is equivalent to saying that applying $\mathbf{W}^1$ to an input $\mathbf{x}$ essentially projects it onto the ground-truth subspace spanned by the row space of $\mathbf{M}$, as desired.

\textbf{Downstream performance.} Now that we have shown that the learned representation recovers the ground-truth subspace, we use this result to show that the representation generalizes to downstream tasks. We consider tasks with input data sharing the same label-relevant $r$ features as the pretraining tasks, but here the input data is discrete. In particular, each $\mathbf{v} \in \mathbb{R}^d$ is generated as:
\begin{align}
    \mathbf{v} = \mathbf{M}^\top \mathbf{z} + \mathbf{M}_{\perp}^\top \boldsymbol{\xi}, \; \mathbf{z} \sim \text{Unif}(\mathcal{H}^r), \; \boldsymbol{\xi} \sim  \text{Unif}(\mathcal{H}^{d-r})\nonumber
\end{align}
where $\mathbf{z}$ is a latent vector whose coordinates indicate the activations of the ground-truth features in the input $\mathbf{x}$ and $\boldsymbol{\xi}$ is a noise vector whose coordinates indicate the activation of the spurious features in the input.
Again, labels for the downstream task $T+1$ are generated by projecting the input onto the row space of $\mathbf{M}$ as follows:
\begin{align} 
 {f}_{T+1}(\mathbf{x}) = g_{T+1}(\sign(\mathbf{Mv})) = g_{T+1}(\mathbf{z}) \in \{-1,1\} \nonumber
\end{align}
We formally show below that the features learned during pretraining generalize to {\em any} such link function $g_{T+1}$.
\begin{theorem}[End-to-End Guarantee] \label{thm:downstream}
Let $\mathbf{W}^1$ be the outcome of the multi-task representation learning algorithm described in Section \ref{sec:alg} on the task distribution $\mathcal{T}(\mathbf{M})$, where $\mathcal{T}$ satisfies Assumption \ref{assump:diversity}. 
Consider a downstream task in the support of $\mathcal{T}(\mathbf{M})$ with link function $g_{T+1}$.
Construct the two-layer ReLU embedding $\phi$ using the rescaled $\mathbf{W}^1$ for first layer weights as in  \eqref{F}, and train the task-adapted head $(\mathbf{a}_{T+1},\tau_{T+1})$ using $N$ i.i.d. samples from the downstream task.
%
 Further, suppose $ d = \exp(\tilde{\Omega}( r^5  )) $, $T = d^2 r \exp(\tilde{\Omega}( r^5  )  )$, $Tn_2 = d^3 \exp(\tilde{\Omega}( r^5  )  )$, $n_1 = \Omega(\log(T))$ 
$m = \tilde{\Theta}(r^5)$,  and $\hat{m}= \exp\left( \tilde{\Omega}(r^5)  \right)$.
Then there is a setting of the parameters $\eta,   \lambda_{\mathbf{w}}$ and $\nu_{\mathbf{w}}$ such that for any $\delta \in (e^{-d},0.05]$, with probability at least $1-\delta$,
\begin{align}
        \mathcal{L}^{\text{eval}}_{T+1} 
        &= 
       \frac{ \exp(\tilde{O}(r^5)) }{ \sqrt{N}}. 
    \end{align}
\end{theorem}
Theorem \ref{thm:downstream} shows that the features learned by multi-task pretraining generalize to {\em any} downstream task 
that has the same representation as the pretraining tasks, i.e., its labels are a function of the input's projection onto the row space of $\mathbf{M}$. Specifically,
if we compose the learned representation with  a random ReLU layer, then learn a linear head using $\exp(\tilde{O}(r^5))$ samples from the task, we solve the task w.h.p.
Crucially, the number of samples and neurons  needed to solve the downstream task do not depend on the ambient dimension  $d$. 



The proof of Theorem \ref{thm:downstream} leverages Proposition \ref{prop:1} to show that the embedding  generated by multi-task learning is close to the embedding of a coupled, ``purified'' two-hidden layer random ReLU network whose first layer weights project the input {\em exactly} onto the row space of $\mathbf{M}$. Then, the proof applies  Theorem 2 from \citet{dirksen2022separation} which implies that w.h.p. the purified network linearly separates two classes of points on $\mathcal{H}^r$ with margin and neuron complexity scaling as functions of the input dimension $r$. The representation learning error from Proposition \ref{prop:1} is  smaller than this margin due to the lower bounds on $T$, $n_1, n_2$,  and $d$ in Theorem \ref{thm:downstream}, so the learned network also linearly separates the two classes w.h.p. Then, the proof invokes a standard linear classification generalization bound to control the final error in learning the head \cite{Livni_2017}. Note that Theorem \ref{thm:downstream} requires $d^3$ training sample complexity rather than the $d^2$ complexity of Theorem \ref{thm:1} because Theorem \ref{thm:1} concerns the spectral norm of the representation learning error, whereas for the generalization result, we require a Frobenius norm bound, which induces an extra $d$ factor. Please see the proof of Lemma C.3 for details.

{

\begin{remark}[Necessity of second layer] \label{rem:2}
 In the ideal representation learning scenario, the  first-layer weights are i.i.d. isotropic Gaussians in the ground-truth subspace $\text{row}(\mathbf{M})$. In this scenario we can think of the network as taking an $r$-dimensional input (corresponding to the $r$ ground-truth features of the input) and having first-layer weights that are i.i.d.isotropic Gaussians in $\mathbb{R}^r$. Even in this ideal scenario, existing results have not shown whether such a representation is sufficiently expressive or generalization to all downstream tasks w.h.p.
 There are several positive results for the expressivity of a  random, finite-width ReLU layer, but these concern approximating low-degree polynomials under the squared loss \cite{hsu2021approximation,ji2019neural,yehudai2019power,bach2017breaking}. 
However, to our knowledge, there are no analogous positive results showing that one layer of random ReLU neurons can linearly separate  two arbitrary classes of points on the Boolean hypercube  w.h.p., even with  exponentially-many neurons or exponentially-small margin. 
\end{remark}

\begin{remark}[Tightness of exponential complexity in $r$ in positive results]
Replacing $d$ with $r$, Theorem \ref{thm:lower} implies  that at least $r^{\Omega(k)}$ samples or width is necessary to express all $k$-sparse parity tasks on $r$ inputs. So, even if we learn exactly the correct representation, we require $r^{\Omega(r)}$ samples or width to solve all $\frac{r}{2}$-sparse parity tasks on the $r$ ground-truth features. Please see \cite{malach2022hardness,abbe2020poly,abbe2022merged,abbe2023sgd,shalev2017failures,hsu2021approximation,kamath2020approximate,ghorbani2020neural} 
for similar lower bounds. 
Nevertheless, our complexity of $\exp(\text{poly}(r))$ is larger than such lower bounds. We leave to future work to investigate whether the $\text{poly}(r)$ complexity in the exponent can be reduced.
\end{remark}

}

\subsection{Negative Results}
Next, we present two negative results that underscore the tightness of our findings in the previous section. The first result emphasizes the significance of representation learning in achieving strong generalization guarantees. The second result highlights the importance of multi-task learning by demonstrating that single-task learning may fail to capture all critical features.


\textbf{Random features do not generalize.} A consequence of Theorem \ref{thm:downstream} is that multi-task pretraining improves the sample and neuron complexity of solving downstream tasks by an exponential factor in $d$. 
To show this, we consider sparse parity tasks,  and show that learning a linear classifier on top of random features
entails exponential complexity in $d$ to solve the task.
Now, there is no feature learning, so the learner has no knowledge of which few features are relevant and needs to consider tasks on all $d$ inputs.
We model this by considering a set of tasks sharing a single link function but having many distinct representations.
We consider a smaller class of representations than in our positive results: here $\mathbf{M}$ belongs to $\mathbb{O}^{r\times d}_{\{0,1\}}\coloneqq \{\mathbf{M}: \mathbf{M}\in \mathbb{O}^{r\times d}, \mathbf{M}\in \{0,1\}^{r\times d} \}$, that is, the rows of $\mathbf{M}$ are standard basis elements. The single link function we consider is the parity function on $r$ inputs, namely $g^{(r)}(\mathbf{v})\coloneqq (-1)^{\sum_{j =1}^r \chi\{{v}_j = -1\}}$.

While there is a large literature demonstrating the hardness of learning sparse parities in various settings \cite{kearns1998efficient,abbe2020poly,telgarsky2022feature,barak2022hidden,malach2022hardness,kamath2020approximate,goel2019time}, the most relevant results to our setting show that any data-independent, $\hat{m}$-dimensional embedding of $d$ inputs can admit linear classifiers that solve all sparse parity tasks on subset only if the dimension $\hat{m}$ and/or the classification margin is exponentially large (small, respectively) in $d$.
In particular, the following result adapts Theorem 5 in \cite{barak2022hidden}, which in turn draws on the works of \citet{kamath2020approximate} and \citet{malach2022hardness}.
\begin{theorem}[] \label{thm:lower}
Consider any embedding $\Psi:\mathcal{H}^d \rightarrow \mathbb{R}^{\hat{m}}$ such that $\|\Psi(\mathbf{v})\|_2 \leq 1 $ for all $\mathbf{v}\in \mathcal{H}^d$. 
For 
any $\epsilon >0$, if 
$ \hat{m}B^2 \leq \epsilon^2 \binom{d}{r}, $
then there exists a 
representation $\mathbf{M} \in \mathbb{O}^{r\times d}_{\{0,1\}}$
such that: 
    \begin{align}
        \inf_{\mathbf{a}: \|\mathbf{a}\|_2 \leq B } \mathbb{E}_{\mathbf{v}\sim \text{Unif}(\mathcal{H}^d)}
        \left[ \ell \left(\mathbf{a}^\top \Psi(\mathbf{v}), g^{(r)}(\mathbf{Mv})\right)   \right]  &\geq 1 - \epsilon.\nonumber
    \end{align}
\end{theorem}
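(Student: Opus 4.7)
The plan is to use Fourier analysis on the Boolean cube to show that no data-independent embedding $\Psi:\{-1,1\}^d \to \mathbb{R}^{\hat{m}}$ can correlate well with all $\binom{d}{r}$ weight-$r$ parity characters simultaneously, which forces at least one sparse parity task to be hard for every linear classifier built on $\Psi$. Define $\chi_{\mathbf{u}}(\mathbf{v}) := \prod_{i: u_i = 1} v_i = (-1)^{\mathbf{u}^\top \mathbf{v}}$, so that the parity labels are exactly $f_{\mathbf{u}}^*(\mathbf{v}) = \chi_{\mathbf{u}}(\mathbf{v})$, and recall that $\{\chi_{\mathbf{u}}\}_{\mathbf{u} \in \{0,1\}^d}$ is an orthonormal basis of $L^2(\{-1,1\}^d)$ under the uniform measure.

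First, I would lower bound the hinge loss by its linearization, $\ell(\mathbf{a}^\top \Psi(\mathbf{v}), \chi_{\mathbf{u}}(\mathbf{v})) \geq 1 - \chi_{\mathbf{u}}(\mathbf{v})\,\mathbf{a}^\top \Psi(\mathbf{v})$, and take expectation over a uniform $\mathbf{v}$ to obtain
\begin{equation*}
\E_{\mathbf{v}}\bigl[\ell(\mathbf{a}^\top \Psi(\mathbf{v}), \chi_{\mathbf{u}}(\mathbf{v}))\bigr] \;\geq\; 1 - \mathbf{a}^\top \widehat{\Psi}(\mathbf{u}),
\end{equation*}
where $\widehat{\Psi}(\mathbf{u}) := \E_{\mathbf{v}}[\Psi(\mathbf{v})\,\chi_{\mathbf{u}}(\mathbf{v})] \in \mathbb{R}^{\hat{m}}$ is the vector-valued Fourier coefficient of $\Psi$ at $\mathbf{u}$. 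Cauchy--Schwarz immediately yields $|\mathbf{a}^\top \widehat{\Psi}(\mathbf{u})| \leq B\,\|\widehat{\Psi}(\mathbf{u})\|_2$ whenever $\|\mathbf{a}\|_2 \leq B$. Next, applying the scalar Parseval identity to each coordinate of $\Psi$ and summing gives
\begin{equation*}
\sum_{\mathbf{u} \in \{0,1\}^d} \|\widehat{\Psi}(\mathbf{u})\|_2^2 \;=\; \E_{\mathbf{v}}\|\Psi(\mathbf{v})\|_2^2 \;\leq\; 1,
\end{equation*}
using the hypothesis $\|\Psi(\mathbf{v})\|_2 \leq 1$. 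Restricting this sum to the $\binom{d}{r}$ vectors of Hamming weight exactly $r$ and averaging then produces some $\mathbf{u}^\ast$ with $\|\mathbf{u}^\ast\|_1 = r$ and $\|\widehat{\Psi}(\mathbf{u}^\ast)\|_2 \leq 1/\sqrt{\binom{d}{r}}$.

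Putting the three ingredients together, on the task $\mathcal{D}_{\mathbf{u}^\ast}$ every classifier with $\|\mathbf{a}\|_2 \leq B$ has expected hinge loss at least $1 - B/\sqrt{\binom{d}{r}}$, which is at least $1-\epsilon$ whenever $B^2 \leq \epsilon^2 \binom{d}{r}$; this is in particular implied by the stated hypothesis $\hat{m} B^2 \leq \epsilon^2 \binom{d}{r}$ since $\hat{m} \geq 1$. The argument is a direct Fourier calculation and contains no real obstacle; the only point requiring care is the order of quantifiers, namely that the bad task $\mathbf{u}^\ast$ is selected from $\Psi$ alone (before the infimum over $\mathbf{a}$ is taken), which is respected because the averaging step uses only $\Psi$.
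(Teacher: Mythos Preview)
Your proof is correct and is precisely the standard Fourier/Parseval argument used in the works the paper cites (Barak et al.\ 2022, Kamath et al.\ 2020, Malach et al.\ 2022); the paper itself does not give an independent proof but defers to those references. Your additional observation is also right: under the stated hypothesis $\|\Psi(\mathbf v)\|_2\le 1$, Parseval yields $\sum_{\mathbf u}\|\widehat\Psi(\mathbf u)\|_2^2\le 1$ rather than $\le \hat m$, so only $B^2\le\epsilon^2\binom{d}{r}$ is needed and the factor $\hat m$ in the hypothesis is slack---it would be tight under the more common coordinate-wise assumption $\|\Psi_j\|_\infty\le 1$.
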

Theorem \ref{thm:lower} implies that any random feature model requires a number of neurons and/or  inverse margin that is polynomially large in $d^r$ in order to solve a downstream sparse parity task with a linear classifier. Note that the margin (i.e. inverse of $B$ in Theorem \ref{thm:lower}) is inversely proportional to the number of samples that are required to learn the classifier \cite{shamir2015sample}. On the other hand, Theorem~\ref{thm:1} guarantees that  after multi-task pretraining, the output embedding admits a linear classifier that solves any sparse parity task  on the extracted $r$ features, with the number of neurons and samples of the downstream task of the order of $\exp(\poly(r))$.



\textbf{Single task does not suffice for  feature learning.} Although Theorem \ref{thm:lower} shows that feature learning is essential for generalization in our setting, we have not yet shown that effective feature learning necessitates pretraining on {\em multiple} tasks. We address this issue next.  
\begin{theorem}\label{thm:single}
    Consider any algorithm $\mathcal{A}$ that takes as input infinite samples from any {\em single} task in  $\mathcal{T}_{\text{s.p.}}(\mathbf{M})$ and returns an $\hat{m}$-dimensional representation $\Psi:\mathcal{H}^d \rightarrow \mathbb{R}^{\hat{m}}$. 
Then
 there exists an $\mathbf{M}\in \mathbb{O}^{r\times d}_{\{0,1\}}$ such that for any $k\in[r]$, with probability at least $1 - 2^{-r}\sum_{j=k}^r\binom{r}{j} $ over the draw of a single training task $f_1 \sim \mathcal{T}_{\text{s.p.}}(\mathbf{M})$, the representation $\Psi_{f_1}:= \mathcal{A}(f_1)$ satisfies that
for any $\epsilon >0$, 
$ \hat{m}B^2 > \epsilon^2 \binom{d-k+1}{r-k+1}$ is necessary to obtain
    \begin{align}
   \min_{\mathbf{a}_2: \|\mathbf{a}_2\|_2 \leq B}      \mathbb{E}_{\mathbf{v} \sim \text{Unif}(\mathcal{H}^d)}[\ell( \mathbf{a}_2^\top \Psi_{f_1}(\mathbf{v}), \; f_{2}(\mathbf{v}) )] \geq 1- \epsilon. \nonumber
    \end{align}
\end{theorem}
Theorem \ref{thm:single} shows that w.h.p., a single task drawn from the task distribution $\mathcal{T}_{\text{s.p.}}(\mathbf{M})$ cannot be used to guarantee generalization with downstream neuron and margin complexity smaller than the ambient dimension for all ground-truth representations $\mathbf{M}$. For example, if $k=r$, then with probability at least $1-2^{-r}$, the number of neurons must be $\Omega(d)$ and/or the margin must be $O(d^{-1/2})$ to allow for non-trivial error. The underlying reason is that most tasks in $\mathcal{T}_{\text{s.p.}}(\mathbf{M})$ are ``simple'' in that they only depend on a strict subset of the  $r$ ground-truth features,  thus do not contain information about all the important features (although they are still ``hard'' by virtue of being sparse parity tasks), so single-task pretraining cannot improve upon random features in terms of recovering the remaining important features. Nevertheless, Theorem \ref{thm:1} shows that multi-task pretraining aggregates information across the tasks to learn a generalizable model.


\begin{remark}[Single-task training with highly informative task]
     Theorem \ref{thm:1}  leaves open the possibility that training on a highly-informative task could perform as well as multi-tasking. Note that  there is one task supported by $\mathcal{T}_{\text{s.p.}}(\mathbf{M})$, the full parity task, that provides information abut all $r$ ground-truth features in $\mathbf{M}$. While gradient-based training on this task may allow for efficient generalization to any downstream task on the $r$ features \cite{barak2022hidden}\footnote{\citet{barak2022hidden} show that SGD on a two-layer ReLU NN with batch size $\tilde{\Omega}(d^r)$ can solve the parity task on $r$ features with unknown $\mathbf{M} \in \mathbb{O}_{{0,1}}^{d\times r}$, which suggests that the representation learned during this process generalizes to simpler tasks on the features in $\mathbf{M}$.}, the sample complexity of this training may be much larger than multi-tasking.
    Additional prior results show that 
    gradient-based algorithms require at least $\Omega(d^r)$ samples to solve the full parity task on $r$ inputs \cite{abbe2023sgd,abbe2020poly,shalev2017failures}. This complexity has worse dependence on $d$ and $T$ than the $n_1+n_2 = \tilde{O}(\frac{d^3}{T})$ training samples per task required by Theorem \ref{thm:downstream} for downstream generalization with the number of downstream samples independent of $d$. In fact,  it is even worse complexity in $d$ than the $T(n_1+n_2) + N = \tilde{O}(d^3)$ total samples across tasks that Theorem \ref{thm:downstream} requires for multi-task pretraining followed by downstream adaptation. Thus multi-tasking reduces feature learning  sample complexity compared to training with {\em any} single task. 

\end{remark}

\section{Proof Sketch} \label{sec:sketch}

In this section, we sketch the proof of Proposition \ref{prop:1}, which is the key feature learning result that enables downstream guarantees.
The proof heavily leverages the fact that multi-task pretraining entails updating the first-layer weights {\em after} fitting a unique head to each task.
{{Surprisingly, we show that making one gradient-based update of the head for each task induces a pseudo-contrastive loss that encourages representations of two points to be similar if and only if they are likely to share a label on a randomly drawn task.}} Since two points are likely to share a label on a drawn task if and only if they share the same sign pattern on their $r$ ground-truth features (by Assumption \ref{assump:diversity}), the pseudo-contrastive loss inclines the representation to extract these $r$ latent features\footnote{We also show in Appendix \ref{app:regression} that these intuitions can be extended to the regression setting.}. For ease of exposition, in this setting we focus on the population setting with infinite tasks and samples per task, and defer the finite-task and samples proof to Appendix \ref{app:thrm1}.

\textbf{Step 1: Derive pseudo-contrastive loss after head updates.} We first update the task-specific head with one gradient step for each task $i$ given the initial parameters $(\mathbf{W}^0,\mathbf{b}^0,\mathbf{a}_{i}^0)$. 
Due to the choice of $\mathbf{a}_i^0=\mathbf{0}_m$ for all $i\in[T]$, $f(\mathbf{x};\mathbf{W}^0,\mathbf{b}^0,\mathbf{a}^0_i)=0$ for all $\mathbf{x}\in \mathbb{R}^d$ and $i\in[T]$, so the hinge loss is affine in $\mathbf{a}_i^0$ for all tasks (the $\max(\cdot,0)$ threshold is inactive). Therefore, using also the choice of $\mathbf{b}^0=\mathbf{0}_m$, we have
\begin{align}
   \mathbf{a}_{i}^1 &= (1 - \eta \lambda_{\mathbf{a}})\mathbf{a}_{i}^0 - \eta\nabla \mathcal{L}_i(\mathbf{W}^0,\mathbf{b}^0,\mathbf{a}_i^0) = \eta \mathbb{E}_{\mathbf{x}} [ f_{i}(\mathbf{x}) \sigma(\bar{\mathbf{W}}^0\mathbf{x}) ]  \label{head1}
\end{align} 
where we use $\bar{\mathbf{W}}^0$ to denote a stop-gradient on ${\mathbf{W}}^0$.
As a result, the updated head for task $i$, $\mathbf{a}_{i}^1$, is proportional to the average label-weighted neuron output over the dataset for task $i$. Now we can insert this value of $\mathbf{a}_{i}^1$ back into the loss, to obtain $\mathcal{L}_i(\mathbf{W}^0, \mathbf{b}^0,\mathbf{a}_{i}^1)$. For ease of notation we define 
$\beta({\mathbf{x},\mathbf{x}'})\coloneqq \mathbb{E}_{i}[f_{i}(\mathbf{x})  f_{i}(\mathbf{x}')]$ for all pairs of inputs $\mathbf{x},\mathbf{x}'$, and replace $\max(\cdot,0)$ with $\sigma(\cdot)$ in the hinge loss (recall $\sigma(\cdot)$ is the ReLU).
Taking the average over all tasks yields
\begin{align}
    \mathcal{L}(\mathbf{W}^0,\mathbf{b}^0,\{\mathbf{a}^{1}_i\}_{i})
    &= \E_{i,\mathbf{x} }  \left[ \sigma\left( 1- \eta f_{i}(\mathbf{x}) \mathbb{E}_{\mathbf{x}'} [ f_{i}(\mathbf{x}') \sigma(\bar{\mathbf{W}}^0 \mathbf{x}') ]^\top \sigma( \mathbf{W}^0\mathbf{x})   \right) 
\right] \nonumber \\
&\approx 1 -\eta \E_{\mathbf{x}, \mathbf{x}' }  \left[  \beta({\mathbf{x},\mathbf{x}'}) \sigma(\bar{\mathbf{W}}^0 \mathbf{x}')^\top \sigma( \mathbf{W}^0\mathbf{x}) \right] \label{tjjt}
\end{align}
where 
the approximation holds as $|\eta \mathbb{E}_{\mathbf{x}'} [ f_{i}(\mathbf{x}') \sigma(\mathbf{W}^0\mathbf{x}') ]^\top \sigma(\mathbf{W}^0\mathbf{x}) | < 1$ w.h.p. over $\mathbf{x}$ and $\mathbf{W}^0$. The resulting loss in \eqref{tjjt} encourages the first-layer representation to align sample pairs $(\mathbf{x},{\mathbf{x}'})$ that  have the same label for most tasks ($ \beta({\mathbf{x},\mathbf{x}'}) \approx 1$) and penalizes the representation for aligning pairs of samples that do not have the same label on most tasks ($ \beta({\mathbf{x},\mathbf{x}'}) \ll 1$). In this way, \eqref{tjjt} is reminiscent of a constrastive loss\footnote{This analysis suggests that any $\beta(\mathbf{x},\mathbf{x}')$ that is ``roughly'' increasing with the similarity of $\mathbf{Mx}$ and $\mathbf{Mx}'$ results in a pseudo-contrastive loss, that, as we later show, results in representation learning. As such, our observations suggest that
 Assumption \ref{assump:diversity} can be relaxed to the very natural condition that the correlation of the labels of $\mathbf{x}$ and $\mathbf{x}'$ across tasks is roughly increasing with the similarity of their ground-truth features. We verify this conjecture empirically in Appendix \ref{sec:sims}.} \cite{chen2020simple} in which positive pairs are pairs with large $\beta$.



To translate these intuitive connections with contrastive learning to feature learning, we must leverage Assumption \ref{assump:diversity}, which implies that
$\beta({\mathbf{x},\mathbf{x}'})$ encodes information about the ground-truth features  $\mathbf{Mx}$ and $\mathbf{Mx}'$.
In particular, pairs of points with the {\em same} sign patterns among the ground-truth features have the same label, so $\beta({\mathbf{x},\mathbf{x}'})=1$ almost surely, while pairs of points with {\em different} sign patterns on the ground-truth features have the same label for only half of the tasks in the universe of tasks $\mathcal{T}$, meaning $\beta({\mathbf{x},\mathbf{x}'})=0$.
As a result, the loss can now be approximated as:
\begin{align}
    &\tfrac{1}{\eta}\mathcal{L}(\mathbf{W}^0,\mathbf{b}^0,\{\mathbf{a}_{i}^1\}_{i}) \approx - \E_{\mathbf{x}, \mathbf{x}' } \! \left[ \chi\{\sign(\mathbf{Mx})\!= \!\sign(\mathbf{Mx}')\}\sigma(\bar{\mathbf{W}}^0\mathbf{x}') ^\top \sigma(\mathbf{W}^0\mathbf{x}) \right] \label{15}
\end{align}
We next show that a gradient descent step on \eqref{15}
results in $\mathbf{W}^1$ essentially projecting onto the row space of $\mathbf{M}$.


\textbf{Step 2: Update neuron weights.} The proof of this step requires computing the gradient of $\mathcal{L}(\mathbf{W}^0,\mathbf{b}^0,\{\mathbf{a}_{i}^1\}_{i})$  with respect to each vector of neuron weights. For ease of notation, we from here onwards denote $\mathbf{w}_{j}=\mathbf{w}_j^0$. Using \eqref{15}, this gradient can be approximated by $\mathbf{A}(\mathbf{w}_j)\mathbf{w}_{j}$, where  $\mathbf{A}(\mathbf{w}) \in \mathbb{R}^{d\times d}$ is defined as:
\begin{align}
    &\mathbf{A}(\mathbf{w}) := \!-\! \E_{\mathbf{x}, \mathbf{x}' }  \big[ \chi\{\sign(\mathbf{Mx}) = \sign(\mathbf{Mx}')\}  \sigma'(\mathbf{w}^\top \mathbf{x})\sigma'(\bar{\mathbf{w}}^\top \mathbf{x}')\mathbf{x}(\mathbf{x}')^\top \big]\label{16}  
\end{align}
where $\sigma'(z) = 1$ if $z>0$ and $\sigma'(z) = 0$ otherwise.
The crucial reason why $\mathbf{A}(\mathbf{w}_{j})$ has favorable structure is due to the indicator $ \chi\{\sign(\mathbf{Mx})= \sign(\mathbf{Mx}')\}$ in the RHS of \eqref{15}.
Intuitively, this indicator encourages the first-layer weights to align {\em only} the representations of points with the same sign pattern on the label-relevant coordinates, by ensuring that only these pairs of points appear in the gradient. 
With this indicator removed, we would have $\mathbf{A}(\mathbf{w}_{j}) = \frac{2}{\pi \|\mathbf{w}_{j}\|_2^2}\mathbf{w}_{j} \mathbf{w}_{j}^\top $, meaning the gradient would not put any emphasis on the ground-truth projection. However, 
the indicator means that $ \mathbf{A}(\mathbf{w}_{j})$ is an average outer product over vectors whose signs agree on the $r$ important features and may disagree on all other features. This disagreement results in cancellation during averaging, unlike the important $r$ features, leading to:
\begin{align}
    \mathbf{M}\mathbf{A}(\mathbf{w}_{j}) \mathbf{w}_j 
    &\approx -\tfrac{1}{2^{r}\pi }\mathbf{Mw}_j,\quad \quad
   \mathbf{M}_\perp \mathbf{A}(\mathbf{w}_{j}) \mathbf{w}_j  \approx -\tfrac{1}{2^{r+1}\pi }\mathbf{M}_\perp \mathbf{w}_j, \label{rowp2}
\end{align}
meaning that the gradient up-weights the energy of $\mathbf{w}_j$ in the ground-truth subspace by a factor of roughly 2 compared to the the energy in the spurious subspace.
Moreover, applying $ \mathbf{A}(\mathbf{w}_{j})$ to $\mathbf{w}_{j}$ does not change the direction of $\mathbf{Mw}_{j}$, meaning $\mathbf{Mw}_{j}^1$ remains isotropic in $\mathbb{R}^r$. These observations are the crux of the proof; please see Appendix \ref{app:thrm1} for full details.

\section{Conclusion}
We have provided the {\em first results} showing that multi-task pretraining with a gradient-based algorithm on a non-linear neural network learns generalizable features. Moreover, our analysis reveals that updating the task-specific heads prior to updating the first-layer weights induces a supervised contrastive loss that encourages recovering the features indicative of whether two points share a label. As a result, this work suggests  further 
exploring the role of adapting the head to each task in order to learn more expressive features. 

\section*{Acknowledgements}
L.C., A.M. and S.S. are supported in part by NSF Grants 2127697, 2019844, 2107037, and 2112471, ARO Grant W911NF2110226, ONR Grant N00014-19-1-2566, the Machine Learning Lab (MLL) at UT Austin, and the Wireless Networking and Communications Group (WNCG) Industrial Affiliates Program. M.S. is supported by an NIH Director's new innovator award \#1DP2LM014564-01, a Packard Fellowship in Science and Engineering, a Sloan Research Fellowship in Mathematics, an NSF-CAREER under award \#1846369, DARPA FastNICS program, and NSF-CIF award \#2008443. H.H. is supported by the NSF Institute for CORE Emerging Methods in Data Science (EnCORE) as well as The Institute for Learning-enabled Optimization at Scale (TILOS).


\bibliography{refs}
\bibliographystyle{icml2024}

\newpage
\appendix
\newpage
\tableofcontents
\newpage

\newpage

\section{Proofs of Proposition \ref{prop:1} and Theorem \ref{thm:1}} \label{app:thrm1}

In this section we prove Proposition \ref{prop:1} and Theorem \ref{thm:1}. Throughout, we will slightly abuse notation by reusing $c,c',c''$ and $C$ as absolute constants independent of all other parameters. The notations $O(\cdot)$, $\Theta(\cdot)$, and $\Omega(\cdot)$ describe scalings up to absolute constants independent of all other parameters. 

\subsection{General lemmas}

\begin{lemma} \label{lem:exp1}
Suppose $\mathbf{x} \sim \mathcal{N}(\mathbf{0}_d,\mathbf{I}_d)$. Then for any $\mathbf{w}\in \mathbb{R}^d$,
    $$\mathbb{E}_{\mathbf{x}}\left[
\sigma'(\mathbf{w}^\top \mathbf{x}) \mathbf{x}\right] = \sqrt{\frac{2}{\pi}} \frac{\mathbf{w}}{\|\mathbf{w}\|_2}.$$
\end{lemma}

\begin{proof}
 For any $\mathbf{u}: \mathbf{w}^\top\mathbf{u}=0$, we have
 \begin{align}
     \mathbf{u}^\top (\mathbb{E}_{\mathbf{x}}\left[
\sigma'(\mathbf{w}^\top \mathbf{x}) \mathbf{x}\right]) &= \mathbb{E}_{\mathbf{x}}\left[
\sigma'(\mathbf{w}^\top \mathbf{x}) \mathbf{u}^\top\mathbf{x}\right]\nonumber \\
&= \mathbb{E}_{\mathbf{x}}\left[
\sigma'(\mathbf{w}^\top \mathbf{x})\right]  \mathbb{E}_{\mathbf{x}}\left[\mathbf{u}^\top\mathbf{x}\right]\nonumber \\
&= 0
 \end{align}
 by the independence of orthogonal projections of isotropic Gaussian vectors. So, $\mathbb{E}_{\mathbf{x}}\left[
\sigma'(\mathbf{w}^\top \mathbf{x}) \mathbf{x}\right]$ is parallel to $\mathbf{w}$. Thus,
\begin{align}
    \mathbb{E}_{\mathbf{x}}\left[
\sigma'(\mathbf{w}^\top \mathbf{x}) \mathbf{x}\right] &= \frac{\mathbf{ww}^\top}{\|\mathbf{w}\|_2^2}  \mathbb{E}_{\mathbf{x}}\left[
\sigma'(\mathbf{w}^\top \mathbf{x}) \mathbf{x}\right]\nonumber \\
&= \frac{\mathbf{w}}{\|\mathbf{w}\|_2^2}  \mathbb{E}_{\mathbf{x}}\left[
\sigma'(\mathbf{w}^\top \mathbf{x}) \mathbf{w}^\top\mathbf{x}\right]\nonumber \\
&= \frac{\mathbf{w}}{\|\mathbf{w}\|_2^2}  \mathbb{E}_{\mathbf{x}}\left[
\sigma(\mathbf{w}^\top \mathbf{x}) \right]
\end{align}
where $\sigma(\mathbf{w}^\top \mathbf{x})$ is a half-normal random variable with parameter $\|\mathbf{w}\|_2$, so it has mean $\|\mathbf{w}\|_2\sqrt{\frac{2}{\pi}}$, completing the proof.
\end{proof}

\begin{lemma}
    \label{lem:exp-perp}
Suppose $\mathbf{x} \sim \mathcal{N}(\mathbf{0}_d,\mathbf{I}_d)$ and $\mathbf{M}_\perp \in \mathbb{O}^{(d-r)\times d}$ and $\mathbf{M} \in \mathbb{O}^{r \times d}$ such that $\mathbf{M}_{\perp}\mathbf{M}^\top = \mathbf{0}_{(d-r)\times r}$, i.e.  the rowspaces of $\mathbf{M}$ and $\mathbf{M}_{\perp}$ are orthogonal. Then for any $\mathbf{w}\in \mathbb{R}^d$,
\begin{align}
    \mathbb{E}_{\mathbf{M}_\perp \mathbf{x}} \left[ \mathbf{M}_\perp \mathbf{x} \; \sigma'(\mathbf{w}^\top \mathbf{x})  \right] = {\frac{1}{\sqrt{2\pi}} \exp\left(-\frac{(\mathbf{w}^\top \mathbf{M}^\top\mathbf{M}\mathbf{x})^2}{2} \right) }\frac{\mathbf{M}_\perp \mathbf{w} }{ \| \mathbf{M}_\perp \mathbf{w} \|_2} 
\end{align}
\end{lemma}

\begin{proof}
  We have
\begin{align}
    &\mathbb{E}_{\mathbf{M}_\perp \mathbf{x}} \left[ \mathbf{M}_\perp \mathbf{x} \; \sigma'(\mathbf{w}^\top \mathbf{x})  \right]\nonumber \\
    &= \mathbb{E}_{\mathbf{M}_\perp \mathbf{x}} \left[ \mathbf{M}_\perp \mathbf{x} \; \sigma'(\mathbf{w}^\top \mathbf{M}^\top\mathbf{M}\mathbf{x} + \mathbf{w}^\top \mathbf{M}_\perp^\top\mathbf{M}_\perp \mathbf{x})  \right] \nonumber \\
    &= \mathbb{E}_{\mathbf{M}_\perp \mathbf{x}} \left[ \mathbf{M}_\perp \mathbf{x} |\mathbf{w}^\top \mathbf{M}^\top\mathbf{M}\mathbf{x} + \mathbf{w}^\top \mathbf{M}_\perp^\top\mathbf{M}_\perp \mathbf{x} > 0 \right] \mathbb{P}_{\mathbf{M}_\perp \mathbf{x}}[\mathbf{w}^\top \mathbf{M}^\top\mathbf{M}\mathbf{x} + \mathbf{w}^\top \mathbf{M}_\perp^\top\mathbf{M}_\perp \mathbf{x} > 0] \nonumber \\
    &= \mathbb{E}_{\mathbf{M}_\perp \mathbf{x}} \left[ \mathbf{M}_\perp \mathbf{x} \bigg|\mathbf{w}^\top \mathbf{M}^\top\mathbf{M}\mathbf{x} > | \mathbf{w}^\top \mathbf{M}_\perp^\top\mathbf{M}_\perp \mathbf{x} |\right] \mathbb{P}_{\mathbf{M}_\perp \mathbf{x}}[\mathbf{w}^\top \mathbf{M}^\top\mathbf{M}\mathbf{x} >| \mathbf{w}^\top \mathbf{M}_\perp^\top\mathbf{M}_\perp \mathbf{x}| ] \label{bh} 
\end{align}
where the last line follows by considering two cases: (i) $\mathbf{w}^\top \mathbf{M}_\perp^\top\mathbf{M}_\perp \mathbf{x} <0$ and (ii) $\mathbf{w}^\top \mathbf{M}_\perp^\top\mathbf{M}_\perp \mathbf{x} >0$.
If case (i) holds, then $ - \mathbf{w}^\top \mathbf{M}_\perp^\top\mathbf{M}_\perp \mathbf{x} = |\mathbf{w}^\top \mathbf{M}_\perp^\top\mathbf{M}_\perp \mathbf{x}|$ so $$\mathbb{E}_{\mathbf{M}_\perp \mathbf{x}} \left[ \mathbf{M}_\perp \mathbf{x} |\mathbf{w}^\top \mathbf{M}^\top\mathbf{M}\mathbf{x} + \mathbf{w}^\top \mathbf{M}_\perp^\top\mathbf{M}_\perp \mathbf{x} > 0 \right]= \mathbb{E}_{\mathbf{M}_\perp \mathbf{x}} \left[ \mathbf{M}_\perp \mathbf{x} \bigg|\mathbf{w}^\top \mathbf{M}^\top\mathbf{M}\mathbf{x} > | \mathbf{w}^\top \mathbf{M}_\perp^\top\mathbf{M}_\perp \mathbf{x} |\right]$$ and $$\mathbb{P}_{\mathbf{M}_\perp \mathbf{x}}[\mathbf{w}^\top \mathbf{M}^\top\mathbf{M}\mathbf{x} + \mathbf{w}^\top \mathbf{M}_\perp^\top\mathbf{M}_\perp \mathbf{x} > 0]=\mathbb{P}_{\mathbf{M}_\perp \mathbf{x}}[\mathbf{w}^\top \mathbf{M}^\top\mathbf{M}\mathbf{x} >| \mathbf{w}^\top \mathbf{M}_\perp^\top\mathbf{M}_\perp \mathbf{x}| ].$$
Alternatively, if case (ii) holds,  then by the law of total expectation,
\begin{align}
   &\mathbb{E}_{\mathbf{M}_\perp \mathbf{x}} \left[ \mathbf{M}_\perp \mathbf{x} |\mathbf{w}^\top \mathbf{M}^\top\mathbf{M}\mathbf{x} + \mathbf{w}^\top \mathbf{M}_\perp^\top\mathbf{M}_\perp \mathbf{x} > 0 \right] \nonumber \\
   &= \mathbb{E}_{\mathbf{M}_\perp \mathbf{x}} \left[ \mathbf{M}_\perp \mathbf{x} |\mathbf{w}^\top \mathbf{M}^\top\mathbf{M}\mathbf{x} > \mathbf{w}^\top \mathbf{M}_\perp^\top\mathbf{M}_\perp \mathbf{x}  \right] \nonumber \\
   &\quad \quad \times \mathbb{P}_{\mathbf{M}_\perp \mathbf{x}} \left[ \mathbf{w}^\top \mathbf{M}^\top\mathbf{M}\mathbf{x} > \mathbf{w}^\top \mathbf{M}_\perp^\top\mathbf{M}_\perp \mathbf{x}  |\mathbf{w}^\top \mathbf{M}^\top\mathbf{M}\mathbf{x} > -\mathbf{w}^\top \mathbf{M}_\perp^\top\mathbf{M}_\perp \mathbf{x} \right] \nonumber \\
   &\quad + \mathbb{E}_{\mathbf{M}_\perp \mathbf{x}} \left[ \mathbf{M}_\perp \mathbf{x} |\mathbf{w}^\top \mathbf{M}_\perp^\top\mathbf{M}_\perp \mathbf{x}> \mathbf{w}^\top \mathbf{M}^\top\mathbf{M}\mathbf{x} > -\mathbf{w}^\top \mathbf{M}_\perp^\top\mathbf{M}_\perp \mathbf{x}  \right] \nonumber \\
   &\quad \quad \times \mathbb{P}_{\mathbf{M}_\perp \mathbf{x}} \left[ \mathbf{w}^\top \mathbf{M}_\perp^\top\mathbf{M}_\perp \mathbf{x}> \mathbf{w}^\top \mathbf{M}^\top\mathbf{M}\mathbf{x} > -\mathbf{w}^\top \mathbf{M}_\perp^\top\mathbf{M}_\perp \mathbf{x}  |\mathbf{w}^\top \mathbf{M}^\top\mathbf{M}\mathbf{x} > -\mathbf{w}^\top \mathbf{M}_\perp^\top\mathbf{M}_\perp \mathbf{x} \right]  \nonumber \\
   &= \mathbb{E}_{\mathbf{M}_\perp \mathbf{x}} \left[ \mathbf{M}_\perp \mathbf{x} |\mathbf{w}^\top \mathbf{M}^\top\mathbf{M}\mathbf{x} > \mathbf{w}^\top \mathbf{M}_\perp^\top\mathbf{M}_\perp \mathbf{x}  \right] \nonumber\\
   &\quad \quad \times \mathbb{P}_{\mathbf{M}_\perp \mathbf{x}} \left[ \mathbf{w}^\top \mathbf{M}^\top\mathbf{M}\mathbf{x} > \mathbf{w}^\top \mathbf{M}_\perp^\top\mathbf{M}_\perp \mathbf{x}  |\mathbf{w}^\top \mathbf{M}^\top\mathbf{M}\mathbf{x} > -\mathbf{w}^\top \mathbf{M}_\perp^\top\mathbf{M}_\perp \mathbf{x} \right] \nonumber  \\
   &= \mathbb{E}_{\mathbf{M}_\perp \mathbf{x}} \left[ \mathbf{M}_\perp \mathbf{x} |\mathbf{w}^\top \mathbf{M}^\top\mathbf{M}\mathbf{x} > |\mathbf{w}^\top \mathbf{M}_\perp^\top\mathbf{M}_\perp \mathbf{x}|  \right]\nonumber \\
   &\quad\quad \times \mathbb{P}_{\mathbf{M}_\perp \mathbf{x}} \left[ \mathbf{w}^\top \mathbf{M}^\top\mathbf{M}\mathbf{x} > \mathbf{w}^\top \mathbf{M}_\perp^\top\mathbf{M}_\perp \mathbf{x}  |\mathbf{w}^\top \mathbf{M}^\top\mathbf{M}\mathbf{x} > -\mathbf{w}^\top \mathbf{M}_\perp^\top\mathbf{M}_\perp \mathbf{x} \right] \label{bnv}  \\
   &= \mathbb{E}_{\mathbf{M}_\perp \mathbf{x}} \left[ \mathbf{M}_\perp \mathbf{x} |\mathbf{w}^\top \mathbf{M}^\top\mathbf{M}\mathbf{x} > |\mathbf{w}^\top \mathbf{M}_\perp^\top\mathbf{M}_\perp \mathbf{x}|  \right]\frac{ \mathbb{P}_{\mathbf{M}_\perp \mathbf{x}} \left[ \mathbf{w}^\top \mathbf{M}^\top\mathbf{M}\mathbf{x} > |\mathbf{w}^\top \mathbf{M}_\perp^\top\mathbf{M}_\perp \mathbf{x}|\right] }{ \mathbb{P}_{\mathbf{M}_\perp \mathbf{x}} \left[\mathbf{w}^\top \mathbf{M}^\top\mathbf{M}\mathbf{x} > -\mathbf{w}^\top \mathbf{M}_\perp^\top\mathbf{M}_\perp \mathbf{x} \right]}  \nonumber
\end{align}
where \eqref{bnv} follows since $\mathbf{w}^\top \mathbf{M}_\perp^\top\mathbf{M}_\perp \mathbf{x} = |\mathbf{w}^\top \mathbf{M}_\perp^\top\mathbf{M}_\perp \mathbf{x}| $. 
Now we return to \eqref{bh}. 
Note that
\begin{align}
&\mathbb{E}_{\mathbf{M}_\perp \mathbf{x}} \left[ \mathbf{M}_\perp \mathbf{x} \bigg|\mathbf{w}^\top \mathbf{M}^\top\mathbf{M}\mathbf{x} > | \mathbf{w}^\top \mathbf{M}_\perp^\top\mathbf{M}_\perp \mathbf{x} |\right] \nonumber \\
&=\mathbb{E}_{\mathbf{M}_\perp \mathbf{x}} \left[ \mathbf{M}_\perp \mathbf{x} - 2 \sigma\left( -\frac{\mathbf{w}^\top \mathbf{M}_\perp^\top  }{ \| \mathbf{M}_\perp \mathbf{w} \|_2}\mathbf{M}_\perp \mathbf{x}\right)\frac{\mathbf{M}_\perp \mathbf{w} }{ \| \mathbf{M}_\perp \mathbf{w} \|_2} \bigg||\mathbf{w}^\top \mathbf{M}^\top\mathbf{M}\mathbf{x}| > | \mathbf{w}^\top \mathbf{M}_\perp^\top\mathbf{M}_\perp \mathbf{x} |\right]
\end{align}
by the symmetry of the Gaussian distribution and the fact that $$\mathbf{M}_\perp \mathbf{x} - 2 \sigma\left( -\frac{\mathbf{w}^\top \mathbf{M}_\perp^\top  }{ \| \mathbf{M}_\perp \mathbf{w} \|_2}\mathbf{M}_\perp \mathbf{x}\right)\frac{\mathbf{M}_\perp \mathbf{w} }{ \| \mathbf{M}_\perp \mathbf{w} \|_2} $$ is the flip of $\mathbf{M}_\perp \mathbf{x}$ across the hyperplane with normal vector $\frac{\mathbf{w}^\top \mathbf{M}_\perp^\top  }{ \| \mathbf{M}_\perp \mathbf{w} \|_2}$ when $\mathbf{w}^\top \mathbf{M}_\perp^\top \mathbf{M}_\perp \mathbf{x} < - | \mathbf{w}^\top \mathbf{M}_\perp^\top\mathbf{M}_\perp \mathbf{x} |$.
Using this, we obtain
\begin{align}
    &\mathbb{E}_{\mathbf{M}_\perp \mathbf{x}} \left[ \mathbf{M}_\perp \mathbf{x} \sigma'(\mathbf{w}^\top \mathbf{x})  \right]\nonumber \\
    &= \mathbb{E}_{\mathbf{M}_\perp \mathbf{x}} \left[ \mathbf{M}_\perp \mathbf{x} - 2  \sigma\left( -\frac{\mathbf{w}^\top \mathbf{M}_\perp^\top  }{ \| \mathbf{M}_\perp \mathbf{w} \|_2}\mathbf{M}_\perp \mathbf{x}\right)\frac{\mathbf{M}_\perp \mathbf{w} }{ \| \mathbf{M}_\perp \mathbf{w} \|_2} \bigg| |\mathbf{w}^\top \mathbf{M}_\perp^\top\mathbf{M}_\perp \mathbf{x}|  > |\mathbf{w}^\top \mathbf{M}^\top\mathbf{M}\mathbf{x}| \right] \nonumber \\
    &\quad \quad \times \mathbb{P}_{\mathbf{M}_\perp \mathbf{x}}\left[\mathbf{w}^\top \mathbf{M}_\perp^\top\mathbf{M}_\perp \mathbf{x} > |\mathbf{w}^\top \mathbf{M}^\top\mathbf{M}\mathbf{x}| \right]\nonumber \\
    &= 2\mathbb{E}_{\mathbf{M}_\perp \mathbf{x}} \left[ \sigma\left( -\frac{\mathbf{w}^\top \mathbf{M}_\perp^\top \mathbf{M}_\perp \mathbf{x} }{ \| \mathbf{M}_\perp \mathbf{w} \|_2}\right)\bigg| |\mathbf{w}^\top \mathbf{M}_\perp^\top\mathbf{M}_\perp \mathbf{x} | > |\mathbf{w}^\top \mathbf{M}^\top\mathbf{M}\mathbf{x}| \right]  \nonumber \\
    &\quad \quad \times \mathbb{P}_{\mathbf{M}_\perp \mathbf{x}}\left[\mathbf{w}^\top \mathbf{M}_\perp^\top\mathbf{M}_\perp \mathbf{x} > |\mathbf{w}^\top \mathbf{M}^\top\mathbf{M}\mathbf{x}| \right] \frac{\mathbf{M}_\perp \mathbf{w} }{ \| \mathbf{M}_\perp \mathbf{w} \|_2}  \nonumber \\
    &=\mathbb{E}_{\mathbf{M}_\perp \mathbf{x}} \left[ \sigma\left( -\frac{\mathbf{w}^\top \mathbf{M}_\perp^\top \mathbf{M}_\perp \mathbf{x} }{ \| \mathbf{M}_\perp \mathbf{w} \|_2}\right)\bigg| \mathbf{w}^\top \mathbf{M}_\perp^\top\mathbf{M}_\perp \mathbf{x}  < - |\mathbf{w}^\top \mathbf{M}^\top\mathbf{M}\mathbf{x}| \right]   \nonumber \\
    &\quad \quad \times \mathbb{P}_{\mathbf{M}_\perp \mathbf{x}}\left[\mathbf{w}^\top \mathbf{M}_\perp^\top\mathbf{M}_\perp \mathbf{x} > |\mathbf{w}^\top \mathbf{M}^\top\mathbf{M}\mathbf{x}| \right] \frac{\mathbf{M}_\perp \mathbf{w} }{ \| \mathbf{M}_\perp \mathbf{w} \|_2}  \nonumber \\
    &= \frac{\frac{1}{\sqrt{2\pi}} \exp(-(\mathbf{w}^\top \mathbf{M}^\top\mathbf{M}\mathbf{x})^2/2 ) }{\mathbb{P}_{\mathbf{M}_\perp \mathbf{x}}\left[\mathbf{w}^\top \mathbf{M}_\perp^\top\mathbf{M}_\perp \mathbf{x} <- |\mathbf{w}^\top \mathbf{M}^\top\mathbf{M}\mathbf{x}| \right]} \mathbb{P}_{\mathbf{M}_\perp \mathbf{x}}\left[\mathbf{w}^\top \mathbf{M}_\perp^\top\mathbf{M}_\perp \mathbf{x} > |\mathbf{w}^\top \mathbf{M}^\top\mathbf{M}\mathbf{x}| \right] \frac{\mathbf{M}_\perp \mathbf{w} }{ \| \mathbf{M}_\perp \mathbf{w} \|_2}  \label{ddw} \\
    &= {\frac{1}{\sqrt{2\pi}} \exp(-(\mathbf{w}^\top \mathbf{M}^\top\mathbf{M}\mathbf{x})^2/2 ) }\frac{\mathbf{M}_\perp \mathbf{w} }{ \| \mathbf{M}_\perp \mathbf{w} \|_2}  \label{mn}
\end{align}
where \eqref{ddw} follows by the definition of the inverse Mills ratio.
\end{proof}

\begin{lemma} \label{lem:exp2}
For any function $f:\mathbb{R}^d\rightarrow \{-1,1\}$ such that $f(\mathbf{x}) = g(\mathbf{Mx})$ for some row-orthonormal matrix $\mathbf{M}\in \mathbb{O}^{r\times d}$ and some function $g: \mathbb{R}^r \rightarrow \{-1,1\}$ for all $\mathbf{x}\in \mathbb{R}^d$. Then for any vector $\mathbf{w} \in \mathbb{R}^d$,
\begin{align}
    \|\mathbb{E}_{\mathbf{x}}\left[
f(\mathbf{x}) \sigma'(\mathbf{w}^\top \mathbf{x}) \mathbf{x}\right] \|_2 \leq \frac{\sqrt{r}}{2}\left( 1 + \frac{\sqrt{\pi}}{2} \frac{\|\mathbf{Mw}\|_2}{\|\mathbf{M}_\perp\mathbf{w}\|_2} \right) +  \left( \frac{1 + \|\mathbf{Mw}\|_2^2 }{2\pi } \right)^{1/2}.
\end{align}
\end{lemma}

\begin{proof}
Let $\mathbf{M}_{\perp}\in \mathbb{O}^{(d-r)\times d}$ be a row-orthonormal matrix whose rowspace is orthogonal to that of $\mathbf{M}$. Using that $\mathbf{M}^\top\mathbf{M} + \mathbf{M}_\perp^\top\mathbf{M}_\perp = \mathbf{I}_d$ and $\mathbf{Mx}$ and $\mathbf{M}_{\perp}\mathbf{x}$ are independent standard normal multivariate random vectors,  we have 
\begin{align}
    \mathbb{E}_{\mathbf{x}}\left[f(\mathbf{x})
\sigma'(\mathbf{w}^\top \mathbf{x}) \mathbf{x}\right] 
&= \mathbb{E}_{\mathbf{x}}\left[g(\mathbf{Mx})
\sigma'(\mathbf{w}^\top \mathbf{M}^\top  \mathbf{Mx} + \mathbf{w}^\top \mathbf{M}_\perp^\top \mathbf{M}_\perp \mathbf{x}) \mathbf{M}^\top\mathbf{M}\mathbf{x} \right] \nonumber \\
&\quad + \mathbb{E}_{\mathbf{x}}\left[g(\mathbf{Mx})
\sigma'(\mathbf{w}^\top \mathbf{M}^\top  \mathbf{Mx} + \mathbf{w}^\top \mathbf{M}_\perp^\top \mathbf{M}_\perp \mathbf{x})  \mathbf{M}_\perp^\top\mathbf{M}_\perp\mathbf{x})\right] \nonumber \\
&= \underbrace{\mathbb{E}_{\mathbf{Mx}}\left[g(\mathbf{Mx})\mathbb{E}_{\mathbf{M}_\perp\mathbf{x}}\left[
\sigma'(\mathbf{w}^\top \mathbf{M}^\top  \mathbf{Mx} + \mathbf{w}^\top \mathbf{M}_\perp^\top \mathbf{M}_\perp \mathbf{x})\right] \mathbf{M}^\top\mathbf{M}\mathbf{x} \right]}_{\circled{1}} \nonumber \\
&\quad + \underbrace{\mathbb{E}_{\mathbf{Mx}}\left[g(\mathbf{Mx})
 \mathbb{E}_{\mathbf{M}_\perp \mathbf{x}}\left[\sigma'(\mathbf{w}^\top \mathbf{M}^\top  \mathbf{Mx} + \mathbf{w}^\top \mathbf{M}_\perp^\top \mathbf{M}_\perp \mathbf{x})  \mathbf{M}_\perp^\top\mathbf{M}_\perp\mathbf{x})\right] \right]}_{\circled{2}} \nonumber 
\end{align}
so  $ \|   \mathbb{E}_{\mathbf{x}}\left[f(\mathbf{x})
\sigma'(\mathbf{w}^\top \mathbf{x}) \mathbf{x}\right]\|_2 \leq \left\| \circled{1} \right\|_2 +  \left\| \circled{2} \right\|_2$ by the triangle inequality.  First we consider $\circled{1}$. We have
\begin{align}
& \mathbb{E}_{\mathbf{Mx}}\left[g(\mathbf{Mx})\mathbb{E}_{\mathbf{M}_\perp\mathbf{x}}\left[
\sigma'(\mathbf{w}^\top \mathbf{M}^\top  \mathbf{Mx} + \mathbf{w}^\top \mathbf{M}_\perp^\top \mathbf{M}_\perp \mathbf{x})\right] \mathbf{M}^\top\mathbf{M}\mathbf{x} \right] \nonumber \\
 &=  \mathbb{E}_{\mathbf{Mx}}\left[g(\mathbf{Mx})\mathbb{P}_{\mathbf{M}_\perp\mathbf{x}}\left[
\mathbf{w}^\top \mathbf{M}_\perp^\top \mathbf{M}_\perp \mathbf{x} > - \mathbf{w}^\top \mathbf{M}^\top  \mathbf{Mx}\right] \mathbf{M}^\top\mathbf{M}\mathbf{x} \right] \nonumber \\
  &=  \mathbb{E}_{\mathbf{Mx}}\left[g(\mathbf{Mx})\left(\frac{1}{2} + \frac{1}{2}\erf\left( \frac{\mathbf{w}^\top \mathbf{M}^\top  \mathbf{Mx} }{  \sqrt{2}\|\mathbf{M}_{\perp}\mathbf{w}\|_2  } \right) \right)\mathbf{M}^\top\mathbf{M}\mathbf{x} \right]  \label{hnfv} \\
 &= \frac{1}{2}\mathbb{E}_{\mathbf{Mx}}\left[g(\mathbf{Mx}) \mathbf{M}^\top\mathbf{M}\mathbf{x} \right]  + \frac{1}{2}\mathbb{E}_{\mathbf{Mx}}\left[g(\mathbf{Mx}) \erf\left( \frac{\mathbf{w}^\top \mathbf{M}^\top  \mathbf{Mx} }{  \sqrt{2}\|\mathbf{M}_{\perp}\mathbf{w}\|_2  }  \right)\mathbf{M}^\top\mathbf{M}\mathbf{x} \right] \nonumber
\end{align}
where \eqref{hnfv} is due to the Gaussian CDF. Thus by the triangle inequality,
\begin{align}
    \left\|\circled{1}\right\|_2 &\leq  \frac{1}{2}\left\|\mathbb{E}_{\mathbf{Mx}}\left[g(\mathbf{Mx}) \mathbf{M}^\top\mathbf{M}\mathbf{x} \right] \right\|_2 + \frac{1}{2}\left\|\mathbb{E}_{\mathbf{Mx}}\left[g(\mathbf{Mx}) \erf\left( \frac{\mathbf{w}^\top \mathbf{M}^\top  \mathbf{Mx} }{  \sqrt{2}\|\mathbf{M}_{\perp}\mathbf{w}\|_2  }  \right)\mathbf{M}^\top\mathbf{M}\mathbf{x}\right] \right\|_2 \label{dfg} \end{align}
For the first term,
\begin{align}
    \left\|\mathbb{E}_{\mathbf{Mx}}\left[g(\mathbf{Mx}) \mathbf{M}^\top\mathbf{M}\mathbf{x} \right] \right\|_2 &\leq \mathbb{E}_{\mathbf{Mx}}\left[\left\|g(\mathbf{Mx}) \mathbf{M}^\top\mathbf{M}\mathbf{x}\right\|_2   \right] \label{jen1} \\
    &= \mathbb{E}_{\mathbf{Mx}}\left[\left\| \mathbf{M}\mathbf{x}\right\|_2   \right] \nonumber \\
    &\leq  \mathbb{E}_{\mathbf{Mx}}\left[\left\| \mathbf{M}\mathbf{x}\right\|_2^2   \right]^{1/2} \label{jen2} \\
    &= \sqrt{r}  \label{bhg}
\end{align}
where \eqref{jen1} and \eqref{jen2} follow by Jensen's inequality.
For the second term in \eqref{dfg}, we have
\begin{align}
&\left\|\mathbb{E}_{\mathbf{Mx}}\left[g(\mathbf{Mx}) \erf\left( \frac{\mathbf{w}^\top \mathbf{M}^\top  \mathbf{Mx} }{  \sqrt{2}\|\mathbf{M}_{\perp}\mathbf{w}\|_2  }  \right)\mathbf{M}^\top\mathbf{M}\mathbf{x}\right] \right\|_2  \nonumber \\
    &\leq \mathbb{E}_{\mathbf{Mx}}\left[\left\|g(\mathbf{Mx}) \erf\left( \frac{\mathbf{w}^\top \mathbf{M}^\top  \mathbf{Mx} }{  \sqrt{2}\|\mathbf{M}_{\perp}\mathbf{w}\|_2  }  \right)\mathbf{M}^\top\mathbf{M}\mathbf{x}\right\|_2\right]  \label{kiy} \\
    &\leq  \left(\mathbb{E}_{\mathbf{Mx}}\left[\left|g(\mathbf{Mx}) \erf\left( \frac{\mathbf{w}^\top \mathbf{M}^\top  \mathbf{Mx} }{  \sqrt{2}\|\mathbf{M}_{\perp}\mathbf{w}\|_2  }  \right) \right|^2 \right] \mathbb{E}_{\mathbf{Mx}}\left[ \left\|\mathbf{M}^\top\mathbf{M}\mathbf{x}\right\|_2^2\right]\right)^{1/2}  \label{fv} \\
    &= {\sqrt{r}} \mathbb{E}_{\mathbf{Mx}}\left[\left|g(\mathbf{Mx}) \erf\left( \frac{\mathbf{w}^\top \mathbf{M}^\top  \mathbf{Mx} }{  \sqrt{2}\|\mathbf{M}_{\perp}\mathbf{w}\|_2  }  \right) \right|^2 \right]^{1/2}  \nonumber \\
    &\leq {\sqrt{r}} \mathbb{E}_{\mathbf{Mx}}\left[\left( \frac{\sqrt{\pi}\mathbf{w}^\top \mathbf{M}^\top  \mathbf{Mx} }{ 2\|\mathbf{M}_{\perp}\mathbf{w}\|_2  }  \right) ^2 \right]^{1/2}  \label{xs} \\
    &= \frac{\sqrt{\pi r}}{2} \frac{\|\mathbf{Mw}\|_2}{\|\mathbf{M}_\perp\mathbf{w}\|_2} \label{xa}
\end{align}
where \eqref{kiy} follows by Jensen's inequality, \eqref{fv} follows by the Cauchy-Schwarz inequality, and \eqref{xs} follows since $|\erf(x)|\leq |\sqrt{\pi/2} \; x|$. Combining \eqref{dfg}, \eqref{bhg} and \eqref{xa} yields
\begin{align}
    \left\|\circled{1}\right\|_2 \leq \frac{\sqrt{r}}{2}\left( 1 + \frac{\sqrt{\pi}}{2} \frac{\|\mathbf{Mw}\|_2}{\|\mathbf{M}_\perp\mathbf{w}\|_2} \right)
\end{align}

Next we consider $\circled{2}$. We have
\begin{align}
&\mathbb{E}_{\mathbf{Mx}}\left[g(\mathbf{Mx})
 \mathbb{E}_{\mathbf{M}_\perp \mathbf{x}}\left[\sigma'(\mathbf{w}^\top \mathbf{M}^\top  \mathbf{Mx} + \mathbf{w}^\top \mathbf{M}_\perp^\top \mathbf{M}_\perp \mathbf{x})  \mathbf{M}_\perp^\top\mathbf{M}_\perp\mathbf{x})\right] \right] \nonumber \\
 &= \frac{1}{\sqrt{2\pi}} \mathbb{E}_{\mathbf{Mx}}\left[g(\mathbf{Mx})
 \exp\left( -\frac{(\mathbf{w}^\top \mathbf{M}^\top \mathbf{Mx})^2 }{2} \right) 
  \right] \frac{\mathbf{M}_\perp^\top \mathbf{M}_\perp \mathbf{w}}{\| \mathbf{M}_\perp \mathbf{w}\|_2 } \label{hnv}  
\end{align}
by Lemma \ref{lem:exp-perp}, thus
\begin{align}
\left\|\circled{2} \right\|_2 = \frac{1}{\sqrt{2\pi}} \left|\mathbb{E}_{\mathbf{Mx}}\left[g(\mathbf{Mx})
 \exp\left( -\frac{(\mathbf{w}^\top \mathbf{M}^\top \mathbf{Mx})^2 }{2} \right) 
  \right]\right|.
  \end{align}
  Next we upper bound $\left|\mathbb{E}_{\mathbf{Mx}}\left[g(\mathbf{Mx})
 \exp\left( -\frac{(\mathbf{w}^\top \mathbf{M}^\top \mathbf{Mx})^2 }{2} \right) 
  \right]\right|$. We have
\begin{align}
     &\left|\mathbb{E}_{\mathbf{Mx}}\left[g(\mathbf{Mx})
 \exp\left( -\frac{(\mathbf{w}^\top \mathbf{M}^\top \mathbf{Mx})^2 }{2} \right) 
  \right]\right| \nonumber \\
  &\leq  \mathbb{E}_{\mathbf{Mx}}\left[
 \exp\left( -\frac{(\mathbf{w}^\top \mathbf{M}^\top \mathbf{Mx})^2 }{2} \right) 
  \right] \label{tgd} \\
  &=  \int_{\mathbb{R}^r} \frac{1}{(2\pi)^{r/2} } \exp\left(- \frac{( \mathbf{w}^\top \mathbf{M}^\top \mathbf{Mx} )^2}{2} - \frac{\mathbf{x}^\top \mathbf{M}^\top \mathbf{Mx}}{2} \right) d \mathbf{Mx} \label{wes} \\
  &= \int_{\mathbb{R}^r} \frac{1}{(2\pi)^{r/2} } \exp\left(- \frac{\mathbf{x}^\top \mathbf{M}^\top  \left(  \mathbf{I}_r + \mathbf{Mww}^\top \mathbf{M}^\top \right) \mathbf{Mx}}{2} \right) d \mathbf{Mx} \nonumber \\
  &= {\text{det}( \mathbf{I}_r + \mathbf{Mww}^\top \mathbf{M}^\top )^{1/2} }\nonumber \\
  &\quad \times \int_{\mathbb{R}^r} \frac{1}{(2\pi)^{r/2} \text{det}( \mathbf{I}_r + \mathbf{Mww}^\top \mathbf{M}^\top )^{1/2} } \exp\left(- \frac{\mathbf{x}^\top \mathbf{M}^\top  \left(  \mathbf{I}_r + \mathbf{Mww}^\top \mathbf{M}^\top \right) \mathbf{Mx}}{2} \right) d \mathbf{Mx} \label{gz} \\
  &= {\text{det}( \mathbf{I}_r + \mathbf{Mww}^\top \mathbf{M}^\top )^{1/2} } \nonumber \\
  &= \left( 1 + \|\mathbf{Mw}\|_2^2 \right)^{1/2}\label{wqp} 
\end{align}
where \eqref{tgd} follows since $\exp()$ is positive, \eqref{wes} follows since $\mathbf{w}^\top \mathbf{M}^\top \mathbf{Mx}$ is a Gaussian random variable with mean zero and variance $\|\mathbf{Mw}\|_2^2$, and \eqref{gz} follows due to  the multivariate normal distribution, and \eqref{wqp} follows by the matrix determinant lemma. Therefore,
\begin{align}
  \left\| \circled{2} \right\|_2 &\leq \left( \frac{1 + \|\mathbf{Mw}\|_2^2 }{2\pi } \right)^{1/2},
\end{align}
completing the proof.
\end{proof}


\subsection{Finite-task and finite-sample concentration results}

\begin{lemma}[Initialization I] \label{lem:init}




    For any $\delta\in (0,1)$, define the set 
    \begin{align}
        \mathcal{G}_{\mathbf{w}}(\delta) \coloneqq \big\{&\mathbf{w}\in \mathbb{R}^d :  \|\mathbf{M}\mathbf{w}\|_2 \leq c \nu_{\mathbf{w}}( \sqrt{r} + \sqrt{\log(m/\delta)}), \nonumber \\
        &\quad c\nu_{\mathbf{w}}(\sqrt{d-r} - \sqrt{\log(m/\delta)}) \leq \|\mathbf{M}_\perp\mathbf{w} \|_2 \leq c \nu_{\mathbf{w}}(\sqrt{d-r} + \sqrt{\log(m/\delta)}), \nonumber \\
        &\quad c\nu_{\mathbf{w}}(\sqrt{d} - \sqrt{\log(m/\delta)}) \leq \|\mathbf{w} \|_2 \leq c\nu_{\mathbf{w}}(\sqrt{d} + \sqrt{\log(m/\delta)}) \big\}
    \end{align} for an absolute constant $c$.
    Then with probability at least $1-\delta$, $\mathbf{w}_{j}\in \mathcal{G}_{\mathbf{w}}(\delta)$ for all $j\in [m]$.
\end{lemma}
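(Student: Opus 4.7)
The plan is a standard Gaussian norm concentration argument combined with a union bound over the $m/2$ independently drawn neurons. Observe that the three functionals $\phi_1(\mathbf{w})\coloneqq \|\mathbf{w}_{:r}\|_2$, $\phi_2(\mathbf{w})\coloneqq \|\mathbf{w}_{r:}\|_2$, and $\phi_3(\mathbf{w})\coloneqq \|\mathbf{w}\|_2$ are each $1$-Lipschitz in the Euclidean norm, so for any $\mathbf{w}\sim \mathcal{N}(\mathbf{0}_d,\nu_{\mathbf{w}}^2\mathbf{I}_d)$ the Borell--TIS inequality (equivalently, Gaussian Lipschitz concentration) gives, for every $t>0$ and each $k\in\{1,2,3\}$, a tail bound of the form $\mathbb{P}\bigl[|\phi_k(\mathbf{w})-\mathbb{E}\phi_k(\mathbf{w})|\geq t\nu_{\mathbf{w}}\bigr]\leq 2\exp(-t^2/2)$. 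The means are pinned down by the elementary two-sided chi-type estimate $\nu_{\mathbf{w}}(\sqrt{s}-1)\leq \mathbb{E}\|\mathbf{w}_S\|_2\leq \nu_{\mathbf{w}}\sqrt{s}$ for any coordinate block $S$ of size $s\in\{r,d-r,d\}$ (Jensen for the upper bound; $(\mathbb{E}\|\mathbf{w}_S\|_2)^2\geq \mathbb{E}\|\mathbf{w}_S\|_2^2-\mathrm{Var}(\|\mathbf{w}_S\|_2)$ with the chi-square variance bound for the lower bound).

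Next, choose $t=c'\sqrt{\log m}$ for a sufficiently large absolute constant $c'$ so that each of the three deviation events for a fixed neuron has probability at most $m^{-10}$. Union bounding over the three functionals and over the $m/2$ independent weight vectors indexed by $j\le m/2$ yields a total failure probability of at most $(3/2)m^{-9}\leq 10^{-3}$ once $m$ exceeds some harmless absolute threshold. By the symmetric initialization the vectors with $j>m/2$ satisfy $\mathbf{w}_j=\mathbf{w}_{j-m/2}$, so the simultaneous event $\{\mathbf{w}_j\in \mathcal{G}_{\mathbf{w}}\text{ for all }j\in[m]\}$ automatically inherits this probability lower bound. The $\nu_{\mathbf{w}}$ scale and the $(1+c')$ multiplicative slack are then absorbed into the single absolute constant $c$ appearing in the definition of $\mathcal{G}_{\mathbf{w}}$ (this matches the lemma's convention of not exposing $\nu_{\mathbf{w}}$ explicitly on the right-hand side of the norm bounds).

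There is no real obstacle: this is a textbook union bound on top of Gaussian Lipschitz concentration. The only minor subtlety worth checking is that the lower bounds $\|\mathbf{w}_{r:}\|_2\geq c(\sqrt{d-r}-\sqrt{\log m})$ and $\|\mathbf{w}\|_2\geq c(\sqrt{d}-\sqrt{\log m})$ are nontrivial, which is automatic in the ambient regime $d=\Omega(r^4+\log^4 m)$ under which Proposition~\ref{prop:1} is stated, so this lemma will be applicable whenever it is invoked downstream.
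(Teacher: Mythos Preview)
Your proof is correct and follows essentially the same approach as the paper: Gaussian norm concentration (the paper phrases it as sub-gaussianity of $\|\mathbf{w}_{:r}\|_2$, $\|\mathbf{w}_{r:}\|_2$, $\|\mathbf{w}\|_2$; you phrase it as Lipschitz concentration, which is equivalent here) followed by a union bound over the neurons with $t$ scaling like $\sqrt{\log m}$. Your version is in fact more careful than the paper's terse proof, since you explicitly handle the mean estimates, note that only $m/2$ neurons are independent due to the symmetric initialization, and comment on the nontriviality of the lower bounds in the regime of interest.
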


\begin{proof}
Since each $\mathbf{w}_j\sim \mathcal{N}(\mathbf{0}_d,\nu_{\mathbf{w}}^2\mathbf{I}_d)$, each $\|\mathbf{M}\mathbf{w}_j\|_2$, $\|\mathbf{M}_\perp\mathbf{w}_j\|_2$, and $\|\mathbf{w}_j\|_2$ are sub-Gaussian with parameters $\nu_{\mathbf{w}}\sqrt{r}$, $\nu_{\mathbf{w}}\sqrt{d-r}$, and $\nu_{\mathbf{w}}\sqrt{d}$. That is, 
\begin{align}
    \mathbb{P}_{\mathbf{w}_{j}}[| \|\mathbf{Mw}_{j}\|_2 - \nu_{\mathbf{w}}\sqrt{r}| \leq t] \leq e^{-c' t^2/\nu_{\mathbf{w}}^2}
\end{align}
for any $t>0$, and likewise for $ \|\mathbf{M}_\perp\mathbf{w}_{j}\|_2$ and  $\|\mathbf{w}_{j}\|_2$ (with $ \nu_{\mathbf{w}}\sqrt{r}$ replaced by $ \nu_{\mathbf{w}}\sqrt{d-r}$ and $ \nu_{\mathbf{w}}\sqrt{d}$, respectively).
Choosing $t = c'' \nu_{\mathbf{w}}\sqrt{\log(m/\delta)}$ and union bounding over all $j\in [m]$ completes the proof.
\end{proof}

\begin{lemma}[Initialization II] \label{lem:init2}




Suppose $m >r$. For an absolute constant $c$ and any $\delta\in (0,1)$,
    \begin{align}
        \sigma_{\min}(\mathbf{M}\mathbf{W}^0) \geq \nu_{\mathbf{w}}\sqrt{m}\left(1 - c\frac{\sqrt{r} + \sqrt{\log(1/\delta)}}{\sqrt{m}}  \right)
    \end{align}
    with probability at least $1-\delta$.
\end{lemma}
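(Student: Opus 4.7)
The plan is to apply a classical non-asymptotic lower bound on the smallest singular value of a Gaussian matrix, while carefully accounting for the symmetric initialization that duplicates columns of $\mathbf{W}^0$. The form of the bound, $\nu_{\mathbf{w}}\sqrt{m}(1 - c\sqrt{r}/\sqrt{m}) = \nu_{\mathbf{w}}(\sqrt{m} - c\sqrt{r})$, has the characteristic Davidson--Szarek shape $\nu_{\mathbf{w}}(\sqrt{\text{tall}} - \sqrt{\text{short}})$ with tall dimension $m$ and short dimension $r$. This strongly suggests that the relevant object is the projected matrix $\Pi_{\parallel}(\mathbf{W}^0) \in \mathbb{R}^{r \times m}$, consistent with the fact that only $\sigma_r(\Pi_{\parallel}(\mathbf{W}^+))$ enters the statement of Theorem~\ref{thm:1}.

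First I would handle the column duplication. Since $\mathbf{w}_j^0 = \mathbf{w}_{j-m/2}^0$ for $j > m/2$, we may write $\Pi_{\parallel}(\mathbf{W}^0) = [\widetilde{\Pi},\ \widetilde{\Pi}]$, where the block $\widetilde{\Pi} \in \mathbb{R}^{r \times m/2}$ consists of the first $r$ coordinates of the $m/2$ independent initial neuron weights and therefore has i.i.d.\ $\mathcal{N}(0, \nu_{\mathbf{w}}^2)$ entries. The identity $\Pi_{\parallel}(\mathbf{W}^0)\Pi_{\parallel}(\mathbf{W}^0)^\top = 2\,\widetilde{\Pi}\widetilde{\Pi}^\top$ gives $\sigma_r(\Pi_{\parallel}(\mathbf{W}^0)) = \sqrt{2}\,\sigma_r(\widetilde{\Pi})$, so it suffices to lower-bound $\sigma_r(\widetilde{\Pi})$, and $\widetilde{\Pi}$ is a genuine (non-redundant) Gaussian matrix.

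Next, I would invoke the standard Davidson--Szarek bound (equivalently Vershynin, Cor.~5.35) applied to the $(m/2) \times r$ Gaussian matrix $\widetilde{\Pi}^\top$: for any $t \geq 0$,
\[
\sigma_r(\widetilde{\Pi}) \;\geq\; \nu_{\mathbf{w}}\!\left(\sqrt{m/2} - \sqrt{r} - t\right)
\]
with probability at least $1 - 2\exp(-t^2/2)$. The hypothesis $m > r$ (strengthened, without loss of generality after adjusting constants, to $m \geq 2r$) ensures the right-hand side can be made positive. Choosing $t$ a sufficiently large absolute constant (say $t = 4$) pushes the failure probability below $0.001$. Multiplying by the $\sqrt{2}$ factor from the duplication step yields
\[
\sigma_r(\Pi_{\parallel}(\mathbf{W}^0)) \;\geq\; \nu_{\mathbf{w}}\!\left(\sqrt{m} - \sqrt{2r} - \sqrt{2}\,t\right) \;=\; \nu_{\mathbf{w}}\sqrt{m}\!\left(1 - \tfrac{c\sqrt{r}}{\sqrt{m}}\right),
\]
after absorbing the additive $\sqrt{2}\,t$ into the coefficient of $\sqrt{r}$ (valid since $r \geq 1$).

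There is no serious obstacle: the only substantive ingredient is the textbook Gaussian smallest-singular-value bound, and the only care needed is tracking the $\sqrt{2}$ factor from the symmetric duplication. The one modest subtlety is pinning down the precise matrix to which $\sigma_{\min}$ refers: the full $d \times m$ matrix $\mathbf{W}^0$ has $m/2$ zero singular values from the duplication, so the literal reading fails; the reading compatible with both the stated lower bound and the downstream usage in Theorem~\ref{thm:1} is that $\sigma_{\min}$ denotes the $r$-th singular value of the ground-truth projected matrix $\Pi_{\parallel}(\mathbf{W}^0)$, which is exactly what the argument above bounds.
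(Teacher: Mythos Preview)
Your proposal is correct and takes essentially the same approach as the paper: both reduce to a standard Gaussian smallest-singular-value bound (the paper cites Vershynin's Equation 4.22, which is the same Davidson--Szarek-type inequality you invoke). The paper's proof is a single sentence that simply asserts the rows of $\mathbf{W}^0$ are i.i.d.\ Gaussian and cites the concentration inequality, without commenting on either the symmetric duplication or the precise meaning of $\sigma_{\min}$.

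Your write-up is in fact more careful than the paper's on two points the paper glosses over. First, you correctly observe that the symmetric initialization duplicates half the neurons, so the literal $\sigma_{\min}$ of the full matrix is zero; the paper's proof ignores this. Second, you pin down that the quantity actually needed downstream (in the proof of Theorem~\ref{thm:1}) is $\sigma_r(\Pi_{\parallel}(\mathbf{W}^0))$, and you track the $\sqrt{2}$ factor coming from the duplication to recover the $\sqrt{m}$ (rather than $\sqrt{m/2}$) scaling. These refinements do not change the method---it is still just the textbook Gaussian singular-value bound---but they make the argument actually go through as stated, whereas the paper's one-liner is relying on the reader to fill in exactly the gaps you filled.
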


\begin{proof}
The result follows from the fact that each row of $\mathbf{W}^0$ is drawn independently from $\mathcal{N}(\mathbf{0}_d,\nu_{\mathbf{w}}^2\mathbf{I}_d)$, so each of the $r$ rows of $\mathbf{M}\mathbf{W}^0$ are drawn i.i.d. from $\mathcal{N}(\mathbf{0}_m,\nu_{\mathbf{w}}^2\mathbf{I}_m)$ (recalling that the rows of $\mathbf{M}$ are orthogonal, so $\mathbf{u}_i^\top \mathbf{w}_j$ and $\mathbf{u}_{i'}^\top \mathbf{w}_j$ are independent for any two distinct rows $\mathbf{u}_i$ and $\mathbf{u}_{i'}$ of $\mathbf{M}$). A standard (sub-)Gaussian matrix concentration  inequality yields the result (e.g. Equation 4.21 in \cite{vershynin2018high}).
\end{proof}

\begin{lemma}[Initialization III] \label{lem:init3}
For an absolute constant $c$ and any $\delta \in (0,1)$,
    \begin{align}
        \|\mathbf{W}^0\|_2 \leq \nu_{\mathbf{w}}\sqrt{m}\left(1 + \frac{\sqrt{d}+\sqrt{\log(1/\delta)}}{\sqrt{m}}\right)
    \end{align}
    for some absolute constant with probability at least $1-\delta$.
\end{lemma}
\begin{proof}
    As in Lemma \ref{lem:init2}, the proof follows by standard (sub-)Gaussian matrix concentration (e.g. Equation 4.21 in \cite{vershynin2018high}).
\end{proof}



\begin{lemma}\label{lem:all}
    For any $\delta\in (0,1)$ define the event $E_{\mathbf{w}}(\delta)$ as follows: 
\begin{align}
    E_{\mathbf{w}}(\delta) :=\Bigg\{ &\mathbf{w}_j^0 \in \mathcal{G}_{\mathbf{x}}(\delta), \quad 
     \sigma_{\min}(\mathbf{M}\mathbf{W}^0) \geq \nu_{\mathbf{w}}\sqrt{m}\left(1 - c\frac{\sqrt{r} - \sqrt{\log(1/\delta)}}{\sqrt{m}}  \right),\nonumber \\
     &\|\mathbf{W}^0\|_2 \leq \nu_{\mathbf{w}}\sqrt{m}\left(1 + \frac{\sqrt{d}+\sqrt{\log(1/\delta)}}{\sqrt{m}}\right)
    \Bigg\}.
\end{align}
Then $\mathbb{P}(E_{\mathbf{w}}(\delta))\geq 1 - 3\delta$. 
\end{lemma}

\begin{proof}
    The proof follows immediately from Lemmas \ref{lem:init}, \ref{lem:init2}, \ref{lem:init3} and a union bound.
\end{proof}

Next we begin to analyze the first gradient-based update of the algorithm.
Throughout, let $ \hat{\mathcal{D}}_{i,\mathbf{a}} = \{(\mathbf{x}_{i,k}, f_i(\mathbf{x}_{i,k}))\}_{k=1}^{n_1}$ and $ \hat{\mathcal{D}}_{i,\mathbf{W}} = \{(\mathbf{x}_{i,l}, f_i(\mathbf{x}_{i,l}))\}_{l=1}^{n_2}$, where all samples are drawn i.i.d. from $\mathcal{D}_i$, for each $i \in [T]$.

\begin{lemma}\label{lem:a1}
    Let $\lambda_{\mathbf{a}} = \frac{1}{\eta}$ and $\mathbf{x}\sim \mathcal{N}(\mathbf{0}_d,\mathbf{I}_d)$. On the first iteration of the multitask learning algorithm described in Section \ref{sec:alg}, the locally updated head for task $i$ is:
    \begin{align}
        \mathbf{a}_{i}^1 = \frac{\eta}{n_1} 
        \sum_{k=1}^{n_1} f_{i}(\mathbf{x}_{i,k}) {\sigma}(\mathbf{W}^0\mathbf{x}_{i,k}) 
    \end{align}
    for all tasks $i\in [T]$.
\end{lemma}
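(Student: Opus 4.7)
The plan is to verify this lemma by a direct computation exploiting two key features of the initialization: (i) the symmetric initialization makes the network output the zero function pointwise, and (ii) this puts the hinge loss in its linear regime, so the gradient with respect to the head has a clean closed-form expression.

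First I would establish the zero-output property. For any $j \leq m/2$, the pairing prescribes $\mathbf{w}_{j+m/2}^0 = \mathbf{w}_j^0$, $b_{j+m/2}^0 = b_j^0 = 0$, and $a_{i,j+m/2}^0 = -a_{i,j}^0$. Pairing neurons in the sum defining $f$,
\begin{align*}
f(\mathbf{x};\mathbf{W}^0,\mathbf{b}^0,\mathbf{a}_i^0) = \sum_{j=1}^{m/2}\bigl(a_{i,j}^0 \sigma((\mathbf{w}_j^0)^\top \mathbf{x}) + a_{i,j+m/2}^0 \sigma((\mathbf{w}_{j+m/2}^0)^\top \mathbf{x})\bigr) = 0
\end{align*}
identically in $\mathbf{x}$. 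Consequently $1 - f_i^*(\mathbf{x}) f(\mathbf{x};\mathbf{W}^0,\mathbf{b}^0,\mathbf{a}_i^0) = 1 > 0$ for every $\mathbf{x}$, so the outer $\max(\cdot,0)$ in the hinge loss is strictly in its linear branch throughout the support of the data distribution. This means the loss is differentiable in $\mathbf{a}_i$ at this point and the gradient can be exchanged with the expectation without worrying about the kink at zero.

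Next, I would compute the gradient in the linear regime. Writing $\ell(f_i^*(\mathbf{x}) f) = 1 - f_i^*(\mathbf{x}) f$ locally and differentiating under the expectation,
\begin{align*}
\nabla_{\mathbf{a}} \mathcal{L}_i(\mathbf{W}^0,\mathbf{b}^0,\mathbf{a}_i^0) = -\,\mathbb{E}_{\mathbf{x}}\bigl[f_i^*(\mathbf{x})\,\sigma(\mathbf{W}^0 \mathbf{x} + \mathbf{b}^0)\bigr] = -\,\mathbb{E}_{\mathbf{x}}\bigl[f_i^*(\mathbf{x})\,\sigma(\mathbf{W}^0 \mathbf{x})\bigr],
\end{align*}
where the second equality uses $\mathbf{b}^0 = \mathbf{0}_m$. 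Plugging this into the head update together with the choice $\lambda_{\mathbf{a}} = 1/\eta$ yields
\begin{align*}
\mathbf{a}_i^+ = (1 - \eta\lambda_{\mathbf{a}})\mathbf{a}_i^0 - \eta\,\nabla_{\mathbf{a}} \mathcal{L}_i(\mathbf{W}^0,\mathbf{b}^0,\mathbf{a}_i^0) = 0\cdot \mathbf{a}_i^0 + \eta\,\mathbb{E}_{\mathbf{x}}\bigl[f_i^*(\mathbf{x})\,\sigma(\mathbf{W}^0\mathbf{x})\bigr],
\end{align*}
which is the stated identity. The same derivation applies verbatim for every $i \in [T]$ since none of the argument depends on the particular task beyond $f_i^*$ and $\mathbf{a}_i^0$, and the regularization term $\tfrac{\lambda_{\mathbf{a}}}{2}\|\mathbf{a}_i\|_2^2$ is precisely what cancels the surviving $\mathbf{a}_i^0$ term.

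There is no genuine obstacle in this lemma: it is a bookkeeping calculation whose only subtlety is justifying that the $\max(\cdot,0)$ nonlinearity is inactive, which follows immediately from the zero-output property of the symmetric initialization. The regularization strength $\lambda_{\mathbf{a}} = 1/\eta$ is chosen precisely so that the initial head is erased in one step, leaving a head that is exactly the label-weighted average of neuron outputs -- the form that will drive the pseudo-contrastive interpretation of the subsequent representation update in Section~\ref{sec:sketch}.
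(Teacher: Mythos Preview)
Your proof is correct and follows essentially the same approach as the paper: use the symmetric initialization to show the network output is identically zero, deduce that the hinge loss is in its linear regime so the gradient with respect to $\mathbf{a}$ is $-\mathbb{E}_{\mathbf{x}}[f_i^*(\mathbf{x})\sigma(\mathbf{W}^0\mathbf{x})]$, and then use $\lambda_{\mathbf{a}}=1/\eta$ to kill the $(1-\eta\lambda_{\mathbf{a}})\mathbf{a}_i^0$ term. If anything, your write-up is slightly more explicit than the paper's in spelling out the neuron pairing and the differentiability justification.
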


\begin{proof}
Since $\mathbf{a}_{i}^0 = \mathbf{0}_m$ for all $i$,
$(\mathbf{a}_{i}^0)^\top  \sigma(\mathbf{W}^0\mathbf{x} + \mathbf{b}^0)=0$ for all $\mathbf{x}$ and $i$.
Therefore, 
$\max(1 - f_{i}(\mathbf{x})( \mathbf{a}_i^0)^\top  \sigma(\mathbf{W}^0\mathbf{x} + \mathbf{b}^0) , 0) = 1 - f_{i}(\mathbf{x}) (\mathbf{a}_i^0)^\top  \sigma(\mathbf{W}^0\mathbf{x} )$, and
\begin{align} 
\mathbf{a}_{i}^1 &= \mathbf{a}_i^0 - \eta \nabla_{\mathbf{a}} \hat{\mathcal{L}}_i(\mathbf{W}^0,\mathbf{b}^0,\mathbf{a}_i^0; \hat{\mathcal{D}}_{i,\mathbf{a}}) \nonumber \\
&= 
-\frac{\eta}{n_1} \sum_{k=1}^{n_1}\nabla_{\mathbf{a}} ( \max(1 - f_{i}(\mathbf{x}_{i,k}) (\mathbf{a}_i^0)^\top  \sigma(\mathbf{W}^0\mathbf{x}_{i,k} + \mathbf{b}^0) , 0) )\nonumber \\
&= 
-\frac{\eta}{n_1} \sum_{k=1}^{n_1}\nabla_{\mathbf{a}} (1 - f_{i}(\mathbf{x}_{i,k}) (\mathbf{a}_i^0)^\top  \sigma(\mathbf{W}^0\mathbf{x}_{i,k} + \mathbf{b}^0)  )\nonumber \\
&= 
\frac{\eta}{n_1} \sum_{k=1}^{n_1} f_{i}(\mathbf{x}_{i,k})  \sigma(\mathbf{W}^0\mathbf{x}_{i,k})  \nonumber 
\end{align}
where in the last equality we have used $\mathbf{b}^0= \mathbf{0}_m$ by choice of initialization.
\end{proof}

    
    

Next, we substitute the updated heads in the global empirical loss. Since the gradient computation for the update of $\mathbf{W}$ does not backpropagate through the update of the heads,  we use $\bar{\sigma}(\cdot)$ to denote the stop-gradient ReLU activation, meaning all model parameters inside are treated as constants for the purposes of later gradient updates. In particular, from Lemma \ref{lem:a1} we have
\begin{align}
        \mathbf{a}_{i}^1 = \frac{\eta}{n_1} 
        \sum_{k=1}^{n_1} f_{i}(\mathbf{x}_{i,k}) {\sigma}(\mathbf{W}^0\mathbf{x}_{i,k}) \label{jkj}
    \end{align}
    for all $i \in [T]$.
\begin{lemma}\label{lem:upd_loss}
    After updating the heads on the first iteration, the empirical loss averaged across the task datasets $\{\hat{\mathcal{D}}_{i,\mathbf{W}}\}_{i=1}^T$ used for updating the neuron weights is given by
        \begin{align}
        \hat{\mathcal{L}}(\mathbf{W}^0,& \mathbf{b}^0, \{\mathbf{a}_{i}^1 \}_{i=1}^T;\{\hat{\mathcal{D}}_{i,\mathbf{W}}\}_{i=1}^T)\nonumber \\
        &\coloneqq \frac{1}{T}\sum_{i=1}^T \hat{\mathcal{L}}_i(\mathbf{W}^0, \mathbf{b}^0, \mathbf{a}_{i}^1 ;\hat{\mathcal{D}}_{i,\mathbf{W}}) \nonumber \\
&=  1 -   
\frac{\eta}{T n_1 n_2} \sum_{i=1}^T \sum_{l=1}^{n_2}  \sum_{k=1}^{n_1} 
f_{i}(\mathbf{x}_{i,l})  f_{i}(\mathbf{x}_{i,k})\bar{\sigma}(\mathbf{W}^0\mathbf{x}_{i,k})^\top \sigma(\mathbf{W}^0\mathbf{x}_{i,l})   \nonumber \\
&- \frac{1}{T n_2} \sum_{i=1}^T \sum_{l=1}^{n_2}    \chi\left\{ \eta f_{i}(\mathbf{x}_{i,l}) \left( \frac{1}{n_1} \sum_{k=1}^{n_1} f_{i}(\mathbf{x}_{i,k}) \bar{\sigma}(\mathbf{W}^0\mathbf{x}_{i,k} ) \right)^\top \sigma(\mathbf{W}^0\mathbf{x}_{i,l}) >1\right\}   \nonumber \\
&\quad \quad \quad \quad \quad \quad \; \times  \left( 1 - \eta f_{i}(\mathbf{x}_{i,l}) \left(\frac{1}{n_1} \sum_{k=1}^{n_1}   f_{i}(\mathbf{x}_{i,k}) \bar{\sigma}(\mathbf{W}^0\mathbf{x}_{i,k})\right)^\top \sigma(\mathbf{W}^0\mathbf{x}_{i,l}) \right)  \nonumber 
    \end{align}
\end{lemma}

\begin{proof}
First, by the fact that $\mathbf{b}^0 = \mathbf{0}_m$ due to the choice of initialization, we have
\begin{align}
    &\hat{\mathcal{L}}(\mathbf{W}^0,\mathbf{b}^0,\{\mathbf{a}^1_{i}\}_{i=1}^T; \{\hat{\mathcal{D}}_{i,\mathbf{W}}\}_{i=1}^T) \nonumber \\
   &= \frac{1}{T} \sum_{i=1}^T \hat{\mathcal{L}}_i(\mathbf{W}^0,\mathbf{b}^0,\{\mathbf{a}_{i}^1\}_{i=1}^T; \hat{\mathcal{D}}_{i,\mathbf{W}} )
    \nonumber \\
    &=\frac{1}{Tn_2 } \sum_{i=1}^T \sum_{l=1}^{n_2}   \max\left( 1- f_i(\mathbf{x}_{i,l}) (\mathbf{a}_i^1)^\top\sigma(\mathbf{W}^0\mathbf{x}_{i,l}), 0 \right) 
    \nonumber \\
    &= \frac{1}{Tn_2 } \sum_{i=1}^T \sum_{l=1}^{n_2}   \chi\left\{ f_i(\mathbf{x}_{i,l})  (\mathbf{a}_i^1)^\top\sigma(\mathbf{W}^0\mathbf{x}_{i,l}) < 1   \right\}
    \left( 1- f_i(\mathbf{x}_{i,l}) (\mathbf{a}_i^1)^\top\sigma(\mathbf{W}^0\mathbf{x}_{i,l}) \right)  \nonumber\\
    &= 1 - \frac{1}{Tn_2 } \sum_{i=1}^T \sum_{l=1}^{n_2}  
     f_i(\mathbf{x}_{i,l}) (\mathbf{a}_i^1)^\top\sigma(\mathbf{W}^0\mathbf{x}_{i,l})  \nonumber \\
    &\quad -  \frac{1}{Tn_2 } \sum_{i=1}^T \sum_{l=1}^{n_2} \chi\left\{ f_i(\mathbf{x}_{i,l})  (\mathbf{a}_i^1)^\top\sigma(\mathbf{W}^0\mathbf{x}_{i,l}) > 1   \right\} \left( 1- f_i(\mathbf{x}_{i,l}) (\mathbf{a}_i^1)^\top\sigma(\mathbf{W}^0\mathbf{x}_{i,l}) \right) \nonumber 
\end{align}
Substituting the value of $\mathbf{a}_i^1$ from \eqref{jkj} completes the proof.
\end{proof}

Next we show that after one update of the heads, the model predictions are still close to zero, so $\max()$ in the hinge loss is mostly inactive.
\begin{lemma}
    \label{lem:err_a} Suppose $\nu_{\mathbf{w}}^2 = O( \frac{1}{\eta^2 dm\log(T) (d+m)} )$.
    With probability at least $1 -\delta$, 
    \begin{align}
    &\frac{\eta}{T n_2} \sum_{i=1}^T \sum_{l=1}^{n_2}    \chi\left\{ \eta f_{i}(\mathbf{x}_{i,l}) \left( \frac{1}{n_1} \sum_{k=1}^{n_1} f_{i}(\mathbf{x}_{i,k}) {\sigma}'(\mathbf{W}^0\mathbf{x}_{i,k} ) \right)^\top \sigma(\mathbf{W}^0\mathbf{x}_{i,l}) >1\right\}\nonumber \\
    &=\eta O\left(  \exp \left( -\frac{ c }{\eta^2  \nu_{\mathbf{w}}^2 {m\log(T/\delta)}({d} + {\log(m/\delta)}) ({d} + {m} )} \right)\right) + \eta \; O\left(  \frac{  \sqrt{\log(1/\delta)} }{ \sqrt{Tn_2} } \right) 
\end{align}
for an absolute constant $c$.
\end{lemma}

\begin{proof} 
Consider any fixed $\mathbf{W}^0$ satisfying $E_{\mathbf{w}}(\delta_1)$ for $\delta_1\in (0,1)$, 
which occurs with probability at least $1-3 \delta_1$ by Lemma \ref{lem:all}. 
For ease of notation we replace $\mathbf{w}_j^0$ with $\mathbf{w}_j.$ 
Recall that 
\begin{align}
  \eta f_{i}(\mathbf{x}_{i,l}) \left( \frac{1}{n_1} \sum_{k=1}^{n_1} f_{i}(\mathbf{x}_{i,k}) {\sigma}'(\mathbf{W}^0\mathbf{x}_{i,k} ) \right)^\top \sigma(\mathbf{W}^0\mathbf{x}_{i,l})  &=    f_{i}(\mathbf{x}_{i,l}) (\mathbf{a}_i^1)^\top \sigma(\mathbf{W}^0\mathbf{x}_{i,l}) 
\end{align}
by the computation of $\mathbf{a}_i^1$ in Lemma \ref{lem:a1}.
For any fixed $f_i$ and fixed $\mathbf{a}_i^1$, \\
$  \chi\left\{  f_{i}(\mathbf{x}_{i,l}) (\mathbf{a}_i^1)^\top \sigma(\mathbf{W}^0\mathbf{x}_{i,l}) >1\right\}$ is a Bernoulli random variable with parameter
\begin{align}
    &\mathbb{P}_{\mathbf{x_{i,l}}}\left(  f_{i}(\mathbf{x}_{i,l}) (\mathbf{a}_i^1)^\top \sigma(\mathbf{W}^0\mathbf{x}_{i,l}) >1 \right) \nonumber \\ 
    &\leq \mathbb{P}_{\mathbf{x}_{i,l}}\left(  \left| (\mathbf{a}_i^1)^\top \sigma(\mathbf{W}^0\mathbf{x}_{i,l}) \right| > {1}{ } \right) \label{aaq}\\
    &\leq \mathbb{P}_{\mathbf{x}_{i,l}}\left(  \left| (\mathbf{a}_i^1)^\top \sigma(\mathbf{W}^0\mathbf{x}_{i,l}) - \mathbb{E}_{\mathbf{x}_{i,l}}[(\mathbf{a}_i^1)^\top \sigma(\mathbf{W}^0\mathbf{x}_{i,l})] \right| > {1}{} -  \mathbb{E}_{\mathbf{x}_{i,l}}[(\mathbf{a}_i^1)^\top \sigma(\mathbf{W}^0\mathbf{x}_{i,l})]  \right) \label{aaaq} \\
    &\leq 2 \exp \left( -\frac{ \left(1 -  \mathbb{E}_{\mathbf{x}_{i,l}}[(\mathbf{a}_i^1)^\top \sigma(\mathbf{W}^0\mathbf{x}_{i,l})] \right)^2 }{  c\|\mathbf{a}_i^1\|_2^2 \|\mathbf{W}_0\|_2^2 } \right) \nonumber \\
    &\leq 2 \exp \left( -\frac{ \left( 1 -  \|\mathbf{a}_i^1\|_2 \|\mathbf{W}^0\|_2 \right)^2 }{  c\|\mathbf{a}_i^1\|_2^2 \|\mathbf{W}_0\|_2^2 } \right) \nonumber \\
    &\leq 2 \exp \left( -\frac{  1 -  2\|\mathbf{a}_i^1\|_2 \|\mathbf{W}^0\|_2 }{  c\|\mathbf{a}_i^1\|_2^2 \|\mathbf{W}_0\|_2^2 } \right) \nonumber \\
    &=: \gamma_i \label{gmm}
\end{align}
for some absolute constant $c$, where \eqref{aaq} follows since $ f_{i}(\mathbf{x}_{i,l}) \in \{-1,1\}$, and \eqref{aaaq} follows since $(\mathbf{a}_i^1)^\top \sigma(\mathbf{W}^0\mathbf{x}_{i,l}) - \mathbb{E}_{\mathbf{x}_{i,l}}[(\mathbf{a}_i^1)^\top \sigma(\mathbf{W}^0\mathbf{x}_{i,l})]$ is sub-Gaussian with mean $0$ and variance $O(\|\mathbf{a}_i^1\|_2^2 \|\mathbf{W}_0\|_2^2 )$. Next, since for fixed $\mathbf{a}_i^1$ and $f_i$, the random variables $\{\chi\{ \eta f_{i}(\mathbf{x}_{i,l}) (\mathbf{a}_i^1)^\top \sigma(\mathbf{W}^0\mathbf{x}_{i,l}) > 1 \} \}_{l=1}^{n_1}$ are i.i.d., we have by Hoeffding's inequality
\begin{align}
&\mathbb{P}_{\mathbf{x}_{i,l}}\left( \left|\frac{1}{n_2} \sum_{l=1}^{n_2} \chi\{  f_{i}(\mathbf{x}_{i,l}) (\mathbf{a}_i^1)^\top \sigma(\mathbf{W}^0\mathbf{x}_{i,l}) > 1 \} - 
\mathbb{P}_{\mathbf{x}}( f_{i}(\mathbf{x}) (\mathbf{a}_i^1)^\top \sigma(\mathbf{W}^0\mathbf{x}) > 1 )\right| > t
\right)\nonumber \\
&\leq  2\exp\left( -2 n_2t^2 \right)
\end{align}
for any $t>0$. So $\frac{1}{n_2} \sum_{l=1}^{n_2} \chi\{  f_{i}(\mathbf{x}_{i,l}) (\mathbf{a}_i^1)^\top \sigma(\mathbf{W}^0\mathbf{x}_{i,l}) > 1 \} - 
\mathbb{P}_{\mathbf{x}}(\eta f_{i}(\mathbf{x}) (\mathbf{a}_i^1)^\top \sigma(\mathbf{W}^0\mathbf{x}) > 1 )$ is sub-Gaussian with mean zero and variance proxy $\frac{1}{n_2}$. Also, each random variable in \\
$\{\frac{1}{n_2} \sum_{l=1}^{n_2} \chi\{  f_{i}(\mathbf{x}_{i,l}) (\mathbf{a}_i^1)^\top \sigma(\mathbf{W}^0\mathbf{x}_{i,l}) > 1 \} - 
\mathbb{P}_{\mathbf{x}}(\eta f_{i}(\mathbf{x}) (\mathbf{a}_i^1)^\top \sigma(\mathbf{W}^0\mathbf{x}) > 1 ) \}_{i=1}^T $ is independent, so again by Hoeffding's inequality,
\begin{align}
   & \mathbb{P}_{\mathbf{x}_{i,l}}\left( \left|\frac{1}{Tn_2}\sum_{i=1}^T \sum_{l=1}^{n_2} \chi\{  f_{i}(\mathbf{x}_{i,l}) (\mathbf{a}_i^1)^\top \sigma(\mathbf{W}^0\mathbf{x}_{i,l}) > 1 \} - \frac{1}{T}\sum_{i=1}^T
\mathbb{P}_{\mathbf{x}}( f_{i}(\mathbf{x}) (\mathbf{a}_i^1)^\top \sigma(\mathbf{W}^0\mathbf{x}) > 1 )\right| > t
\right)\nonumber\\
&\leq 2\exp\left( -{2Tn_2 t^2} \right)
\end{align}
Set $t = O\left(\frac{\sqrt{\log(1/\delta_2)}}{\sqrt{Tn_2}}\right)$, then we have 
\begin{align}
    &\frac{1}{Tn_2}\sum_{i=1}^T \sum_{l=1}^{n_2} \chi\{  f_{i}(\mathbf{x}_{i,l}) (\mathbf{a}_i^1)^\top \sigma(\mathbf{W}^0\mathbf{x}_{i,l}) > 1 \}\nonumber \\
    &= \frac{1}{T}\sum_{i=1}^T \mathbb{P}_{\mathbf{x}}( f_{i}(\mathbf{x}) (\mathbf{a}_i^1)^\top \sigma(\mathbf{W}^0\mathbf{x}) > 1 ) + O\left( \frac{\sqrt{\log(1/\delta)}}{\sqrt{Tn_2}}\right) \nonumber \\
    &\leq \frac{1}{T}\sum_{i=1}^T \gamma_i + O\left( \frac{\sqrt{\log(1/\delta_2)}}{\sqrt{Tn_2}}\right)  \label{30}
\end{align}
with probability at least $1-\delta_2$ over the sampling of $\{\mathbf{x}_{i,l}\}_{i,l}$. It remains to bound each $\gamma_i$. 
Next, for any $i\in [T]$, we have 
\begin{align}
    \|\mathbf{a}_i^1\|_2 &= \left\|\frac{\eta}{n_1} \sum_{k=1}^{n_1} f_{i}(\mathbf{x}_{i,k}) {\sigma}(\mathbf{W}^0\mathbf{x}_{i,k} ) \right\|_2
    \leq \eta \sqrt{m } \max_{j\in[m]} \left| \frac{1}{n_1} \sum_{k=1}^{n_1} f_{i}(\mathbf{x}_{i,k}) {\sigma}(\mathbf{w}_j^\top\mathbf{x}_{i,k} )\right|
\end{align}
Each $ f_{i}(\mathbf{x}_{i,k}) {\sigma}(\mathbf{w}_j^\top\mathbf{x}_{i,k} ) $ is sub-Gaussian with mean $O(\|\mathbf{w}_j\|_2)$ and variance $O(\|\mathbf{w}_j\|_2^2)$. Also, for fixed $f_i$, the random variables $ \{f_{i}(\mathbf{x}_{i,k}) {\sigma}(\mathbf{w}_j^\top\mathbf{x}_{i,k} )\}_{k=1}^{n_1}$ are independent. So, we have $$\|\mathbf{a}_i^1\|_2 = O(\eta \sqrt{m} t \max_{j\in[m]}\|\mathbf{w}_j\|_2 )$$ with probability at least $1 - e^{-t^2}$ for any $t>0$. Union bounding over all $i\in [T]$ and setting $t = \Theta(\sqrt{\log(T)} + \sqrt{\log(1/\delta_3)})$ yields $$\max_{i\in [T]}\|\mathbf{a}_i^1\|_2= O (\eta  \sqrt{m \log(T/\delta_3)} \max_{j\in[m]}\|\mathbf{w}_j\|_2 )$$ with probability at least $1-\delta_3$. Now applying Lemma \ref{lem:init} and the fact that $E_{\mathbf{w}}(\delta)$ holds results in $\max_{i\in [T]}\|\mathbf{a}_i^1\|_2= O (\eta  \nu_{\mathbf{w}}\sqrt{m\log(T/\delta_3)}(\sqrt{d} + \sqrt{\log(m/\delta_1)}) )$
and \begin{align}
    \max_{i\in [T]}\|\mathbf{a}_i^1\|_2\|\mathbf{W}^0\|_2 &= O\left(  \eta  \nu_{\mathbf{w}}\sqrt{m\log(T/\delta_3)}(\sqrt{d} + \sqrt{\log(m/\delta_1)}) (\sqrt{d} + \sqrt{m} + \sqrt{\log(1/\delta_1)
    } )\right)\nonumber \\
    &=  O\left(  \eta  \nu_{\mathbf{w}}\sqrt{m\log(T/\delta_3)}(\sqrt{d} + \sqrt{\log(m/\delta_1)}) (\sqrt{d} + \sqrt{m} )\right)
\end{align}
with probability at least $1-\delta_3-3\delta_1$. 
Set $\delta_3+3\delta_1 + \delta_2 \leq \delta$,
then from \eqref{gmm} and using $\nu_{\mathbf{w}} = O( \frac{1}{ \eta  \sqrt{m\log(T/\delta)}(\sqrt{d} + \sqrt{\log(m/\delta)}) (\sqrt{d} + \sqrt{m} )} )$, we have $\max_{i\in [T]}\|\mathbf{a}_i^1\|_2\|\mathbf{W}^0\|_2 = O(1)$ and
\begin{align}
    \frac{1}{T}\sum_{i=1}^T\gamma_i \leq \max_{i\in [T]}\gamma_i &= \frac{2}{T} \sum_{i=1}^T \exp\left( -\frac{  1 -  2\|\mathbf{a}_i^1\|_2 \|\mathbf{W}^0\|_2 }{  c\|\mathbf{a}_i^1\|_2^2 \|\mathbf{W}_0\|_2^2 } \right)\nonumber \\
    &\leq 2 \exp\left( -\frac{  1 -  2 \max_{i\in [T]}\|\mathbf{a}_i^1\|_2 \|\mathbf{W}^0\|_2 }{  c\max_{i\in [T]}\|\mathbf{a}_i^1\|_2^2 \|\mathbf{W}_0\|_2^2 } \right) \nonumber \\
    &\leq 2 c'\exp\left( -\frac{  1}{  c\max_{i\in [T]}\|\mathbf{a}_i^1\|_2^2 \|\mathbf{W}_0\|_2^2 } \right) \nonumber \\
    &\leq 2 c'  \exp \left( -\frac{ c'' }{\eta^2  \nu_{\mathbf{w}}^2 {m\log(T/\delta)}({d} + {\log(m/\delta)}) ({d} + {m} )} \right)\nonumber 
\end{align}
with probability at least $1-\delta$,
completing the proof in light of \eqref{30}.
\end{proof}

\begin{lemma}\label{lem:emp_grad}
  For any $\delta \in (0,1)$, with probability at least $1 -  \delta$,   for all neurons $j\in [m]$,  the gradient used to compute $\mathbf{w}_{j}^1$ 
    satisfies
\begin{align}
    \Bigg\|\nabla_{\mathbf{w}_j} &\hat{\mathcal{L}}( \mathbf{W}^0, \mathbf{b}^0, \{\mathbf{a}_{i}^1\}_{i=1}^T; \{\hat{\mathcal{D}}_{i,\mathbf{W}}\}_{i=1}^T )\nonumber \\
    &\quad \quad \quad + \frac{\eta}{T n_1 n_2} \sum_{i=1}^T \sum_{l=1}^{n_2}  \sum_{k=1}^{n_1}
f_{i}(\mathbf{x}_{i,l})  f_{i}(\mathbf{x}_{i,k})\sigma'(\mathbf{w}_j^\top\mathbf{x}_{i,k}) \sigma'(\mathbf{w}_j^\top \mathbf{x}_{i,l}) \mathbf{x}_{i,l}\mathbf{x}_{i,k}^\top\mathbf{w}_j^0\Bigg\|_2 \nonumber \\
&=   
 O\Bigg( \eta \nu_{\mathbf{w}} (\sqrt{d} + \sqrt{\log(m/\delta)}) \sqrt{d\log(Tn_2/\delta)} \left(1 + {\frac{ \sqrt{\log(T/\delta)}}{\sqrt{n_1}}}\right) \nonumber \\
     &\quad \quad \quad \quad \times \left(   \exp \left( -\frac{ c }{\eta^2  \nu_{\mathbf{w}}^2 {m\log(T/\delta)}({d} + {\log(m/\delta)}) ({d} + {m} )} \right) + \frac{\sqrt{\log(1/\delta)}}{\sqrt{Tn_2}}\right) \Bigg)\nonumber
\end{align}
where $c$ is an absolute constant and $\sigma'(x) = \chi\{x>0\}$ denotes the derivative of the ReLU.
\end{lemma}

\begin{proof}
Consider any fixed $\mathbf{W}^0$ satisfying $E_{\mathbf{w}}(\delta_1)$ for $\delta_1\in (0,1)$, 
which occurs with probability at least $1-3 \delta_1$ by Lemma \ref{lem:all}. 
For ease of notation we write $\mathbf{w}_j =\mathbf{w}_j^0$.  
Using Lemma \ref{lem:upd_loss} and the chain rule, 
\begin{align}
 &\nabla_{\mathbf{w}_j} \hat{\mathcal{L}}( \mathbf{W}^0, \mathbf{b}^0, \{\mathbf{a}_{i}^1\}_{i=1}^T; \{\hat{\mathcal{D}}_{i,\mathbf{W}}\}_{i=1}^T ) \nonumber \\
 &=    - \frac{\eta}{T n_1 n_2} \sum_{i=1}^T \sum_{l=1}^{n_2}  \sum_{k=1}^{n_1}
f_{i}(\mathbf{x}_{i,l})  f_{i}(\mathbf{x}_{i,k})\bar{\sigma}(\mathbf{w}_j^\top \mathbf{x}_{i,k}) \sigma'(\mathbf{w}_j^\top \mathbf{x}_{i,l})\mathbf{x}_{i,l}   \nonumber \\
&\quad +\frac{\eta}{T n_2} \sum_{i=1}^T \sum_{l=1}^{n_2}    \chi\left\{ \eta f_{i}(\mathbf{x}_{i,l}) \left( \frac{1}{n_1} \sum_{k=1}^{n_1} f_{i}(\mathbf{x}_{i,k}) \bar{\sigma}(\mathbf{W}^0\mathbf{x}_{i,k} ) \right)^\top \sigma(\mathbf{W}^0\mathbf{x}_{i,l}) >1\right\}   \nonumber \\
&\quad \quad \quad \quad \quad \quad \quad \; \times  f_{i}(\mathbf{x}_{i,l}) \left(\frac{1}{n_1} \sum_{k=1}^{n_1}   f_{i}(\mathbf{x}_{i,k}) \bar{\sigma}(\mathbf{w}_j^\top\mathbf{x}_{i,k})\right) \sigma'(\mathbf{w}_j^\top\mathbf{x}_{i,l}) \mathbf{x}_{i,l}
 \nonumber \\
 &=   \underbrace{- \frac{\eta}{T n_1 n_2} \sum_{i=1}^T \sum_{l=1}^{n_2}  \sum_{k=1}^{n_1}
f_{i}(\mathbf{x}_{i,l})  f_{i}(\mathbf{x}_{i,k}){\sigma}'(\mathbf{w}_j^\top \mathbf{x}_{i,k}) \sigma'(\mathbf{w}_j^\top \mathbf{x}_{i,l})\mathbf{x}_{i,l}\mathbf{x}_{i,k}^\top \mathbf{w}_j}_{\circled{1}}    \nonumber \\
&\quad +\underbrace{\frac{\eta}{T n_2} \sum_{i=1}^T \sum_{l=1}^{n_2}    \chi\left\{ \eta f_{i}(\mathbf{x}_{i,l}) \left( \frac{1}{n_1} \sum_{k=1}^{n_1} f_{i}(\mathbf{x}_{i,k}) \bar{\sigma}(\mathbf{W}^0\mathbf{x}_{i,k} ) \right)^\top \sigma(\mathbf{W}^0\mathbf{x}_{i,l}) >1\right\}}_{\circled{2}}   \nonumber \\
&\quad \quad \quad \quad \quad \quad \quad \; \underbrace{\times  f_{i}(\mathbf{x}_{i,l}) \left(\frac{1}{n_1} \sum_{k=1}^{n_1}   f_{i}(\mathbf{x}_{i,k}) \bar{\sigma}(\mathbf{w}_j^\top\mathbf{x}_{i,k})\right) \sigma'(\mathbf{w}_j^\top\mathbf{x}_{i,l}) \mathbf{x}_{i,l}}_{\circled{2}}
 \label{full} 
\end{align}
where \eqref{full} follows since $\bar{\sigma}(x) = \sigma'(x)x$ when $\bar{\sigma}$ is the ReLU activation. By  \eqref{full} and the triangle inequality we have $\|\nabla_{\mathbf{w}_j} \hat{\mathcal{L}}( \mathbf{W}^0, \mathbf{b}^0, \{\mathbf{a}_{i}^1\}_{i=1}^T; \{\hat{\mathcal{D}}_{i,\mathbf{W}}\}_{i=1}^T ) - \circled{1}\|_2 \leq \|\circled{2}\|_2  $, so the result follows by bounding $\|\circled{2}\|_2$. To do this, note that by Hölder's inequality,
\begin{align}
 \| \circled{2}\|_2  &\leq \Bigg\| \frac{\eta}{T n_2} \sum_{i=1}^T \sum_{l=1}^{n_2}    \chi\left\{ \eta f_{i}(\mathbf{x}_{i,l}) \left( \frac{1}{n_1} \sum_{k=1}^{n_1} f_{i}(\mathbf{x}_{i,k}) \bar{\sigma}(\mathbf{W}^0\mathbf{x}_{i,k} ) \right)^\top \sigma(\mathbf{W}^0\mathbf{x}_{i,l}) >1\right\}   \nonumber \\
&\quad \quad \quad \quad \quad \quad \quad \; \times  f_{i}(\mathbf{x}_{i,l}) \left(\frac{1}{n_1} \sum_{k=1}^{n_1}   f_{i}(\mathbf{x}_{i,k}) \bar{\sigma}(\mathbf{w}_j^\top\mathbf{x}_{i,k})\right) \sigma'(\mathbf{w}_j^\top\mathbf{x}_{i,l}) \mathbf{x}_{i,l} \Bigg\|_2 \nonumber \\
&\leq  \eta {\frac{1}{T n_2} \sum_{i=1}^T \sum_{l=1}^{n_2}    \chi\left\{ \eta f_{i}(\mathbf{x}_{i,l}) \left( \frac{1}{n_1} \sum_{k=1}^{n_1} f_{i}(\mathbf{x}_{i,k}) \bar{\sigma}(\mathbf{W}^0\mathbf{x}_{i,k} ) \right)^\top \sigma(\mathbf{W}^0\mathbf{x}_{i,l}) >1\right\}}  \nonumber\\
&\quad \times {\max_{i\in [T], l\in [n_2]}  \left\|   \left(\frac{1}{n_1} \sum_{k=1}^{n_1}   f_{i}(\mathbf{x}_{i,k}) \bar{\sigma}(\mathbf{w}_j^\top\mathbf{x}_{i,k})\right) \sigma'(\mathbf{w}_j^\top\mathbf{x}_{i,l}) \mathbf{x}_{i,l}  \right\|_2 }\nonumber\\
&\leq  \eta \underbrace{\frac{1}{T n_2} \sum_{i=1}^T \sum_{l=1}^{n_2}    \chi\left\{ \eta f_{i}(\mathbf{x}_{i,l}) \left( \frac{1}{n_1} \sum_{k=1}^{n_1} f_{i}(\mathbf{x}_{i,k}) \bar{\sigma}(\mathbf{W}^0\mathbf{x}_{i,k} ) \right)^\top \sigma(\mathbf{W}^0\mathbf{x}_{i,l}) >1\right\}}_{\circled{2a}}  \nonumber\\
&\quad \times \underbrace{\max_{i\in [T]}  \left|\frac{1}{n_1} \sum_{k=1}^{n_1}   f_{i}(\mathbf{x}_{i,k}) \bar{\sigma}(\mathbf{w}_j^\top\mathbf{x}_{i,k})\right| \max_{i\in [T], l\in [n_2]}  \left\|   \mathbf{x}_{i,l}  \right\|_2 }_{\circled{2b}} \label{brea}
\end{align}
By Lemma \ref{lem:err_a}, we have 
\begin{align}
    \circled{2a} = \eta O\left(  \exp \left( -\frac{ c }{\eta^2  \nu_{\mathbf{w}}^2 {m\log(T/\delta_3)}({d} + {\log(m/\delta_3)}) ({d} + {m} )} \right)\right) + \eta \; O\left(  \frac{  \sqrt{\log(1/\delta_3)} }{ \sqrt{Tn_2} } \right) 
\end{align}
with probability at least $1 - \delta_3$. 
It remains to control $\circled{2b}$.
Fix $\mathbf{w}_j$, $i$, and $l$, then each random variable $ f_{i}(\mathbf{x}_{i,k}) \bar{\sigma}(\mathbf{w}_j^\top\mathbf{x}_{i,k})$ is sub-Gaussian with mean $O(\|\mathbf{w}_j\|_2)$
and variance $O(\|\mathbf{w}_j\|_2^2)$,
and each random variable in $\{ f_{i}(\mathbf{x}_{i,k}) \bar{\sigma}(\mathbf{w}_j^\top\mathbf{x}_{i,k}) \}_{k=1}^{n_1}$ is i.i.d. Thus, 
      $\frac{1}{n_1} \sum_{k=1}^{n_1}   f_{i}(\mathbf{x}_{i,k}) \bar{\sigma}(\mathbf{w}_j^\top\mathbf{x}_{i,k}) $
is sub-Gaussian with mean $O(\|\mathbf{w}_j\|_2)$
and variance $O\left(\frac{\|\mathbf{w}_j\|_2^2}{n_1}\right)$, so, by a union bound over all $i\in [T]$,
\begin{align}
   \max_{i\in [T]} \left|\frac{1}{n_1} \sum_{k=1}^{n_1}   f_{i}(\mathbf{x}_{i,k}) \bar{\sigma}(\mathbf{w}_j^\top\mathbf{x}_{i,k})\right| &= O\left(\|\mathbf{w}_j\|_2\left( 1 +  \sqrt{\frac{\log(T/\delta_2)}{n_1}} \right)\right) \nonumber \\
   &= O\left( \nu_{\mathbf{w}} ( \sqrt{d} + \sqrt{\log(m/\delta_1)}) \left( 1 +  \sqrt{\frac{\log(T/\delta_2)}{n_1}} \right)\right)
\end{align}
with probability at least $1-3\delta_1 - \delta_2$. 
Next, $ \left\| \mathbf{x}_{i,l}  \right\|_2^2 $ is sub-exponential with 
mean $O\left({d}\right)$ and variance 
$O\left(d^2\right)$. So, with probability at least $1-\delta_4$, 
\begin{align}
    &\max_{i\in[T],l\in [n_2]} \|\mathbf{x}_{i,l}  \|_2^2 = O\left(d(1 +  \log(Tn_2/\delta_4) ) \right)  = O\left(d \log(Tn_2/\delta_4)  \right) \nonumber \\
    &\implies \max_{i\in[T],l\in [n_2]} \|\mathbf{x}_{i,l}  \|_2  = O\left(\sqrt{d \log(Tn_2/\delta_4)}  \right) 
\end{align}

Combining these bounds via a union bound,  applying \eqref{brea},  and setting $\delta_1, \delta_2, \delta_3, \delta_4=\Theta( \delta)$ yields
\begin{align}
    \circled{2} &=  
     \;  O\Bigg( \eta \nu_{\mathbf{w}} (\sqrt{d} + \sqrt{\log(m/\delta)}) \sqrt{d\log(Tn_2/\delta)} \left(1 + {\tfrac{ \sqrt{\log(T/\delta)}}{\sqrt{n_1}}}\right) \nonumber \\
     &\quad \quad \quad \quad \times \left(   \exp \left( -\frac{ c }{\eta^2  \nu_{\mathbf{w}}^2 {m\log(T/\delta)}({d} + {\log(m/\delta)}) ({d} + {m} )} \right) + \frac{\sqrt{\log(1/\delta)}}{\sqrt{Tn_2}}\right) \Bigg) \nonumber  
\end{align} for an absolute constant $c$ 
with probability at least $1 -\delta$, completing the proof.
\end{proof}

\begin{lemma} \label{lem:grad_concen1}
    For any $\delta\in (0,1)$, with probability at least $1 - \delta$, for all $j\in [m]$, 
    \begin{align}
        &\Bigg\| 
        \frac{1}{T n_1 n_2} \sum_{i=1}^T \sum_{l=1}^{n_2}  \sum_{k=1}^{n_1}
f_{i}(\mathbf{x}_{i,l})  f_{i}(\mathbf{x}_{i,k})\sigma'((\mathbf{w}_j^0)^\top\mathbf{x}_{i,k}) \sigma'((\mathbf{w}_j^0)^\top \mathbf{x}_{i,l}) \mathbf{x}_{i,l}\mathbf{x}_{i,k}^\top\mathbf{w}_j^0
       \nonumber \\
       &-  \mathbb{E}_{i\sim \mathcal{T}}\mathbb{E}_{(\mathbf{x}, f_i(\mathbf{x}))\sim \mathcal{D}_{i}} \mathbb{E}_{(\mathbf{x}', f_i(\mathbf{x}'))\sim \mathcal{D}_{i}}\left[f_{i}(\mathbf{x})  f_{i}(\mathbf{x}')\sigma'((\mathbf{w}_j^0)^\top\mathbf{x}') \sigma'((\mathbf{w}_j^0)^\top \mathbf{x}') \mathbf{x}(\mathbf{x}')^\top\mathbf{w}_j^0\right] 
        \Bigg\|_2 \nonumber \\
        &= O\left(\nu_{\mathbf{w}} \sqrt{\frac{d + \log(m/\delta)}{T} \log(d/\delta)}  \left(  \sqrt{r\log(m/\delta)} + \sqrt{\frac{d}{n_2}} \right) \right). \nonumber
    \end{align}
\end{lemma}


\begin{proof} 
For ease of notation we replace $\mathbb{E}_{(\mathbf{x}, f_i(\mathbf{x}))\sim \mathcal{D}_{i}} \mathbb{E}_{(\mathbf{x}', f_i(\mathbf{x}'))\sim \mathcal{D}_{i}}$ with $\mathbb{E}_{\mathbf{x},\mathbf{x}'}$ and $\mathbb{E}_{i\sim \mathcal{T}}$ with $\mathbb{E}_i$, and write $\mathbf{w}_j=\mathbf{w}_j^0$. Consider any fixed $\mathbf{W}^0$ satisfying $E_{\mathbf{w}}(\delta_1)$ for $\delta_1\in (0,1)$, 
which occurs with probability at least $1-3 \delta_1$ by Lemma \ref{lem:all}. 
We have
\begin{align}
 &\Bigg\| 
        \frac{1}{T n_1 n_2} \sum_{i=1}^T \sum_{l=1}^{n_2}  \sum_{k=1}^{n_1}
f_{i}(\mathbf{x}_{i,l})  f_{i}(\mathbf{x}_{i,k})\sigma'(\mathbf{w}_j^\top\mathbf{x}_{i,k}) \sigma'(\mathbf{w}_j^\top \mathbf{x}_{i,l}) \mathbf{x}_{i,l}\mathbf{x}_{i,k}^\top\mathbf{w}_j
       \nonumber \\
       &-  \mathbb{E}_{i}\mathbb{E}_{\mathbf{x},\mathbf{x}'} \left[f_{i}(\mathbf{x})  f_{i}(\mathbf{x}')\sigma'(\mathbf{w}_j^\top\mathbf{x}') \sigma'(\mathbf{w}_j^\top \mathbf{x}') \mathbf{x}(\mathbf{x}')^\top\mathbf{w}_j\right] 
        \Bigg\|_2 \nonumber \\
        &= \left\| \frac{1}{T} \sum_{i=1}^T \mathbf{q}_i -  \mathbb{E}_{i}\mathbb{E}_{\mathbf{x},\mathbf{x}'} \left[f_{i}(\mathbf{x})  f_{i}(\mathbf{x}')\sigma'(\mathbf{w}_j^\top\mathbf{x}') \sigma'(\mathbf{w}_j^\top \mathbf{x}') \mathbf{x}(\mathbf{x}')^\top\mathbf{w}_j\right]  \right\|_2
        \label{twonorms}
\end{align}
where $\mathbf{q}_i \coloneqq \frac{1}{n_1n_2} \sum_{l=1}^{n_2}  \sum_{k=1}^{n_1}
f_{i}(\mathbf{x}_{i,l})  f_{i}(\mathbf{x}_{i,k})\sigma'(\mathbf{w}_j^\top\mathbf{x}_{i,k}) \sigma'(\mathbf{w}_j^\top \mathbf{x}_{i,l}) \mathbf{x}_{i,l}\mathbf{x}_{i,k}^\top\mathbf{w}_j$. By the linearity of the expectation, 
\begin{align}
    \mathbb{E}[\mathbf{q}_i]
    &= \frac{1}{n_1n_2} \sum_{l=1}^{n_2}  \sum_{k=1}^{n_1} \mathbb{E}_i \mathbb{E}_{\mathbf{x}_{i,l},\mathbf{x}_{i,k}}[ f_{i}(\mathbf{x}_{i,l})  f_{i}(\mathbf{x}_{i,k})\sigma'(\mathbf{w}_j^\top\mathbf{x}_{i,k}) \sigma'(\mathbf{w}_j^\top \mathbf{x}_{i,l}) \mathbf{x}_{i,l}\mathbf{x}_{i,k}^\top\mathbf{w}_j ]  \nonumber \\
    &= \mathbb{E}_{i}\mathbb{E}_{\mathbf{x},\mathbf{x}'} \left[f_{i}(\mathbf{x})  f_{i}(\mathbf{x}')\sigma'(\mathbf{w}_j^\top\mathbf{x}') \sigma'(\mathbf{w}_j^\top \mathbf{x}') \mathbf{x}(\mathbf{x}')^\top\mathbf{w}_j\right] 
\end{align}
which means that the random vectors $\mathbf{q}_i -  \mathbb{E}_{i}\mathbb{E}_{\mathbf{x},\mathbf{x}'} \left[f_{i}(\mathbf{x})  f_{i}(\mathbf{x}')\sigma'(\mathbf{w}_j^\top\mathbf{x}') \sigma'(\mathbf{w}_j^\top \mathbf{x}') \mathbf{x}(\mathbf{x}')^\top\mathbf{w}_j\right] = \mathbf{q}_i - \mathbb{E}[\mathbf{q}_i ]$ in \eqref{twonorms} are mean zero. 
Next we bound $ \|\frac{1}{T}\sum_{i=1}^T \mathbf{q}_i - \mathbb{E}[\mathbf{q}_i ]\|_2 $
by bounding each coordinate of $\frac{1}{T}\sum_{i=1}^T \mathbf{q}_i - \mathbb{E}[\mathbf{q}_i ]$ separately. Let $q_{i,h}$ denote the $h$-th entry of $\mathbf{q}_i$, and $x_{i,l,h}$ denote the $h$-th entry of $\mathbf{x}_{i,l}$. Note that each $q_{i,h}$ is the sum of products of two sub-Gaussian random variables ($f_{i}(\mathbf{x}_{i,l}) \sigma'(\mathbf{w}_j^\top \mathbf{x}_{i,l}) {x}_{i,l,h} $ and $ f_{i}(\mathbf{x}_{i,k})\sigma'(\mathbf{w}_j^\top\mathbf{x}_{i,k}) \mathbf{x}_{i,k}^\top\mathbf{w}_j $), so $q_{i,h}$ is the sum of sub-exponential random variables and is therefore sub-exponential. Its variance is upper bounded by:
\begin{align}
    &\mathbb{E}\left[({q}_{i,h} - \mathbb{E}[{q}_{i,h} ])^2 \right]  \nonumber \\
    &=  \mathbb{E}\left[\left(\left(\frac{1}{n_2} \sum_{l=1}^{n_2} 
f_{i}(\mathbf{x}_{i,l}) \sigma'(\mathbf{w}_j^\top \mathbf{x}_{i,l}) {x}_{i,l,h} \right) \left(\frac{1}{n_1}\sum_{k=1}^{n_1} f_{i}(\mathbf{x}_{i,k})\sigma'(\mathbf{w}_j^\top\mathbf{x}_{i,k}) \mathbf{x}_{i,k}^\top\mathbf{w}_j\right) - \mathbb{E}[{q}_{i,h} ]\right)^2 \right]\nonumber \\
&\leq 4 \underbrace{  \mathbb{E}_{i,\hat{\mathcal{D}}_i,\hat{\mathcal{D}}_i' }\Bigg[\Bigg(\left(\frac{1}{n_2} \sum_{l=1}^{n_2} 
f_{i}(\mathbf{x}_{i,l}) \sigma'(\mathbf{w}_j^\top \mathbf{x}_{i,l}) {x}_{i,l,h} \right) \left(\frac{1}{n_1}\sum_{k=1}^{n_1} f_{i}(\mathbf{x}_{i,k})\sigma'(\mathbf{w}_j^\top\mathbf{x}_{i,k}) \mathbf{x}_{i,k}^\top\mathbf{w}_j\right)}_{\circled{1}}\nonumber\\
&\quad \quad \quad \quad \underbrace{ - 
\mathbb{E}_{\mathbf{x}}\left[
f_{i}(\mathbf{x}) \sigma'(\mathbf{w}_j^\top \mathbf{x}) {x}_h \right] \left(\frac{1}{n_1}\sum_{k=1}^{n_1} f_{i}(\mathbf{x}_{i,k})\sigma'(\mathbf{w}_j^\top\mathbf{x}_{i,k}) \mathbf{x}_{i,k}^\top\mathbf{w}_j\right) \Bigg)^2 \Bigg]}_{\circled{1}}\nonumber \\
&\quad + 4\underbrace{ \mathbb{E}_{i,\hat{\mathcal{D}}_i' }\Bigg[\Bigg(\mathbb{E}_{\mathbf{x}}\left[
f_{i}(\mathbf{x}) \sigma'(\mathbf{w}_j^\top \mathbf{x}) {x}_h\right] \left(\frac{1}{n_1}\sum_{k=1}^{n_1} f_{i}(\mathbf{x}_{i,k})\sigma'(\mathbf{w}_j^\top\mathbf{x}_{i,k}) \mathbf{x}_{i,k}^\top\mathbf{w}_j\right)}_{\circled{2}} \nonumber \\
&\quad \quad \quad \quad \underbrace{ - \mathbb{E}_{\mathbf{x}}\left[
f_{i}(\mathbf{x}) \sigma'(\mathbf{w}_j^\top \mathbf{x}) {x}_h\right] \mathbb{E}_{\mathbf{x}'} \left[f_{i}(\mathbf{x}')\sigma'(\mathbf{w}_j^\top\mathbf{x}') (\mathbf{x}')^\top\mathbf{w}_j\right]\Bigg)^2 \Bigg]}_{\circled{2}} \nonumber \\
&\quad + 2 \underbrace{ \mathbb{E}_i\Big[\Big(\mathbb{E}_{\mathbf{x}}\left[
f_{i}(\mathbf{x}) \sigma'(\mathbf{w}_j^\top \mathbf{x}) {x}_h\right] \mathbb{E}_{\mathbf{x}'} \left[f_{i}(\mathbf{x}')\sigma'(\mathbf{w}_j^\top\mathbf{x}') (\mathbf{x}')^\top\mathbf{w}_j\right]}_{\circled{3}}\nonumber \\
&\quad \quad \quad \quad \underbrace{ - \mathbb{E}_{i}\left[\mathbb{E}_{\mathbf{x}}\left[
f_{i}(\mathbf{x}) \sigma'(\mathbf{w}_j^\top \mathbf{x}) {x}_h \right] \mathbb{E}_{\mathbf{x}'} \left[f_{i}(\mathbf{x}')\sigma'(\mathbf{w}_j^\top\mathbf{x}') (\mathbf{x}')^\top\mathbf{w}_j\right]\right] \Big)^2 \Big]}_{\circled{3}} \label{3re}
\end{align}
where \eqref{3re} follows from the triangle inequality and the fact that $(a+b)^2\leq 2a^2 + 2b^2$.
To bound $\circled{1}$, first let $\mathbf{s}_i \coloneqq \mathbb{E}_{\mathbf{x}}\left[
f_{i}(\mathbf{x}) \sigma'(\mathbf{w}_j^\top \mathbf{x}) {x}_h\right]$ and $s_{i,h}$ be its $h$-th element. Also denote $\overline{{s}_h^2} \coloneqq \mathbb{E}_i [s_{i,h}^2]$.
Observe that\begin{align}
    \circled{1} &=\mathbb{E}_{i,\hat{\mathcal{D}}_i,\hat{\mathcal{D}}_i' }\Bigg[\Bigg(\frac{1}{n_2} \sum_{l=1}^{n_2} 
f_{i}(\mathbf{x}_{i,l}) \sigma'(\mathbf{w}_j^\top \mathbf{x}_{i,l}) {x}_{i,l,h} - s_{i,h} \Bigg)^2\left(\frac{1}{n_1}\sum_{k=1}^{n_1} f_{i}(\mathbf{x}_{i,k})\sigma'(\mathbf{w}_j^\top\mathbf{x}_{i,k}) \mathbf{x}_{i,k}^\top\mathbf{w}_j\right)^2\Bigg] \nonumber \\
&\leq \mathbb{E}_{i,{\mathcal{D}}_i}\Bigg[\Bigg(\frac{1}{n_2} \sum_{l=1}^{n_2} 
f_{i}(\mathbf{x}_{i,l}) \sigma'(\mathbf{w}_j^\top \mathbf{x}_{i,l}) {x}_{i,l,h} - s_{i,h} \Bigg)^{4} \Bigg]^{1/2} \nonumber \\
&\quad \quad \quad \quad \times \mathbb{E}_{\hat{\mathcal{D}}_i}\Bigg[\left(\frac{1}{n_1}\sum_{k=1}^{n_1} f_{i}(\mathbf{x}_{i,k})\sigma'(\mathbf{w}_j^\top\mathbf{x}_{i,k}) \mathbf{x}_{i,k}^\top\mathbf{w}_j\right)^4\Bigg]^{1/2} \label{cs4} \\
&= \mathbb{E}_{i,{\mathcal{D}}_i}\Bigg[\Bigg(\frac{1}{n_2} \sum_{l=1}^{n_2} 
f_{i}(\mathbf{x}_{i,l}) \sigma'(\mathbf{w}_j^\top \mathbf{x}_{i,l}) {x}_{i,l,h} - s_{i,h} \Bigg)^{4} \Bigg]^{1/2}\; O \left( \|\mathbf{w}_j\|_2^2 \right)\label{jnjn}\\
&= O\left( \frac{ \|\mathbf{w}_j\|_2^2}{n_2} \right) \label{mnm}\\
&= O\left( \nu_{\mathbf{w}}^2 \frac{d + \log(m/\delta_1) }{n_2} \right)
\end{align}
where \eqref{cs4} follows by the Cauchy-Schwarz inequality,
\eqref{jnjn} follows since
$$\frac{1}{n_1}\sum_{k=1}^{n_1} f_{i}(\mathbf{x}_{i,k})\sigma'(\mathbf{w}_j^\top\mathbf{x}_{i,k}) \mathbf{x}_{i,k}^\top\mathbf{w}_j$$ is sub-Gaussian with mean $O(\|\mathbf{w}_j\|_2)$ and variance $O(\frac{\|\mathbf{w}_j\|_2^2}{n_1}
)$, and \eqref{mnm} follows since  $$\frac{1}{n_2} \sum_{l=1}^{n_2} 
f_{i}(\mathbf{x}_{i,l}) \sigma'(\mathbf{w}_j^\top \mathbf{x}_{i,l}) {x}_{i,l,h} - s_{i,h}$$ is sub-Gaussian with mean zero and variance $O(\frac{1}{n_2} \mathbb{E}_{\mathbf{x}}[ f_i(\mathbf{x})^2 \sigma'(\mathbf{w}_j^\top \mathbf{x})x_h^2 ] )= O(\frac{1}{n_2})$.
To bound $\circled{2}$, consider that 
\begin{align}
    \circled{2} &= \mathbb{E}_{i,\hat{\mathcal{D}}_i' }\nonumber \\
    &\quad \Bigg[\mathbb{E}_{\mathbf{x}}\left[
f_{i}(\mathbf{x}) \sigma'(\mathbf{w}_j^\top \mathbf{x}) {x}_h \right]^2 \left(\frac{1}{n_1}\sum_{k=1}^{n_1} f_{i}(\mathbf{x}_{i,k})\sigma'(\mathbf{w}_j^\top\mathbf{x}_{i,k}) \mathbf{x}_{i,k}^\top\mathbf{w}_j- \mathbb{E}_{\mathbf{x}'}\left[
f_{i}(\mathbf{x}') \sigma'(\mathbf{w}_j^\top \mathbf{x}') (\mathbf{x}')^\top\mathbf{w}_j\right]\right)^2\Bigg]\nonumber \\
 &= \mathbb{E}_{i}\left[s_{i,h}^2 \mathbb{E}_{\hat{\mathcal{D}}_i' } \left[ \left(\frac{1}{n_1}\sum_{k=1}^{n_1} f_{i}(\mathbf{x}_{i,k})\sigma'(\mathbf{w}_j^\top\mathbf{x}_{i,k}) \mathbf{x}_{i,k}^\top\mathbf{w}_j- \mathbb{E}_{\mathbf{x}'}\left[
f_{i}(\mathbf{x}') \sigma'(\mathbf{w}_j^\top \mathbf{x}') (\mathbf{x}')^\top\mathbf{w}_j\right]\right)^2\right]\right]\nonumber \\
&\leq \mathbb{E}_{i}\left[s_{i,h}^2 \max_{i'} \mathbb{E}_{\hat{\mathcal{D}}_{i'}' } \left[ \left(\frac{1}{n_1}\sum_{k=1}^{n_1} f_{i'}(\mathbf{x}_{i',k})\sigma'(\mathbf{w}_j^\top\mathbf{x}_{i',k}) \mathbf{x}_{i',k}^\top\mathbf{w}_j- \mathbb{E}_{\mathbf{x}'}\left[
f_{i'}(\mathbf{x}') \sigma'(\mathbf{w}_j^\top \mathbf{x}') (\mathbf{x}')^\top\mathbf{w}_j\right]\right)^2\right]\right]\label{frf}\\
&= \mathbb{E}_{i}\left[s_{i,h}^2 \right] \max_{i'} \mathbb{E}_{\hat{\mathcal{D}}_{i'}' } \left[ \left(\frac{1}{n_1}\sum_{k=1}^{n_1} f_{i'}(\mathbf{x}_{i',k})\sigma'(\mathbf{w}_j^\top\mathbf{x}_{i',k}) \mathbf{x}_{i',k}^\top\mathbf{w}_j- \mathbb{E}_{\mathbf{x}'}\left[
f_{i'}(\mathbf{x}') \sigma'(\mathbf{w}_j^\top \mathbf{x}') (\mathbf{x}')^\top\mathbf{w}_j\right]\right)^2\right]\nonumber \\
 &=  \mathbb{E}_{i}\left[s_{i,h}^2 \right] O\left( \nu_{\mathbf{w}}^2 \frac{d +\log(m/\delta_1)}{n_1} \right) \label{rdr} \\
&=O\left( \overline{s_h^2}\nu_{\mathbf{w}}^2 \frac{d +\log(m/\delta_1) }{n_1} \right)  \nonumber 
\end{align}
where \eqref{frf} follows since $s_{i,h}^2\geq 0$ and \\
$ \mathbb{E}_{\hat{\mathcal{D}}_{i'}' } \left[ \left(\frac{1}{n_1}\sum_{k=1}^{n_1} f_{i'}(\mathbf{x}_{i',k})\sigma'(\mathbf{w}_j^\top\mathbf{x}_{i',k}) \mathbf{x}_{i',k}^\top\mathbf{w}_j- \mathbb{E}_{\mathbf{x}'}\left[
f_{i'}(\mathbf{x}') \sigma'(\mathbf{w}_j^\top \mathbf{x}') (\mathbf{x}')^\top\mathbf{w}_j\right]\right)^2\right]\geq 0$, and \eqref{rdr} follows since for all $i'$, $\frac{1}{n_1}\sum_{k=1}^{n_1} f_{i'}(\mathbf{x}_{i',k})\sigma'(\mathbf{w}_j^\top\mathbf{x}_{i',k}) \mathbf{x}_{i',k}^\top\mathbf{w}_j- \mathbb{E}_{\mathbf{x}'}\left[
f_{i'}(\mathbf{x}') \sigma'(\mathbf{w}_j^\top \mathbf{x}') (\mathbf{x}')^\top\mathbf{w}_j\right]$ is sub-Gaussian with mean zero and variance $O(\frac{\|\mathbf{w}_j\|_2^2}{n_1}) = O(\frac{d\nu_{\mathbf{w}}^2}{n_1})$. 
To control $\circled{3}$, note that
\begin{align}
    \circled{3} &= \mathbb{E}_i\Big[\Big(\mathbb{E}_{\mathbf{x}}\left[
f_{i}(\mathbf{x}) \sigma'(\mathbf{w}_j^\top \mathbf{x}) {x}_h\right] \mathbb{E}_{\mathbf{x}'} \left[f_{i}(\mathbf{x}')\sigma'(\mathbf{w}_j^\top\mathbf{x}') (\mathbf{x}')^\top\mathbf{w}_j\right]\Big)^2\Big] \nonumber \\
&\quad \quad - \mathbb{E}_i\left[\mathbb{E}_{\mathbf{x}}\left[
f_{i}(\mathbf{x}) \sigma'(\mathbf{w}_j^\top \mathbf{x}) {x}_h\right] \mathbb{E}_{\mathbf{x}'} \left[f_{i}(\mathbf{x}')\sigma'(\mathbf{w}_j^\top\mathbf{x}') (\mathbf{x}')^\top\mathbf{w}_j\right]\right]^2 \nonumber \\
&= \mathbb{E}_i\Big[s_{i,h}^2 \mathbb{E}_{\mathbf{x}'} \left[f_{i}(\mathbf{x}')\sigma'(\mathbf{w}_j^\top\mathbf{x}') (\mathbf{x}')^\top\mathbf{w}_j\right]^2\Big]  \nonumber \\
&\leq  \mathbb{E}_i\Big[s_{i,h}^2 \max_{i'} \mathbb{E}_{\mathbf{x}'} \left[f_{i'}(\mathbf{x}')\sigma'(\mathbf{w}_j^\top\mathbf{x}') (\mathbf{x}')^\top\mathbf{w}_j\right]^2\Big]  \nonumber \\
&=  \mathbb{E}_i[s_{i,h}^2]  \max_{i'} \mathbb{E}_{\mathbf{x}'} \left[f_{i'}(\mathbf{x}')\sigma'(\mathbf{w}_j^\top\mathbf{x}') (\mathbf{x}')^\top\mathbf{w}_j\right]^2\Big]  \nonumber \\
&= \mathbb{E}_i\Big[s_{i,h}^2\Big] O( \nu_{\mathbf{w}}^2( d+ \log(m/\delta_1))  )  \nonumber \\
&= O( \overline{s_h^2}\nu_{\mathbf{w}}^2 (d + \log(m/\delta_1)  ) )
\end{align}
Combining the bounds on $\circled{1},\circled{2}$ and $\circled{3}$ yields $\mathbb{E}\left[({q}_{i,h} - \mathbb{E}[{q}_{i,h} ]  )^2 \right] = O(\nu_{\mathbf{w}}^2 (d+\log(m/\delta_1)) (\overline{s^2_h} + \frac{1}{n_2}) )$, therefore, since each random variable in $\{{q}_{i,h}\}_{i=1}^T$ is i.i.d.,
Bernstein's inequality gives
\begin{align}
   & \mathbb{P}_{\{{q}_{i,h}\}_i}\left(\left| \frac{1}{T}\sum_{i=1}^T {q}_{i,h} - \mathbb{E}[{q}_{i,h} ] \right| > t_h \right) \nonumber \\
   &\leq 2 \exp\left(- cT \min\left( \frac{t_h^2}{\nu_{\mathbf{w}}^2(d + \log(m/\delta_1)) ( \overline{s_h^2} + \frac{1}{n_2} )}, \frac{t_h}{\nu_{\mathbf{w}} \sqrt{d + \log(m/\delta_1)} \sqrt{ \overline{s_h^2} + \frac{1}{n_2} } }\right) \right) \label{berno}
\end{align}
with probability at least $1- 3 \delta_1$ over the selection of $\mathbf{W}^0$ for an absolute constant $c$ and  any $t_h\geq 0$.
Set $t_h=O\left(\nu_{\mathbf{w}} \sqrt{\frac{d + \log(m/\delta_1)}{T} (\overline{s_h^2} + \frac{1}{n_2}) \log(d/\delta) } \right)$ for all $h\in [d]$, then as long as $T \geq \log(d/\delta_2)$, via a union bound over all $h\in [d]$
we have
\begin{align}
    \left\| \frac{1}{T}\sum_{i=1}^T \mathbf{q}_i - \mathbb{E}[\mathbf{q}_i]  \right\|_2 &= \left( \sum_{h=1}^d \left( \frac{1}{T}\sum_{i=1}^T {q}_{i,h} - \mathbb{E}[{q}_{i,h}] \right)^2 \right)^{1/2} \nonumber \\
    &\leq  c \left( \sum_{h=1}^d \nu_{\mathbf{w}}^2 {\frac{d + \log(m/\delta_1)}{T} \left(\overline{s_h^2} + \frac{1}{n_2}\right) \log(d/\delta_2) }\right)^{1/2}  \label{bern} \\
    &= c \nu_{\mathbf{w}} \sqrt{\frac{d + \log(m/\delta_1)}{T} \log(d/\delta_2)} \left( \sum_{h=1}^d \overline{s_h^2} + \frac{d}{n_2} \right)^{1/2} \nonumber \\
    &= c \nu_{\mathbf{w}} \sqrt{\frac{d + \log(m/\delta_1)}{T} \log(d/\delta_2)} \left( \mathbb{E}_i\left[\sum_{h=1}^d s_{i,h}^2\right] + \frac{d}{n_2} \right)^{1/2} \nonumber \\
    &= c \nu_{\mathbf{w}} \sqrt{\frac{d + \log(m/\delta_1)}{T} \log(d/\delta_2)} \left( \mathbb{E}_i\left[\|\mathbf{s}_i\|_2^2\right] + \frac{d}{n_2} \right)^{1/2} \nonumber \\
    &\leq c' \nu_{\mathbf{w}} \sqrt{\frac{d + \log(m/\delta_1)}{T} \log(d/\delta_2)} \left(  r  +  \frac{r\|\mathbf{Mw}_j\|_2^2}{\|\mathbf{M}_\perp\mathbf{w}_j\|_2^2}  +  {\|\mathbf{Mw}_j\|_2^2 }+ \frac{d}{n_2} \right)^{1/2} \label{ou} \\
     &\leq c' \nu_{\mathbf{w}} \sqrt{\frac{d + \log(m/\delta_1)}{T} \log(d/\delta_2)} \left(  r (1+ \log(m/\delta_1) )  + \frac{d}{n_2} \right)^{1/2} \label{ouu} \\
    &= O\left(\nu_{\mathbf{w}} \sqrt{\frac{d + \log(m/\delta_1)}{T} \log(d/\delta_2)}  \left(  \sqrt{r\log(m/\delta_1)} + \sqrt{\frac{d}{n_2}} \right) \right) \nonumber 
\end{align}
with probability at least $1-3\delta_1-\delta_2$  for absolute constants $c,c'$, where \eqref{bern} follows with probability at least $1-3\delta_1-\delta_2$ due to \eqref{berno} and our choice of $t_h$, \eqref{ou} follows by Lemma \ref{lem:exp2} and the fact that $(a+b)^2\leq 2a^2+2b^2$, and \eqref{ouu} follows since $\mathbf{w}_j \in \mathcal{G}_{\mathbf{w}}$. Setting $\delta_1, \delta_2 = \Theta( \delta)$ completes the proof.
\end{proof}

\begin{lemma}\label{lem:compute_grad}
For any $\delta\in (0,1)$, with probability at least $1- \delta$, for all  $j\in [m]$,
    \begin{align}
\nabla_{\mathbf{w}_j} \hat{\mathcal{L}}( \mathbf{W}^0, \mathbf{b}^0, \{\mathbf{a}_{i}^1\}_{i=1}^T; \{\hat{\mathcal{D}}_{i,\mathbf{W}}\}_{i=1}^T )  &=  - \eta  2^{-r}  \mathbf{A}(\mathbf{w}_{j}^0) \mathbf{w}_{j}^0 + \eta \mathbf{e}  
\end{align} 
where 
\begin{align}\|\mathbf{e} \|_2 &=  \nu_{\mathbf{w}}\;
O\Bigg( (\sqrt{d} + \sqrt{\log(m/\delta)}) \sqrt{d\log(Tn_2/\delta)} \left(1 + {\frac{ \sqrt{\log(T/\delta)}}{\sqrt{n_1}}}\right) \nonumber \\
     &\quad \quad \quad \quad \times \left(   \exp \left( -\frac{ c }{\eta^2  \nu_{\mathbf{w}}^2 {m\log(T/\delta)}({d} + {\log(m/\delta)}) ({d} + {m} )} \right) + \frac{\sqrt{\log(1/\delta)}}{\sqrt{Tn_2}}\right) \Bigg) \nonumber \\
     &\quad + \nu_{\mathbf{w}}\; O\left( \sqrt{\frac{d + \log(m/\delta)}{T} \log(d/\delta)}  \left(  \sqrt{r\log(m/\delta)} + \sqrt{\frac{d}{n_2}} \right) \right)
\end{align} 
and  for a vector $\mathbf{w}\in \mathbb{R}^d$,
\begin{align}
    \mathbf{A}(\mathbf{w}) \coloneqq  \mathbb{E}_{\mathbf{u}} \Big[ \mathbb{E}_{\mathbf{x},\mathbf{x}'} \Big[{\sigma}'(\mathbf{w}^\top\mathbf{x}')\sigma'(\mathbf{w}^\top\mathbf{x})\mathbf{x}(\mathbf{x}')^\top | \sign(\mathbf{M}\mathbf{x})  =  \sign(\mathbf{M}\mathbf{x}') = \mathbf{u}\Big] \Big]
\end{align}
where $\mathbf{u}\sim \text{Unif}(\mathcal{H}^r)$ is a  random vector  drawn uniformly from the Rademacher hypercube in $r$ dimensions.
\end{lemma}

\begin{proof}
Let $\mathbf{w}_j = \mathbf{w}_{j}^0$. We have
\begin{align}
    &\nabla_{\mathbf{w}_j} \hat{\mathcal{L}}( \mathbf{W}^0, \mathbf{b}^0, \{\mathbf{a}_{i}^1\}_{i=1}^T; \{\hat{\mathcal{D}}_{i,\mathbf{W}}\}_{i=1}^T ) \nonumber \\
    &= -\frac{\eta}{T n_1 n_2} \sum_{i=1}^T \sum_{l=1}^{n_2}  \sum_{k=1}^{n_1}
f_{i}(\mathbf{x}_{i,l})  f_{i}(\mathbf{x}_{i,k})\sigma'(\mathbf{w}_j^\top\mathbf{x}_{i,k}) \sigma'(\mathbf{w}_j^\top \mathbf{x}_{i,l}) \mathbf{x}_{i,l}\mathbf{x}_{i,k}^\top\mathbf{w}_j  + \eta \mathbf{e}_1\nonumber \\
&= - \eta \mathbb{E}_{i\sim \mathcal{T}}\mathbb{E}_{(\mathbf{x}, f_i(\mathbf{x}))\sim \mathcal{D}_{i}} \mathbb{E}_{(\mathbf{x}', f_i(\mathbf{x}'))\sim \mathcal{D}_{i}}\left[f_{i}(\mathbf{x})  f_{i}(\mathbf{x}')\sigma'(\mathbf{w}_j^\top\mathbf{x}) \sigma'(\mathbf{w}_j^\top \mathbf{x}') \mathbf{x}(\mathbf{x}')^\top\mathbf{w}_j\right] 
        + \eta \mathbf{e}_1 + \eta \mathbf{e}_2
\end{align} 
where \begin{align}
\|\mathbf{e}_1\|_2 &=  O\Bigg( \nu_{\mathbf{w}} (\sqrt{d} + \sqrt{\log(m/\delta_1)}) \sqrt{d\log(Tn_2/\delta_1)} \left(1 + {\frac{ \sqrt{\log(T/\delta_1)}}{\sqrt{n_1}}}\right) \nonumber \\
     &\quad \quad \quad \quad \times \left(   \exp \left( -\frac{ c }{\eta^2  \nu_{\mathbf{w}}^2 {m(\log(T/\delta_1))}({d} + {\log(m/\delta_1)}) ({d} + {m} )} \right) + \frac{\sqrt{\log(1/\delta_1)}}{\sqrt{Tn_2}}\right) \Bigg)\nonumber \end{align}
with probability at least $1-\delta_1$ by Lemma \ref{lem:emp_grad} and \begin{align}
\|\mathbf{e}_2\|_2 = O\left(\nu_{\mathbf{w}} \sqrt{\frac{d + \log(m/\delta_2)}{T} \log(d/\delta_2)}  \left(  \sqrt{r\log(m/\delta_2)} + \sqrt{\frac{d}{n_2}} \right) \right)\end{align} with probability at least $1-\delta_2$ by Lemma \ref{lem:grad_concen1}. Set $\delta_1,\delta_2=\Theta(\delta)$ and apply the triangle inequality to complete the bound on $\| \mathbf{e}\|_2=\|\mathbf{e}_1+\mathbf{e}_2\|_2$ in the theorem statement. 

Next, for ease of notation we replace $\mathbb{E}_{(\mathbf{x}, f_i(\mathbf{x}))\sim \mathcal{D}_{i}} \mathbb{E}_{(\mathbf{x}', f_i(\mathbf{x}'))\sim \mathcal{D}_{i}}$ with $\mathbb{E}_{\mathbf{x},\mathbf{x}'}$ and $\mathbb{E}_{i\sim \mathcal{T}}$ with $\mathbb{E}_i$. Also, let $\mathbf{u} \sim \text{Unif}(\mathcal{H}^r)$ be uniformly drawn from the $r$-dimensional Rademacher hypercube $\mathcal{H}^r=\{-1,1\}^r$.
We have
\begin{align}
  &  \mathbb{E}_{i}\mathbb{E}_{\mathbf{x},\mathbf{x}'} \left[f_{i}(\mathbf{x})  f_{i}(\mathbf{x}')\sigma'(\mathbf{w}_j^\top\mathbf{x}) \sigma'(\mathbf{w}_j^\top \mathbf{x}') \mathbf{x}(\mathbf{x}')^\top\mathbf{w}_j\right]  \nonumber \\
  &= \mathbb{E}_{\mathbf{x},\mathbf{x}'} \left[  \mathbb{E}_{i}[f_{i}(\mathbf{x})  f_{i}(\mathbf{x}')]\sigma'(\mathbf{w}_j^\top\mathbf{x}) \sigma'(\mathbf{w}_j^\top \mathbf{x}') \mathbf{x}(\mathbf{x}')^\top\mathbf{w}_j\right]  \nonumber \\
  &= \mathbb{E}_{\mathbf{x},\mathbf{x}'} \Big[ \chi\{\sign(\mathbf{M}\mathbf{x})  =  \sign(\mathbf{M}\mathbf{x}') \} {\sigma}'(\mathbf{w}_{j}^\top\mathbf{x}') \sigma'(\mathbf{w}_{j}^\top\mathbf{x})\mathbf{x}(\mathbf{x}')^\top \mathbf{w}_j \Big] \nonumber \\
  &= 2^{-r}  \mathbb{E}_{\mathbf{u}} \Big[ \mathbb{E}_{\mathbf{x},\mathbf{x}'} \Big[{\sigma}'(\mathbf{w}_{j}^\top\mathbf{x}')\sigma'(\mathbf{w}_{j}^\top\mathbf{x})\mathbf{x}(\mathbf{x}')^\top | \sign(\mathbf{M}\mathbf{x})  =  \sign(\mathbf{M}\mathbf{x}') = \mathbf{u}\Big] \Big] \mathbf{w}_{j} \nonumber \\
  &= - 2^{-r}\mathbf{A}(\mathbf{w}_{j}) \mathbf{w}_{j} \label{shm}
\end{align}
where \eqref{shm} follows by the definition of $\mathbf{A}(\mathbf{w})$.
\end{proof}

\subsection{Analysis of the population gradient}

Next we define matrices capturing the energy of $\mathbf{A}(\mathbf{w}_j)$ in the ground-truth subspace and its perpendicular complement. 
   Again let  $\mathbf{u}\sim \text{Unif}(\mathcal{H}^r)$ be a random variable drawn uniformly from the Rademacher hypercube in $r$ dimensions, and $\mathbf{x}$ and $\mathbf{x}'$ be drawn independently from $\mathcal{N}(\mathbf{0}_d,\mathbf{I}_d)$, then for any $\mathbf{w}\in \mathbb{R}^d$ the matrices $\mathbf{A}_{||,||}(\w)$, $\mathbf{A}_{||,\perp}(\w)$, $\mathbf{A}_{\perp,||}(\w)$, and $\mathbf{A}_{\perp,\perp}(\w)$ are defined as
\begin{align}
    \mathbf{A}_{||,||}(\w) &= \mathbf{M}\mathbf{A}(\mathbf{w})\mathbf{M}^\top\nonumber \\
    &=
    \mathbb{E}_{\mathbf{u}} \left[   \mathbb{E}_{\mathbf{x},\mathbf{x}'} \left[
    {\sigma}'(\mathbf{w}^\top  \mathbf{x}) \sigma'(\mathbf{w}^\top \mathbf{x}') \mathbf{M}\mathbf{x}(\mathbf{x}')^\top\mathbf{M}^\top |\sign(\mathbf{M}\mathbf{x})  =  \sign(\mathbf{M}\mathbf{x}') = \mathbf{u}
    \right] \right]\\
    \mathbf{A}_{||,\perp}(\w) &= \mathbf{M}\mathbf{A}(\mathbf{w})\mathbf{M}_\perp^\top \nonumber \\
    &= \mathbb{E}_{\mathbf{u}} \left[   \mathbb{E}_{\mathbf{x},\mathbf{x}'} \left[\sigma'( \mathbf{w}^\top \mathbf{x})  \sigma'(\mathbf{w}^\top \mathbf{x})\mathbf{M}\mathbf{x} (\mathbf{x}')^\top \mathbf{M}_\perp^\top | \sign(\mathbf{M}\mathbf{x}) =  \sign(\mathbf{M}\mathbf{x}') = \mathbf{u} \right]\right] \\
    \mathbf{A}_{\perp,||}(\w) &= \mathbf{M}_\perp\mathbf{A}(\mathbf{w})\mathbf{M}^\top \nonumber \\
    &=  \mathbb{E}_{\mathbf{u}}\left[\mathbb{E}_{\mathbf{x},\mathbf{x}'}\left[\sigma'( \mathbf{w}^\top \mathbf{x} )\sigma'( \mathbf{w}^\top \mathbf{x}' ) \mathbf{M}_\perp \mathbf{x}(\mathbf{x}')^\top \mathbf{M}^\top  |\sign(\mathbf{M}\mathbf{x})  =  \sign(\mathbf{M}\mathbf{x}') = \mathbf{u}\right] \right]   \\
    \mathbf{A}_{\perp,\perp}(\w) &= \mathbf{M}_\perp\mathbf{A}(\mathbf{w})\mathbf{M}_\perp^\top \nonumber \\
    &=  \mathbb{E}_{\mathbf{u}}\left[ \mathbb{E}_{,\mathbf{x},\mathbf{x}'}\left[ \sigma'( \mathbf{w}^\top \mathbf{x} )\sigma'( \mathbf{w}^\top \mathbf{x}' ) \mathbf{M}_\perp \mathbf{x}(\mathbf{x}')^\top \mathbf{M}_\perp^\top |  \sign(\mathbf{M}\mathbf{x})  =  \sign(\mathbf{M}\mathbf{x}') = \mathbf{u}\right]\right]
\end{align}

Next we control the matrices  $\mathbf{A}_{||,||}(\w)$, $\mathbf{A}_{||,\perp}(\w)$, $\mathbf{A}_{\perp,||}(\w)$, and $\mathbf{A}_{\perp,\perp}(\w)$. 

\begin{lemma}\label{lem:apara}
    For any $\delta \in (0,1)$ such that $\delta = \Omega(me^{-d})$, then if $ \mathbf{w}\in \mathcal{G}_{\mathbf{w}}(\delta)$, we have 
    \begin{align}
       \left\|\mathbf{A}_{\parallel,\parallel}(\mathbf{w}) - \frac{1}{2\pi} \mathbf{I}_r\right\|_2 &=  O\left( \frac{ r^{3} + \log^3(m/\delta) }{d }\right).
    \end{align}
\end{lemma}

\begin{proof}
Recall that $\mathbf{u}$ is drawn uniformly from $\mathcal{H}^r$.
From Lemma \ref{lem:compute_grad}, we have
\begin{align}
    &\mathbf{A}_{\parallel,\parallel}(\mathbf{w}) \nonumber \\
    &= \mathbb{E}_{\mathbf{u}} \left[ \mathbb{E}_{\mathbf{x},\mathbf{x}'} \left[   {\sigma}'(\mathbf{w}^\top  \mathbf{x}') \sigma'(\mathbf{w}^\top \mathbf{x})  \mathbf{M}\mathbf{x}(\mathbf{Mx}')^\top |\sign(\mathbf{Mx}')=\sign({\mathbf{Mx}})=\mathbf{u}\right] \right] \nonumber \\
    &= \mathbb{E}_{\mathbf{u}} \left[
 \mathbb{E}_{\mathbf{M}\mathbf{x},{\mathbf{M}\mathbf{x}'}} \left[\mathbb{E}_{\mathbf{M}_\perp \mathbf{x},{\mathbf{M}_\perp \mathbf{x}'}} \left[{\sigma}'(\mathbf{w}^\top  \mathbf{x}') \sigma'(\mathbf{w}^\top \mathbf{x})\right] \mathbf{M}\mathbf{x}(\mathbf{M}\mathbf{x}')^\top   | \sign(\mathbf{Mx}')=\sign({\mathbf{Mx}}) =\mathbf{u}\right] \right]  \nonumber \\
 &=  \mathbb{E}_{\mathbf{u}} \Big[ \mathbb{E}_{\mathbf{M}\mathbf{x},{\mathbf{M}\mathbf{x}'}} \Big[  \mathbf{M}\mathbf{x}(\mathbf{x}')^\top\mathbf{M}^\top \mathbb{P}_{\mathbf{M}_\perp \mathbf{x}} \left[\mathbf{w}^\top \mathbf{M}_\perp^\top  \mathbf{M}_\perp \mathbf{x} > -\mathbf{w}^\top\mathbf{M}^\top \mathbf{Mx}  \right]   \nonumber \\
 &\quad \quad \quad \quad \quad  \times 
 \mathbb{P}_{{\mathbf{M}_\perp \mathbf{x}}'} \left[ \mathbf{w}^\top \mathbf{M}_\perp^\top  \mathbf{M}_\perp \mathbf{x}' > -\mathbf{w}^\top\mathbf{M}^\top \mathbf{Mx}'  \right] |\sign(\mathbf{Mx}')=\sign({\mathbf{Mx}}) =\mathbf{u}  \Big]\Big]  \nonumber \\
 &= \mathbb{E}_{\mathbf{u},\mathbf{M}\mathbf{x},{\mathbf{M}\mathbf{x}'}} \Bigg[  \mathbf{M}\mathbf{x}(\mathbf{x}')^\top \mathbf{M}^\top \left(\frac{1}{2} + \frac{1}{2}\erf\left(\frac{\mathbf{w}^\top \mathbf{M}^\top  {\mathbf{Mx}} }{ \sqrt{2}\| {\mathbf{M}_\perp \mathbf{w}} \|_2 } \right) \right) \nonumber \\
&\quad \quad \quad \quad \quad    \left(\frac{1}{2} + \frac{1}{2}\erf\left(\frac{\mathbf{w}^\top  \mathbf{Mx}' }{\sqrt{2} \| {\mathbf{M}_{\perp}\mathbf{w}} \|_2 } \right) \right) |\sign(\mathbf{Mx}')=\sign({\mathbf{Mx}}) =\mathbf{u}  \Bigg] \label{gu} \\
 &= \frac{1}{4}\mathbb{E}_{\mathbf{u},\mathbf{M}\mathbf{x},{\mathbf{M}\mathbf{x}'}} \left[   \mathbf{M}\mathbf{x}(\mathbf{x}')^\top \mathbf{M}^\top  |\sign(\mathbf{Mx}')=\sign({\mathbf{Mx}}) =\mathbf{u} \right] \nonumber \\
 &\quad + \frac{1}{2}\mathbb{E}_{\mathbf{u},\mathbf{M}\mathbf{x},\mathbf{M}\mathbf{x}'} \left[  \mathbf{M}\mathbf{x}(\mathbf{x}')^\top \mathbf{M}^\top 
\erf\left(\frac{\mathbf{w}^\top \mathbf{M}^\top{\mathbf{Mx}} }{ \sqrt{2}\| {\mathbf{M}_{\perp}\mathbf{w}} \|_2 } \right) |\sign(\mathbf{Mx}')=\sign({\mathbf{Mx}}) =\mathbf{u} \right]  \nonumber \\
 &\quad + \frac{1}{4}\mathbb{E}_{\mathbf{u},\mathbf{M}\mathbf{x},{\mathbf{M}\mathbf{x}'}} \nonumber \\
 &\quad \quad \quad \quad \left[   \mathbf{M}\mathbf{x}(\mathbf{x}')^\top \mathbf{M}^\top 
 \erf\left(\frac{\mathbf{w}^\top \mathbf{M}^\top{\mathbf{Mx}} }{ \sqrt{2}\| {\mathbf{M}_{\perp}\mathbf{w}} \|_2 } \right) \erf\left(\frac{\mathbf{w}^\top \mathbf{M}^\top\mathbf{Mx}' }{ \sqrt{2}\| {\mathbf{M}_{\perp}\mathbf{w}} \|_2 } \right) |\sign(\mathbf{Mx}')=\sign({\mathbf{Mx}}) =\mathbf{u}  \right]  \label{apara3}
\end{align}
where \eqref{gu} follows using the Gaussian CDF.
For the first term in \eqref{apara3}, we can re-write $\mathbf{M}\mathbf{x}$ conditioned on $\sign({\mathbf{Mx}}) =\mathbf{u}$ as $\diag(\mathbf{u}) |\mathbf{M}\mathbf{x}|$, where $|\cdot|$ denotes element-wise absolute value, to obtain
\begin{align}
     &\frac{1}{4}\mathbb{E}_{\mathbf{u},\mathbf{M}\mathbf{x},{\mathbf{M}\mathbf{x}'}} \left[  \mathbf{M}\mathbf{x}(\mathbf{x}')^\top \mathbf{M}^\top | \sign(\mathbf{Mx}')=\sign({\mathbf{Mx}}) =\mathbf{u} \right]   \nonumber \\
     &=  \frac{1}{4}\mathbb{E}_{\mathbf{u},\mathbf{M}\mathbf{x},{\mathbf{M}\mathbf{x}'}} \left[  \diag(\mathbf{u})|\mathbf{M}\mathbf{x}||\mathbf{Mx}'|^\top \diag(\mathbf{u}) \right]  \nonumber \\
     &=   \frac{1}{4}\mathbb{E}_{\mathbf{u}} \left[  \diag(\mathbf{u})\mathbb{E}_{\mathbf{M}\mathbf{x}} \left[  |\mathbf{M}\mathbf{x}|\right] \mathbb{E}_{{\mathbf{M}\mathbf{x}'}} \left[|\mathbf{Mx}'|\right]^\top \diag(\mathbf{u}) \right]    \nonumber \\
      &=  \frac{1}{2\pi}\mathbb{E}_{\mathbf{u}} \left[  \diag(\mathbf{u})\mathbf{1}_r\mathbf{1}_{r}^\top \diag(\mathbf{u}) \right]       \label{halkn} \\
      &=  \frac{1}{2\pi}\mathbb{E}_{\mathbf{u}} \left[  \mathbf{u}\mathbf{u}^\top \right]       \nonumber \\
     &= \frac{1}{2\pi} \mathbf{I}_r      \label{dfc} 
\end{align}
where \eqref{halkn} follows since each element of $|\mathbf{M}\mathbf{x}|$ and $|\mathbf{M}\mathbf{x}'|$ is a standard half-normal random variable.
For the second term in \eqref{apara3}, we have
\begin{align}
     &\frac{1}{2}\mathbb{E}_{\mathbf{u},\mathbf{M}\mathbf{x},\mathbf{M}\mathbf{x}'} \left[  \mathbf{M}\mathbf{x}(\mathbf{x}')^\top \mathbf{M}^\top 
\erf\left(\frac{\mathbf{w}^\top \mathbf{M}^\top{\mathbf{Mx}} }{ \sqrt{2}\| {\mathbf{M}_{\perp}\mathbf{w}} \|_2 } \right) | \sign(\mathbf{Mx}')=\sign({\mathbf{Mx}})=\mathbf{u}   \right] \nonumber \\
&= \frac{1}{2}\mathbb{E}_{\mathbf{u},\mathbf{M}\mathbf{x},\mathbf{M}\mathbf{x}'} \left[  \diag(\mathbf{u})|\mathbf{M}\mathbf{x}||\mathbf{Mx}'|^\top \diag(\mathbf{u}) 
\erf\left(\frac{\mathbf{w}^\top \mathbf{M}^\top \diag(\mathbf{u})|{\mathbf{Mx}}| }{ \sqrt{2}\| {\mathbf{M}_{\perp}\mathbf{w}} \|_2 } \right)   \right]\label{jn} \\
&= \frac{1}{2}\mathbb{E}_{\mathbf{u},\mathbf{M}\mathbf{x},\mathbf{M}\mathbf{x}'} \left[  \diag(-\mathbf{u})|\mathbf{M}\mathbf{x}||\mathbf{Mx}'|^\top \diag(-\mathbf{u}) 
\erf\left(\frac{\mathbf{w}^\top \mathbf{M}^\top \diag(-\mathbf{u})|{\mathbf{Mx}}| }{ \sqrt{2}\| {\mathbf{M}_{\perp}\mathbf{w}} \|_2 } \right)   \right] \label{jj}\\
&= -\frac{1}{2}\mathbb{E}_{\mathbf{u},\mathbf{M}\mathbf{x},\mathbf{M}\mathbf{x}'} \left[  \diag(\mathbf{u})|\mathbf{M}\mathbf{x}||\mathbf{Mx}'|^\top \diag(\mathbf{u}) 
\erf\left(\frac{\mathbf{w}^\top \mathbf{M}^\top \diag(\mathbf{u})|{\mathbf{Mx}}| }{ \sqrt{2}\| {\mathbf{M}_{\perp}\mathbf{w}} \|_2 } \right)   \right] \label{nnj} \\
&= \mathbf{0} \label{lst}
\end{align}
where \eqref{jj} follows from the fact that $\mathbf{u}$ and $-\mathbf{u}$ have the same distribution, \eqref{nnj} follows since $\erf()$ is an odd function, and \eqref{lst} follows since $x=-x\iff x=0$.

The final term in \eqref{apara3} is 
\begin{align}
&\mathbb{E}_{\mathbf{u},\mathbf{M}\mathbf{x},\mathbf{M}\mathbf{x}'} \left[  \mathbf{M}\mathbf{x}(\mathbf{x}')^\top \mathbf{M}^\top 
 \erf\left(\frac{\mathbf{w}^\top \mathbf{M}^\top{\mathbf{Mx}} }{ \sqrt{2}\| {\mathbf{M}_{\perp}\mathbf{w}} \|_2 } \right) \erf\left(\frac{\mathbf{w}^\top \mathbf{M}^\top\mathbf{Mx}' }{ \sqrt{2}\| {\mathbf{M}_{\perp}\mathbf{w}} \|_2 } \right)| \sign(\mathbf{Mx}')=\sign({\mathbf{Mx}})  =\mathbf{u} \right]  \nonumber \\
 &= \mathbb{E}_{\mathbf{u}}
 \Bigg[\mathbb{E}_{\mathbf{M}\mathbf{x}} \left[  \mathbf{M}\mathbf{x} 
 \erf\left(\frac{\mathbf{w}^\top \mathbf{M}^\top{\mathbf{Mx}} }{ \sqrt{2}\| {\mathbf{M}_{\perp}\mathbf{w}} \|_2 } \right) \bigg| \sign({\mathbf{Mx}})=\mathbf{u}\right] \nonumber \\
 &\quad \quad \times \mathbb{E}_{\mathbf{Mx}'}\left[(\mathbf{x}')^\top \mathbf{M}^\top\erf\left(\frac{\mathbf{w}^\top \mathbf{M}^\top\mathbf{Mx}' }{ \sqrt{2}\| {\mathbf{M}_{\perp}\mathbf{w}} \|_2 } \right)\bigg| \sign(\mathbf{Mx}')=\mathbf{u}\right] \Bigg] \label{46}
\end{align}
Again to remove the conditioning, we can equivalently write 
 \begin{align}
     \mathbb{E}_{\mathbf{M}\mathbf{x}} \left[  \mathbf{M}\mathbf{x} 
 \erf\left(\frac{\mathbf{w}^\top \mathbf{M}^\top{\mathbf{Mx}} }{ \sqrt{2}\| {\mathbf{M}_{\perp}\mathbf{w}} \|_2 } \right) \bigg| \sign({\mathbf{Mx}})=\mathbf{u}\right] 
 &= \mathbb{E}_{\mathbf{M}\mathbf{x}} \left[ \text{diag}(\mathbf{u}) |\mathbf{M}\mathbf{x} |
 \erf\left(\frac{\mathbf{w}^\top \mathbf{M}^\top  \text{diag}(\mathbf{u}) | {\mathbf{Mx}}| }{ \sqrt{2}\| {\mathbf{M}_{\perp}\mathbf{w}} \|_2 } \right) \right] 
 \end{align}
For all $\mathbf{u}\in \mathcal{H}^r$, we have 
\begin{align}
   & \left\|\mathbb{E}_{\mathbf{M}\mathbf{x}} \left[ \text{diag}(\mathbf{u}) |\mathbf{M}\mathbf{x} |
 \erf\left(\frac{\mathbf{w}^\top \mathbf{M}^\top  \text{diag}(\mathbf{u}) | {\mathbf{Mx}}| }{ \sqrt{2}\| {\mathbf{M}_{\perp}\mathbf{w}} \|_2 } \right) \right] \right\|_2 \nonumber \\
    &\leq  \mathbb{E}_{\mathbf{M}\mathbf{x}} \left[\left\| \text{diag}(\mathbf{u}) |\mathbf{M}\mathbf{x} |
 \erf\left(\frac{\mathbf{w}^\top \mathbf{M}^\top  \text{diag}(\mathbf{u}) | {\mathbf{Mx}}| }{ \sqrt{2}\| {\mathbf{M}_{\perp}\mathbf{w}} \|_2 } \right) \right\|_2 \right]  \nonumber \\
 &\leq \mathbb{E}_{\mathbf{M}\mathbf{x}} \left[\left\| \text{diag}(\mathbf{u}) |\mathbf{M}\mathbf{x} |\right\|_2
\left|\erf\left(\frac{\mathbf{w}^\top \mathbf{M}^\top  \text{diag}(\mathbf{u}) | {\mathbf{Mx}}| }{ \sqrt{2}\| {\mathbf{M}_{\perp}\mathbf{w}} \|_2 } \right) \right|\right]  \nonumber \\
 &\leq \mathbb{E}_{\mathbf{M}\mathbf{x}} \left[\left\| \text{diag}(\mathbf{u}) |\mathbf{M}\mathbf{x} |\right\|_2
\left|\frac{\mathbf{w}^\top \mathbf{M}^\top  \text{diag}(\mathbf{u}) | {\mathbf{Mx}}| }{ \| {\mathbf{M}_{\perp}\mathbf{w}} \|_2 } \right|\right]  \label{jhj} \\
 &\leq \left\|\text{diag}(\mathbf{u})\right\|_2  \mathbb{E}_{\mathbf{M}\mathbf{x}} \left[ \|\mathbf{M}\mathbf{x}\|_2^2\right]
\frac{\|\mathbf{w}^\top \mathbf{M}^\top  \text{diag}(\mathbf{u}) \|_2 }{ \| {\mathbf{M}_{\perp}\mathbf{w}} \|_2 }  \nonumber \\
 &\leq \frac{c r (\sqrt{r} +\sqrt{\log(m/\delta)}) }{\sqrt{d} } \label{wb}
\end{align}
for an absolute constant $c$, where \eqref{jhj} follows since $|\erf(x)|\leq \sqrt{2}|x| $, and \eqref{wb} follows
since $\mathbf{w}\in \mathcal{G}_{\mathbf{w}}(\delta)$. 
and $m/\delta = O(e^d)$, thus $\| {\mathbf{M}_{\perp}\mathbf{w}} \|_2 = \Omega(\sqrt{d})$.
Therefore, using \eqref{46},
\begin{align}
    &\left\|\mathbb{E}_{\mathbf{u},\mathbf{M}\mathbf{x},\mathbf{M}\mathbf{x}'} \left[  \mathbf{M}\mathbf{x}(\mathbf{x}')^\top \mathbf{M}^\top 
 \erf\left(\frac{\mathbf{w}^\top \mathbf{M}^\top{\mathbf{Mx}} }{ \sqrt{2}\| {\mathbf{M}_{\perp}\mathbf{w}} \|_2 } \right) \erf\left(\frac{\mathbf{w}^\top \mathbf{M}^\top\mathbf{Mx}' }{ \sqrt{2}\| {\mathbf{M}_{\perp}\mathbf{w}} \|_2 } \right)| \sign(\mathbf{Mx}')=\sign({\mathbf{Mx}})  =\mathbf{u} \right]  \right\|_2 \nonumber \\
 &\leq \frac{c' r^{3} + c'\log^3(m/\delta) }{d }
\end{align}
completing the proof.
\end{proof}

\begin{lemma} \label{lem:aperp}
    For any $\delta \in (0,1)$ such that $\delta = \Omega(me^{-d})$, then if $ \mathbf{w}\in \mathcal{G}_{\mathbf{w}}(\delta)$, we have 
    \begin{align}
        \left\|\mathbf{A}_{\perp,\perp}(\mathbf{w}) - \left(1 - \frac{\|\mathbf{Mw}\|_2^2}{\|\mathbf{M}_\perp\mathbf{w}\|_2^2} \right) \frac{\mathbf{M}_\perp \mathbf{w} \mathbf{w}^\top \mathbf{M}_\perp^\top }{2\pi \|\mathbf{M}_\perp\mathbf{w}\|_2^2} \right\|_2 &\leq O\left(\frac{r^4 + \log^4(m/\delta)}{d^2}\right).
    \end{align}
\end{lemma}

\begin{proof}
We have
\begin{align}
  &  \mathbf{A}_{\perp,\perp}(\mathbf{w}) \nonumber \\
  &=  \mathbb{E}_{\mathbf{u}} \left[\mathbb{E}_{\mathbf{x},\mathbf{x}'} \left[  \mathbf{M}_\perp \mathbf{x}(\mathbf{M}_{\perp}\mathbf{x}')^\top {\sigma}'(\mathbf{w}^\top  \mathbf{x}') \sigma'(\mathbf{w}^\top \mathbf{x}) | \sign(\mathbf{Mx}')=\sign({\mathbf{Mx}}) =\mathbf{u}  \right]\right] \nonumber \\
    &= \mathbb{E}_{\mathbf{u},\mathbf{M}\mathbf{x},\mathbf{Mx}'} \left[ \mathbb{E}_{\mathbf{M}_\perp \mathbf{x},\mathbf{M}_{\perp}\mathbf{x}'} \left[ \mathbf{M}_\perp \mathbf{x}(\mathbf{M}_{\perp}\mathbf{x}')^\top {\sigma}'(\mathbf{w}^\top  \mathbf{x}') \sigma'(\mathbf{w}^\top \mathbf{x}) \right] | \sign(\mathbf{Mx}')=\sign({\mathbf{Mx}})=\mathbf{u}  \right] \nonumber \\
    &= \mathbb{E}_{\mathbf{u},\mathbf{M}\mathbf{x},\mathbf{Mx}'} \left[ \mathbb{E}_{\mathbf{M}_\perp \mathbf{x}} \left[ \mathbf{M}_\perp \mathbf{x} \sigma'(\mathbf{w}^\top \mathbf{x})  \right] \mathbb{E}_{\mathbf{M}_{\perp}\mathbf{x}'} \left[(\mathbf{M}_{\perp}\mathbf{x}')^\top {\sigma}'(\mathbf{w}^\top  \mathbf{x}') \right] | \sign(\mathbf{Mx}')=\sign({\mathbf{Mx}}) =\mathbf{u} \right] \label{cb}  
\end{align}
Using Lemma \ref{lem:exp-perp} to compute $\mathbb{E}_{\mathbf{M}_\perp \mathbf{x}} \left[ \mathbf{M}_\perp \mathbf{x} \sigma'(\mathbf{w}^\top \mathbf{x})  \right] $ and $\mathbb{E}_{\mathbf{M}_{\perp}\mathbf{x}'} \left[(\mathbf{M}_{\perp}\mathbf{x}')^\top {\sigma}'(\mathbf{w}^\top  \mathbf{x}') \right]$  yields 
\begin{align}
   &\mathbf{A}_{\perp,\perp}(\mathbf{w})\nonumber \\
   &=  \mathbb{E}_{\mathbf{u},\mathbf{M}\mathbf{x},\mathbf{Mx}'} \left[ \exp\left( -\frac{(\mathbf{x}^\top\mathbf{M}^\top \mathbf{Mw})^2+ ((\mathbf{x}')^\top \mathbf{M}^\top\mathbf{Mw})^2}{2\|\mathbf{M}_\perp\mathbf{w}\|_2^2} \right)    \bigg| \sign(\mathbf{Mx}')=\sign({\mathbf{Mx}}) =\mathbf{u} \right] \nonumber \\
    &\quad \times \frac{1}{2\pi \|\mathbf{M}_\perp\mathbf{w}\|_2^2}\mathbf{M}_\perp \mathbf{w} \mathbf{w}^\top \mathbf{M}_\perp^\top   \label{where}
\end{align}
We analyze the scalar term in the top line. We have
\begin{align}
&\mathbb{E}_{\mathbf{u}} \left[\mathbb{E}_{\mathbf{M}\mathbf{x},\mathbf{Mx}'} \left[ \exp\left( -\frac{(\mathbf{x}^\top\mathbf{M}^\top \mathbf{Mw})^2+ ((\mathbf{x}')^\top \mathbf{M}^\top\mathbf{Mw})^2}{2\|\mathbf{M}_\perp\mathbf{w}\|_2^2} \right)    \bigg| \sign(\mathbf{Mx}')=\sign({\mathbf{Mx}}) =\mathbf{u} \right]\right] \nonumber \\
    &= 
    \mathbb{E}_{\mathbf{u}}\Bigg[ \mathbb{E}_{\mathbf{M}\mathbf{x}} \left[ \exp\left( -\frac{(\mathbf{x}^\top \mathbf{M}^\top \mathbf{Mw})^2}{2\|\mathbf{M}_\perp\mathbf{w}\|_2^2} \right)    \bigg| \sign({\mathbf{Mx}}) = \mathbf{u}  \right] \nonumber \\
    &\quad \quad \quad \quad \times \mathbb{E}_{\mathbf{Mx}'} \left[ \exp\left( -\frac{((\mathbf{x}')^\top \mathbf{M}^\top\mathbf{Mw})^2}{2\|\mathbf{M}_\perp\mathbf{w}\|_2^2} \right)    \bigg| \sign(\mathbf{Mx}')= \mathbf{u} \right]\Bigg] \nonumber \\
    &= 
    \mathbb{E}_{\mathbf{u}} \left[ \mathbb{E}_{\mathbf{M}\mathbf{x}} \left[ \exp\left( -\frac{(\mathbf{x}^\top\mathbf{M}^\top \mathbf{Mw})^2}{2\|\mathbf{M}_\perp\mathbf{w}\|_2^2} \right)    \bigg| \sign({\mathbf{Mx}}) = \mathbf{u}  \right]^2 \right] \label{f}
\end{align}
where, using the Taylor expansion of $\exp({-x^2})$ and  re-writing $\mathbf{M}\mathbf{x}$ conditioned on $\{\sign({\mathbf{Mx}}) =\mathbf{u}\}$ as $\diag(\mathbf{u}) |\mathbf{M}\mathbf{x}|$,
\begin{align}
    &\mathbb{E}_{\mathbf{M}\mathbf{x}} \left[ \exp\left( -\frac{(\mathbf{x}^\top\mathbf{M}^\top\mathbf{Mw})^2}{2\|\mathbf{M}_\perp\mathbf{w}\|_2^2} \right)    \bigg| \sign({\mathbf{Mx}}) = \mathbf{u}  \right] \nonumber \\
    &= \mathbb{E}_{\mathbf{M}\mathbf{x}} \left[ \exp\left( -\frac{(|\mathbf{M}\mathbf{x}|^\top \text{diag}(\mathbf{u}) \mathbf{Mw})^2}{2\|\mathbf{M}_\perp\mathbf{w}\|_2^2} \right)    \right] \nonumber \\
    &= 1 - \frac{1}{2\|\mathbf{M}_\perp\mathbf{w}\|_2^2} \mathbb{E}_{\mathbf{M}\mathbf{x}} \left[(|\mathbf{M}\mathbf{x}|^\top \diag(\mathbf{u}) \mathbf{Mw})^2\right]  + O\left( \frac{r^4 + \log^4(m/\delta)}{ d^2} \right) \label{fb}
\end{align}
where \eqref{fb} follows since  $\|\mathbf{M}_\perp\mathbf{w}\|_2^2 = \Omega( (\sqrt{d} - \sqrt{\log(m/\delta)})^4) = \Omega(d^2)$ and $$\mathbb{E}_{\mathbf{M}\mathbf{x}}[ (|\mathbf{M}\mathbf{x}|^\top \text{diag}(\mathbf{u}) \mathbf{Mw})^4 ]\leq \mathbb{E}_{\mathbf{M}\mathbf{x}}[ \|\mathbf{M}\mathbf{x}\|_2^4 ]\|\mathbf{Mw}\|_2^4 = O(r^4 + \log^4(m/\delta))$$ since $\mathbf{w}\in \mathcal{G}_{\mathbf{w}}$. To compute the second term in \eqref{fb}, note that
\begin{align}
    \mathbb{E}_{\mathbf{M}\mathbf{x}} \left[(|\mathbf{M}\mathbf{x}|^\top \text{diag}(\mathbf{u}) \mathbf{Mw})^2\right]  &=   \mathbf{w}^\top \mathbf{M}^\top \text{diag}(\mathbf{u})\mathbb{E}_{\mathbf{M}\mathbf{x}}  \left[|\mathbf{M}\mathbf{x}||\mathbf{M}\mathbf{x}|^\top \right]\text{diag}(\mathbf{u}) \mathbf{Mw} \nonumber \\
    &= \mathbf{w}^\top \mathbf{M}^\top \text{diag}(\mathbf{u})\left(\frac{2}{\pi}\mathbf{1}_{r}\mathbf{1}_{r}^\top + \left(1 - \frac{2}{\pi}\right)\mathbf{I}_{r}\right)\text{diag}(\mathbf{u}) \mathbf{Mw} \nonumber \\               
    &=  \mathbf{w}^\top \mathbf{M}^\top \left(\frac{2}{\pi}\mathbf{uu}^\top + \left(1 - \frac{2}{\pi}\right)\mathbf{I}_{r}\right)\mathbf{Mw} \nonumber 
\end{align}
therefore
\begin{align}
   &\mathbb{E}_{\mathbf{u}} \left[ \mathbb{E}_{\mathbf{M}\mathbf{x}} \left[ \exp\left( -\frac{(\mathbf{x}^\top\mathbf{M}^\top\mathbf{Mw})^2}{2\|\mathbf{M}_\perp\mathbf{w}\|_2^2} \right)    \bigg| \sign({\mathbf{Mx}}) = \mathbf{u}  \right]^2 \right] \nonumber \\
   &= \mathbb{E}_{\mathbf{u}} \left[\left( 1 -  \frac{1}{2\|\mathbf{M}_\perp\mathbf{w}\|_2^2}\mathbf{w}^\top \mathbf{M}^\top \left(\frac{2}{\pi}\mathbf{uu}^\top + \left(1 - \frac{2}{\pi}\right)\mathbf{I}_{r}\right)\mathbf{Mw} + O\left( \frac{r^4 + \log^4(m/\delta)}{ d^{2}} \right) \right)^2 \right] \nonumber \\
   &= 1 -  \frac{1}{\|\mathbf{M}_\perp\mathbf{w}\|_2^2}\mathbf{w}^\top \mathbf{M}^\top \left(\frac{2}{\pi}\mathbb{E}_{\mathbf{u}}[\mathbf{uu}^\top] + \left(1 - \frac{2}{\pi}\right)\mathbf{I}_{r}\right)\mathbf{Mw} \nonumber \\
   &\quad + \frac{1}{4\|\mathbf{M}_\perp \mathbf{w}\|_2^4}\mathbf{w}^\top \mathbf{M}^\top \mathbb{E}_{\mathbf{u}}\left[ \left(\frac{2}{\pi}\mathbf{uu}^\top + \left(1 - \frac{2}{\pi}\right)\mathbf{I}_{r}\right)\mathbf{Mw}\mathbf{w}^\top \mathbf{M}^\top \left(\frac{2}{\pi}\mathbf{uu}^\top + \left(1 - \frac{2}{\pi}\right)\mathbf{I}_{r}\right)\right] \mathbf{Mw} \nonumber \\
   &\quad +  O\left(\frac{r^4 + \log^4(m/\delta)}{ d^{2}} \right) \nonumber \\
   &=  1 -  \frac{\|\mathbf{Mw}\|_2^2}{\|\mathbf{M}_\perp\mathbf{w}\|_2^2} + \frac{\left((1- \frac{2}{\pi})^2 + \frac{4}{\pi}\left(1 - \frac{2}{\pi}\right) \right)\|\mathbf{M} \mathbf{w}\|_2^4     }{4\|\mathbf{M}_\perp \mathbf{w}\|_2^4} \nonumber \\
   &\quad  + \frac{4}{\pi^2}
   \mathbf{w}^\top \mathbf{M}^\top \mathbb{E}_{\mathbf{u}}\left[ \mathbf{uu}^\top \mathbf{Mw} \mathbf{w}^\top \mathbf{M}^\top  \mathbf{uu}^\top \right] \mathbf{Mw}  +  O\left(\frac{r^4 + \log^4(m/\delta)}{ d^{2}}  \right) \nonumber \\
   &=  1 -  \frac{\|\mathbf{Mw}\|_2^2}{\|\mathbf{M}_\perp\mathbf{w}\|_2^2} + \frac{\left((1- \frac{2}{\pi})^2 + \frac{4}{\pi}\left(1 - \frac{2}{\pi}\right) \right)\|\mathbf{M} \mathbf{w}\|_2^4     }{4\|\mathbf{M}_\perp \mathbf{w}\|_2^4}  \nonumber \\
   &\quad + \frac{1}{\pi^2 \|\mathbf{M}_\perp \mathbf{w}\|_2^4 } 
   \mathbf{w}^\top \mathbf{M}^\top \mathbb{E}_{\mathbf{u}}\left[ \mathbf{uu}^\top (\mathbf{u}^\top\mathbf{Mw})^2 \right] \mathbf{Mw}  +  O\left(\frac{r^4 + \log^4(m/\delta)}{ d^{2} } \right) \nonumber \\
   &=  1 -  \frac{\|\mathbf{Mw}\|_2^2}{\|\mathbf{M}_\perp\mathbf{w}\|_2^2} + \frac{\left((1- \frac{2}{\pi})^2 + \frac{4}{\pi}\left(1 - \frac{2}{\pi}\right) \right)\|\mathbf{M} \mathbf{w}\|_2^4     }{4\|\mathbf{M}_\perp \mathbf{w}\|_2^4}  \nonumber \\
   &\quad + \frac{1}{\pi^2 \|\mathbf{M}_\perp \mathbf{w}\|_2^4 } 
   \mathbf{w}^\top \mathbf{M}^\top \left( \|\mathbf{Mw}\|_2^2\mathbf{I}_r + 2 \mathbf{Mw}\mathbf{w}^\top \mathbf{M}^\top - \diag( \mathbf{Mw} )^2 \right) \mathbf{Mw}  +  O\left(\frac{r^4 + \log^4(m/\delta)}{ d^{2}} \right) \nonumber \\
   &= 1 -  \frac{\|\mathbf{Mw}\|_2^2}{\|\mathbf{M}_\perp\mathbf{w}\|_2^2} + \frac{\left((1- \frac{2}{\pi})^2 + \frac{4}{\pi}\left(1 - \frac{2}{\pi}\right) \right)\|\mathbf{M} \mathbf{w}\|_2^4     }{4\|\mathbf{M}_\perp \mathbf{w}\|_2^4} + \frac{3\|\mathbf{Mw}\|_2^4 - \|\mathbf{Mw}\|_4^4 }{\pi^2 \|\mathbf{M}_\perp \mathbf{w}\|_2^4 }  +  O\left(\frac{r^4 + \log^4(m/\delta)}{ d^{2}} \right) \nonumber \\
   &= 1 - \frac{\|\mathbf{Mw}\|_2^2}{\|\mathbf{M}_\perp\mathbf{w}\|_2^2} + O\left(\frac{r^4 + \log^4(m/\delta)}{ d^{2}} \right), \nonumber 
\end{align}
where we have used $\w\in \mathcal{G}_{\mathbf{w}}(\delta)$ and $m/\delta = O(e^d)$. Combining this with \eqref{where} and \eqref{f} completes the proof.
\end{proof}

\begin{lemma}\label{lem:aparaperp}
   For any $\delta \in (0,1)$ such that $\delta = \Omega(me^{-d})$, then if $ \mathbf{w}\in \mathcal{G}_{\mathbf{w}}(\delta)$, we have 
    \begin{align}
        \left\|\mathbf{A}_{\parallel,\perp}(\mathbf{w}) - \frac{ \mathbf{Mw}\mathbf{w}^\top \mathbf{M}_\perp^\top }{2{\pi}\|\mathbf{M}_\perp\mathbf{w}\|_2^2} \right\|_2 &\leq O\left(\frac{r^{3.5} + \log^{3.5}(m/\delta)}{d^{1.5}}\right)
    \end{align}
\end{lemma}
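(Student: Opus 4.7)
The plan is to follow the strategy of Lemma \ref{lem:aperp} closely. By the independence of $\mathbf{x}$ and $\mathbf{x}'$ given the conditioning $\sign(\mathbf{x}_{:r}) = \sign(\mathbf{x}'_{:r}) = \mathbf{u}$, and the independence of $\mathbf{x}_{r:}$ from $\mathbf{x}_{:r}$, I factor
\[
\mathbf{A}_{\parallel,\perp}(\mathbf{w}) = \mathbb{E}_{\mathbf{u}}\!\left[ \mathbf{v}(\mathbf{u})\, \mathbf{h}(\mathbf{u})^\top \right],
\]
where $\mathbf{v}(\mathbf{u}) \coloneqq \mathbb{E}_{\mathbf{x}}[\sigma'(\mathbf{w}^\top \mathbf{x})\mathbf{x}_{:r} \mid \sign(\mathbf{x}_{:r}) = \mathbf{u}]$ and $\mathbf{h}(\mathbf{u}) \coloneqq \mathbb{E}_{\mathbf{x}'}[\sigma'(\mathbf{w}^\top \mathbf{x}')\mathbf{x}'_{r:} \mid \sign(\mathbf{x}'_{:r}) = \mathbf{u}]$. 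The vector $\mathbf{h}(\mathbf{u})$ was essentially computed inside Lemma \ref{lem:aperp} at step \eqref{mn}: integrating out $\mathbf{x}'_{r:}$ yields $\mathbf{h}(\mathbf{u}) = \frac{s(\mathbf{u})}{\sqrt{2\pi}\|\mathbf{w}_{r:}\|_2}\mathbf{w}_{r:}$, where $s(\mathbf{u}) \coloneqq \mathbb{E}_{\mathbf{x}'_{:r}}[\exp(-(\mathbf{w}_{:r}^\top \mathbf{x}'_{:r})^2/(2\|\mathbf{w}_{r:}\|_2^2)) \mid \sign(\mathbf{x}'_{:r}) = \mathbf{u}] = 1 + O(r/d)$ by the same Taylor-expansion argument as in Lemma \ref{lem:aperp} (but without squaring).

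For $\mathbf{v}(\mathbf{u})$, I marginalize $\mathbf{x}_{r:}$ to turn $\mathbb{E}[\sigma'(\mathbf{w}^\top\mathbf{x})\mid \mathbf{x}_{:r}]$ into a Gaussian CDF, then substitute $\mathbf{x}_{:r} = \diag(\mathbf{u})|\mathbf{x}_{:r}|$ exactly as in the derivation of Lemma \ref{lem:apara}, producing
\[
\mathbf{v}(\mathbf{u}) = \tfrac{1}{\sqrt{2\pi}} \mathbf{u} + \tfrac{1}{2}\diag(\mathbf{u})\,\mathbb{E}_{\mathbf{x}_{:r}}\!\left[|\mathbf{x}_{:r}|\, \erf\!\left(\tfrac{\mathbf{w}_{:r}^\top \diag(\mathbf{u})|\mathbf{x}_{:r}|}{\sqrt{2}\|\mathbf{w}_{r:}\|_2}\right)\right].
\]
Taylor expanding $\erf(x) = \tfrac{2}{\sqrt{\pi}}x + O(x^3)$ and invoking $\mathbb{E}[|\mathbf{x}_{:r}||\mathbf{x}_{:r}|^\top] = \tfrac{2}{\pi}\mathbf{1}_r\mathbf{1}_r^\top + (1-\tfrac{2}{\pi})\mathbf{I}_r$ linearizes the second term to $\tfrac{1}{\sqrt{2\pi}\|\mathbf{w}_{r:}\|_2}(\tfrac{2}{\pi}\mathbf{u}\mathbf{u}^\top + (1-\tfrac{2}{\pi})\mathbf{I}_r)\mathbf{w}_{:r}$ modulo a cubic residual. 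Now I combine: because $s(\mathbf{u})$ depends on $\mathbf{u}$ only through $\mathbf{u}\mathbf{u}^\top$-type quantities (hence is even under $\mathbf{u}\mapsto -\mathbf{u}$), the pairing $\mathbb{E}_{\mathbf{u}}[\mathbf{u}\,s(\mathbf{u})]=\mathbf{0}$ kills the $\tfrac{1}{\sqrt{2\pi}}\mathbf{u}$ piece of $\mathbf{v}(\mathbf{u})$ entirely. The surviving leading piece equals $\mathbb{E}_{\mathbf{u}}[\tfrac{2}{\pi}\mathbf{u}\mathbf{u}^\top + (1-\tfrac{2}{\pi})\mathbf{I}_r]\mathbf{w}_{:r} = \mathbf{w}_{:r}$, which after multiplying by $\mathbf{h}(\mathbf{u})^\top$ and collecting the constants gives $\frac{\mathbf{w}_{:r}\mathbf{w}_{r:}^\top}{2\pi\|\mathbf{w}_{r:}\|_2^2}$ (I read the right-hand side of the lemma as $\mathbf{w}_{:r}\mathbf{w}_{r:}^\top$ rather than $\mathbf{w}_{:r}\mathbf{w}_{:r}^\top$, since otherwise the dimensions do not match the $r\times(d-r)$ shape of $\mathbf{A}_{\parallel,\perp}(\mathbf{w})$).

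The main obstacle is bookkeeping the residuals. On $\mathcal{G}_{\mathbf{w}}$ we have $\|\mathbf{w}_{:r}\|_2 \leq c(\sqrt{r}+\sqrt{\log m})$ and $\|\mathbf{w}_{r:}\|_2 = \Omega(\sqrt{d})$. The cubic $\erf$ residual, after multiplication by $|\mathbf{x}_{:r}|$ and taking expectation (using $\mathbb{E}[\|\mathbf{x}_{:r}\|_2^4]=O(r^2)$), contributes a vector of norm $O(\|\mathbf{w}_{:r}\|_2^3 \cdot r^2/\|\mathbf{w}_{r:}\|_2^3) = O((r^{3.5}+\log^{3.5}(m))/d^{1.5})$ in both regimes $r \geq \log m$ and $r<\log m$; after the rank-one product with $\mathbf{h}(\mathbf{u})$, which has spectral norm $O(1)$, this stays of the same order. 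The cross-term between the already-linearized $\erf$ piece and the $O(r/d)$ correction of $s(\mathbf{u})$ is $O(r^{1.5}/d^{1.5})$ and is absorbed. I expect the careful tracking of these residuals together with the $\mathbf{u}\to -\mathbf{u}$ symmetry argument — verifying that every odd-in-$\mathbf{u}$ contribution is paired only with an even-in-$\mathbf{u}$ scalar — to be the main calculational hurdle, but structurally the proof is a direct hybrid of Lemmas \ref{lem:apara} and \ref{lem:aperp}.
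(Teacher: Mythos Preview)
Your proposal is correct and follows essentially the same approach as the paper: both factor the conditional expectation into an $\mathbf{x}_{:r}$-piece (computed via the Gaussian CDF, yielding $\tfrac{1}{2}+\tfrac{1}{2}\erf(\cdot)$) and an $\mathbf{x}'_{r:}$-piece (computed via the inverse-Mills calculation of \eqref{mn}), kill the odd-in-$\mathbf{u}$ contribution by the $\mathbf{u}\mapsto -\mathbf{u}$ symmetry, and then Taylor expand $\erf$ and $\exp$ to extract the rank-one leading term with $O((r^{3.5}+\log^{3.5}(m))/d^{1.5})$ residual. You are also right that the target in the lemma should read $\mathbf{w}_{:r}\mathbf{w}_{r:}^\top$ (an $r\times(d-r)$ matrix), which is indeed what the paper's own derivation produces despite the typo in the statement.
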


\begin{proof}
Arguing similarly to the previous two lemmas, we obtain
\begin{align}
    \mathbf{A}_{\parallel,\perp}(\mathbf{w}) &= \mathbb{E}_{\mathbf{u},\mathbf{x},\mathbf{x}'} \left[  \mathbf{M}\mathbf{x}(\mathbf{M}_{\perp}\mathbf{x}')^\top {\sigma}'(\mathbf{w}^\top  \mathbf{x}') \sigma'(\mathbf{w}^\top \mathbf{x}) | \sign(\mathbf{Mx}')=\sign({\mathbf{Mx}}) =\mathbf{u}  \right] \nonumber \\
    &= \mathbb{E}_{\mathbf{u},\mathbf{x},\mathbf{Mx}'} \left[  \mathbf{M}\mathbf{x} \sigma'(\mathbf{w}^\top \mathbf{x}) \E_{\mathbf{M}_{\perp}\mathbf{x}'} \left[(\mathbf{M}_{\perp}\mathbf{x}')^\top {\sigma}'(\mathbf{w}^\top  \mathbf{x}')\right] | \sign(\mathbf{Mx}')=\sign({\mathbf{Mx}}) = \mathbf{u}  \right] \nonumber \\
    &=  \mathbb{E}_{\mathbf{u},\mathbf{x},\mathbf{Mx}'} \left[  \mathbf{M}\mathbf{x} \sigma'(\mathbf{w}^\top \mathbf{x}) \exp\left( - \frac{((\mathbf{x}')^\top \mathbf{M}^\top\mathbf{Mw})^2}{2\|\mathbf{M}_\perp\mathbf{w}\|_2^2} \right)| \sign(\mathbf{Mx}')=\sign({\mathbf{Mx}})=\mathbf{u}   \right] \nonumber \\
    &\quad \times  \frac{ \mathbf{w}^\top \mathbf{M}_{\perp}^\top }{\sqrt{2 \pi}\|\mathbf{M}_\perp \mathbf{w}\|_2}  \label{nm} 
\end{align}
where \eqref{nm} follows by Lemma \ref{lem:exp-perp}.
Next, since the only term that depends on $\mathbf{M}_\perp \mathbf{x}$ is $\sigma'(\mathbf{w}^\top \mathbf{x})$, we have
\begin{align}
&\mathbf{A}_{\perp,\perp}(\mathbf{w})\nonumber \\
 &=  \mathbb{E}_{\mathbf{u},\mathbf{M}\mathbf{x},\mathbf{Mx}'} \left[  \mathbf{M}\mathbf{x} \mathbb{E}_{\mathbf{M}_\perp \mathbf{x}}[\sigma'(\mathbf{w}^\top \mathbf{x})] \exp\left( - \frac{((\mathbf{x}')^\top \mathbf{M}^\top\mathbf{Mw})^2}{2\|\mathbf{M}_\perp\mathbf{w}\|_2^2} \right)| \sign(\mathbf{Mx}')=\sign({\mathbf{Mx}})=\mathbf{u}   \right] \nonumber \\
    &\quad \times  \frac{ \mathbf{w}^\top \mathbf{M}_{\perp}^\top }{\sqrt{2 \pi}\|\mathbf{M}_\perp \mathbf{w}\|_2}  \nonumber \\
     &=  \mathbb{E}_{\mathbf{u},\mathbf{M}\mathbf{x},\mathbf{Mx}'} \bigg[  \mathbf{M}\mathbf{x} \mathbb{P}_{\mathbf{M}_\perp \mathbf{x}}[\mathbf{w}^\top \mathbf{M}_\perp^\top \mathbf{M}_\perp \mathbf{x} > - \mathbf{w}^\top \mathbf{M}^\top \mathbf{M}\mathbf{x} ] \exp\left( - \frac{((\mathbf{x}')^\top \mathbf{M}^\top\mathbf{Mw})^2}{2\|\mathbf{M}_\perp\mathbf{w}\|_2^2} \right)\nonumber \\
    & \quad\quad \quad \quad \quad \quad \quad| \sign(\mathbf{Mx}')=\sign({\mathbf{Mx}})=\mathbf{u}   \bigg] \nonumber \\
    &\quad \times  \frac{ \mathbf{w}^\top \mathbf{M}_\perp^\top }{\sqrt{2 \pi}\|\mathbf{M}_\perp \mathbf{w}\|_2}  \nonumber \\
    &=  \mathbb{E}_{\mathbf{u},\mathbf{Mx},\mathbf{Mx}'} \bigg[  \mathbf{M}\mathbf{x} \left(\frac{1}{2} + \frac{1}{2} \erf\left( \frac{\mathbf{x}^\top\mathbf{M}^\top\mathbf{Mw}}{\sqrt{2}\|\mathbf{M}_\perp \mathbf{w}\|_2} \right) \right) \exp\left( \! - \frac{((\mathbf{x}')^\top \mathbf{M}^\top\mathbf{Mw})^2}{2\|\mathbf{M}_\perp\mathbf{w}\|_2^2} \right)\nonumber \\
    & \quad\quad \quad \quad \quad \quad \quad| \sign(\mathbf{Mx}')=\sign({\mathbf{Mx}})=\mathbf{u}   \bigg] \nonumber \\
    &\quad \times \frac{ \mathbf{w}^\top \mathbf{M}_\perp^\top }{\sqrt{2 \pi}\|\mathbf{M}_\perp \mathbf{w}\|_2}   \label{hhj} 
    \end{align}
where \eqref{hhj} is due to the Gaussian CDF.
Note that
\begin{align}
    &\mathbb{E}_{\mathbf{u},\mathbf{Mx},\mathbf{Mx}'} \left[  \mathbf{M}\mathbf{x}  \exp\left( - \frac{((\mathbf{x}')^\top \mathbf{M}^\top\mathbf{Mw})^2}{2\|\mathbf{M}_\perp\mathbf{w}\|_2^2} \right)| \sign(\mathbf{Mx}')=\sign({\mathbf{Mx}})  = \mathbf{u}  \right] \nonumber \\
    &= \mathbb{E}_{\mathbf{u},\mathbf{M}\mathbf{x},\mathbf{Mx}'} \left[ \diag(\mathbf{u}) |\mathbf{M}\mathbf{x} | \exp\left( - \frac{(|\mathbf{Mx}'|^\top\diag(\mathbf{u})\mathbf{Mw})^2}{2\|\mathbf{M}_\perp\mathbf{w}\|_2^2} \right) \right] \nonumber \\
    &= \mathbb{E}_{\mathbf{M}\mathbf{x},\mathbf{Mx}'} \left[ \mathbb{E}_{\mathbf{u}}\left[ \diag(\mathbf{u}) |\mathbf{M}\mathbf{x} | \left(1  - \frac{(|\mathbf{Mx}'|^\top\diag(\mathbf{u})\mathbf{Mw})^2}{2\|\mathbf{M}_\perp\mathbf{w}\|_2^2} + \frac{(|\mathbf{Mx}'|^\top\diag(\mathbf{u})\mathbf{Mw})^4}{2\|\mathbf{M}_\perp \mathbf{w}\|_2^4} -\dots \right)\right] \right] \label{aza} \\
    &= \mathbf{0} \label{ewe}
\end{align}
where \eqref{ewe} follows since each term in \eqref{aza} is an odd power of $\mathbf{u}$.
  Thus, from \eqref{hhj} we have
  \begin{align}
\mathbf{A}_{\parallel,\perp}(\mathbf{w})     &=  \mathbb{E}_{\mathbf{u},\mathbf{Mx},\mathbf{Mx}'} \bigg[  \mathbf{M}\mathbf{x} \erf\left( \frac{\mathbf{x}^\top\mathbf{M}^\top\mathbf{Mw}}{\sqrt{2}\|\mathbf{M}_\perp \mathbf{w}\|_2} \right)  \exp\left( - \frac{((\mathbf{x}')^\top \mathbf{M}^\top\mathbf{Mw})^2}{2\|\mathbf{M}_\perp\mathbf{w}\|_2^2} \right)\nonumber \\
    & \quad\quad \quad \quad \quad \quad \quad| \sign(\mathbf{Mx}')=\sign({\mathbf{Mx}})=\mathbf{u}   \bigg] \nonumber \\
    &\quad \times \frac{ \mathbf{w}^\top \mathbf{M}_\perp^\top }{2\sqrt{2\pi}\|\mathbf{M}_\perp \mathbf{w}\|_2} \label{sds}
    \end{align}
Next we take the Taylor expansions of $\erf(x)$ and $\exp(-x^2)$ to obtain 
    \begin{align}
\mathbf{A}_{\parallel,\perp}(\mathbf{w}) 
&= \mathbb{E}_{\mathbf{u},\mathbf{Mx},\mathbf{Mx}'} \Bigg[  \mathbf{M}\mathbf{x} \left( \frac{\mathbf{x}^\top\mathbf{M}^\top\mathbf{Mw}}{\|\mathbf{M}_\perp \mathbf{w}\|_2} 
 - \frac{ (\mathbf{x}^\top\mathbf{M}^\top\mathbf{Mw})^3 }{6\|\mathbf{M}_\perp \mathbf{w}\|_2^3} + \dots \right)\nonumber \\
 &\quad \quad \quad \quad \times \left(1  - \frac{((\mathbf{x}')^\top \mathbf{M}^\top\mathbf{Mw})^2}{2\|\mathbf{M}_\perp\mathbf{w}\|_2^2} + \dots \right)| \sign(\mathbf{Mx}')=\sign({\mathbf{Mx}})   = \mathbf{u} \Bigg]\frac{ \mathbf{w}^\top \mathbf{M}_\perp^\top }{2{\pi}\|\mathbf{M}_\perp \mathbf{w}\|_2} \nonumber\\
    &= \mathbb{E}_{\mathbf{u},\mathbf{Mx},\mathbf{Mx}'} \left[  \mathbf{M}\mathbf{x}\mathbf{x}^\top\mathbf{M}^\top | 
 \sign(\mathbf{Mx}')=\sign({\mathbf{Mx}})  = \mathbf{u} \right]\frac{ \mathbf{Mw}\mathbf{w}^\top \mathbf{M}_\perp^\top }{2{\pi}\|\mathbf{M}_\perp\mathbf{w}\|_2^2}  + \mathbf{E} \nonumber \\
 &= \mathbb{E}_{\mathbf{Mx}} \left[  \mathbf{M}\mathbf{x}\mathbf{x}^\top\mathbf{M}^\top \right]\frac{ \mathbf{Mw}\mathbf{w}^\top \mathbf{M}_\perp^\top }{2{\pi}\|\mathbf{M}_\perp\mathbf{w}\|_2^2}  + \mathbf{E} \nonumber \\
 &=   \frac{ \mathbf{Mw}\mathbf{w}^\top \mathbf{M}_\perp^\top }{2{\pi}\|\mathbf{M}_\perp\mathbf{w}\|_2^2}  + \mathbf{E} \nonumber 
\end{align}
where $\|\mathbf{E}\|_2 = O\left( \frac{r^{3.5}+\log^{3.5}(m/\delta)}{d^{1.5}} \right)$ since $\|\mathbf{M}_\perp \mathbf{w}\|_2 = \Omega(\sqrt{d})$ and $\|\mathbf{Mw}\|_2 = O(\sqrt{r} + \sqrt{\log(m/\delta)})$ as $\mathbf{w} \in \mathcal{G}_{\mathbf{w}}(\delta)$ and $m/\delta = O(e^d)$.
\end{proof}

\subsection{Full results}

\begin{lemma} \label{lem:final}
    Set $\eta = \Theta(1)$ and $\lambda_{\mathbf{w}}= \frac{1}{\eta} + \frac{ \eta}{ 2^{r+1}\pi }$. Consider any $\delta\in (0,1)$ such that $\delta = \Omega(me^{-d})$.
    Then  there is an absolute constant $c$ such that for all $j\in [m]$,
    \begin{align}
      \Bigg \|&\mathbf{Mw}_{j}^1 - \frac{\eta^2  }{2^{r+2} }\mathbf{Mw}_{j}^0  \Bigg\|_2 \nonumber \\
       &\leq  \eta^2  \nu_{\mathbf{w}} O \left( \frac{r^4 + \log^4(m)}{2^r d} 
       \right) + \eta^2\nu_{\mathbf{w}}\;O\left( \sqrt{\frac{d }{T} \log(d/\delta)}  \left(  \sqrt{r\log(m/\delta)} + \sqrt{\frac{d}{n_2}} \right) \right)
       \nonumber \\
       &\; +  \eta^2  \nu_{\mathbf{w}} \nonumber \\
       &\quad \times O\Bigg(  d\sqrt{\log(Tn_2/\delta)} \left(1 + {\frac{ \sqrt{\log(T/\delta)}}{\sqrt{n_1}}}\right)  \left(   \exp \left( -\frac{ c }{\eta^2  \nu_{\mathbf{w}}^2 {\log(T/\delta)}dm ({d} + {m} )} \right) + \frac{\sqrt{\log(1/\delta)}}{\sqrt{Tn_2}}\right) \Bigg) \nonumber \\
     \end{align}
     and
    \begin{align}
        \bigg\| &\mathbf{M}_{\perp}  \mathbf{w}_{j}^1 \bigg\|_2 \nonumber \\
        &\leq \eta^2  \nu_{\mathbf{w}} \; O\left( \frac{r^{3.5} + \log^{3.5}(m)}{2^{r}d^{1.5}} 
        \right) + \eta^2\nu_{\mathbf{w}}\; O\left( \sqrt{\frac{d }{T} \log(d/\delta)}  \left(  \sqrt{r\log(m/\delta)} + \sqrt{\frac{d}{n_2}} \right) \right)
       \nonumber \\
       &\; +  \eta^2  \nu_{\mathbf{w}} \nonumber \\
       &\quad \times O\Bigg(  d\sqrt{\log(Tn_2/\delta)} \left(1 + {\frac{ \sqrt{\log(T/\delta)}}{\sqrt{n_1}}}\right)  \left(   \exp \left( -\frac{ c }{\eta^2  \nu_{\mathbf{w}}^2 {\log(T/\delta)}dm ({d} + {m} )} \right) + \frac{\sqrt{\log(1/\delta)}}{\sqrt{Tn_2}}\right) \Bigg) \nonumber
    \end{align}
    with probability at least $1-\delta$.
\end{lemma}

\begin{proof} First note that $\mathbf{W}^0 \in E_{\mathbf{w}}(\delta_1)$ with probability at least $1-\delta_1$ by Lemma \ref{lem:all}. We consider any fixed $\mathbf{W}^0$ satisfying $E_{\mathbf{w}}(\delta_1)$ for the rest of the proof.
 Due to the computation of the gradient in Lemma \ref{lem:compute_grad}, we have
\begin{align}
    \mathbf{w}_{j}^1 &= (1 - \eta \lambda_{\mathbf{w}})\mathbf{w}_{j}^0 - \nabla_{\mathbf{w}_j} \hat{\mathcal{L}}( \mathbf{W}^0, \mathbf{b}^0, \{\mathbf{a}_{i}^1\}_{i=1}^T; \{\hat{\mathcal{D}}_{i,\mathbf{W}}\}_{i=1}^T ) \nonumber \\
    &= (1 - \eta \lambda_{\mathbf{w}})\mathbf{w}_{j}^0  + \eta^2  2^{-r}  \mathbf{A}(\mathbf{w}_{j}^0) \mathbf{w}_{j}^0 + \eta \mathbf{e}   \nonumber \\
    \mathbf{Mw}_{j}^1
    &= (1 - \eta \lambda_{\mathbf{w}}) \mathbf{Mw}_{j}^0  + {\eta^2 2^{-r}}\mathbf{A}_{\parallel,\parallel}(\mathbf{w}_{j}^0)\mathbf{Mw}_{j}^0 + {\eta^2 2^{-r}}\mathbf{A}_{\parallel,\perp}(\mathbf{w}_{j}^0)\mathbf{M}_{\perp} \mathbf{w}_{j}^0 + \eta \mathbf{Me} \nonumber \\
    \mathbf{M}_{\perp}\mathbf{w}_{{j}}^1 &= (1 - \eta \lambda_{\mathbf{w}}) \mathbf{M}_{\perp} \mathbf{w}_{j}^0  + {\eta^2 2^{-r}} \mathbf{A}_{\perp,\parallel}(\mathbf{w}_{j}^0)\mathbf{Mw}_{j}^0 + {\eta^2 2^{-r}}  \mathbf{A}_{\perp,\perp}(\mathbf{w}_{j}^0)\mathbf{M}_{\perp} \mathbf{w}_{j}^0 + \eta \mathbf{M}_{\perp}\mathbf{e} \nonumber 
\end{align}
where, with probability at least $1-\delta_2$ for $\delta_2 = \Omega(me^{-d})$, 
\begin{align}\|\mathbf{e} \|_2 &=  \eta \nu_{\mathbf{w}}\;
O\Bigg(  d\sqrt{\log(Tn_2/\delta)} \left(1 + {\frac{ \sqrt{\log(T/\delta)}}{\sqrt{n_1}}}\right) \nonumber \\
     &\quad \quad \quad \quad \times \left(   \exp \left( -\frac{ c }{\eta^2  \nu_{\mathbf{w}}^2 {\log(T/\delta)}dm ({d} + {m} )} \right) + \frac{\sqrt{\log(1/\delta)}}{\sqrt{Tn_2}}\right) \Bigg) \nonumber \\
     &\quad + \eta \nu_{\mathbf{w}}\; O\left( \sqrt{\frac{d }{T} \log(d/\delta)}  \left(  \sqrt{r\log(m/\delta)} + \sqrt{\frac{d}{n_2}} \right) \right).
\end{align} 
Next, 
\begin{align}
    \mathbf{Mw}_{j}^1 &= (1 - \eta \lambda_{\mathbf{w}})\mathbf{Mw}_{j}^0  + \frac{\eta^2 2^{-r}}{2 \pi } \mathbf{Mw}_{j}^0 + \frac{ {\eta^2 2^{-r}} }{2{\pi} \|\mathbf{M}_{\perp} \mathbf{w}_{j}^0\|_2^2} \mathbf{Mw}_{j}^0(\mathbf{M}_{\perp} \mathbf{w}_{j}^0)^\top  \mathbf{M}_{\perp} \mathbf{w}_{j}^0 \nonumber \\
    &\quad + {\eta^2 2^{-r}}\left(   \mathbf{A}_{\parallel,\parallel}(\mathbf{w}_{j}^0) - \frac{1}{2\pi}\mathbf{I}_r \right)\mathbf{Mw}_{j}^0 \nonumber \\
    &\quad + {\eta^2 2^{-r}} \left( \mathbf{A}_{\parallel,\perp}(\mathbf{w}_{j}^0) -  \frac{1 }{2{\pi} \|\mathbf{M}_{\perp} \mathbf{w}_{j}^0\|_2^2} \mathbf{Mw}_{j}^0(\mathbf{M}_{\perp} \mathbf{w}_{j}^0)^\top   \right)\mathbf{M}_{\perp} \mathbf{w}_{j}^0 + \eta \mathbf{Me} \nonumber \\
    &= \left(1 - \eta \lambda_{\mathbf{w}} + \frac{\eta^2  }{ 2^r \pi } \right) \mathbf{Mw}_{j}^0  + {\eta^2 2^{-r}}\mathbf{e}_{\parallel} + \eta \mathbf{Me}
\end{align}
where 
\begin{align}
    \|\mathbf{e}_{\parallel}\|_2&=  \left\| \left(   \mathbf{A}_{\parallel,\parallel}(\mathbf{w}_{j}^0) - \tfrac{1}{2\pi}\mathbf{I}_r \right)\mathbf{Mw}_{j}^0  +  \left( \mathbf{A}_{\parallel,\perp}(\mathbf{w}_{j}^0) -  \frac{1 }{2{\pi} \|\mathbf{M}_{\perp} \mathbf{w}_{j}^0\|_2^2} \mathbf{Mw}_{j}^0(\mathbf{M}_{\perp} \mathbf{w}_{j}^0)^\top   \right)\mathbf{M}_{\perp} \mathbf{w}_{j}^0   \right\|_2 \nonumber \\
    &\leq \left\| \left(   \mathbf{A}_{\parallel,\parallel}(\mathbf{w}_{j}^0) - \tfrac{1}{2\pi}\mathbf{I}_r \right)\mathbf{Mw}_{j}^0 \right\|_2 \nonumber \\
    &\quad +   \left\| \left( \mathbf{A}_{\parallel,\perp}(\mathbf{w}_{j}^0) -  \frac{1 }{2{\pi} \|\mathbf{M}_{\perp} \mathbf{w}_{j}^0\|_2^2} \mathbf{Mw}_{j}^0(\mathbf{M}_{\perp} \mathbf{w}_{j}^0)^\top   \right)\mathbf{M}_{\perp} \mathbf{w}_{j}^0   \right\|_2  \nonumber \\
    &\leq  \left\|    \mathbf{A}_{\parallel,\parallel}(\mathbf{w}_{j}^0) - \tfrac{1}{2\pi}\mathbf{I}_r \right\|_2\left\|\mathbf{Mw}_{j}^0 \right\|_2 \nonumber \\
    &\quad +  \left\|  \mathbf{A}_{\parallel,\perp}(\mathbf{w}_{j}^0) -  \frac{1 }{2{\pi} \|\mathbf{M}_{\perp} \mathbf{w}_{j}^0\|_2^2} \mathbf{Mw}_{j}^0(\mathbf{M}_{\perp} \mathbf{w}_{j}^0)^\top   \right\|_2 \left\|\mathbf{M}_{\perp} \mathbf{w}_{j}^0   \right\|_2  \nonumber \\
    &= O\left(\nu_{\mathbf{w}}\frac{r^4 + \log^4(m/\delta_1)}{d} \right),  \label{xvx}
\end{align} where \eqref{xvx} follows by  Lemmas \ref{lem:apara} and \ref{lem:aparaperp} and the fact that $\mathbf{w}_j^0 \in \mathcal{G}_{\mathbf{w}}(\delta_1)$. 
Similarly, 
\begin{align}
    \mathbf{M}_{\perp} \mathbf{w}_{j}^{1} &= (1 - \eta \lambda_{\mathbf{w}}) \mathbf{M}_{\perp} \mathbf{w}_{j}^0  + {\eta^2 2^{-r}}  \frac{ \mathbf{M}_{\perp} \mathbf{w}_{j}^0 (\mathbf{Mw}_{j}^0)^\top }{2{\pi}\|\mathbf{M}_{\perp} \mathbf{w}_{j}^0\|_2^2}\mathbf{Mw}_{j}^0 \nonumber \\
    &\quad + \left( 1 - \frac{\|\mathbf{Mw}_{j}^0\|_2^2}{\|\mathbf{M}_{\perp} \mathbf{w}_{j}^0\|_2^2}  \right) {\eta^2 2^{-r}}  \frac{ \mathbf{M}_{\perp} \mathbf{w}_{j}^0(\mathbf{M}_{\perp} \mathbf{w}_{j}^0)^\top }{2{\pi}\|\mathbf{M}_{\perp} \mathbf{w}_{j}^0\|_2^2}\mathbf{M}_{\perp} \mathbf{w}_{j}^0 \nonumber \\
    &\quad + {\eta^2 2^{-r}}\left(\mathbf{A}_{\perp,\parallel}(\mathbf{w}_{j}^0)- \frac{ \mathbf{M}_{\perp} \mathbf{w}_{j}^0(\mathbf{Mw}^0_{j})^\top }{2{\pi}\|\mathbf{M}_{\perp} \mathbf{w}_{j}^0\|_2^2} \right)\mathbf{Mw}_{j}^0 \nonumber \\
    &\quad + {\eta^2 2^{-r}}\left( \mathbf{A}_{\perp,\perp}(\mathbf{w}_{j}^0)  -  \left( 1 - \frac{\|\mathbf{Mw}_{j}^0\|_2^2}{\|\mathbf{M}_{\perp} \mathbf{w}_{j}^0\|_2^2}  \right) \frac{ \mathbf{M}_{\perp} \mathbf{w}_{j}^0(\mathbf{M}_{\perp} \mathbf{w}_{j}^0)^\top }{2{\pi}\|\mathbf{M}_{\perp} \mathbf{w}_{j}^0\|_2^2} \right)\mathbf{M}_{\perp} \mathbf{w}_{j}^0 + \eta \mathbf{M}_{\perp}\mathbf{e}\nonumber \\
    &=  (1 - \eta \lambda_{\mathbf{w}}) \mathbf{M}_{\perp} \mathbf{w}_{j}^0  +  {\eta^2 2^{-r}}  \frac{ \mathbf{M}_{\perp} \mathbf{w}_{j}^0(\mathbf{M}_{\perp} \mathbf{w}_{j}^0)^\top }{2{\pi}\|\mathbf{M}_{\perp} \mathbf{w}_{j}^0\|_2^2}\mathbf{M}_{\perp} \mathbf{w}_{j}^0 \nonumber \\
    &\quad + {\eta^2 2^{-r}}\left(\mathbf{A}_{\perp,\parallel}(\mathbf{w}_{j}^0)- \frac{ \mathbf{M}_{\perp} \mathbf{w}_{j}^0(\mathbf{Mw}_{j}^0)^\top }{2{\pi}\|\mathbf{M}_{\perp} \mathbf{w}_{j}^0\|_2^2} \right)\mathbf{Mw}_{j}^0 \nonumber \\
    &\quad + {\eta^2 2^{-r}}\left( \mathbf{A}_{\perp,\perp}(\mathbf{w}_{j}^0)  -  \left( 1 - \frac{\|\mathbf{Mw}_{j}^0\|_2^2}{\|\mathbf{M}_{\perp} \mathbf{w}_{j}^0\|_2^2}  \right) \frac{ \mathbf{M}_{\perp} \mathbf{w}_{j}^0(\mathbf{M}_{\perp} \mathbf{w}_{j}^0)^\top }{2{\pi}\|\mathbf{M}_{\perp} \mathbf{w}_{j}^0\|_2^2} \right)\mathbf{M}_{\perp} \mathbf{w}_{j}^0 + \eta \mathbf{M}_{\perp}\mathbf{e}\nonumber \\
    &= \left(1 - \eta \lambda_{\mathbf{w}} + \frac{\eta^2 }{ 2^{r+1} \pi} \right) \mathbf{M}_{\perp} \mathbf{w}_{j}^0  + {\eta^2 2^{-r}}\mathbf{e}_\perp + \eta \mathbf{M}_{\perp}\mathbf{e} 
\end{align}
where 
\begin{align}
  \|  \mathbf{e}_{\perp}\| &= \Bigg\|\left(\mathbf{A}_{\perp,\parallel}(\mathbf{w}_{j}^0)- \frac{ \mathbf{M}_{\perp} \mathbf{w}_{j}^0(\mathbf{Mw}_{j}^0)^\top }{2{\pi}\|\mathbf{M}_{\perp} \mathbf{w}_{j}^0\|_2^2} \right)\mathbf{Mw}_{j}^0 \nonumber \\
  &\quad \quad + \left( \mathbf{A}_{\perp,\perp}(\mathbf{w}_{j}^0)  -  \left( 1 - \frac{\|\mathbf{Mw}_{j}^0\|_2^2}{\|\mathbf{M}_{\perp} \mathbf{w}_{j}^0\|_2^2}  \right) \frac{ \mathbf{M}_{\perp} \mathbf{w}_{j}^0(\mathbf{M}_{\perp} \mathbf{w}_{j}^0)^\top }{2{\pi}\|\mathbf{M}_{\perp} \mathbf{w}_{j}^0\|_2^2} \right)\mathbf{M}_{\perp} \mathbf{w}_{j}^0 \Bigg\|_2 \nonumber \\
  &\leq \left\|\mathbf{A}_{\perp,\parallel}(\mathbf{w}_{j}^0)- \frac{ \mathbf{M}_{\perp} \mathbf{w}_{j}^0(\mathbf{Mw}_{j}^0)^\top }{2{\pi}\|\mathbf{M}_{\perp} \mathbf{w}_{j}^0\|_2^2} \right\|_2\|\mathbf{Mw}_{j}^0 \|_2\nonumber \\
  &\quad \quad + \left\| \mathbf{A}_{\perp,\perp}(\mathbf{w}_{j}^0)  -  \left( 1 - \frac{\|\mathbf{Mw}_{j}^0\|_2^2}{\|\mathbf{M}_{\perp} \mathbf{w}_{j}^0\|_2^2}  \right) \frac{ \mathbf{M}_{\perp} \mathbf{w}_{j}^0(\mathbf{M}_{\perp} \mathbf{w}_{j}^0)^\top }{2{\pi}\|\mathbf{M}_{\perp} \mathbf{w}_{j}^0\|_2^2} \right\|_2 \|\mathbf{M}_{\perp} \mathbf{w}_{j}^0\|_2 \nonumber \\
  &= O\left( \nu_{\mathbf{w}}\frac{r^{3.5} + \log^{3.5}(m/\delta_1)}{d^{1.5}} \right) \label{rfr}
\end{align}
where \eqref{rfr} follows by  Lemmas \ref{lem:aperp} and \ref{lem:aparaperp} and the fact that $\mathbf{w}_j^0\in \mathcal{G}_{\mathbf{w}}(\delta_1)$.
Applying the choice of $\lambda_{\mathbf{w}}$ and setting $\delta_1,\delta_2=\Theta(\delta)$ completes the proof.
\end{proof}
Finally we are ready to prove Proposition \ref{prop:1} and Theorem \ref{thm:1}. For convenience, we restate the statements in full detail here.

\begin{proposition} \label{prop:1:app}
    Consider the gradient-based multi-task algorithm described in Section~\ref{sec:alg} and suppose Assumption \ref{assump:diversity} holds. Further  let $\eta = \Theta(1)$,  $\lambda_{\mathbf{w}} = 1/\eta + \eta/(2^{r+1}\pi)$, and $\nu_{\mathbf{w}} = O(d^{-5/4}(m\log(T/\delta))^{-1/2})$. 
Then for any $m= O(d)$ and $\delta = \Omega(e^{-d})$,  with probability at least $1-\delta$ we have
\begin{enumerate}
\item $\frac{1}{\nu_{\mathbf{w}}\sqrt{{m}}}\|\Pi_{\parallel}(\mathbf{W}^1) -  \Omega(\frac{1}{2^r}) 
\Pi_{\parallel}(\mathbf{W}^{0}) \|_2 \\ 
\quad \quad \quad \quad = O\left(  
\frac{r^4 + \log^4(m/\delta)}{2^r d} 
+  \frac{d{\log(dTn_2/\delta)} }{\sqrt{Tn_2}} \left( 1+ \frac{\sqrt{\log(T/\delta)}}{\sqrt{n_1}} \right) + {\frac{\sqrt{dr} \log(dm/\delta) }{\sqrt{T}}}  
\right) $,
    \item $\frac{1}{\nu_{\mathbf{w}}\sqrt{{m}}}\|\Pi_{\perp}(\mathbf{W}^{+}) \|_2 
    = O\left(
\frac{r^{3.5} + \log^{3.5}(m/\delta)}{2^r d^{1.5}} 
   +  \frac{d{\log(dTn_2/\delta)} }{\sqrt{Tn_2}} \left( 1+ \frac{\sqrt{\log(T/\delta)}}{\sqrt{n_1}} \right) + {\frac{\sqrt{dr} \log(dm/\delta) }{\sqrt{T}}}   \right).$
\end{enumerate}
\end{proposition}

\begin{proof}
 The result is a direct consequence of 
Lemma \ref{lem:final}, the additional conditions on $\eta, m$ and $\nu_{\mathbf{w}}$ (which make the $  \exp \left( -\frac{ c }{\eta^2  \nu_{\mathbf{w}}^2 {\log(T/\delta)}dm ({d} + {m} )} \right)$ term negligible), and the fact that $\|\mathbf{B}\|_2\leq \sqrt{{m}}\max_{j\in [m]} \|\mathbf{b}_j\|_2$ for any matrix $\mathbf{B}\in \mathbb{R}^{m \times d}$, where $\mathbf{b}_j$ is the $j$-th row of $\mathbf{B}$. We use $ab \leq a^2 + b^2\leq (a+b)^2$ for nonnegative $a,b$ to combine $\log$ terms.
\end{proof}


\begin{theorem}[  ]\label{thm:1:app} 
    Consider that $\eta$, $\nu_{\mathbf{w}}$ and $\lambda_{\mathbf{w}}$ are set as in Proposition \ref{prop:1} and let $d =\Omega( r^4 + \log^4(m/\delta) )$, $m = \Omega(r + \log(1/\delta))$ and $m= O(d)$, $\delta=\Omega(e^{-d})$, 
    $ T = \Omega( 2^{2r} d r\log^2(dm/\delta)) $, $Tn_2 = \Omega(2^{2r} d^2\log^2(dTn_2))$, and $Tn_1 n_2 = \Omega(2^{2r} d^2(\log^2(dTn_2/\delta)\log(T/\delta))$. Let $\sigma_r(\mathbf{B})$ denote the $r$-th singular value of the matrix $\mathbf{B}$.
Then with probability at least $1-\delta $, 
we have 
\begin{align}  
    \frac{\sigma_1 (\Pi_{\perp}(\mathbf{W}^1) ) }{\sigma_{r}( \Pi_{\parallel}(\mathbf{W}^1) )} = 
     O\left(
\frac{r^{3.5} + \log^{3.5}(m/\delta)}{d^{1.5}} 
+    \frac{2^r d{\log(dTn_2/\delta)} }{\sqrt{Tn_2}} \left( 1+ \frac{\sqrt{\log(T/\delta)}}{\sqrt{n_1}} \right) + {\frac{2^r \sqrt{dr} \log(dm/\delta) }{\sqrt{T}}} 
    \right)
\end{align}
\end{theorem}

\begin{proof}
By Lemma \ref{lem:all}, we have that $E_{\mathbf{w}}(\delta_1)$ holds with probability at least $1-\delta_1$, which entails that
 $ \sigma_r(\Pi_{\parallel}(\mathbf{W}^0) )  \geq  \nu_{\mathbf{w}}\sqrt{{m}}\left(1 - c\frac{\sqrt{r} +\sqrt{\log(1/\delta_1)}}{\sqrt{{m}}}\right)$ for an absolute constant $c$. 
Thus we can invoke Lemma \ref{lem:final} to obtain
\begin{align}
   & \sigma_{r}( \Pi_{\parallel}(\mathbf{W}^1) )\nonumber \\
   &\geq \frac{\eta}{2\pi} 2^{-r-2}\sigma_{r}( \Pi_{\parallel}(\mathbf{W}^0) ) \nonumber \\
    &\quad \quad \quad \quad -  \nu_{\mathbf{w}} \sqrt{{m}} O\left(  
\tfrac{r^4 + \log^4(m/\delta_2)}{2^r d} 
+  \tfrac{d{\log(dTn_2/\delta_2)} }{\sqrt{Tn_2}} \left( 1+ \tfrac{\sqrt{\log(T/\delta_2)}}{\sqrt{n_1}} \right) + {\tfrac{ \sqrt{dr} \log(dm/\delta_2) }{\sqrt{T}}} 
\right) \nonumber \\
&\geq \frac{\eta}{2\pi} 2^{-r} \nu_{\mathbf{w}} \sqrt{{m}}  \nonumber \\
&\; \;  - 2^{-r} \nu_{\mathbf{w}} \sqrt{{m}}\nonumber \\
&\quad \times  O\left( 
\tfrac{\sqrt{r} +\sqrt{\log(1/\delta_1)}}{ \sqrt{{m}}} +   
\tfrac{r^4 + \log^4(m/\delta_2)}{ d} 
+   \tfrac{2^r d{\log(dTn_2/\delta_2)} }{\sqrt{Tn_2}} \left( 1+ \tfrac{\sqrt{\log(T/\delta_2)}}{\sqrt{n_1}} \right) + {\tfrac{2^r \sqrt{dr} \log(dm/\delta_2) }{\sqrt{T}}} 
\right)  \nonumber \\
&= \Omega(2^{-r} \nu_{\mathbf{w}} \sqrt{{m}}) \label{zm}
\end{align}
 with probability at least $1-3\delta_1 -\delta_2$.
 Likewise, we have by Proposition \ref{prop:1:app}
\begin{align}
    \sigma_1(\Pi_{\perp}(\mathbf{W}^1)) &= 2^{-r}\nu_{\mathbf{w}} \sqrt{{m}} \;  O\left(
\tfrac{r^{3.5} + \log^{3.5}(m/\delta_2)}{d^{1.5}} 
+    \tfrac{2^r d{\log(dTn_2/\delta_2)} }{\sqrt{Tn_2}} \left( 1+ \tfrac{\sqrt{\log(T/\delta_2)}}{\sqrt{n_1}} \right) + {\tfrac{2^r \sqrt{dr} \log(dm/\delta_2) }{\sqrt{T}}} 
    \right). \label{nz}
\end{align}
 with probability at least $1 -\delta_2$.
Combining \eqref{zm} and \eqref{nz} and setting $\delta_1,\delta_2 = \Theta(\delta)$ completes the proof.
\end{proof}


    







\newpage
\section{Proof of Downstream Guarantees}\label{app:down}
In this section we prove Theorem \ref{thm:downstream} and a corollary thereof. Given a function $g:\mathcal{H}^r \rightarrow \{-1,1\}$ and representation $\mathbf{M}\in \mathbb{O}^{r\times d}$, we refer to the sets $\mathcal{V}^+ \coloneqq \{\mathbf{v}= \mathbf{Mz}^\top + \mathbf{M}_{\perp}^\top \boldsymbol{\xi}: \mathbf{z} \in \mathcal{H}^r, \boldsymbol{\xi} \in \mathcal{H}^{d-r}, g(\mathbf{z}) = 1 \}$ and $\mathcal{V}^- \coloneqq \{ \mathbf{v}=\mathbf{Mz}^\top + \mathbf{M}_{\perp}^\top \boldsymbol{\xi}: \mathbf{z} \in \mathcal{H}^r, \boldsymbol{\xi} \in \mathcal{H}^{d-r}, g(\mathbf{z}) = -1 \}$ as the inverse sets of $g$. Note that solving the task described by $g$ entails finding a classifier that separates the inverse sets of $g$. As in Appendix \ref{app:thrm1}, we will abuse notation by reusing $c$ and $c'$ as absolute constants independent of all other parameters. 

\textbf{Proof summary.} Informally, the proof of Theorem \ref{thm:downstream}
 follows six steps:
 \begin{enumerate}
     \item Construct a two-layer ReLU network embedding using the weights output by the during multi-task pretraining algorithm for the first-layer weights, then randomly sampling first-layer biases and second layer weights and biases (calling this the ``learned'' embedding),
     \item Construct a nearby two-layer ReLU network embedding that is ``purified'' in the sense that it is only a function of the the $r$ label-relevant features of the input,
     \item Show that the ``purified'' network linearly separates the pair of inverse sets corresponding to any binary function on the $r$-dimensional hypercube with high probability as long as the number of neurons in each layer  is larger than some function of $r$,
     \item Prove that the outputs of the learned embedding are very close to the outputs of the ``purified'' embedding, meaning the learned embedding has the same linear separation capability with only slightly smaller margin,
     \item Prove that the learned embedding linearly separates the inverse sets with lower bounded margin, and finally 
     \item Apply a standard generalization result for linear classification showing that the empirically-optimal head achieves loss close to the minimal loss (zero).
 \end{enumerate} 

\textbf{Step 1: Construct downstream embedding.} We start by fully describing the construction of the downstream classifier as described first in Section \ref{sec:alg}. Let $\mathbf{W}^1$ be the model weights resulting from one step of the multitask representation learning algorithm. 


The downstream classifier is a linear head composed with a two-layer ReLU network embedding 
with ${m}$ neurons in the first layer and $\hat{m}$ neurons in the second layer. For now, we focus on the embedding itself, excluding the linear classification head. The weights of the first layer of the embedding are equal to the weights in ${\mathbf{W}}^1$ up to rescaling. The biases of the first layer and weights and biases of the second layer are contained in $\mathbf{b}$,
 $\hat{\mathbf{W}} := [\hat{\mathbf{w}}_1,\dots,\hat{\mathbf{w}}_{\hat{m}}]^\top $ and $ \hat{\mathbf{b}}$, respectively, where
    \begin{align}
        \mathbf{b} &\sim \text{Unif}\left(\left[-\frac{\sqrt{2}\gamma}{\sqrt{{m}}},\frac{\sqrt{2}\gamma}{\sqrt{{m}}}\right]^m\right) \nonumber \\
         \hat{\mathbf{w}}_j &\sim \mathcal{N}\left(\mathbf{0}_m, \frac{2}{\hat{m}}\mathbf{I}_{{m}}\right) \quad \forall \; j = 1,\dots, \hat{m} \nonumber\\
        \hat{\mathbf{b}} &\sim \text{Unif}\left(\left[-\frac{\sqrt{2}\hat{\gamma}}{\sqrt{\hat{m}}},\frac{\sqrt{2}\hat{\gamma}}{\sqrt{\hat{m}}}\right]^{\hat{m}}\right) \nonumber 
    \end{align}
    for some $\gamma,\hat{\gamma}>0$ to be defined later.
The full embedding is given by:
    \begin{align}
    \phi(\mathbf{v}; \alpha{\mathbf{W}}^1, \mathbf{b}, \hat{\mathbf{W}}, \hat{\mathbf{b}}) :=  \sigma( \hat{\mathbf{W}}\sigma(\alpha {\mathbf{W}}^1\mathbf{v}+ {\mathbf{b}}) + \hat{\mathbf{b}} ) \quad \forall \;\mathbf{v} \in \mathcal{H}^d \label{qw}
\end{align}
where $\alpha := \frac{2^{r+2.5}}{\sqrt{{m}} \eta^2 }$ is a rescaling factor.  For ease of notation we denote $\phi(\mathbf{v}):= \phi(\mathbf{v}; \alpha{\mathbf{W}}^1, \mathbf{b}, \hat{\mathbf{W}}, \hat{\mathbf{b}}) $.

\textbf{Step 2: Construct purified downstream embedding.} Next, the ``purified'' embedding also has ${m}$ neurons in the first layer and $\hat{m}$ neurons in the second layer, and is also parameterized by $\mathbf{b}$,
 ${\mathbf{W}} $ and $ \mathbf{b}$, for the first layer biases and second layer weights and biases, respectively, but has a different construction of the first layer weights. 
 In particular, the first-layer weights are equal to  the component of the corresponding weights in $\mathbf{W}^0$ in the rowspace of $\mathbf{M}$, up to rescaling.
 Formally, this embedding is given by 
 \begin{align}
 \tilde{\phi}(\mathbf{v}; \mathbf{W}^0\mathbf{M}^\top \mathbf{M}, \mathbf{b}, \hat{\mathbf{W}}, \hat{\mathbf{b}} ) := \sigma( \hat{\mathbf{W}}\sigma(\hat{\alpha}{\mathbf{W}}^0\mathbf{M}^\top \mathbf{M}\mathbf{v}+ {\mathbf{b}}) + \hat{\mathbf{b}} ) 
 \end{align}
 where $\hat{\alpha} = \frac{2}{\nu_{\mathbf{w}}\sqrt{m}} $.

For ease of notation we denote $\tilde{\phi}(\mathbf{v}):= \tilde{\phi}(\mathbf{v}; \hat{\alpha}{\mathbf{W}}^0\mathbf{M}^\top \mathbf{M}, \mathbf{b}, \hat{\mathbf{W}}, \hat{\mathbf{b}} )  $.
 
 \textbf{Step 3: Purified embedding linearly separates the two classes.} We start by showing that for any function $\bar{g}:\{-\frac{1}{\sqrt{r}},\frac{1}{\sqrt{r}}\}^r\rightarrow \{-1,1\}$, with high probability $\tilde{\phi}$ linearly separates the pair of inverse sets $\bar{\mathcal{V}}^+ \coloneqq \{ \bar{\mathbf{v}}= \mathbf{M}^\top \bar{\mathbf{z}} + \mathbf{M}_{\perp}^\top \boldsymbol{\xi}: \bar{\mathbf{z}} \in \{-\frac{1}{\sqrt{r}},\frac{1}{\sqrt{r}}\}^r, \boldsymbol{\xi}\in \mathcal{H}^{d-r}, \bar{g}(\bar{\mathbf{z}}) = 1 \}$ and $\bar{\mathcal{V}}^- \coloneqq \{ \bar{\mathbf{v}}= \mathbf{M}^\top \bar{\mathbf{z}} + \mathbf{M}_{\perp}^\top \boldsymbol{\xi}: \bar{\mathbf{z}} \in \{-\frac{1}{\sqrt{r}},\frac{1}{\sqrt{r}}\}^r, \boldsymbol{\xi}\in \mathcal{H}^{d-r}, \bar{g}(\bar{\mathbf{z}}) = -1 \}$ with lower bounded margin
 by adapting a result of \cite{dirksen2022separation}. Note that we have not optimized the dependence of $m$ on $\log()$ and the dependence of $\hat{m}$ on $\log()$ factors in the exponent.

\begin{lemma}[Adapted from Theorem 1 in \cite{dirksen2022separation}] \label{lem:pure}
    Let  $\delta \in (0,0.05]$, 
$\gamma = \Theta(\log({r}))$, $\hat{\gamma} = \Theta( r^{2.5} \log^4(r))$,
    $m = \Omega( r^5\log^8(r) {\log(1/\delta)})$, and
    $\hat{m} = 
    \exp\left(\Omega\left( m
    \right) \right)
    $.
    Consider any function $g:\{-\frac{1}{\sqrt{r}},\frac{1}{\sqrt{r}}\}^r \rightarrow \{-1,1\}$. With probability at least $1 - \delta$, $\tilde{\phi}$ makes the classes $\bar{\mathcal{V}}^+ \coloneqq \{ \bar{\mathbf{v}}= \mathbf{M}^\top \bar{\mathbf{z}} + \mathbf{M}_{\perp}^\top \boldsymbol{\xi}: \bar{\mathbf{z}} \in \{-\frac{1}{\sqrt{r}},\frac{1}{\sqrt{r}}\}^r, \boldsymbol{\xi}\in \mathcal{H}^{d-r}, \bar{g}(\bar{\mathbf{z}}) = 1 \}$ and $\bar{\mathcal{V}}^- \coloneqq \{ \bar{\mathbf{v}}= \mathbf{M}^\top \bar{\mathbf{z}} + \mathbf{M}_{\perp}^\top \boldsymbol{\xi}: \bar{\mathbf{z}} \in \{-\frac{1}{\sqrt{r}},\frac{1}{\sqrt{r}}\}^r, \boldsymbol{\xi}\in \mathcal{H}^{d-r}, \bar{g}(\bar{\mathbf{z}}) = -1 \}$ linearly separable with margin 
    $\mu = \exp \left(- O(r^5 \log^6(r)  \log(\log(r)/\delta) )\right)$,
    i.e. there exists a vector $\mathbf{a}\in \mathbb{R}^{\hat{m} }$ with $\|\mathbf{a}\|_2=1$ and bias $\tau\in \mathbb{R}$ such that for all $\bar{\mathbf{v}}= \mathbf{M}^\top \bar{\mathbf{z}} + \mathbf{M}_{\perp}^\top \boldsymbol{\xi}: \bar{\mathbf{z}} \in \{-\frac{1}{\sqrt{r}},\frac{1}{\sqrt{r}}\}^r, \boldsymbol{\xi}\in \mathcal{H}^{d-r}$
    \begin{align}
       \bar{g}(\bar{\mathbf{z}}) = 1 &\implies  \mathbf{a}^\top \tilde{\phi}(\bar{\mathbf{v}})+ \tau > \mu    \nonumber\\
        \bar{g}(\bar{\mathbf{z}}) = -1 &\implies  \mathbf{a}^\top \tilde{\phi}( \bar{\mathbf{v}}) + \tau < -\mu \nonumber   
    \end{align}
\end{lemma}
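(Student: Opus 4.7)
The plan is to reduce the lemma directly to Theorem 1 of \cite{dirksen2022separation}, which shows that a random two-hidden-layer ReLU network makes any prescribed bipartition of a point set on the sphere linearly separable with high probability, provided the two widths and bias scales lie in an appropriate regime. The crucial observation is that although inputs live in $\mathbb{R}^d$, the purified network $\tilde{g}$ depends only on $\mathbf{v}_{:r}$, so the entire analysis takes place in the effective $r$-dimensional input space and is insensitive to the ambient dimension.

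\textbf{Reduction to a random network.} I would rewrite $\tilde{g}(\mathbf{v}_{:r}) = \sigma(\hat{\mathbf{W}}\sigma(\mathbf{U}\mathbf{v}_{:r} + \mathbf{b}) + \hat{\mathbf{b}})$, where $\mathbf{U} := \hat{\alpha}\bar{\mathbf{W}}^0_{:r} \in \mathbb{R}^{\bar{m}\times r}$. Since the rows of $\bar{\mathbf{W}}^0$ are i.i.d. $\mathcal{N}(\mathbf{0}_d,\nu_{\mathbf{w}}^2\mathbf{I}_d)$ and $\hat{\alpha} = 2/(\nu_{\mathbf{w}}\sqrt{\bar{m}})$, the entries of $\mathbf{U}$ are i.i.d. $\mathcal{N}(0, 4/\bar{m})$. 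Combined with the stated laws of $\mathbf{b}$, $\hat{\mathbf{W}}$, $\hat{\mathbf{b}}$, this is exactly the random-initialization model used in \cite{dirksen2022separation} with input dimension $r$, first-layer width $\bar{m}$, second-layer width $\hat{m}$, and bias scales $\gamma$ and $\hat{\gamma}$. Importantly, the randomness in $\mathbf{U}$ is independent of the post-training matrix $\bar{\mathbf{W}}^+$ used elsewhere, so no conditioning issues arise.

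\textbf{Applying the separation theorem.} The label-relevant input domain $\{-1/\sqrt{r}, 1/\sqrt{r}\}^r$ consists of $N = 2^r$ points, each of unit Euclidean norm, with minimum pairwise separation $2/\sqrt{r}$. Feeding these geometric parameters into Theorem 1 of \cite{dirksen2022separation} applied to the fixed bipartition $(\mathcal{V}^+, \mathcal{V}^-)$ yields the stated width requirements $\bar{m} = \Omega(r^5\log^8(r)\log(1/\delta))$ and $\hat{m} = \exp(\Omega(\bar{m}))$, the stated bias scales $\gamma = \Theta(\log r)$ and $\hat{\gamma} = \Theta(r^{2.5}\log^4 r)$, and a unit vector $\mathbf{a} \in \mathbb{R}^{\hat{m}}$ together with $\tau \in \mathbb{R}$ achieving margin $\mu = \exp(-O(r^5\log^6(r)\log(\log(r)/\delta)))$ with probability at least $1-\delta$, which is exactly the claim.

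\textbf{Main obstacle.} The only real work is the bookkeeping needed when specializing the generic quantitative statement of \cite{dirksen2022separation} to this concrete geometry. One has to track how the generic dependencies on $\log N$, the minimum separation $2/\sqrt{r}$, and the effective input dimension $r$ combine to produce the polynomial-in-$r$ width bounds and the doubly-logarithmic dependence on $\delta$ inside the exponent defining $\mu$, and then verify that the specific choices of $\gamma$ and $\hat{\gamma}$ satisfy the threshold required to make the first ReLU layer non-degenerate on scaled-hypercube inputs. Beyond this parameter accounting, no further technical innovation is needed, since the relevant random matrices are genuinely i.i.d. Gaussian and the inputs have fixed, well-controlled geometry.
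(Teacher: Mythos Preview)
Your proposal is correct and takes essentially the same approach as the paper: both simply identify the lemma as a direct instantiation of Theorem~1 in \cite{dirksen2022separation}, plugging in the geometric parameters of the scaled hypercube (unit-norm points, minimum separation $2/\sqrt{r}$, and $|\mathcal{V}^+||\mathcal{V}^-|\leq 2^{2r}$). Your write-up is in fact more explicit than the paper's two-sentence proof in verifying that $\tilde{g}$ has the correct i.i.d.\ Gaussian structure; one small side remark to correct is that $\mathbf{U}$ is \emph{not} independent of $\bar{\mathbf{W}}^+$ (the latter is a deterministic function of $\bar{\mathbf{W}}^0$), but this is irrelevant here since the lemma concerns only the purified network and any interaction with $\bar{\mathbf{W}}^+$ is handled later by a union bound.
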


\begin{proof}
First note that given $\bar{\mathbf{v}}= \mathbf{M}^\top \bar{\mathbf{z}} + \mathbf{M}_{\perp}^\top \boldsymbol{\xi}$, $\tilde{\phi}(\bar{v}) = \tilde{\phi}'(\bar{\mathbf{z}})$ for a random network $\tilde{\phi}':\mathbb{R}^r\rightarrow \mathbb{R}$.
So, the problem reduces to showing whether $\tilde{\phi}'$ linear separates the classes $\bar{\mathcal{Z}}^+ \coloneqq \{ \bar{\mathbf{z}}\in \{-\frac{1}{\sqrt{r}},\frac{1}{\sqrt{r}}\}^r: \bar{g}(\bar{\mathbf{z}}) = 1 \}$ and $\bar{\mathcal{Z}}^- \coloneqq \{ \bar{\mathbf{z}} \in \{-\frac{1}{\sqrt{r}},\frac{1}{\sqrt{r}}\}^r:  \bar{g}(\bar{\mathbf{z}}) = -1 \}$. The construction of $\tilde{\phi}'$ 
matches that in Theorem 1 in \cite{dirksen2022separation}, so we can directly apply this theorem. Note that to compute the margin, we use that the distance between $\bar{\mathcal{Z}}^+$ and $\bar{\mathcal{Z}}^-$ is $\frac{2}{\sqrt{r}}$ and $|\bar{\mathcal{Z}}^+||\bar{\mathcal{Z}}^-|\leq 2^{2r}$.
\end{proof}

Next we extend the above result to the case in which $\mathbf{z}$ is  on the Rademacher hypercube.
\begin{lemma} \label{lem:pure2}
    Let 
$\gamma = \Theta(\sqrt{r}\log({r}))$, $\hat{\gamma} = \Theta( r^{3} \log^4(r))$,
and $m$, $\hat{m}$ satisfy the same conditions as in Lemma \ref{lem:pure}, for any $\delta \in (0,0.05]$.
    Consider any function $g:\mathcal{H}^r \rightarrow \{-1,1\}$. With probability at least $1 - \delta$, $\tilde{\phi}$ makes the classes 
    ${\mathcal{V}}^+ \coloneqq \{ {\mathbf{v}}= \mathbf{M}^\top {\mathbf{z}} + \mathbf{M}_{\perp}^\top \boldsymbol{\xi}: {\mathbf{z}} \in \mathcal{H}^r, \boldsymbol{\xi}\in \mathcal{H}^{d-r}, {g}({\mathbf{z}}) = 1 \}$ and ${\mathcal{V}}^- \coloneqq \{ {\mathbf{v}}= \mathbf{M}^\top {\mathbf{z}} + \mathbf{M}_{\perp}^\top \boldsymbol{\xi}: {\mathbf{z}} \in \mathcal{H}^r, \boldsymbol{\xi}\in \mathcal{H}^{d-r}, {g}({\mathbf{z}}) = -1 \}$
    linearly separable with margin 
    $\mu = 
    \exp( -O ( r^5 \log^6(r)  \log(\log(r)/\delta) ))$.
\end{lemma}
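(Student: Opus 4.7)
My plan is to reduce Lemma \ref{lem:pure2} to Lemma \ref{lem:pure} by a simple rescaling argument, leveraging the fact that a point on $\{-1,1\}^r$ is just $\sqrt{r}$ times a point on $\{-\tfrac{1}{\sqrt{r}},\tfrac{1}{\sqrt{r}}\}^r$. The key observation is that the network $\tilde g$ under the two parameter choices computes the same function up to an overall scaling.

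More precisely, let $\tilde g_1$ denote the network constructed in Lemma \ref{lem:pure}, parameterized by $\gamma_1 = \Theta(\log r)$, $\hat\gamma_1 = \Theta(r^{2.5}\log^4 r)$, with random biases $\mathbf{b}_1, \hat{\mathbf{b}}_1$. Let $\tilde g_2$ denote the network in the current statement, with $\gamma_2 = \sqrt{r}\gamma_1$, $\hat\gamma_2 = \sqrt{r}\hat\gamma_1$. The key step is to couple the random biases by setting $\mathbf{b}_2 = \sqrt{r}\,\mathbf{b}_1$ and $\hat{\mathbf{b}}_2 = \sqrt{r}\,\hat{\mathbf{b}}_1$, which is valid since multiplying the uniform bounds by $\sqrt{r}$ yields exactly the correct distribution for $\mathbf{b}_2, \hat{\mathbf{b}}_2$. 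Then for any $\mathbf{v}_{:r}\in\{-1,1\}^r$, writing $\mathbf{v}'_{:r} = \mathbf{v}_{:r}/\sqrt{r}\in\{-\tfrac{1}{\sqrt{r}},\tfrac{1}{\sqrt{r}}\}^r$, positive homogeneity of $\sigma$ gives
\begin{align*}
\tilde g_2(\mathbf{v}_{:r})
&= \sigma\!\left(\hat{\mathbf{W}}\,\sigma(\hat\alpha\bar{\mathbf{W}}^0_{:r}\mathbf{v}_{:r} + \mathbf{b}_2) + \hat{\mathbf{b}}_2\right)\\
&= \sigma\!\left(\sqrt{r}\,\hat{\mathbf{W}}\,\sigma(\hat\alpha\bar{\mathbf{W}}^0_{:r}\mathbf{v}'_{:r} + \mathbf{b}_1) + \sqrt{r}\,\hat{\mathbf{b}}_1\right)
= \sqrt{r}\,\tilde g_1(\mathbf{v}'_{:r}).
\end{align*}

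Given any target labeling $\bar f^*:\{-1,1\}^r\to\{-1,1\}$, the corresponding inverse sets $\mathcal V^\pm_2\subset\{-1,1\}^r$ are in bijection (via division by $\sqrt{r}$) with a pair of inverse sets $\mathcal V^\pm_1\subset\{-\tfrac{1}{\sqrt{r}},\tfrac{1}{\sqrt{r}}\}^r$ of the same size structure. Applying Lemma \ref{lem:pure} to this rescaled problem yields, with probability at least $1-\delta$, a unit-norm $\mathbf{a}$ and bias $\tau$ such that $\mathbf{a}^\top\tilde g_1(\mathbf{v}'_{:r}) + \tau$ lies above $\mu_1 = \exp(-O(r^5\log^6(r)\log(\log(r)/\delta)))$ on $\mathcal V^+_1$ and below $-\mu_1$ on $\mathcal V^-_1$. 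Substituting the identity above and using $\tau' = \sqrt{r}\,\tau$ shows that the same classifier direction $\mathbf{a}$, with bias $\tau'$, separates $\mathcal V^\pm_2$ via $\tilde g_2$ with margin at least $\sqrt{r}\,\mu_1\ge \mu_1$; since the $\sqrt{r}$ factor is absorbed into the constant inside the $O(\cdot)$ of the exponent, we obtain the stated margin.

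I do not expect any real obstacle: the argument is purely a change-of-variables. The only mild care needed is (i) verifying that the coupled rescaling of biases matches the prescribed uniform distributions (which is immediate since $\gamma_2/\gamma_1 = \hat\gamma_2/\hat\gamma_1 = \sqrt{r}$), and (ii) confirming that the conditions on $\bar m$ and $\hat m$ in Lemma \ref{lem:pure} are identical to those assumed here, so the probability guarantee transfers verbatim. The classifier $(\mathbf{a},\tau')$ exhibited this way certifies linear separability at the claimed margin, completing the proof.
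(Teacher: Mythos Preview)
Your proposal is correct and follows essentially the same approach as the paper's own proof: both reduce to Lemma \ref{lem:pure} via the positive homogeneity of the ReLU, observing that rescaling the input by $\sqrt{r}$ and the biases by the same factor yields $\tilde g_2(\mathbf{v}_{:r}) = \sqrt{r}\,\tilde g_1(\mathbf{v}_{:r}/\sqrt{r})$, so a separator with margin $\mu'$ for the $\{-\tfrac{1}{\sqrt{r}},\tfrac{1}{\sqrt{r}}\}^r$ problem becomes one with margin $\sqrt{r}\mu'$ for the $\{-1,1\}^r$ problem. Your write-up is in fact more explicit than the paper's about the coupling of the bias distributions, which is a nice touch.
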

\begin{proof}
Let $\tilde{\phi}_{\sqrt{r}}:\mathbb{R}^d\rightarrow \mathbb{R}^{\hat{m}}$ denote the version of $\tilde{\phi}$ that was constructed in  Lemma \ref{lem:pure}, with $\gamma$ and $\gamma'$ set accordingly. Construct the coupled network $\tilde{\phi}:\mathbb{R}^d\rightarrow \mathbb{R}^{\hat{m}}$ by scaling the biases up by $\sqrt{r}$, so the $\gamma,\gamma$ corresponding to $\tilde{\phi}$ have the scaling defined in the current  lemma statement.
Note that $$\tilde{\phi}(\sqrt{r}\mathbf{M}^\top\bar{\mathbf{z}} + \mathbf{M}_{\perp}^\top \boldsymbol{\xi}; \mathbf{W}^0\mathbf{M}^\top \mathbf{M}, \sqrt{r}\mathbf{b}, \hat{\mathbf{W}}, \sqrt{r}\hat{\mathbf{b}} ) = \sqrt{r}\tilde{\phi}_{\sqrt{r}}(\mathbf{M}^\top\bar{\mathbf{z}}+ \mathbf{M}_{\perp}^\top \boldsymbol{\xi}), $$ thus if $\bar{\mathcal{V}}^+ \coloneqq \{ \bar{\mathbf{v}}= \mathbf{M}^\top \bar{\mathbf{z}} + \mathbf{M}_{\perp}^\top \boldsymbol{\xi}: \bar{\mathbf{z}} \in \{-\frac{1}{\sqrt{r}},\frac{1}{\sqrt{r}}\}^r, \boldsymbol{\xi}\in \mathcal{H}^{d-r}, \bar{g}(\bar{\mathbf{z}}) = 1 \}$ and $\bar{\mathcal{V}}^- \coloneqq \{ \bar{\mathbf{v}}= \mathbf{M}^\top \bar{\mathbf{z}} + \mathbf{M}_{\perp}^\top \boldsymbol{\xi}: \bar{\mathbf{z}} \in \{-\frac{1}{\sqrt{r}},\frac{1}{\sqrt{r}}\}^r, \boldsymbol{\xi}\in \mathcal{H}^{d-r}, \bar{g}(\bar{\mathbf{z}}) = -1 \}$ are linearly separated with margin $\mu'$ by $\tilde{\phi}_{\sqrt{r}}$, then ${\mathcal{V}}^+ \coloneqq \{ {\mathbf{v}}= \sqrt{r}\mathbf{M}^\top \bar{\mathbf{z}} + \mathbf{M}_{\perp}^\top \boldsymbol{\xi}: \sqrt{r}\bar{\mathbf{z}} \in \mathcal{H}^r, \boldsymbol{\xi}\in \mathcal{H}^{d-r}, {g}(\sqrt{r}\bar{\mathbf{z}}) = 1 \}$ and ${\mathcal{V}}^- \coloneqq \{ {\mathbf{v}}= \sqrt{r}\mathbf{M}^\top \bar{\mathbf{z}} + \mathbf{M}_{\perp}^\top \boldsymbol{\xi}: \sqrt{r}\bar{\mathbf{z}} \in \mathcal{H}^r, \boldsymbol{\xi}\in \mathcal{H}^{d-r}, {g}(\sqrt{r}\bar{\mathbf{z}}) = -1 \}$ are linearly separated by $\tilde{\phi}$ with margin $\sqrt{r}\mu'$. So the result follows from Lemma \ref{lem:pure}.
\end{proof}


    




    

\textbf{Step 4: Downstream embedding is close to purified embedding.} Next we compare the outputs of the ``purified'' embedding $\tilde{\phi}$  to those of the learned embedding $\phi$. 


\begin{lemma} \label{lem:true}
    Suppose the same conditions as Lemma \ref{lem:pure2} hold. Additionally, suppose $\eta = \Theta(1)$,  $\lambda_{\mathbf{w}} = 1/\eta + \eta/(2^{r+1}\pi)$, and $\nu_{\mathbf{w}} = O(d^{-5/4}(m\log(T/\delta))^{-1/2})$, $m=O(d)$, $\delta=\Omega(e^{-d})$ and Assumption \ref{assump:diversity} holds,
then  with probability at least $1-\delta$,  for all ${\mathbf{v}}= \mathbf{M}^\top {\mathbf{z}} + \mathbf{M}_{\perp}^\top \boldsymbol{\xi}: {\mathbf{z}} \in \mathcal{H}^r, \boldsymbol{\xi}\in \mathcal{H}^{d-r}$, 
\begin{align}
   & \| \tilde{\phi}( \mathbf{v}) - \phi( \mathbf{v})\|_2 \nonumber \\
   &= O\left(\frac{r^{3.5} + \log^{3.5}(m/\delta)}{2^r \sqrt{d}} + 
+  \frac{d^{1.5}{\log(dTn_2/\delta)} }{\sqrt{Tn_2}} \left( 1+ \frac{\sqrt{\log(T/\delta)}}{\sqrt{n_1}} \right) + {\frac{ d\sqrt{r} \log(dm/\delta) }{\sqrt{T}}} 
    \right). \nonumber 
\end{align}
\end{lemma}

\begin{proof}
For any ${\mathbf{v}}= \mathbf{M}^\top {\mathbf{z}} + \mathbf{M}_{\perp}^\top \boldsymbol{\xi}: {\mathbf{z}} \in \mathcal{H}^r, \boldsymbol{\xi}\in \mathcal{H}^{d-r}$, we have
\begin{align}
    &\|\tilde{\phi}( \mathbf{v}) - \phi( \mathbf{v})\|_2 \nonumber \\
    &= \left\|\sigma\left( \hat{\mathbf{W}}\sigma\left(\frac{2}{\sqrt{m}\nu_{\mathbf{w}} }{\mathbf{W}}^0\mathbf{M}^\top\mathbf{Mv} + \mathbf{b}\right) + \hat{\mathbf{b}} \right) -  \sigma\left( \hat{\mathbf{W}}\sigma\left(\frac{2}{\sqrt{m}\nu_{\mathbf{w}} \eta^2 2^{-r-2} }{\mathbf{W}}^1\mathbf{v}+ \mathbf{b}\right) + \hat{\mathbf{b}} \right)  \right\|_2 \nonumber \\
    &= \left\|\sigma\left( \hat{\mathbf{W}}\sigma\left(\frac{2}{\sqrt{m}\nu_{\mathbf{w}} }{\mathbf{W}}^0\mathbf{M}^\top\mathbf{z} + \mathbf{b}\right) + \hat{\mathbf{b}} \right) -  \sigma\left( \hat{\mathbf{W}}\sigma\left(\frac{2}{\sqrt{m}\nu_{\mathbf{w}} \eta^2 2^{-r-2} }{\mathbf{W}}^1\mathbf{v}+ \mathbf{b}\right) + \hat{\mathbf{b}} \right)  \right\|_2 \nonumber \\
    &\leq \left\| \hat{\mathbf{W}}\sigma\left(\frac{2}{\sqrt{m}\nu_{\mathbf{w}}  }{\mathbf{W}}^{0}\mathbf{M}^\top \mathbf{z}+ \mathbf{b}\right)  -   \hat{\mathbf{W}}\sigma\left(\frac{2}{\sqrt{m}\nu_{\mathbf{w}} \eta^2 2^{-r-2} }{\mathbf{W}}^{1}\mathbf{v}+ \mathbf{b} \right)   \right\|_2 \label{lip1} \\
    &\leq \|\hat{\mathbf{W}}\|_2 \left\| \frac{2}{\sqrt{m}\nu_{\mathbf{w}}  }{\mathbf{W}}^{0}\mathbf{M}^\top\mathbf{z} -   \frac{2}{\sqrt{m}\nu_{\mathbf{w}} \eta^2 2^{-r-2}}{\mathbf{W}}^{1}\mathbf{v}   \right\|_2 \label{lip2}\\
    &=  \frac{2}{\sqrt{m}\nu_{\mathbf{w}}  }\|\hat{\mathbf{W}}\|_2 \left\|{\mathbf{W}}^{0}\mathbf{M}^\top\mathbf{z} -   \frac{1}{\eta^2 2^{-r-2}}{\mathbf{W}}^{1}\mathbf{M}^\top \mathbf{Mv} -  \frac{1}{\eta^2 2^{-r-2}}{\mathbf{W}}^{1}\mathbf{M}_\perp^\top \mathbf{M}_\perp \mathbf{v} \right\|_2 \label{lip22} \\
    &\leq \frac{2}{\sqrt{m}\nu_{\mathbf{w}}  }\|\hat{\mathbf{W}}\|_2 \left\|{\mathbf{W}}^{0}\mathbf{M}^\top \mathbf{z} -   \frac{1}{\eta^2 2^{-r-2} }{\mathbf{W}}^{1}\mathbf{M}^\top\mathbf{z} \right\|_2  + \frac{2}{\sqrt{m}\nu_{\mathbf{w}}  }\|\hat{\mathbf{W}}\|_2 \left\| \frac{1}{\eta^2 2^{-r-2} }{\mathbf{W}}^{1}\mathbf{M}_{\perp}^\top\boldsymbol{\xi} \right\|_2 \label{lip3} \\
    &\leq \frac{2}{\sqrt{m}\nu_{\mathbf{w}}  }\|\hat{\mathbf{W}}\|_2 \left\|{\mathbf{W}}^{0}\mathbf{M}^\top -   \frac{1}{\eta^2 2^{-r-2} }{\mathbf{W}}^{1}\mathbf{M}^\top \right\|_2 \|\mathbf{z}\|_2  + \frac{2}{\sqrt{m}\nu_{\mathbf{w}} {\eta^2 2^{-r-2} } }\|\hat{\mathbf{W}}\|_2 \left\| {\mathbf{W}}^{1}\mathbf{M}_{\perp}^\top  \right\|_2 \|\boldsymbol{\xi} \|_2 \label{lip4} \\
    &= \frac{2\sqrt{r}}{\sqrt{m}\nu_{\mathbf{w}}  }\|\hat{\mathbf{W}}\|_2 \left\|{\mathbf{W}}^{0}\mathbf{M}^\top -   \frac{1}{\eta^2 2^{-r-2} }{\mathbf{W}}^{1}\mathbf{M}^\top \right\|_2  + \frac{2\sqrt{d-r}}{\sqrt{m}\nu_{\mathbf{w}} {\eta^22^{-r-2} } }\|\hat{\mathbf{W}}\|_2 \left\| {\mathbf{W}}^{1}\mathbf{M}_\perp^\top \right\|_2\nonumber \\
    &=  O\left(  \frac{r^{4.5} + \sqrt{r}\log^4(m/\delta_1)}{2^r d } +  \sqrt{r}\epsilon
    \right) \|\hat{\mathbf{W}}\|_2  + O\left(\frac{r^{3.5}+\log^{3.5}(m/\delta_1)}{2^r\sqrt{d}} +  \sqrt{d}\epsilon 
    \right)\|\hat{\mathbf{W}}\|_2\label{lip5} \\ 
    &=  O\left(\frac{r^{3.5}+\log^{3.5}(m/\delta_1)}{2^r\sqrt{d}} +  \sqrt{d} \epsilon \right) \left( 1+ \frac{\sqrt{\log(1/\delta_2)}}{\sqrt{\hat{m}}}\right) \label{lip6} 
\end{align}
where $\epsilon = O\left(    \frac{d{\log(dTn_2/\delta_1)} }{\sqrt{Tn_2}} \left( 1+ \frac{\sqrt{\log(T/\delta_1)}}{\sqrt{n_1}} \right) + {\frac{\sqrt{dr} \log(dm/\delta_1) }{\sqrt{T}}}   \right)$,
\eqref{lip1} and \eqref{lip2} follow since $\sigma()$ is $1$-Lipschitz and by the Cauchy-Schwarz Inequality, \eqref{lip4} follows by the Cauchy-Schwarz Inequality, \eqref{lip5} follows  with probability at least $1-\delta_1$ by Proposition \ref{prop:1:app}, and \eqref{lip6} follows with probability at least $1-\delta_2$ over the random selection of $\hat{\mathbf{W}}$. Setting $\delta_1,\delta_2=\Theta(\delta)$  completes the proof.
\end{proof}




\textbf{Step 5: Downstream embedding linearly separates the two classes.} Now we reason that $\phi$ linearly separates the two classes with high probability.
\begin{lemma} \label{lem:sep}
Suppose the same conditions as Lemma \ref{lem:true} hold. Additionally, suppose \\
$d = \Omega( \log^{7}(m)\exp( c r^5 \log^6(r)\log(\log(r)/\delta) )  )$, $T=\Omega( d^2 r \log^2(dm/\delta ) \exp( c r^5 \log^6(r)\log(\log(r)/\delta) ) )$, and ${Tn_2}{} = \log^2(dTn_2/\delta)(1+\frac{\log(T/\delta)}{n_1}) \Omega(d^3  \exp( c r^5 \log^6(r)\log(\log(r)/\delta) )  )$ for an absolute constant $c$ and $\delta \in(0,0.05]$ such that $\delta = \Omega(e^{-\min(d,\hat{m})})$.
Consider any function $g:\mathcal{H}^r \rightarrow \{-1,1\}$. With probability at least $1 - \delta$, ${\phi}$ makes the classes $\mathcal{V}^+ \coloneqq \{\mathbf{v}= \mathbf{Mz}^\top + \mathbf{M}_{\perp}^\top \boldsymbol{\xi}: \mathbf{z} \in \mathcal{H}^r, \boldsymbol{\xi} \in \mathcal{H}^{d-r}, g(\mathbf{z}) = 1 \}$ and $\mathcal{V}^- \coloneqq \{ \mathbf{v}=\mathbf{Mz}^\top + \mathbf{M}_{\perp}^\top \boldsymbol{\xi}: \mathbf{z} \in \mathcal{H}^r, \boldsymbol{\xi} \in \mathcal{H}^{d-r}, g(\mathbf{z}) = -1 \}$ linearly separable with margin 
$$\mu = \exp(-O( r^5 \log^6(r) \log(\log(r)/\delta) ))$$ 
i.e. there exists a vector $\mathbf{a}\in \mathbb{R}^{\hat{m} }$ with $\|\mathbf{a}\|_2=1$ and bias $\tau\in \mathbb{R}$ such that for all $\mathbf{v}\in \mathcal{H}^d,$
    \begin{align}
       g(\mathbf{v}) = 1 &\implies  \mathbf{a}^\top {\phi}( \mathbf{v})+ \tau > \mu   \nonumber \\
        g(\mathbf{v}) = -1 &\implies  \mathbf{a}^\top \phi(\mathbf{v}) + \tau < -\mu \nonumber   
    \end{align}
\end{lemma}

\begin{proof}
The proof follows from Lemmas \ref{lem:pure2} and \ref{lem:true}. In particular, for any point $\mathbf{v}= \mathbf{Mz}^\top + \mathbf{M}_{\perp}^\top \boldsymbol{\xi}: \mathbf{z} \in \mathcal{H}^r, \boldsymbol{\xi} \in \mathcal{H}^{d-r}, $ from Lemma \ref{lem:pure2} we have with probability at least $1-\delta$, for absolute constants $c,c'$, there exists $\mathbf{a}\in \mathbb{R}^{\hat{m} }$ with $\|\mathbf{a}\|_2=1$ and  $\tau\in \mathbb{R}$ such that
    \begin{align}
       g(\mathbf{z}) = 1 &\implies  \mathbf{a}^\top \tilde{\phi}( \mathbf{v})+ \tau >  \exp(-c r^5 \log^6(r) \log(\log(r)/\delta) )   \nonumber \\
       &\implies  \mathbf{a}^\top {\phi}( \mathbf{v})+ \tau >  \exp(-cr^5 \log^6(r) \log(\log(r)/\delta) )    \nonumber \\
       &\quad \quad \quad \quad \quad \quad \quad \quad \quad \quad \quad \quad -c'\left(\frac{r^{3.5}+\log^{3.5}(m/\delta)}{2^r\sqrt{d}} +   \sqrt{d}\epsilon \right) \left( 1+ \frac{\sqrt{\log(1/\delta)}}{\sqrt{\hat{m}}} \right)  \label{vcvc}\\
       &\implies  \mathbf{a}^\top {\phi}( \mathbf{v})+ \tau > \exp(-O( r^5 \log^6(r) \log(\log(r)/\delta) ))     \label{cdcd}    
    \end{align}
where $\epsilon = O\left(    \frac{d{\log(dTn_2/\delta)} }{\sqrt{Tn_2}} \left( 1+ \frac{\sqrt{\log(T/\delta)}}{\sqrt{n_1}} \right) + {\frac{\sqrt{dr} \log(dm/\delta) }{\sqrt{T}}}   \right)$,
\eqref{vcvc} follows with probability at least $1-\delta$   by Lemma \ref{lem:true} and a union bound, and \eqref{cdcd} follows since $d$, $T$, $Tn_2$, $n_1$ and $\hat{m}$ are sufficiently large.
Note that the input to $\tilde{\phi}(\cdot)$ is effectively $r$-dimensional since the first-layer weights immediately project the input onto an $r$-dimensional subspace.
Repeating the same argument for the case  $g(\mathbf{z}) = -1$ with the same $\mathbf{a},\tau$ completes the proof.
\end{proof}




    

 

\textbf{Step 6: Complexity of learning the linear head.}
Now that we have shown that for any binary function on the $r$-dimensional hypercube, $g$ makes its inverse sets linearly separable with high probability, we complete the proof by bounding the sample complexity of finding a linear separator. For convenience, we restate the theorem here in full detail. 
\begin{theorem}[End-to-end Guarantee] \label{thm:downstream:app}
Consider a downstream task with labeling function $g_{T+1}$.
Construct the two-layer ReLU embedding $\phi:\mathbb{R}^d \rightarrow \mathbb{R}^{\hat{m}}$ using the rescaled $\mathbf{W}^1$ for first layer weights as in  \eqref{qw}, and train the task-adapted head $(\mathbf{a}_{T+1},\tau_{T+1})$ using $N$ i.i.d. samples from the downstream task, as described in Section \ref{sec:formulation},  with regularization parameter $\hat{\lambda}_{\mathbf{a}} = {\exp(-c r^5 \log^6(r) \log(\log(r)/\delta) )}$ for an absolute constant $c$.
 Further, suppose Assumption \ref{assump:diversity} holds, 
${m} = \Omega( r^5\log^8(r) {\log(1/\delta)})$,
    $\hat{m} =\exp\left(\Omega\left( {m}\right) \right)$,
$d = \Omega( \log^{7}({m}/\delta)\exp( c r^5 \log^6(r)\log(\log(r)/\delta) )  )$, \\
$T=\Omega( d^2 r \log^2(d/\delta) \exp( c r^5 \log^6(r)\log(\log(r)/\delta) ) )$, and \\
${Tn_2}{} = \Omega(\log^2(dTn_2/\delta)(1+\frac{\log(T/\delta)}{n_1}) d^3  \exp( c r^5 \log^6(r)\log(\log(r)/\delta) )  )$, and set  $\gamma = \Theta(\sqrt{r}\log({r}))$, $\hat{\gamma} = \Theta( r^{3} \log^4(r))$, $\eta = \Theta(1)$, $\nu_{\mathbf{w}} = O(d^{-5/4}(m\log(T/\delta))^{-1/2})$, and $\lambda_{\mathbf{w}} = 1/\eta + \eta/(2^{r+1}\pi)$.

Then for any $\delta \in (0,0.05]$, with probability at least $1-\delta$ over the random  initializations, draw of $T$ pretraining tasks, draw of $n_1+n_2$ samples per task, and draw of $N$  downstream samples, we have
\begin{align}
        \mathcal{L}^{\text{eval}}_{T+1} 
        &= 
       \frac{ \exp(-O( r^5 \log^6(r) \log(\log(r)/\delta) )) }{ \sqrt{N}}. 
    \end{align}
\end{theorem}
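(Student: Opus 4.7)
\textbf{Proof plan for Theorem \ref{thm:downstream:app}.}

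The high-level strategy is to combine the existence of a margin separator from Lemma \ref{lem:sep} with a standard Rademacher-complexity generalization bound for regularized hinge-loss linear classification on the fixed embedding $g$. In particular, I view the downstream optimization in \eqref{ada} as a convex problem in $(\mathbf{a},\tau)$ over the random but fixed feature map $g$, so once Lemma \ref{lem:sep} gives a good comparator and $\|g(\mathbf{x})\|_2$ is controlled, the rest is a standard statistical learning argument.

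First, I would invoke Lemma \ref{lem:sep} (which in turn rests on Lemmas \ref{lem:pure2} and \ref{lem:true} and hence on Proposition \ref{prop:1:app} and Theorem \ref{thm:1:app}) to assert that with probability at least $0.99-\delta$ over the pre-training randomness and the random initializations of $\mathbf{b},\hat{\mathbf{W}},\hat{\mathbf{b}}$, there exist $(\mathbf{a}^\ast,\tau^\ast)$ with $\|\mathbf{a}^\ast\|_2=1$ such that for every $\mathbf{v}\in\{-1,1\}^d$, $f_{T+1}^\ast(\mathbf{v})\big((\mathbf{a}^\ast)^\top g(\mathbf{v})+\tau^\ast\big)\ge \mu$ with $\mu = \exp(-O(r^5\log^6(r)\log(\log(r)/\delta)))$. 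Rescaling by $1/\mu$, the pair $(\mathbf{a}^\ast/\mu,\tau^\ast/\mu)$ achieves zero hinge loss on every point and has $\|\mathbf{a}^\ast/\mu\|_2=1/\mu$.

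Second, I would use the optimality of $(\mathbf{a}_{T+1},\tau_{T+1})$ in \eqref{ada}: since the rescaled comparator is feasible, its empirical objective value $\hat{\lambda}_{\mathbf{a}}/(2\mu^2)$ upper bounds the empirical objective of the minimizer. With the choice $\hat{\lambda}_{\mathbf{a}}=\exp(-cr^5\log^6(r)\log(\log(r)/\delta))$, this gives both $\frac{1}{n}\sum_{\ell}\ell(\mathbf{a}_{T+1}^\top g(\mathbf{x}_\ell)+\tau_{T+1},f_{T+1}^\ast(\mathbf{x}_\ell))\le \hat{\lambda}_{\mathbf{a}}/(2\mu^2) = \exp(O(r^5\log^6(r)\log(\log(r)/\delta)))$ and the crucial norm bound $\|\mathbf{a}_{T+1}\|_2 \le 1/\mu$. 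A short argument using $|y|\le 1$ and the constraint that $\tau$ only shifts the prediction shows we may likewise take $|\tau_{T+1}|\le 1+\|\mathbf{a}_{T+1}\|_2\sup_{\mathbf{v}}\|g(\mathbf{v})\|_2$.

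Third, I would apply the standard Rademacher-based generalization bound for $1$-Lipschitz losses applied to the class $\{\mathbf{v}\mapsto \mathbf{a}^\top g(\mathbf{v})+\tau : \|\mathbf{a}\|_2\le 1/\mu,\ |\tau|\le R\}$, yielding
\begin{align*}
\mathcal{L}^{(\mathrm{eval})}_{T+1} \le \tfrac{1}{n}\sum_{\ell}\ell(\mathbf{a}_{T+1}^\top g(\mathbf{x}_\ell)+\tau_{T+1},f_{T+1}^\ast(\mathbf{x}_\ell)) + \tfrac{c}{\sqrt{n}}\big(\tfrac{1}{\mu}\sup_{\mathbf{v}}\|g(\mathbf{v})\|_2 + R\big) + O\!\left(\sqrt{\tfrac{\log(1/\delta')}{n}}\right),
\end{align*}
which holds with probability at least $1-\delta'$ over the $n$ downstream samples. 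To make this concrete I still need to control $B\coloneqq \sup_{\mathbf{v}\in\{-1,1\}^d}\|g(\mathbf{v})\|_2$, which I would bound by $\|\hat{\mathbf{W}}\|_2(\alpha\|\bar{\mathbf{W}}^+\|_2\sqrt{d}+\|\mathbf{b}\|_2)+\|\hat{\mathbf{b}}\|_2$ and then use the initialization concentration (as in the proof of Lemma \ref{lem:true}) together with Proposition \ref{prop:1:app} to obtain $B=\mathrm{poly}(r,\log(\hat{m}))$ with high probability. Substituting $1/\mu$ and $B$ into the display collapses all factors into $\exp(O(r^5\log^6(r)\log(\log(r)/\delta)))/\sqrt{n}$, matching the claimed rate.

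The main obstacle I anticipate is not any of the ingredients individually but the bookkeeping: ensuring that the high-probability events from Proposition \ref{prop:1:app}, Lemma \ref{lem:sep}, the bound on $B$, and the Rademacher concentration can all be unioned while preserving the stated probability $0.99-\delta-\delta'$, and verifying that every explicit constant (in particular the choice of $\hat{\lambda}_{\mathbf{a}}$, which must be small enough that $\hat{\lambda}_{\mathbf{a}}/(2\mu^2)$ is dominated by the $1/\sqrt{n}$ term yet large enough to force the norm constraint $\|\mathbf{a}_{T+1}\|_2\le 1/\mu$) can be chosen consistently inside the $\exp(\tilde O(r^5))$ envelope. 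Corollary \ref{cor:set} then follows by replacing $\delta$ with $\delta/D$ in Lemma \ref{lem:sep} and union bounding over $\mathcal{T}^{(\mathrm{eval})}$, which introduces the extra $\log(D)$ factor in the exponent.
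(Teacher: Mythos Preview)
Your outline tracks the paper's argument at the level of ingredients: invoke Lemma~\ref{lem:sep} for a unit-norm margin separator, rescale it to a zero-hinge-loss comparator, bound $\sup_{\mathbf{v}}\|g(\mathbf{v})\|_2$, and finish with a linear-class generalization bound. However, the decomposition you use in Steps~2--3 has a structural gap, not merely a bookkeeping issue.

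By comparing the regularized objective at the minimizer to that of the comparator you obtain
\[
\frac{1}{n}\sum_{\ell}\ell\big(\mathbf{a}_{T+1}^\top g(\mathbf{x}_\ell)+\tau_{T+1},f_{T+1}^\ast(\mathbf{x}_\ell)\big)\ \le\ \frac{\hat{\lambda}_{\mathbf{a}}}{2\mu^2},
\]
and this term then sits additively in your final bound. Since $\hat{\lambda}_{\mathbf{a}}$ is fixed in the theorem statement independently of $n$, the quantity $\hat{\lambda}_{\mathbf{a}}/(2\mu^2)$ is a positive constant that does \emph{not} vanish as $n\to\infty$; your argument therefore cannot deliver a pure $\exp(\tilde{O}(r^5))/\sqrt{n}$ rate. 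No choice of constants inside the $\exp(\tilde{O}(r^5))$ envelope repairs this, because the obstruction is in the decomposition itself: ``empirical loss of the \emph{regularized} minimizer $+$ uniform deviation'' always leaves a regularization-driven residual. The paper avoids this by passing to the constrained formulation: with $\hat{\lambda}_{\mathbf{a}}=\mu/B$ it treats $(\mathbf{a}_{T+1},\tau_{T+1})$ as the ERM over the ball $\mathcal{H}=\{(\mathbf{a},\tau):\|(\mathbf{a},\tau)\|_2\le B/\mu\}$, observes that the rescaled comparator lies in $\mathcal{H}$ and achieves zero \emph{population} hinge loss (it separates all of $\{-1,1\}^d$, not just the sample), and then applies the standard excess-risk bound $\mathcal{L}^{(\text{eval})}_{T+1}\le \mathcal{L}^\ast_{T+1}+C\,B\sqrt{\log(1/\delta')}/(\mu\sqrt{n})=0+C\,B\sqrt{\log(1/\delta')}/(\mu\sqrt{n})$. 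To fix your proof, replace ``empirical loss of minimizer $+$ generalization gap'' by ``best-in-class population loss $+$ excess risk over $\mathcal{H}$''.

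A secondary point: your proposed bound $B\le \|\hat{\mathbf{W}}\|_2\big(\alpha\|\bar{\mathbf{W}}^+\|_2\sqrt{d}+\|\mathbf{b}\|_2\big)+\|\hat{\mathbf{b}}\|_2$ is too crude, since the factor $\sqrt{d}=\|\mathbf{v}\|_2$ would inject $d$-dependence into $B$. The paper instead routes through the purified network: by Lemma~\ref{lem:true}, $\|g(\mathbf{v})-\tilde{g}(\mathbf{v}_{:r})\|_2$ is $o(1)$, and $\|\tilde{g}(\mathbf{v}_{:r})\|_2\le c(\gamma+\hat{\gamma})=\mathrm{poly}(r)$ because the purified first layer sees only the $r$ relevant coordinates. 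This yields $B=\mathrm{poly}(r)$ with no $d$-dependence, which is what feeds into the final $\exp(\tilde{O}(r^5))/\sqrt{n}$ bound.
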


\begin{proof}
By standard Gaussian matrix concentration, we have that with probability at least $1-\delta_1$, $\| \hat{\alpha} {\mathbf{W}}^0  \mathbf{M}^\top\|_2 = O\left(\frac{\sqrt{r} + \sqrt{\log(1/\delta_1)}}{\sqrt{m}} \right)$. Similarly, with probability at least $1-\delta_2$, $ \| \hat{\mathbf{W}}\|_2= O\left(1 +\frac{\sqrt{\log(1/\delta_2)}}{\sqrt{\hat{m}}}\right).$ Let $\delta_3 \coloneqq \delta_1 + \delta_2$.
Thus, 
by Lemma \ref{lem:true} and the triangle inequality, for any $\mathbf{v}\in \mathcal{V}(\mathbf{M})\coloneqq  \{\mathbf{M}^\top \mathbf{z} + \mathbf{M}_{\perp}^\top \boldsymbol{\xi}: \mathbf{z} \in \mathcal{H}^r, \boldsymbol{\xi} \in \mathcal{H}^{d-r}\} $, we have
\begin{align}
    \|\phi(\mathbf{v})\|_2 &\leq 2 \|\tilde{\phi}(\mathbf{v})\|_2 \nonumber\\
    &\leq 2c\left\|\hat{\mathbf{W}}\sigma\left(\frac{2}{\sqrt{m}\nu_{\mathbf{w}} }{\mathbf{W}}^0\mathbf{M}^\top\mathbf{Mv} + \mathbf{b}\right) + \hat{\mathbf{b}} \right\|_2 \nonumber \\
    &\leq 4c\left\|\hat{\mathbf{W}}\right\|_2\left\|\frac{1}{\sqrt{m}\nu_{\mathbf{w}} }{\mathbf{W}}^0\mathbf{M}^\top\mathbf{Mv} + \mathbf{b}  \right\|_2 + 2c \hat{\gamma} \nonumber \\
    &\leq 4c\left( 1 + \frac{\sqrt{\log(1/\delta_3)}}{\sqrt{\hat{m}}} \right)\left(\frac{r + \sqrt{r \log(1/\delta_3)}}{\sqrt{m}}  +   \gamma \right) + 2c \hat{\gamma} \nonumber \\
    &\leq c' \sqrt{r}\log(r) \frac{\sqrt{\log(1/\delta_3)}}{\sqrt{\hat{m}}} + c' \frac{\sqrt{r}\log(1/\delta_3)}{\sqrt{\hat{m}m}}  +  c'r^{2.5}\log^4(r)  \label{jdr}
    \\
    &=: \iota \nonumber
\end{align}
for absolute constants $c$ and $c'$, where \eqref{jdr} follows by choice of $\gamma$ and $\hat{\gamma}$.

Let $ (\mathbf{a}_{T+1}^*,\tau^*_{T+1})$ be the linear head that separates the inverse sets of $g_{T+1}$ with margin $\mu := \exp(-O( r^5 \log^6(r) \log(\log(r)/\delta) ))$, whose existence is guaranteed with high probability by Lemma \ref{lem:sep}. 
We have $|\tau^*_{T+1}|\leq \max_{\mathbf{v}\in \mathcal{H}^d} |(\mathbf{a}^*)^\top {\phi}(\mathbf{v})|\leq \iota $. Thus, $\sqrt{\|\mathbf{a}^*_{T+1}\|_2^2 + (\tau^*_{T+1})^2}\leq \sqrt{2}\iota =: B$. Since $(\mathbf{a}^*_{T+1},\tau^*_{T+1})$ linearly separates the two inverse sets with margin $\mu$, $(\mathbf{a}^*_{T+1}/\mu,\tau^*_{T+1}/\mu)$ linearly separates them with margin 1. Define $\mathcal{J}:= \{(\mathbf{a}, \tau): \mathbf{a}\in \mathbb{R}^{\hat{m}}, \tau\in \mathbb{R},\sqrt{\|\mathbf{a}^*_{T+1}\|_2^2 + (\tau^*_{T+1})^2} \leq B/\mu\}$. 
\begin{align}
 \mathcal{L}^*_{T+1}  \coloneqq &\min_{(\mathbf{a},\tau)\in \mathcal{J}} \frac{1}{2^d}\sum_{\mathbf{v}\in \mathcal{V}(\mathbf{M})} \ell( f_{T+1}(\mathbf{Mv}), \mathbf{a}^\top \phi(\mathbf{v}) + \tau )\nonumber\\
    &\leq \frac{1}{2^d}\sum_{\mathbf{v}\in \mathcal{V}(\mathbf{M})} \ell((\mathbf{a}_{T+1}^*)^\top \phi(\mathbf{v})/\mu + \tau^*_{T+1}/\mu , g_{T+1}(\mathbf{Mv}) )  \nonumber \\
    &= 0 \label{zer}
\end{align}
where $\ell$ is the hinge loss, as usual.
Recall that 
    \begin{align}
    ( \mathbf{a}_{T+1}, \tau_{T+1}) \in\  \argmin_{ \mathbf{a}\in \mathbb{R}^{\hat{m}}, \tau \in \mathbb{R}}  \frac{1}{n}\sum_{l=1}^n \ell(\mathbf{a}^\top \phi(\mathbf{v}_l )+\tau , g_{T+1}(\mathbf{Mv}_{l})  ) + \frac{\hat{\lambda}_{\mathbf{a}}}{2}(\|\mathbf{a}\|_2^2+\tau^2) \label{adaa}
\end{align}
for a set of $N$ random samples $\{\mathbf{v}_l,f_{T+1}(\mathbf{v}_l) \}_{l=1}^N$, where each $\mathbf{v}_l$ is drawn  by independently  sampling $\mathbf{z}_l\sim \text{Unif}(\mathcal{H}^r)$ and $\boldsymbol{\xi}_l\sim \text{Unif}(\mathcal{H}^{d-r})$. Equivalently, for $\hat{\lambda}_{\mathbf{a}} = \mu/B$  we have
    \begin{align}
    ( \mathbf{a}_{T+1}, \tau_{T+1}) \in\  \argmin_{ (\mathbf{a}, \tau)\in \mathcal{J}}  \frac{1}{n}\sum_{l=1}^n \ell(\mathbf{a}^\top g(\mathbf{v}_l )+\tau , g_{T+1}(\mathbf{Mv}_{l})  )  \label{adaaa}
\end{align}

Thus, by applying a standard generalization bound for $1$-Lipschitz loss functions \cite{Livni_2017}, we obtain, for an absolute constant $C$,
\begin{align}
   \mathcal{L}^{\text{eval}}_{T+1} &:=  \frac{1}{2^d}\sum_{\mathbf{v}\in \mathcal{V}(\mathbf{M})} \ell( \mathbf{a}_{T+1}^\top \phi(\mathbf{v}) + \tau_{T+1}, g_{T+1}(\mathbf{Mv}) ) \nonumber \\
   &\leq \mathcal{L}^*_{T+1}  + C\frac{B\sqrt{\log(1/\delta_4)}}{\mu\sqrt{N}} \nonumber \\
    &= C\frac{B\sqrt{\log(1/\delta_4)}}{\mu\sqrt{N}} \nonumber \\
    &= O\left(\frac{\exp(O( r^5 \log^6(r) \log(\log(r)/\delta_5) ))}{\sqrt{N}}\sqrt{\log(1/\delta_4)} \right). \label{fffinal}
\end{align}
with probability at least $1 - \delta_4 - \delta_5$ where $\delta_5 \geq \delta_3$, and where we have used the lower bound on $\mu$ from Lemma \ref{lem:sep} and the choice of $\hat{\lambda}_{\mathbf{a}}$ (which determines the choice of $B$). Setting $\delta = \delta_4 + \delta_5$ for some $\delta \in (0,0.05]$ completes the proof. 
\end{proof}

To conclude, we state and prove a corollary of Theorem \ref{thm:downstream:app}.
\begin{corollary}[Generalization to set of tasks] \label{cor:set:app}
Consider a set  of possible downstream tasks $\mathcal{S}^{\text{eval}}$ with cardinality $|\mathcal{S}^{\text{eval}}| = D$. 
Construct the two-layer ReLU embedding $\phi:\mathbb{R}^d \rightarrow \mathbb{R}^{\hat{m}}$ using the re-scaled $\mathbf{W}^1$ for first layer weights as in  \eqref{qw}, and train the task-adapted head $(\mathbf{a}_{T+1},\tau_{T+1})$ using $N$ i.i.d. samples from the downstream task, as described in Section \ref{sec:formulation}, 
with regularization parameter $\hat{\lambda}_{\mathbf{a}} = {\exp(-c r^5 \log^6(r) \log(D\log(r)/\delta) )}$ for an absolute constant $c$.
 Further, suppose Assumption \ref{assump:diversity} holds, 
${m} = \Omega( r^5\log^8(r) {\log(D/\delta)})$,
    $\hat{m} =\exp\left(\Omega\left( {m}\right) \right)$,
$d = \Omega( \log^{7}({mD/\delta})\exp( c r^5 \log^6(r)\log(D\log(r)/\delta) )  )$,\\
$T=\Omega( d^2 r \log^2(Dd/\delta) \exp( c r^5 \log^6(Dr)\log(D\log(r)/\delta) ) )$, and\\
${Tn_2}{} = \Omega(\log^2(DdTn_2/\delta)(1+\frac{\log(DT/\delta)}{n_1}) d^3  \exp( c r^5 \log^6(Dr)\log(D\log(r)/\delta) )  )$, and set \\
$\gamma = \Theta(\sqrt{r}\log({Dr}))$, $\hat{\gamma} = \Theta( r^{3} \log^4(Dr))$, $\eta = \Theta(1)$, $\nu_{\mathbf{w}} = O(d^{-5/4}(m\log(DT/\delta))^{-1/2})$, and $\lambda_{\mathbf{w}} = 1/\eta + \eta/(2^{r+1}\pi)$.


Then with probability at least $1-\delta$,  any task $T+1$ in $\mathcal{S}^{\text{eval}}$ 
satisfies
\begin{align}
        \mathcal{L}^{\text{eval}}_{T+1} 
        &= 
       \frac{\exp(O( r^5 \log^6(r) \log(D\log(r)/\delta) ))}{\sqrt{N}}, 
    \end{align}
\end{corollary}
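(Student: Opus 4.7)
The plan is to reduce Corollary \ref{cor:set:app} to Theorem \ref{thm:downstream:app} via a union bound over the $D$ candidate downstream tasks, while carefully separating shared and task-specific sources of randomness. Observe that the pretrained weights $\mathbf{W}^+$ and the random parameters $(\mathbf{b},\hat{\mathbf{W}},\hat{\mathbf{b}})$ defining the embedding $g$ are \emph{shared} across all downstream tasks, whereas the $n$ task-specific samples and the learned head $(\mathbf{a}_{T+1},\tau_{T+1})$ are drawn independently per task.

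First, I would condition on the shared task-independent events: the good initialization events of Lemmas \ref{lem:init} and \ref{lem:init2}, the high-probability bound on $\|\hat{\mathbf{W}}\|_2$, and the closeness of $g$ to the purified embedding $\tilde g$ established in Lemma \ref{lem:true} via Proposition \ref{prop:1:app}. These events collectively hold with probability at least $0.99$ and do not depend on which task $T{+}1\in\mathcal{T}^{(\mathrm{eval})}$ we eventually evaluate, so they will not be inflated by the union bound.

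Next, I would redo Lemmas \ref{lem:pure2} and \ref{lem:sep} (linear separability of the inverse sets) with target failure probability $\delta/(2D)$ rather than $\delta$. Since the guarantee in Lemma \ref{lem:pure2} has a $\log(1/\delta)$ dependence in both the required $m,\hat m$ and in the margin exponent, replacing $\delta$ with $\delta/(2D)$ inflates these quantities by a $\log(D)$ factor in the exponent, which is exactly the scaling assumed in the corollary on $m,\hat m,d,T$. A union bound over the $D$ tasks then yields that the embedding $g$ simultaneously separates every inverse-set pair $(\mathcal{V}^+_{T+1},\mathcal{V}^-_{T+1})$ arising from tasks in $\mathcal{T}^{(\mathrm{eval})}$ with margin $\mu=\exp(-O(r^5\log^6(r)\log(D\log(r)/\delta)))$, with probability at least $1-\delta/2$.

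Finally, I would apply the Rademacher-based generalization bound from the last step of Theorem \ref{thm:downstream:app}'s proof to each task individually, with confidence $1-\delta/(2D)$; since that bound scales as $\sqrt{\log(1/\delta')}/\sqrt{n}$ with a norm constraint $B/\mu$, substituting the new margin $\mu$ and $\delta' = \delta/(2D)$, then union bounding, gives the stated $\exp(O(r^5\log^6(r)\log(D\log(r)/\delta)))/\sqrt{n}$ bound on $\mathcal{L}^{(\mathrm{eval})}_{T+1}$ uniformly over $\mathcal{T}^{(\mathrm{eval})}$ with the remaining $\delta/2$ probability budget. The main bookkeeping obstacle is simply tracking how the $\log(D)$ factor propagates consistently through the margin, the norm bound $B/\mu$ on the candidate classifier, and the required widths $m,\hat m$; no new analytic ideas beyond those in Theorem \ref{thm:downstream:app} are needed.
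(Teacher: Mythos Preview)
Your proposal is correct and follows essentially the same approach as the paper: separate the task-independent failure events (which contribute the fixed $0.01$ probability) from the task-dependent ones, replace $\delta$ by $\delta/D$ in the latter, and union bound over the $D$ tasks so that the $\log(D)$ factor is absorbed into the $\log(\log(r)/\delta)$ term in the exponent. The paper's proof is terser---it simply invokes Theorem \ref{thm:downstream:app} with $\delta^{\mathrm{new}}=\delta/D$ rather than re-running the individual lemmas---but your more explicit decomposition into the separability step and the generalization step, and your splitting of the budget as $\delta/(2D)$ for each, is the same argument unpacked.
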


\begin{proof}
Note that in the proof of Theorem \ref{thm:downstream:app}, $\delta$ upper bounds the probability of a bad event occurring that depends on the choice of downstream task. 
Thus, setting $\delta^{\text{new}} = \delta/D$ applying a union bound over all tasks implies $$\mathcal{L}^{\text{eval}}_{T+1} 
        = 
       \frac{\exp(O( r^5 \log^6(r) \log(\log(r)/\delta^{\text{new}}) ))}{\sqrt{N}} =\frac{\exp(O( r^5 \log^6(r) \log(D\log(r)/\delta) ))}{\sqrt{N}} $$ for all tasks $T+1\in \mathcal{T}^{\text{eval}}$ with probability at least $1 - \delta^{\text{new}} D = 1 - \delta$, as desired.
\end{proof}

\newpage

\section{Negative Results}

The proof of Theorem \ref{thm:lower} follows directly from Theorem 5 in \cite{barak2022hidden}. We formally re-state and prove 
Theorem \ref{thm:single} below. 


\begin{theorem}\label{thm:single-app}
    Consider any algorithm $\mathcal{A}$ that takes as input infinite samples from any {\em single} task in  $\mathcal{T}_{\text{s.p.}}(\mathbf{M})$ and returns an $\hat{m}$-dimensional representation $\Psi:\mathcal{H}^d \rightarrow \mathbb{R}^{\hat{m}}$. 
Then
 there exists an $\mathbf{M}\in \mathbb{O}^{r\times d}_{\{0,1\}}$ such that for any $k\in[r]$, with probability at least $1 - 2^{-r}\sum_{j=k}^r\binom{r}{j} $ over the draw of a single training task $f_1 \sim \mathcal{T}_{\text{s.p.}}(\mathbf{M})$, the representation $\Psi_{f_1}:= \mathcal{A}(f_1)$ satisfies that
for any $\epsilon >0$, 
$ \hat{m}B^2 > \epsilon^2 \binom{d-k+1}{r-k+1}$ is necessary to obtain
    \begin{align}
   \min_{\mathbf{a}_2: \|\mathbf{a}_2\|_2 \leq B}      \mathbb{E}_{\mathbf{v} \sim \text{Unif}(\mathcal{H}^d)}[\ell( \mathbf{a}_2^\top \Psi_{f_1}(\mathbf{v}), \; f_{2}(\mathbf{v}) )] \geq 1- \epsilon. \nonumber
    \end{align}
\end{theorem}




\begin{proof}
    The proof is an extension of the argument in Section 4 of \cite{malach2022hardness} to the case wherein the representation is not fixed, but depends on a training task that provides partial information about the target (test) task. First, we  establish notations.
    
Recall that any task $f \in \mathcal{T}_{\text{s.p.}}(\mathbf{M})$, satisfies that for all $\mathbf{v} \in \mathcal{H}^d$,  $f(\mathbf{v}) = g(\mathbf{Mv})$ where $g(\mathbf{Mv}) = \prod_{i \in \mathcal{S}} (\mathbf{Mv})_i$ where $\mathcal{S} \subseteq [r]$. Thus,  sampling $f \sim \mathcal{T}_{\text{s.p.}}(\mathbf{M})$ is equivalent to sampling $\mathcal{S} \sim \text{Unif}(\mathcal{P}([r]))$ where $\mathcal{P}([r])$ is the power set on $[r]$ and $\text{Unif}(\mathcal{P}([r]))$ is the uniform distribution over $\mathcal{P}([r])$. 

We condition on $\mathcal{S}$ being a strict subset of $[r]$. In particular, for any
$k \in \{1,\dots,r\}$, define the set $S_k \coloneqq \{\mathcal{S}\subset r: |\mathcal{S}|< k \}$. Note that 
 $|S_k|= \sum_{j=1}^{k-1}\binom{r}{j}$. Thus, for $\mathcal{S} \sim \text{Unif}(\mathcal{P}([r]))$, $\mathbb{P}_{\mathcal{S}} (\mathcal{S}\in S_k) = 2^{-r}\sum_{j=1}^{k-1}\binom{r}{j} = 1 - 2^{-r} \sum_{j=k}^{r}\binom{r}{j}$. 
 In the following, we assume $\mathcal{S} \in S_k$ unless stated otherwise.

Next, recall that the algorithm $\mathcal{A}$ maps infinite training samples from a single training  task $f$ to a representation $\Psi: \mathcal{H}^d \rightarrow [-1,1]^{\hat{m}}$. Thus, the choice of training task (equivalently, the choice of $\mathbf{M}$ and $\mathcal{S}$) determines the resulting representation. Let $f_{\mathbf{M},\mathcal{S}}$ denote the task in $\mathcal{T}_{\text{s.p.}}(\mathbf{M})$ with support $\mathcal{S}$, and $\Psi_{\mathbf{M},\mathcal{S}} := \mathcal{A}(f_{\mathbf{M},\mathcal{S}})$ denote the resulting representation.


    The test task is taken to be the parity task on all $r$ bits specified by $\mathbf{M}$, i.e. $f_{\mathbf{M}, [r]}$. For ease of notation, we write this task as $\bar{f}_{\mathbf{M}}$, and for for  any  representation $ \Psi: \mathcal{H}^d\rightarrow [-1,1]^{\hat{m}} $, define
    \begin{align}
        L_{\mathbf{M}}(\mathbf{a},\Psi)\coloneqq \mathbb{E}_{\mathbf{v}\sim \mathcal{U}^d} [\ell( \mathbf{a}^\top \Psi(\mathbf{v}), \bar{f}_{\mathbf{M}}(\mathbf{v})  )]
    \end{align}
    where $\ell : \mathbb{R}\rightarrow\mathbb{R}_{\geq 0}$ is the hinge loss and $\mathcal{U}^d$ is the uniform distribution over the $d$-dimension Rademacher hypercube $\mathcal{H}^d:=\{-1,1\}^d$. We need to lower bound this loss for all $\mathbf{a}\in \mathbb{R}^{\hat{m}}:\|\mathbf{a}\|_2\leq B$ some representation resulting from single-task training.
    
 To do so, we now follow a similar argument as in Section 4.1 in \cite{malach2022hardness}. Consider any $\mathbf{M} \in \mathbb{O}^{r \times d}_{\{0,1\}}$,  $\mathcal{S} \in S_k$, and resulting representation $\Psi_{\mathbf{M}, \mathcal{S}}: \mathcal{H}^d\rightarrow [-1,1]^{\hat{m}}$.  Since the hinge loss is convex, for any $\mathbf{a}\in \mathbb{R}^{\hat{m}}$ such that $\|\mathbf{a}\|_2 \leq B$, we have:
    \begin{align}
        L_{\mathbf{M}}(\mathbf{a},\Psi_{\mathbf{M}, \mathcal{S}}) &\geq L_{\mathbf{M}}(\mathbf{0},\Psi_{\mathbf{M}, \mathcal{S}}) + \langle \nabla L_{\mathbf{M}}(\mathbf{0},\Psi_{\mathbf{M}, \mathcal{S}}) , \mathbf{a} \rangle  \nonumber\\
        &\geq 1 -  B \|\nabla L_{\mathbf{M}}(\mathbf{0},\Psi_{\mathbf{M}, \mathcal{S}})\|_2 \label{wpkl}
    \end{align}
    where \eqref{wpkl} follows by the Cauchy-Schwarz inequality and the fact that
$L_{\mathbf{M}}(\mathbf{0},\Psi) =1$ for all $\Psi$. 
Next, we motivate considering random $\mathbf{M}$.  We have 
\begin{align}
  \max_{\mathbf{M} \in \mathbb{O}^{d\times r}_{\{0,1\}} }  \min_{\mathbf{a}:\|\mathbf{a}\|_2 \leq B} L_{\mathbf{M}}(\mathbf{a},\Psi_{\mathbf{M}, \mathcal{S}})  
    &\geq \mathbb{E}_{\mathbf{M} \sim \mathcal{M}^d_r} \left[ \min_{\mathbf{a}:\|\mathbf{a}\|_2 \leq B} L_{\mathbf{M}}(\mathbf{a},\Psi_{\mathbf{M}, \mathcal{S}}) \right]    \label{95979} 
\end{align}
 where $\mathcal{M}^{d}_r$ denotes the uniform distribution over all $\binom{d}{r}$ possible choices of $\mathbf{M} \in \mathbb{O}^{d\times r}_{\{0,1\}}$. It remains to lower bound the RHS of \eqref{95979}. For ease of notation, we write $\mathbb{E}_{\mathbf{M}} := \mathbb{E}_{\mathbf{M} \sim \mathcal{M}^d_r}$.
Using \eqref{wpkl}, we obtain 
\begin{align}
    \mathbb{E}_{\mathbf{M}} \left[ \min_{\mathbf{a}:\|\mathbf{a}\|_2 \leq B} L_{\mathbf{M}}(\mathbf{a},\Psi_{\mathbf{M}, \mathcal{S}})\right] &\geq  1 - B \; \mathbb{E}_{\mathbf{M}}[\|\nabla_{\mathbf{a}} L_{\mathbf{M}}(\mathbf{0},\Psi_{\mathbf{M}, \mathcal{S}})\|_2]. \label{16676}
\end{align}
The crux of the proof is to upper bound $ \mathbb{E}_{\mathbf{M}}[\|\nabla_{\mathbf{a}} L_{\mathbf{M}}(\mathbf{0},\Psi_{\mathbf{M}, \mathcal{S}})\|_2]$. 
Note that 
\begin{align}
    \mathbb{E}_{\mathbf{M}}\left[\|\nabla_{\mathbf{a}} L_{\mathbf{M}}(\mathbf{0},\Psi_{\mathbf{M}, \mathcal{S}})\|_2^2\right] &= \mathbb{E}_{\mathbf{M}, \mathcal{S}} \left[\sum_{j=1}^{\hat{m}} ( \mathbb{E}_{\mathbf{v}\sim \mathcal{U}^d}  [\bar{f}_{\mathbf{M}}(\mathbf{v})\Psi_{\mathbf{M}, \mathcal{S}}(\mathbf{v})_j] )^2 \right] \nonumber \\
    &= \sum_{j=1}^{\hat{m}} \mathbb{E}_{\mathbf{M}} \mathbb{E}_{\mathbf{v}\sim \mathcal{U}^d} \mathbb{E}_{\mathbf{v}'\sim \mathcal{U}^d}\left[  \bar{f}_{\mathbf{M}}(\mathbf{v})\bar{f}_{\mathbf{M}}(\mathbf{v}')\Psi_{\mathbf{M}, \mathcal{S}}(\mathbf{v})_j\Psi_{\mathbf{M}, \mathcal{S}}(\mathbf{v}')_j  \right] \nonumber \\
    &= \sum_{j=1}^{\hat{m}} \mathbb{E}_{\mathbf{M}} \mathbb{E}_{\mathbf{v}, \mathbf{v}'} \left[ \left(\prod_{i=1}^r (\mathbf{Mv})_i (\mathbf{Mv}')_i  \right) \Psi_{\mathbf{M}, \mathcal{S}}(\mathbf{v})_j\Psi_{\mathbf{M}, \mathcal{S}}(\mathbf{v}')_j  \right] \label{1776} 
\end{align}
where $(\mathbf{Mv})_i$ and $\Psi(\mathbf{v})_j$ are the $i$-th and $j$-th elements of $\mathbf{Mv}$ and $\Psi(\mathbf{v})$, respectively.
Next, let $\mathbf{M}_{\mathcal{S}}$ denote the rows of $\mathbf{M}$ picked out by $\mathcal{S}$, and $\mathbf{M}_{\setminus \mathcal{S}}$ denote the remaining rows. 
Further, let
$\mathbf{v}_{\mathbf{M}, \mathcal{S}} = \mathbf{M}_{\mathcal{S}}\mathbf{v} \in \{-1,1\}^{|\mathcal{S}|}$ denote the bits in $\mathbf{v}$ specified by $\mathbf{M}_{\mathcal{S}}$,  let $\mathbf{v}_{\mathbf{M},\setminus \mathcal{S}}= \mathbf{M}_{\setminus \mathcal{S}}\mathbf{v} \in \{-1,1\}^{r - |\mathcal{S}|}$ denote  the bits specified by $\mathbf{M}_{\setminus \mathcal{S}}$. Also let $\mathbf{v}_{\setminus(\mathbf{M}, \mathcal{S})}  \in \{-1,1\}^{d-|\mathcal{S}|}$ denote the bits in $\mathbf{v}$ not specified by $\mathbf{M}_{\mathcal{S}}$. We have, for any $j \in [\hat{m}]$,
\begin{align}
    & \mathbb{E}_{\mathbf{M}} \mathbb{E}_{\mathbf{v}, \mathbf{v}'} \left[ \left(\prod_{i=1}^r (\mathbf{Mv})_i (\mathbf{Mv}')_i  \right) \Psi_{\mathbf{M}, \mathcal{S}}(\mathbf{v})_j\Psi_{\mathbf{M}, \mathcal{S}}(\mathbf{v}')_j  \right] \nonumber \\
    &= \mathbb{E}_{\mathbf{M}_{\mathcal{S}},\mathbf{v}_{\mathbf{M},\mathcal{S}}, \mathbf{v}'_{\mathbf{M},\mathcal{S}}} \Bigg[ \Bigg(\prod_{i=1}^{|\mathcal{S}|} ( \mathbf{v}_{\mathbf{M}, \mathcal{S}}  )_i (\mathbf{v}_{\mathbf{M}, \mathcal{S}}')_i  \Bigg) \nonumber \\
    &\quad \quad \quad \quad \quad \quad \quad \quad \quad\;\times \mathbb{E}_{\mathbf{M}_{\setminus\mathcal{S}}, \mathbf{v}_{\setminus(\mathbf{M}, \mathcal{S})}, \mathbf{v}'_{\setminus(\mathbf{M}, \mathcal{S})}} \Bigg[ \Bigg(\prod_{i=1}^{r-|\mathcal{S}|} (\mathbf{v}_{\mathbf{M},\setminus\mathcal{S}})_i (\mathbf{v}_{\mathbf{M},\setminus \mathcal{S}}')_i  \Bigg) \Psi_{\mathbf{M}, \mathcal{S}}(\mathbf{v})_j\Psi_{\mathbf{M}, \mathcal{S}}(\mathbf{v}')_j  \Bigg]\Bigg] \label{8890}
\end{align}
Note that for fixed $\mathcal{S}$, $\mathbf{M}_{\mathcal{S}}$, and $\mathbf{v}_{\mathbf{M},\mathcal{S}}$, $\Psi_{\mathbf{M}, \mathcal{S}}(\mathbf{v})_j$ is a function of  $\mathbf{v}_{\setminus(\mathbf{M},\mathcal{S})}$, namely $\Psi_{\mathbf{M}, \mathcal{S}, \mathbf{v}_{\mathbf{M},\mathcal{S}}}(\mathbf{v}_{\setminus(\mathbf{M},\mathcal{S})})_j$. For ease of notation, denote this function as $\psi(\mathbf{v}_{\setminus(\mathbf{M},\mathcal{S})})$. We have:
\begin{align}
&\mathbb{E}_{\mathbf{M}_{\setminus\mathcal{S}}, \mathbf{v}_{\setminus(\mathbf{M}, \mathcal{S})}, \mathbf{v}'_{\setminus(\mathbf{M}, \mathcal{S})}} \Bigg[ \Bigg(\prod_{i=1}^{r-|\mathcal{S}|} (\mathbf{v}_{\mathbf{M},\setminus\mathcal{S}})_i (\mathbf{v}_{\mathbf{M},\setminus \mathcal{S}}')_i  \Bigg) \Psi_{\mathbf{M}, \mathcal{S}}(\mathbf{v})_j\Psi_{\mathbf{M}, \mathcal{S}}(\mathbf{v}')_j  \Bigg] \nonumber \\
    &=\mathbb{E}_{\mathbf{M}_{\setminus\mathcal{S}}, \mathbf{v}_{\setminus(\mathbf{M}, \mathcal{S})}, \mathbf{v}'_{\setminus(\mathbf{M}, \mathcal{S})}} \Bigg[ \Bigg(\prod_{i=1}^{r-|\mathcal{S}|} (\mathbf{v}_{\mathbf{M},\setminus\mathcal{S}})_i (\mathbf{v}_{\mathbf{M},\setminus \mathcal{S}}')_i  \Bigg) \psi(\mathbf{v}_{\setminus(\mathbf{M},\mathcal{S})})\psi(\mathbf{v}'_{\setminus(\mathbf{M},\mathcal{S})})  \Bigg] \nonumber  \\
    &=\mathbb{E}_{\mathbf{M}_{\setminus\mathcal{S}}, \mathbf{v}_{\setminus(\mathbf{M}, \mathcal{S})}, \mathbf{v}'_{\setminus(\mathbf{M}, \mathcal{S})}} \Bigg[ h_{\mathbf{M},\setminus \mathcal{S}}(\mathbf{v}_{\setminus(\mathbf{M},\mathcal{S})})h_{\mathbf{M},\setminus \mathcal{S}}(\mathbf{v}'_{\setminus(\mathbf{M},\mathcal{S})})\psi(\mathbf{v}_{\setminus(\mathbf{M},\mathcal{S})})\psi(\mathbf{v}'_{\setminus(\mathbf{M},\mathcal{S})})  \Bigg] \nonumber  \\
    &= \mathbb{E}_{\mathbf{B} \sim \mathcal{M}^{d-|\mathcal{S}|}_{r - |\mathcal{S}|}, \mathbf{u}\sim \mathcal{U}^{d - |\mathcal{S}|}, \mathbf{u}'\sim \mathcal{U}^{d - |\mathcal{S}|}} \bigg[ h_{\mathbf{B}}(\mathbf{u})h_{\mathbf{B}}(\mathbf{u}')\psi(\mathbf{u})\psi(\mathbf{u}') \bigg] \nonumber \\
    &= \mathbb{E}_{\mathbf{B} \sim \mathcal{M}^{d-|\mathcal{S}|}_{r - |\mathcal{S}|}} \bigg[ \langle h_{\mathbf{B}},\psi \rangle_{\mathcal{U}^{d- |\mathcal{S}|}}^2 \bigg]  \label{889}
\end{align}
where $h_{\mathbf{M},\setminus \mathcal{S}}$ is the sparse parity task on input bits specified  by $\mathbf{M}_{\setminus \mathcal{S}}$, $h_{\mathbf{B}}$ is the sparse parity task on input bits specified by $\mathbf{B}\in \mathbb{O}^{(d-|\mathcal{S}|)\times (r - |\mathcal{S}|)}_{\{0,1\}}$, and 
\begin{align}
    \langle h_{\mathbf{B}},\psi \rangle_{\mathcal{U}^{d- |\mathcal{S}|}} := \mathbb{E}_{\mathbf{u}\sim \mathcal{U}^{d- |\mathcal{S}|}} [ h_{\mathbf{B}}(\mathbf{u}) \psi( \mathbf{u}) ]
\end{align}

Note that \eqref{889} is exactly the variance of the task distribution $\mathcal{M}^{d-|\mathcal{S}|}_{r - |\mathcal{S}|}$ with respect to the function $\psi$. Since $\mathcal{M}^{d-|\mathcal{S}|}_{r - |\mathcal{S}|}$ is a uniform distribution over $\binom{d-|\mathcal{S}|}{r - |\mathcal{S}|}$ orthonormal tasks, and $\sup_{\mathbf{u}} |\psi(\mathbf{u})|\leq 1$, we have by Parseval's identity:
\begin{align}
  \mathbb{E}_{\mathbf{B} \sim \mathcal{M}^{d-|\mathcal{S}|}_{r - |\mathcal{S}|}} \bigg[ \langle h_{\mathbf{B}},\psi \rangle_{\mathcal{U}^{d- |\mathcal{S}|}}^2 \bigg] = \frac{1}{\binom{d-|\mathcal{S}|}{r - |\mathcal{S}|}} \sum_{\mathbf{B}\in \mathbb{O}^{(d-|\mathcal{S}|)\times (r - |\mathcal{S}|)}_{\{0,1\}} } \langle h_{\mathbf{B}},\psi \rangle_{\mathcal{U}^{d- |\mathcal{S}|}}   \leq \frac{\sup_{\mathbf{u}} |\psi(\mathbf{u})|}{\binom{d-|\mathcal{S}|}{r - |\mathcal{S}|}} \leq \frac{1}{\binom{d-|\mathcal{S}|}{r - |\mathcal{S}|}} \label{parseval}
\end{align}
Please see Section 4.1 of \cite{malach2022hardness} for more details. Now combining \eqref{parseval} with \eqref{889} and \eqref{8890}, we obtain via Cauchy-Schwarz:
\begin{align}
    &\mathbb{E}_{\mathbf{M}} \mathbb{E}_{\mathbf{v}, \mathbf{v}'} \left[ \left(\prod_{i=1}^r (\mathbf{Mv})_i (\mathbf{Mv}')_i  \right) \Psi_{\mathbf{M}, \mathcal{S}}(\mathbf{v})_j\Psi_{\mathbf{M}, \mathcal{S}}(\mathbf{v}')_j  \right]^2  \nonumber \\
    &\leq \mathbb{E}_{\mathbf{M}_{\mathcal{S}},\mathbf{v}_{\mathbf{M},\mathcal{S}}, \mathbf{v}'_{\mathbf{M},\mathcal{S}}} \Bigg[ \Bigg(\prod_{i=1}^{|\mathcal{S}|} ( \mathbf{v}_{\mathbf{M}, \mathcal{S}}  )_i (\mathbf{v}_{\mathbf{M}, \mathcal{S}}')_i  \Bigg)^2\Bigg] \nonumber \\
    &\quad \quad \;\times \mathbb{E}_{\mathbf{M}_{\mathcal{S}},\mathbf{v}_{\mathbf{M},\mathcal{S}}, \mathbf{v}'_{\mathbf{M},\mathcal{S}}} \Bigg[ \Bigg(\mathbb{E}_{\mathbf{M}_{\setminus\mathcal{S}}, \mathbf{v}_{\setminus(\mathbf{M}, \mathcal{S})}, \mathbf{v}'_{\setminus(\mathbf{M}, \mathcal{S})}} \Bigg[ \Bigg(\prod_{i=1}^{r-|\mathcal{S}|} (\mathbf{v}_{\mathbf{M},\setminus\mathcal{S}})_i (\mathbf{v}_{\mathbf{M},\setminus \mathcal{S}}')_i  \Bigg) \Psi_{\mathbf{M}, \mathcal{S}}(\mathbf{v})_j\Psi_{\mathbf{M}, \mathcal{S}}(\mathbf{v}')_j  \Bigg]\Bigg)^2\Bigg]  \nonumber \\
    &\leq \mathbb{E}_{\mathbf{M}_{\mathcal{S}},\mathbf{v}_{\mathbf{M},\mathcal{S}}, \mathbf{v}'_{\mathbf{M},\mathcal{S}}} \Bigg[ \Bigg(\prod_{i=1}^{|\mathcal{S}|} ( \mathbf{v}_{\mathbf{M}, \mathcal{S}}  )_i (\mathbf{v}_{\mathbf{M}, \mathcal{S}}')_i  \Bigg)^2\Bigg] \times \frac{1}{\binom{d-|\mathcal{S}|}{r - |\mathcal{S}|}^2}  \nonumber \\
    &\leq \frac{1}{\binom{d-|\mathcal{S}|}{r - |\mathcal{S}|}^2} \nonumber
\end{align}
Therefore, returning to  \eqref{1776}, we obtain
\begin{align}
    \mathbb{E}_{\mathbf{M}}\left[\|\nabla_{\mathbf{a}} L_{\mathbf{M}}(\mathbf{0},\Psi_{\mathbf{M}, \mathcal{S}})\|_2^2\right] &\leq \frac{\hat{m}}{\binom{d-|\mathcal{S}|}{r - |\mathcal{S}|}^2}
\end{align}
thus 
\begin{align}
    \mathbb{E}_{\mathbf{M}}\left[\|\nabla_{\mathbf{a}} L_{\mathbf{M}}(\mathbf{0},\Psi_{\mathbf{M}, \mathcal{S}})\|_2\right] &\leq \frac{\sqrt{\hat{m}}}{\binom{d-|\mathcal{S}|}{r - |\mathcal{S}|}}\leq \frac{\sqrt{\hat{m}}}{\binom{d-k+1}{r - k+1}}
\end{align}
where the last inequality follows since $\mathcal{S}\in S_k$. Combining this with \eqref{95979} and \eqref{16676} yields that for any $\mathcal{S}\in S_k$,
\begin{align}
     \max_{\mathbf{M} \in \mathbb{O}^{d\times r}_{\{0,1\}} }  \min_{\mathbf{a}:\|\mathbf{a}\|_2 \leq B} L_{\mathbf{M}}(\mathbf{a},\Psi_{\mathbf{M}, \mathcal{S}})  \geq 1 -  \frac{\sqrt{\hat{m}}B}{\binom{d-k+1}{r - k+1}},
\end{align}
completing the proof.
\end{proof}

\newpage

\section{Distributions That Satisfy Assumption \ref{assump:diversity}}\label{app:sec:dists}

\begin{lemma}
    The task link function distribution $\mathcal{T}_{\text{all}}$
    satisfies Assumption \ref{assump:diversity}.
\end{lemma}

\begin{proof}
The set of all functions $\mathcal{T}_{\text{all}}:= \{g: \mathcal{H}^r\rightarrow \{-1,1\}\}$ has a bijection with the power set on 
$\mathcal{H}^r$,
denoted by $\mathcal{P}_{2^r}$, where each element of $\mathcal{P}_{2^r}$ is paired with the positive inverse set (i.e. $\{\mathbf{z}\in \mathcal{H}^r:g(\mathbf{z})=1\}$) for some function $g \in \mathcal{T}_{\text{all}}$. So, sampling uniformly from $\mathcal{T}_{\text{all}}$ is equivalent to sampling uniformly from $\mathcal{P}_{2^r}$, which is equivalent to the following procedure: for all  $\mathbf{z}\in \mathcal{H}^r$, independently assign $\mathbf{z}$ to Bin 1 (the `keep' set) with probability 0.5 and Bin 2 (the `discard' set) with probability 0.5. If $\mathbf{z}\neq \mathbf{z}',$ it is equally likely that $\mathbf{z}$ and $\mathbf{z}'$ are in the same bin as they are in different bins, completing the proof.
\end{proof}







\begin{lemma}\label{lem:sp}
    The task link function distribution $\mathcal{T}_{\text{s.p.}}$
    satisfies Assumption \ref{assump:diversity}.
\end{lemma}

\begin{proof}
   First note that there are ${r}\choose{0}$ subsets of $[r]$ of size 0, ${r}\choose{1}$ subsets of $[r]$ of size 1, and so on, thus there are ${r\choose 0} + {r\choose 1} + \dots +  {r \choose r} = 2^r$ sparse parity tasks in total. Let $d_H(\mathbf{v}, \mathbf{v}') = \sum_{i=1}^r \chi\{\mathbf{v}_i \neq \mathbf{v}_i'\}$ be the Hamming distance between two Boolean vectors of length $r$.  If $d_H(\mathbf{v},\mathbf{v}') = 0$, then clearly  ${g}_{i}(\mathbf{v}) = {g}_{i}(\mathbf{v}')$ for all ${g}_{i} \in \mathcal{T}$.

    On the other hand, if $d_H(\mathbf{v},\mathbf{v}') = \gamma$ for any $\gamma \in \{1,\dots, r\}$, then $\mathbf{v}$ and $\mathbf{v}'$ share the same values for $r - \gamma$ coordinates, so must share the same label on all sparse parity tasks on subsets of these coordinates, of which there are $2^{r - \gamma} = 2^{r - \gamma} {\gamma \choose 0}$. Next, there are $2^{r - \gamma}\times {\gamma \choose 1}$ sparse parity tasks on one coordinate on which $\mathbf{v}$ and $\mathbf{v}'$ differ and other coordinates on which $\mathbf{v}$ and $\mathbf{v}'$ agree. Since these tasks are sparse parities on a set of coordinates on which $\mathbf{v}$ and $\mathbf{v}'$ differ on an odd number of coordinates, $g_i(\mathbf{v}) \neq g_i(\mathbf{v}')$ for each of these $2^{r - \gamma}\times {\gamma \choose 1}$ tasks. Similarly, there are $2^{r - \gamma}\times {\gamma \choose 2}$ tasks on two coordinates on which $\mathbf{v}$ and $\mathbf{v}'$ differ, and since two is even, $g_i(\mathbf{v}) = g_i(\mathbf{v}')$ for all such tasks. 
    Extrapolating this argument, if $\gamma$ is even, then there are $2^{r - \gamma}({\gamma\choose 0} + {\gamma \choose 2} + \dots +  {\gamma \choose \gamma}) = 2^{r - \gamma} 2^{\gamma-1} = 2^{r-1}$ tasks for which $g_i(\mathbf{v}) = g_i(\mathbf{v}')$, and 
$2^{r - \gamma}({\gamma\choose 1} + {\gamma \choose 3} + \dots +  {\gamma \choose \gamma-1}) = 2^{r - \gamma} 2^{\gamma-1} = 2^{r-1}$ tasks for which $g_i(\mathbf{v}) \neq  g_i(\mathbf{v}')$. Likewise,  if $\gamma$ is odd, there are $2^{r - \gamma}({\gamma\choose 0} + {\gamma \choose 2} + \dots +  {\gamma \choose \gamma - 1})  = 2^{r-1}$ tasks for which $g_i(\mathbf{v}) = g_i(\mathbf{v}')$, and 
$2^{r - \gamma}({\gamma\choose 1} + {\gamma \choose 3} + \dots +  {\gamma \choose \gamma-1}) = 2^{r-1}$ tasks for which $g_i(\mathbf{v}) \neq  g_i(\mathbf{v}')$. 
\end{proof}


\newpage
\section{Informal Extension to Regression}\label{app:regression}

In this section, we show informally that our insights also apply to multi-task regression.
In the regression setting, the global loss is given by (consider infinite samples per task, and ignore the bias parameters for simplicity):

$$\mathcal{L}_{\text{reg}}(\mathbf{W}, \mathbf{a}_1, \dots, \mathbf{a}_T) := \frac{1}{2T}\sum_{i=1}^T \mathbb{E}_{\mathbf{x}} [(\mathbf{a}_i^\top \sigma(\mathbf{Wx}) - f_i(\mathbf{x}))^2] + \frac{\lambda_{\mathbf{W}}}{2}||\mathbf{W}||_F^2$$

For any fixed $\mathbf{W}$, the optimal $\mathbf{a}_i$ is $\mathbf{a}_i^*(\mathbf{W}) = \mathbb{E}_{\mathbf{x}} [\sigma(\mathbf{Wx} )\sigma(\mathbf{Wx} )^\top]^{-1} \mathbb{E}_{\mathbf{x}} [ f_i(\mathbf{x}) \sigma(\mathbf{Wx} )]$, which is an average of the current features weighted by labels (as in the classification case we study), that is now additionally multiplied by the normalizing matrix $ \mathbb{E}_{\mathbf{x}} [\sigma(\mathbf{Wx})\sigma(\mathbf{Wx} )^\top]^{-1}$. Note that this optimal $\mathbf{a}_i^*(\mathbf{W}) $ can be attained by one step of gradient descent.

Let $\boldsymbol{\Sigma}_{\mathbf{W}} := \mathbb{E}_{\mathbf{x}} [\sigma(\mathbf{Wx})\sigma(\mathbf{Wx} )^\top]$. Substituting the optimal $\mathbf{a}_i^*(\mathbf{W})$'s from above into the loss, we have \begin{eqnarray} \tilde{\mathcal{L}}_{\text{reg}}(\mathbf{W}) &:=& \frac{1}{2T}\sum_{i=1}^T \mathbb{E}_{\mathbf{x}} \bigg[\big( \mathbb{E}_{\mathbf{x}} [ f_i(\mathbf{x}) \sigma(\mathbf{Wx} )]^\top \boldsymbol{\Sigma}_{\mathbf{W}}^{-1} \sigma(\mathbf{Wx}) - f_i(\mathbf{x})\big)^2\bigg] + \frac{\lambda_{\mathbf{W}}}{2}||\mathbf{W}||_F^2 \nonumber \\ &=& \frac{1}{2T}\sum_{i=1}^T \mathbb{E}_{\mathbf{x}} [ f_i(\mathbf{x}) \sigma(\mathbf{Wx} )]^\top \boldsymbol{\Sigma}_{\mathbf{W}}^{-1} \mathbb{E}_{\mathbf{x}} [\sigma(\mathbf{Wx})\sigma(\mathbf{Wx})^\top] \boldsymbol{\Sigma}_{\mathbf{W}}^{-1} \mathbb{E}_{\mathbf{x}} [ f_i(\mathbf{x}) \sigma(\mathbf{Wx} )] \nonumber \\ && \quad \quad \quad \quad - 2 \mathbb{E}_{\mathbf{x}} [ f_i(\mathbf{x}) \sigma(\mathbf{Wx} )]^\top \boldsymbol{\Sigma}_{\mathbf{W}}^{-1}\mathbb{E}_{\mathbf{x}} [ f_i(\mathbf{x}) \sigma(\mathbf{Wx} )] + \mathbb{E}_{\mathbf{x}}[f_i^2(x)] + \frac{\lambda_{\mathbf{W}}}{2}||\mathbf{W}||_F^2 \nonumber \\ &=& \frac{1}{2T}\sum_{i=1}^T - \mathbb{E}_{\mathbf{x}} [ f_i(\mathbf{x}) \sigma(\mathbf{Wx} )]^\top \boldsymbol{\Sigma}_{\mathbf{W}}^{-1}\mathbb{E}_{\mathbf{x}} [ f_i(\mathbf{x}) \sigma(\mathbf{Wx} )] + \mathbb{E}_{\mathbf{x}}[f_i^2(\mathbf{x})] + \frac{\lambda_{\mathbf{W}}}{2}||\mathbf{W}||_F^2 \nonumber \\ &=& - \frac{1}{2}\mathbb{E}_{\mathbf{x}, \mathbf{x}'} \left[ \beta(\mathbf{x}, \mathbf{x}') \sigma(\mathbf{Wx} )^\top \boldsymbol{\Sigma}_{\mathbf{W}}^{-1} \sigma(\mathbf{Wx}' )\right] + \frac{\lambda_{\mathbf{W}}}{2}||\mathbf{W}||_F^2 + c \nonumber \end{eqnarray} where, as in the classification case, $\beta(\mathbf{x}, \mathbf{x}'):= \frac{1}{T}\sum_{i=1}^T f_i(\mathbf{x})f_i(\mathbf{x}') $, and here, $c:=\frac{1}{2T}\sum_{i=1}^T\mathbb{E}_{\mathbf{x}}[f_i^2(\mathbf{x})] $ is a constant independent of $W$. This loss is very similar in form to the pseudo-contrastive loss we derived in \eqref{tjjt} for the classification case: we again have the negative average of $\beta(\mathbf{x}, \mathbf{x}')$ times a proxy for the similarities between the representations of $\mathbf{x}$ and $\mathbf{x}'$. For $\tilde{\mathcal{L}}_{\text{reg}}$ to encourage learning the ground-truth features, $\beta(\mathbf{x}, \mathbf{x}')$ must be a proxy for the similarity of the ground-truth features of $\mathbf{x}$ and $\mathbf{x}'$.

This is a reasonable condition for the following reason: suppose the tasks are normalized such that $\mathbb{E}_{f\sim \mathcal{T}}[f(\mathbf{x})] = 0$ and $\mathbb{E}_{f\sim \mathcal{T}}[f^2(\mathbf{x})] = \nu^2$ for all $x$. Then in the limit $T \rightarrow \infty$, $\beta(\mathbf{x}, \mathbf{x}') = \nu^2\mathbb{E}_{f\sim \mathcal{T}}[f(\mathbf{x})f(\mathbf{x}')]$ is proportional to the correlation between the labels of $x$ and $x'$, which we expect to be a proxy for the similarity between the ground-truth features of $x$ and $x'$. Intuitively, inputs with similar ground-truth features should have more correlated labels (across tasks) than inputs with dissimilar ground-truth features.

Consider for example $f(\mathbf{x}) = \sin( \mathbf{h}^\top \frac{\mathbf{M} \mathbf{x}}{||\mathbf{Mx}||_2} )$ (the following argument would also hold analogously for $f(\mathbf{x}) = \sin( \mathbf{h}^\top \text{sign}({\mathbf{M} \mathbf{x}}) )$, where $f\sim \mathcal{T}$ is induced by drawing $\mathbf{h} \sim \mathcal{N}(\mathbf{0}_r, \mathbf{I}_r)$. Then for all $\mathbf{x}$, $\mathbb{E}_{f\sim \mathcal{T}}[f(\mathbf{x})] = 0$ and $\mathbb{E}_{f\sim \mathcal{T}}[f^2(\mathbf{x})] = \nu^2$ for some $\nu>0$, and, with $T=\infty$, \begin{eqnarray} \beta(\mathbf{x}, \mathbf{x}') &=& \nu^2\mathbb{E}_{f\sim \mathcal{T}}\left[\sin\left( \mathbf{h}^\top \frac{\mathbf{M} \mathbf{x}}{||\mathbf{Mx}||_2} \right)\sin\left( \mathbf{h}^\top \frac{\mathbf{M} \mathbf{x}'}{||\mathbf{Mx}'||_2} \right)\right] \nonumber \\ &=& \frac{\nu^2}{2}\mathbb{E}_{\mathbf{h}}\left[\cos\left( \mathbf{h}^\top \left(\frac{\mathbf{M} \mathbf{x}}{||\mathbf{Mx}||_2} - \frac{\mathbf{M} \mathbf{x}'}{||\mathbf{Mx}'||_2} \right) \right) - \cos\left( \mathbf{h}^\top \left(\frac{\mathbf{M} \mathbf{x}}{||\mathbf{Mx}||_2} + \frac{\mathbf{M} \mathbf{x}'}{||\mathbf{Mx}'||_2} \right) \right)\right] \nonumber \\ &=& \frac{\nu^2}{2}\mathbb{E}_{z \sim \mathcal{N}(0,1)}\left[\cos\left( \left\|\frac{\mathbf{M} \mathbf{x}}{||\mathbf{Mx}||_2} - \frac{\mathbf{M} \mathbf{x}'}{||\mathbf{Mx}'||_2} \right\|_2 z \right) - \cos\left( \left\|\frac{\mathbf{M} \mathbf{x}}{||\mathbf{Mx}||_2} + \frac{\mathbf{M} \mathbf{x}'}{||\mathbf{Mx}'||_2} \right\|_2 z \right)\right] \nonumber \\ &\stackrel{a}{=}& \frac{\nu^2}{4}\left(\exp\left( -\left\|\frac{\mathbf{M} \mathbf{x}}{||\mathbf{Mx}||_2} - \frac{\mathbf{M} \mathbf{x}'}{||\mathbf{Mx}'||_2} \right\|_2^2/2 \right) - \exp\left( - \left\|\frac{\mathbf{M} \mathbf{x}}{||\mathbf{Mx}||_2} + \frac{\mathbf{M} \mathbf{x}'}{||\mathbf{Mx}'||_2} \right\|_2^2/2 \right)\right) \nonumber \\ &\stackrel{b}{=}& \frac{\nu^2 e}{4}\left(\exp\left( \text{cossim}(\mathbf{M} \mathbf{x},\mathbf{Mx}') \right) - \exp\left( -\text{cossim}(\mathbf{M} \mathbf{x},\mathbf{Mx}')\right)\right) \nonumber \end{eqnarray} where $a$ follows by a Gaussian integral calculation.

Observe that the expression in the RHS of $b$ is monotonically increasing in the cosine similarity of the ground-truth features of $\mathbf{x}$ and $\mathbf{x}'$, as desired. Therefore, $\tilde{\mathcal{L}}_{\text{reg}}$ encourages aligning the normalized representations of pairs of inputs (i.e., making $\sigma(\mathbf{Wx} )^\top \boldsymbol{\Sigma}_{\mathbf{W}}^{-1} \sigma(\mathbf{Wx}' )$ large) that have similar ground-truth features ($\text{cossim}(\mathbf{M} \mathbf{x},\mathbf{Mx}') \approx 1$), and encourages the normalized representations of pairs of points with dissimilar ground-truth features ($\text{cossim}(\mathbf{M} \mathbf{x},\mathbf{Mx}') \approx -1$) to also be dissimilar (i.e., making $\sigma(\mathbf{Wx} )^\top \boldsymbol{\Sigma}_{\mathbf{W}}^{-1} \sigma(\mathbf{Wx}' )$ small). The same intuitions hold if $\mathbb{E}_{f\sim \mathcal{T}}[f(\mathbf{x})] = \mu \neq 0$ for all $x$. So just as in the classification setting, $\tilde{\mathcal{L}}_{\text{reg}}(\mathbf{W})$ again behaves as a pseudo-contrastive loss that encourages recovering the ground-truth representation. Here, since $ \beta(\mathbf{x}, \mathbf{x}')$ is smooth, we may refer to $\tilde{\mathcal{L}}_{\text{reg}}$ as a ``soft'' contrastive loss.

Please see Tables \ref{tab:1} and \ref{tab:2} in Section \ref{sec:sims} for empirical results verifying this conclusion.

\newpage
\section{Numerical Simulations}\label{sec:sims}

\begin{figure}
  \centering
  \includegraphics[width=0.44\linewidth]{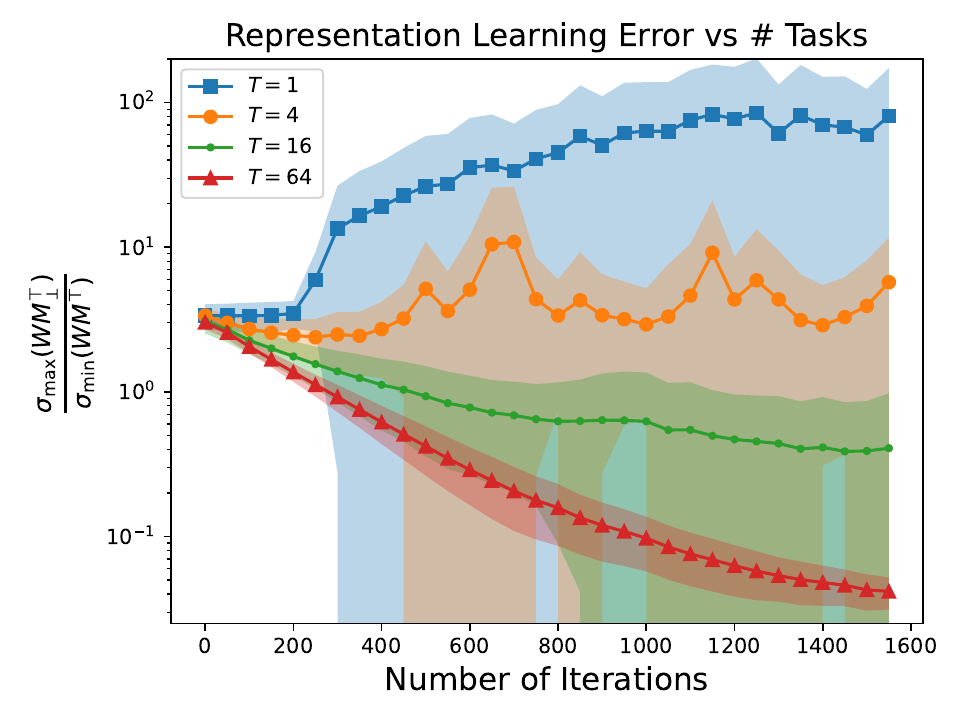}
  \includegraphics[width=0.48\linewidth]{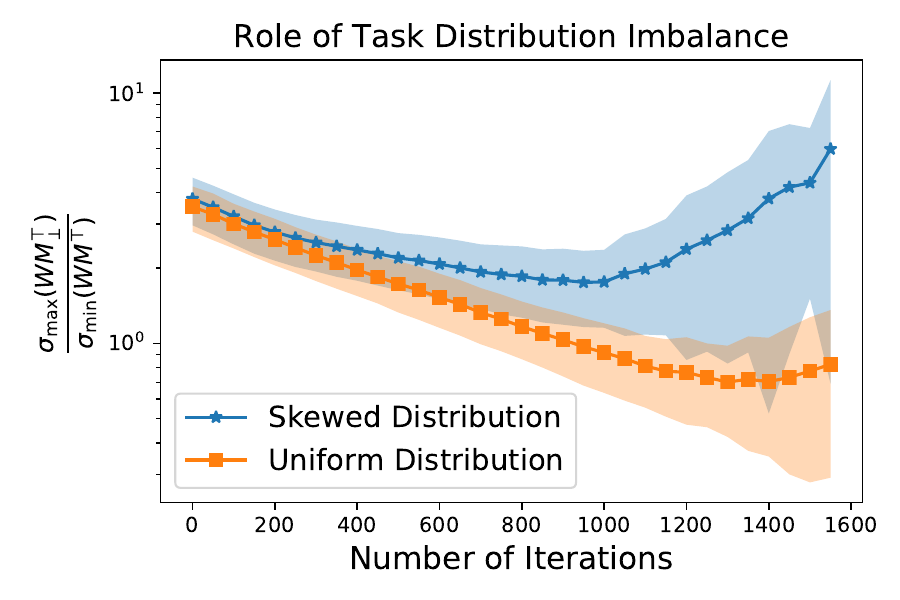}
  \caption{\textbf{Representation learning error.} (Left) Version of Figure \ref{fig:multi} showing the benefit of pretraining with additional tasks that includes the standard deviations (shaded regions) of each statistic around the plotted means over 10 trials. Note that $d=32, r=3$ and all cases use the same total number of samples. (Right) Representation learning error vs number of training iterations when tasks are sampled from either $\mathcal{T}_{\text{s.p.}}$ (`Uniform Distribution') or  a skewed distribution over the support of $\mathcal{T}_{\text{s.p.}}$ (`Skewed Distribution'). In this case $d=32, r=4$ and $T=32$.}\label{fig:std}
\end{figure}

In this section we verify our analysis with numerical simulations. 
We aim to both confirm that the alternating stochastic-gradient descent algorithm for multi-task pretraining that we study recovers the ground-truth representation and further explore the mechanisms by which it does so. To this end, all experiments are conducted on synthetic data generated according to the model described in Section \ref{sec:formulation}. To generate $\mathbf{M}$, we sample each of its elements independently from the standard normal distribution, then orthonormalize its rows via a QR decomposition. All experiments use $\mathcal{T}_{s.p.}$ as the distribution over task link functions. The pretraining algorithm is the pretraining algorithm described in Section \ref{sec:formulation} but repeated for many iterations. 


\textbf{Benefit of training with many tasks.} \textit{Figure \ref{fig:multi} in Section \ref{sec:intro}}  shows that increasing the number of pretraining tasks improves representation learning, even though all cases in this experiment use the same total number of samples. In particular, for each number of pretraining tasks $T$, gradients for each task are computed with $n_1=n_2=1024/T$ fresh samples per iteration, so the more tasks, the fewer samples per task.  Representation learning error is measured using the metric from Theorem \ref{thm:1}: $ \rho(\mathbf{M},\mathbf{W}):= \frac{\sigma_{1}(\mathbf{W}\mathbf{M}_{\perp}^\top)}{\sigma_{r}(\mathbf{WM}^\top)}$. This  metric captures the extent to which the row space of $\mathbf{W}$ covers that of $\mathbf{M}$ (measured by $\sigma_{r}(\mathbf{WM}^\top)$) {\em and} the extent to which the row space of $\mathbf{W}$ lies only in that of $\mathbf{M}$ (measured by $\sigma_{1}(\mathbf{W}\mathbf{M}_{\perp}^\top)$). Figure \ref{fig:multi} shows the mean values of $\rho(\mathbf{W}^t,\mathbf{M})$ across 10 independent random trials, including  independently sampled sets of pretraining tasks; \textit{Figure \ref{fig:std}(Left)} plots the same results plus shaded regions indicating $\pm$ one standard deviation across the 10 trials. Here $d=32,$ $r=3$, and $m=16$.

\textbf{Role of task distribution diversity.} Figure \ref{fig:std}(Right) motivates Assumption \ref{assump:diversity} by demonstrating that the quality of the learned representation degrades with the diversity, or balancedness, of the task distribution. Here we use $d=32$, $r=4$, $T=32$, $m=16$ and $n_1=n_2=16$ (so larger $r$ and fewer total samples than in \textit{Figure \ref{fig:std}(Left)}. `Uniform Distribution' means the task link functions are sampled from $\mathcal{T}_{\text{s.p.}}$ as usual, and `Skewed Distribution' means the link functions are sampled from a non-uniform distribution over the set sparse parity tasks on $r$ inputs as follows: (1) sample $|\mathcal{S}_i|$ from $\{0,1,...,r\}$, weighted by the number of sparse parity tasks on support sets of that size, i.e. proportionally to the binomial coefficients (same as in sampling from $\mathcal{T}_{\text{s.p.}}$), (2) sample $|\mathcal{S}_i|$ elements without replacement from $\{1,\dots,r\}$ weighted by $[0.3, 0.3, 0.3, 0.1]$, noting that $r=4$ (this step differs from sampling from $\mathcal{T}_{\text{s.p.}}$, which would apply uniform weights to the $r$ features). {\em We see that sampling tasks from the uniform distribution leads to much smaller representation learning error than sampling from the skewed distribution, since the skewed distribution de-emphasizes one feature ground-truth feature.} Again the experiment is repeated over 10 independent random trials and means and standard deviations are shown. 

\begin{figure}
  \centering
  \includegraphics[width=0.44\linewidth]{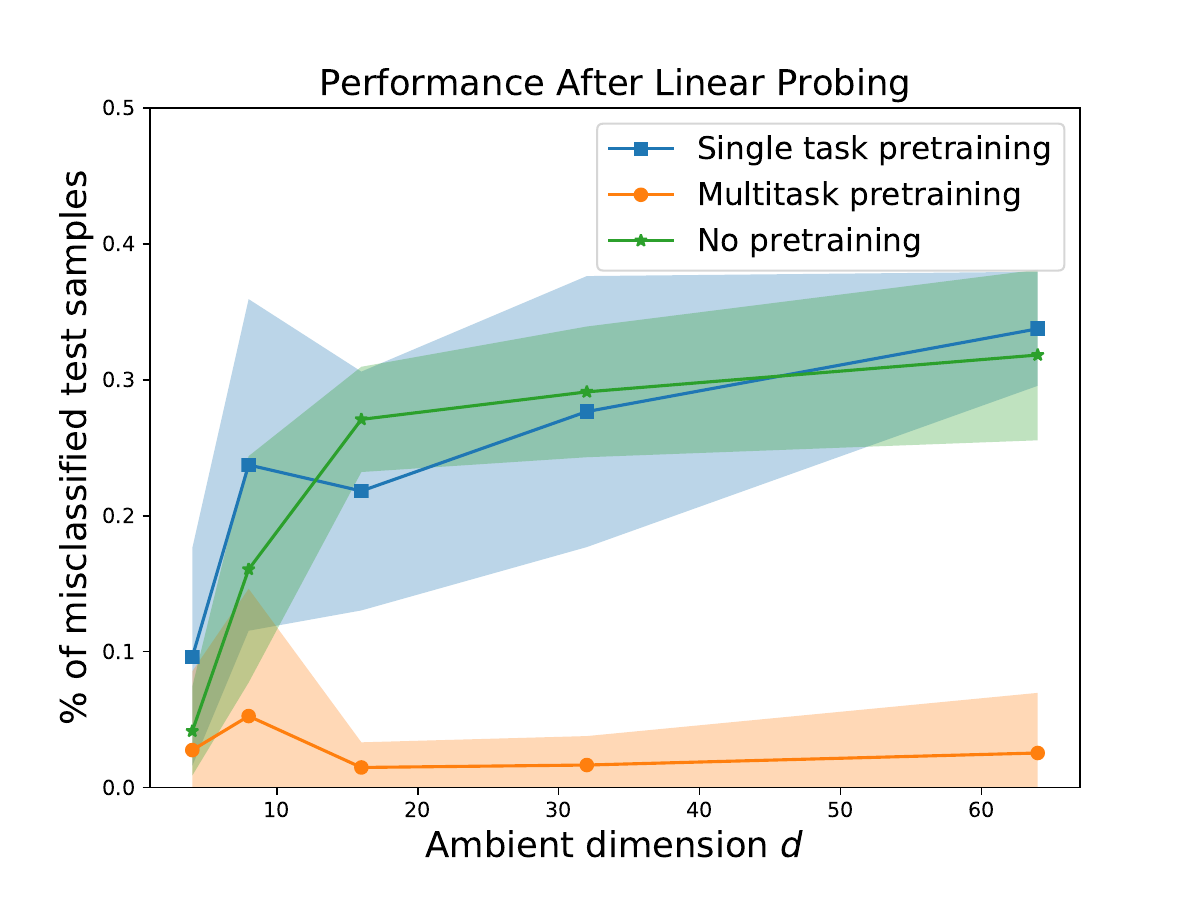}
  \includegraphics[width=0.44\linewidth]{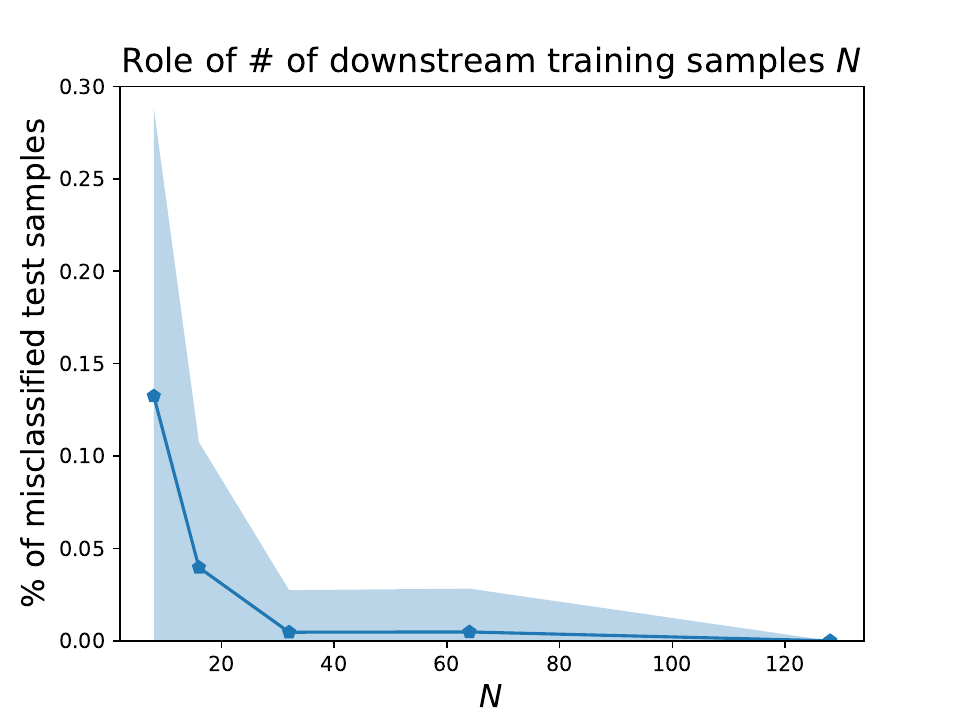}
  \caption{\textbf{Downstream task performance.} (Left) Downstream task performance for multi-task pretrained, single task, and random (`No pretrained') representations $\mathbf{W}$ with varying dimension $d$. Unlike single task pretrained and the non-pretrained representations, the downstream performance of representations trained with multiple tasks does not degrade with $d$. Note  that for multi-task, $T=16+d$ and $n_1=n_2=16$ and for single task, $n_1=n_2=16\times(16+d)$, and $r=4$ and $m=16$ in all cases. (Right) Downstream task performance for multi-task-trained representation with $T=32, d=32,r=3,n_1=n_2=16$ and $m=16$, and with $\hat{m}=32$ for downstream linear probing, with varying number of downstream training samples $N$.
  }\label{fig:gen}
\end{figure}

\textbf{Generalization to downstream tasks.} We also investigate whether learning an approximation of the ground-truth representation leads to strong downstream performance. To evaluate the downstream performance of a learned representation, we follow the same procedure from Section \ref{sec:formulation} by first randomly sampling $\hat{m}$ neuron weights $\hat{\mathbf{W}}$ from the multivariate standard normal distribution and $\mathbf{b}$ and $\hat{\mathbf{b}}$ from the uniform distribution on $[-10^{-3}, 10^{-3}]$. Then, we run gradient descent on the regularized empirical hinge loss on a fixed set of $N$ samples to learn the last layer head, i.e. linear probing. We run this gradient descent with step size $\eta=0.1$ and $\ell_2-$regularizer $\hat{\lambda}_{\mathbf{a}}=0.01$. After this linear probing on $N$ samples, we evaluate the performance of the returned classifier on a distinct set of $1000$ test samples for each task.

\textit{Figure  \ref{fig:gen}(Left)} plots the number of misclassified test samples after this linear probing with $N=32$ training samples averaged across 10 randomly drawn tasks from $\mathcal{T}_{\text{s.p.}}(\mathbf{M})$ with varying $d$. In particular, for each value of $d$, we randomly generate an $\mathbf{M}$, execute multi-task and single task pretraining on task(s) drawn from $\mathcal{T}_{\text{s.p.}}(\mathbf{M})$ to learn  $\mathbf{W}$, then execute   linear probing with $N=32$ samples for $100$ iterations on the head of  the random ReLU network with $\hat{m}=128$ second-layer neurons generated from these trained $\mathbf{W}$'s, as well as a randomly generated $\mathbf{W}$ (`No pretraining'), on each of 10 new downstream tasks sampled from $\mathcal{T}_{\text{s.p.}}$,  and save the average percentage of misclassified samples. We repeat this process end-to-end 10 times, and plot mean and standard deviations across these 10 trials. Again we use $m=16$ neurons and execute pretraining for 1600 rounds.
To mitigate the effect of representation learning error, we scale $T$ with $d$ for multi-task pretraining, specifically $T=16+d$. For fair comparison with single task pretraining, we scale $n_1$ and $n_2$ with $d$ for the single task case, specifically $n_1=n_2=16\times(16+d)$ for single task, whereas $n_1=n_2=16$ for multi-task. 
{\em While the percentage of misclassified samples grows with $d$ for single and no pretraining, it does not for multi-task pretraining.} This confirms that multi-task pretraining reduces  the effective dimension of the downstream task from $d$ to $r$, unlike single task pretraining, which effectively confers no downstream benefit as it performs similarly to no pretraining.

\textit{Figure \ref{fig:gen}(Right)} explores the role
of 
$N$ in downstream performance. Here we pretrain a single $\mathbf{W}$ on $T=32$ tasks from $\mathcal{T}_{\text{s.p.}}$ for 1600 rounds with $d=32,r=3,n_1=n_2=16,$ and $m=16$. Then we execute linear probing for 200 iterations on the random three-layer ReLU network with first-layer weights $\mathbf{W}$. We fix either $\hat{m}=32$ 
and vary $N$.
The results shown are the mean and standard deviation of the percentage of misclassified test samples across 25 tasks drawn from $\mathcal{T}_{\text{s.p.}}$, with 5000 test samples used per task. { The classification accuracy improves with $N$, as predicted by Theorem \ref{thm:downstream}, and { even reaches perfect test classification accuracy} (when $N=128$).
}




\begin{figure*}[t]
\begin{center}
\centerline{\includegraphics[width=0.90\textwidth]{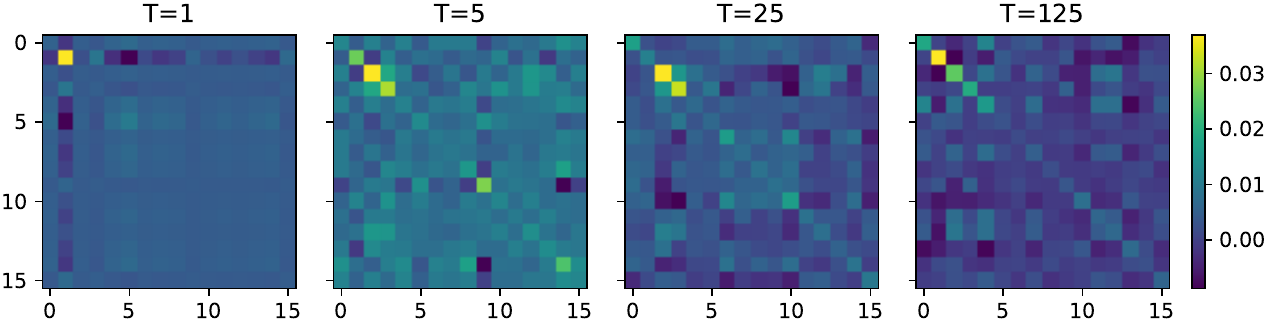}
}
\vspace{0mm}
\caption{\textbf{More tasks isolate important features.} From the discussion in Section \ref{sec:sketch}, the loss induced by multi-task training with task-specific heads as a function of the representation  is approximately $\mathcal{L}(\mathbf{W})\approx -\mathbb{E}_{\mathbf{x},\mathbf{x}'}[\beta(\mathbf{x},\mathbf{x}') \sigma(\mathbf{Wx})^\top \sigma(\mathbf{Wx}') ]$, where $
\beta(\mathbf{x},\mathbf{x}')=\mathbb{E}_i[f_i(\mathbf{x})f_i(\mathbf{x}')]$ is the average product of the labels of two points across tasks.  This loss is pseudo-contrastive in  that it encourages representations of two points to be similar if and only if they share the same label on most tasks ($\beta(\mathbf{x},\mathbf{x}')\approx 1$), which is equivalent to saying that they share important features. Here we consider the gradient of $\mathcal{L}(\mathbf{W})$ with respect to one neuron weight $\mathbf{w}_j$. The gradient takes the form $-\mathbf{A}\mathbf{w}_j$, and we plot finite-task and finite-sample approximations of  $\mathbf{A}$. We set $d=16$ and the ground-truth features to be the first $r=4$ coordinates of the data, i.e. $\mathbf{M} = [\mathbf{I}_4, \mathbf{0}_{4\times 12}]$. Roughly speaking, if the finite-task approximation of $\beta(\mathbf{x},\mathbf{x}')$, namely $\frac{1}{T}\sum_{i=1}^T f_i(\mathbf{x})f_i(\mathbf{x}')$, serves as a proxy for whether $\mathbf{x}$ and $\mathbf{x}'$ share ground-truth features, as does $\mathbb{E}_i[f_i(\mathbf{x})f_i(\mathbf{x}')]$, then the terms with $\mathbf{x}$ and $\mathbf{x}'$ having the same ground-truth features will dominate the loss, and these features themselves will dominate $\mathbf{A}$. {\em The above plots confirm this;  as the number of tasks $T$ increases and $\frac{1}{T}\sum_{i=1}^T f_i(\mathbf{x})f_i(\mathbf{x}')$ approaches $\mathbb{E}_i[f_i(\mathbf{x})f_i(\mathbf{x}')]$, $\mathbf{A}$ becomes dominated by its top 4-by-4 submatrix, i.e. $\mathbf{A} \approx c \mathbf{M}^\top \mathbf{M}$ for a scalar $c$.} So, $\mathbf{A}$ behaves more like a projection onto the row space of $\mathbf{M},$ as desired.
}\label{fig:num_tasks}
\end{center}
\end{figure*}

\textbf{The role of head updates.} Next, we explore {\em why} multi-tasking leads to better feature learning. We are motivated by our discussion in Section \ref{sec:sketch} regarding the similarity of the multi-task loss induced from updating the task-specific heads  to a constrastive loss \cite{chen2020simple}, which encourages representations that align points sharing semantic meanings and dis-align arbitrary points. Recall that in the population setting, the multi-task loss is approximately of the form of $-  \E_{\mathbf{x}, \mathbf{x}' }  \left[  \beta({\mathbf{x},\mathbf{x}'}) \sigma(\mathbf{W}^0 \mathbf{x}')^\top \sigma( \mathbf{W}^0\mathbf{x}) \right]$, where $\beta({\mathbf{x},\mathbf{x}'})= \mathbb{E}_i[f_i(\mathbf{x})f_i(\mathbf{x}')]$ is a scalar that either encourages the representation to align $\mathbf{x}$ and   $\mathbf{x}'$ (if  $\beta({\mathbf{x},\mathbf{x}'})\approx 1$) or not (if  $\beta({\mathbf{x},\mathbf{x}'})\ll 1$). The intuition is that  $\beta({\mathbf{x},\mathbf{x}'})\approx 1$ if and only if $\mathbf{x}$ and $\mathbf{x}'$ share the same label on most tasks and thereby share important features.  The  gradient of $ \E_{\mathbf{x}, \mathbf{x}' }  \left[  \beta({\mathbf{x},\mathbf{x}'}) \sigma(\mathbf{W}^0 \mathbf{x}')^\top \sigma( \mathbf{W}^0\mathbf{x}) \right]$ with respect to one neuron weight $\mathbf{w}_j$ is of the form $\mathbf{A}\mathbf{w}_j$ 
where 
\begin{align}
    \mathbf{A} = \mathbb{E}_{\mathbf{x},\mathbf{x}'}[ \beta(\mathbf{x},\mathbf{x}') \sigma'(\mathbf{w}_j^\top \mathbf{x})  \sigma'(\mathbf{w}_j^\top \mathbf{x}') \mathbf{x}(\mathbf{x}')^\top]\nonumber.
\end{align}
See Appendix \ref{app:thrm1} for a rigorous derivation. In \textit{Figure \ref{fig:num_tasks}} we plot finite-sample estimates of $\mathbf{A}$ with varying numbers of tasks $T$ drawn from $\mathcal{T}_{\text{s.p.}}(\mathbf{M})$, where here $M = [\mathbf{I}_4, \mathbf{0}_{4\times 12}]$ for ease of visualization. We use $d=16, r=4$ and $n_1\!=\!n_2\!=\!100$. We repeated each computation 10 times for each value $T$, each with independent draws of $\mathbf{w}_j$, $T$ tasks, and $n_1\!+\!n_2$ samples per task, and plotted the matrix $\mathbf{A}$ matrix that achieved the smallest value of $\rho(\mathbf{A}, \mathbf{M}) = \frac{\sigma_{1}(\mathbf{A}\mathbf{M}_{\perp}^\top)}{\sigma_{r}(\mathbf{AM}^\top)}$ among these 10 trials.
\textit{\em Figure \ref{fig:num_tasks} shows that as the number of tasks increases, the finite-task approximation of $\beta({\mathbf{x},\mathbf{x}'})$ increasingly acts like an indicator for whether $\mathbf{x}$ and $\mathbf{x}'$ share the same ground-truth features, evidenced by $\mathbf{A}$ approaching $\mathbf{M}^\top \mathbf{M} = [\mathbf{I}_4,\mathbf{0}_{4\times 12};
\mathbf{0}_{12\times 4},\mathbf{0}_{12\times 12}]$ }. Thus, $\mathbf{A}$ acts increasingly like a projection onto the row space of $\mathbf{M}$ as $T$ increases.

To further assess  the importance of adapting the heads to each task, \textit{Figure \ref{fig:neuronss}(Left)} compares the representation learning performance of the multi-task pretraining algorithm we study along with a modified version that learns only one shared head across all tasks. In particular, $\mathbf{W}$ and $\mathbf{a}$ are updated simultaneously on each iteration by averaging the task-specific gradients. {\em Since this algorithm does not involve task-specific head adaptation prior to the update of the representation, it does not induce a feature-learning-encouraging contrastive loss, and therefore does not lead to learning the ground-truth features}. In this case $d=16,r=2,m=8,n_1=n_2=16$, and $\nu_{\mathbf{w}}=0.001$ (note increasing $\nu_{\mathbf{w}}$ does not improve the performance of single-head training).

Finally, \textit{Figure \ref{fig:neuronss}(Center)} plots the dynamics of $\mathbf{Mw}_j$ for four neuron weights $\mathbf{w}_j$ during multi-task pretraining with $T=25$, $r=2$, $d=8$, $m=4$ (and task-specific heads). \textit{The projections $\mathbf{Mw}_j$ fan outwards from the origin and remain roughly isotropic in the row space of $\mathbf{M}$.} Conversely, \textit{Figure \ref{fig:neuronss}(Right) shows that the projections of $\mathbf{w}_j$ onto the first two rows of $\mathbf{M}_{\perp}$ contract towards the origin for each of the four neurons,} as desired.


\begin{figure}
  \centering
   \includegraphics[width=0.37\linewidth]{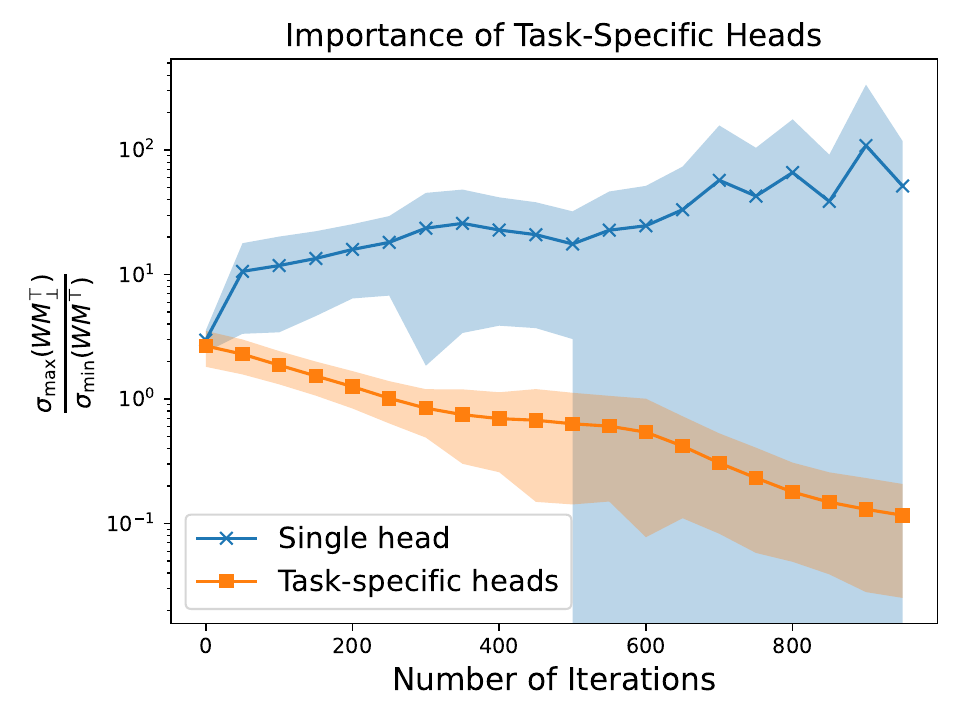}
   \hspace{2mm}
   \includegraphics[width=0.29\linewidth]{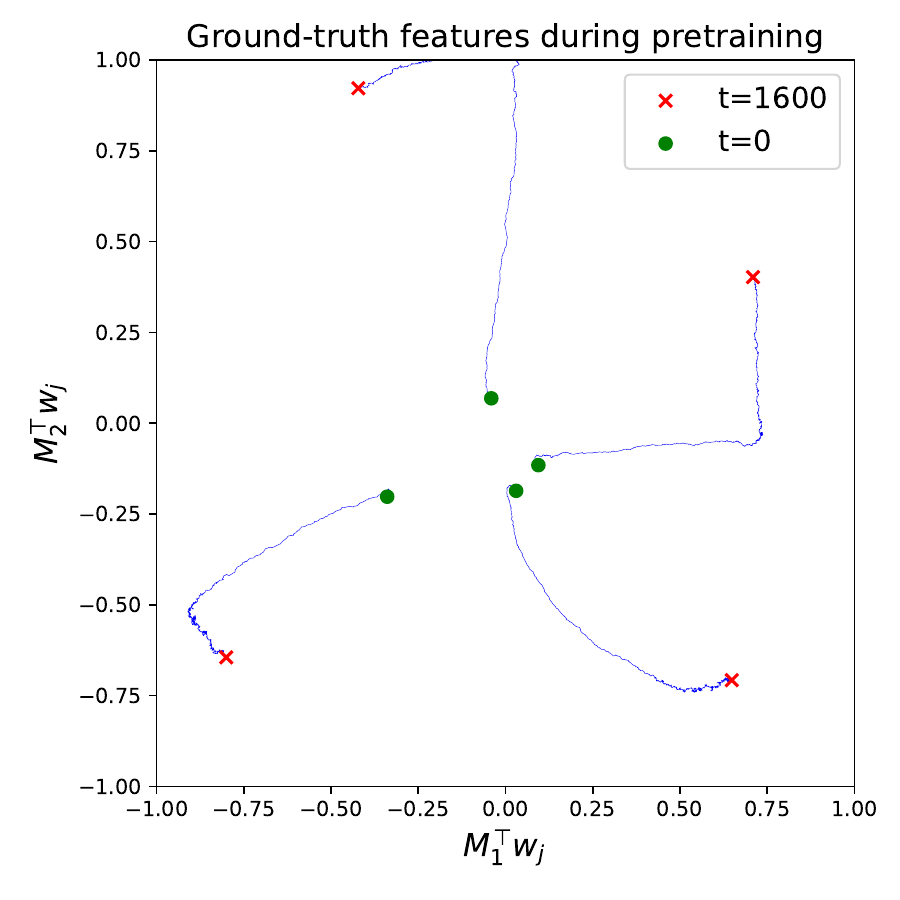}
   \includegraphics[width=0.29\linewidth]{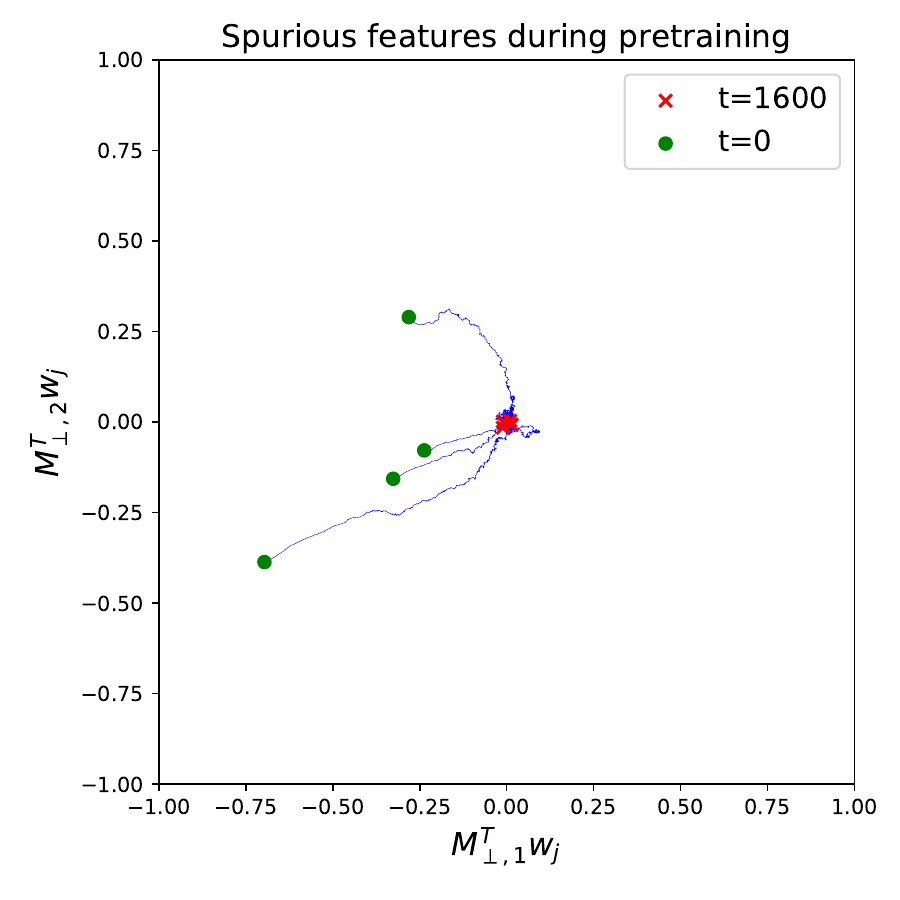}
  \caption{(Left) Training with a single head, i.e. $\mathbf{a}_1=\mathbf{a}_2=...=\mathbf{a}_T=\mathbf{a}$, fails to recover the ground-truth representation, as this does not induce an appropriate contrastive loss. (Center) During multi-task pretraining with task-specific heads, the projection of four neurons onto the $r\!=\!2$-dimensional ground-truth subspace fan outwards from the origin such that they remain large and isotropic in this space, whereas  (Right) their projections onto the spurious subspace contract towards the origin.}\label{fig:neuronss}
\end{figure}

\textbf{Extension to relaxed version of Assumption \ref{assump:diversity} and regression.} We empirically verify that the special cases of the new, weaker condition discussed above result in learning the ground-truth features. We also evaluate whether optimizing the loss derived in the regression setting also leads to recovering the features. All cases consider $T=\infty$ for simplicity. In particular, the settings we consider are:

\begin{enumerate}
    \item 
The standard hinge-loss classification setting with \begin{eqnarray}\beta(\mathbf{x},\mathbf{x}')&=&\begin{cases} 1 & \text{if} \quad \text{sign}(\mathbf{Mx}) = \text{sign}(\mathbf{Mx}')\\ \delta & \text{o/w}\\ \end{cases},\end{eqnarray} for various choices of $\delta$.

\item The standard hinge-loss classification setting with $$\beta(\mathbf{x},\mathbf{x}')=1 - \frac{2}{\pi} \arccos\left(\text{cossim}(\mathbf{Mx},\mathbf{Mx}') \right).$$

\item The regression setting from Section \ref{app:regression} with $$\beta(\mathbf{x},\mathbf{x}')=\exp\left( \text{cossim}(\mathbf{Mx},\mathbf{Mx}') \right) - \exp\left( -\text{cossim}(\mathbf{Mx},\mathbf{Mx}')\right).$$

\end{enumerate}

To focus on the role of $\beta(\mathbf{x},\mathbf{x}')$, we run SGD with a batch size of 10 $(\mathbf{x},\mathbf{x}')$ pairs on the loss $\tilde{\mathcal{L}}$ for the classification cases, and SGD with a batch size of 10 $(\mathbf{x},\mathbf{x}')$ pairs on the loss $\tilde{\mathcal{L}}_{\text{reg}}$ for the regression case. We set $d=10$, $r=2$, $\mathbf{M} = [\mathbf{I}_2, \mathbf{0}_{2\times 8}]$, and $m=6$. In the regression case, we used 10 i.i.d. samples each round to approximate $\boldsymbol{\Sigma}_{\mathbf{W}}$. We do not differentiate through functions of $\mathbf{W}$ that arise from solving for the optimal $\mathbf{a}_i^*(\mathbf{W})$'s. We use the same hyperparameters in all cases (learning rate $=0.005$, regularization parameter $=0.1$). We evaluate $\frac{\sigma_1(\mathbf{W}^t(\mathbf{I}_d-\mathbf{M}^\top \mathbf{M}))}{\sigma_r(\mathbf{W}^t \mathbf{M}^\top \mathbf{M})}$ and $\frac{\sigma_1(\mathbf{W}^t \mathbf{M}^\top \mathbf{M})}{\sigma_r(\mathbf{W}^t \mathbf{M}^\top \mathbf{M})}$ every 5000 rounds of training over 20000 rounds. All values are means plus or minus standard deviation over 5 independent random trials.

We can see that in all cases, $\mathbf{W}^t$ becomes approximately a rank-$r$ matrix whose row space aligns with the row space of $\mathbf{M}$, and whose projection onto the row space of $\mathbf{M}$ is well-conditioned, confirming recovery of the ground-truth features. As expected, the convergence is slower for larger values of $\delta$ in Case 1, since the loss $\tilde{\mathcal{L}}(\mathbf{W})$ puts lets of an incentive on increasing the representation similarity of positive pairs ($(\mathbf{x},\mathbf{x}'): \text{sign}(\mathbf{Mx})= \text{sign}(\mathbf{Mx}')$) relative to the similarity of negative pairs. Nevertheless, all values of $\delta$ result in a representation tending towards the ground-truth.

{\em We can see that in all cases, $\mathbf{W}^t$ becomes approximately a rank-$r$ matrix whose row space aligns with the row space of $\mathbf{M}$, and whose projection onto the row space of $\mathbf{M}$ is well-conditioned, confirming recovery of the ground-truth features}. As expected, the convergence is slower for larger values of $\delta$ in Case 1, since the loss $\tilde{\mathcal{L}}(\mathbf{W})$ puts lets of an incentive on increasing the representation similarity of positive pairs ($(\mathbf{x},\mathbf{x}'): \text{sign}(\mathbf{Mx})= \text{sign}(\mathbf{Mx}')$) relative to the similarity of negative pairs. Nevertheless, all values of $\delta$ result in a representation tending towards the ground-truth.



\begin{table}[t] 
\caption{Subspace learning error ($\frac{\sigma_1(\mathbf{W}^t(\mathbf{I}_d-\mathbf{M}^\top \mathbf{M}))}{\sigma_r(\mathbf{W}^t \mathbf{M}^\top \mathbf{M})}$) vs number of training iterations $t$. All values are means plus or minus standard deviation over 5 independent random trials.}
\label{tab:1}
\vskip 0.15in
\begin{center}
\begin{small}
\begin{sc}
\begin{tabular}{lccccc}
\toprule
              & $t=0$           & $t=200$         & $t=400$         & $t=600$         & $t=800$          \\
\midrule
(1) $\delta=0$   & $2.29\pm 0.72$  & $0.784\pm 0.26$   & $0.242\pm 0.089$   & $0.0947\pm 0.029$ & $0.796\pm 0.025$  \\
 (1) $\delta=0.1$ & $2.29 \pm 0.72$ & $0.879 \pm 0.29$ & $0.308 \pm 0.11$   & $0.116\pm 0.031$  & $0.0825\pm 0.018$ \\
 (1) $\delta=0.5$ & $2.29 \pm 0.72$ & $1.35 \pm 0.38$   & $0.807 \pm 0.23$   & $0.450 \pm 0.12$  & $0.250\pm 0.60$   \\
(2)  Linear Tasks           & $2.29 \pm 0.72$ & $0.323\pm 0.11$   & $0.0761 \pm 0.021$ & $0.0665\pm 0.031$ & $0.0581\pm 0.012$ \\
(3)  Regression               & $2.95 \pm 0.89$ & $0.382 \pm 0.23$  & $0.268 \pm 0.080$  & $0.218 \pm 0.038$ & $0.421 \pm 0.266$ \\
\bottomrule
\end{tabular}
\end{sc}
\end{small}
\end{center}
\vskip -0.1in
\end{table}



\begin{table}[t] 
\caption{Condition number of $\mathbf{W}^t$ in ground-truth subspace ($\frac{\sigma_1(\mathbf{W}^t \mathbf{M}^\top \mathbf{M})}{\sigma_r(\mathbf{W}^t \mathbf{M}^\top \mathbf{M})}$) vs number of training iterations $t$. All values are means plus or minus standard deviation over 5 independent random trials. The closer the condition number is to 1, the better.}
\label{tab:2}
\vskip 0.15in
\begin{center}
\begin{small}
\begin{sc}
\begin{tabular}{lccccc}
\toprule
              & $t=0$           & $t=200$         & $t=400$         & $t=600$         & $t=800$          \\
\midrule
(1) $\delta=0$   & $1.45 \pm 0.28$ & $1.37 \pm 0.23$   & $1.34 \pm 0.22$  & $1.30 \pm 0.25$  & $1.29 \pm 0.33$ \\
(1) $\delta=0.1$ & $1.45 \pm 0.28$ & $1.38  \pm 0.23$  & $1.35 \pm 0.30$  & $1.30 \pm 0.20$  & $1.24 \pm 0.22$\\
(1) $\delta=0.5$ & $1.45 \pm 0.28$ & $1.40 \pm 0.21$   & $1.41 \pm 0.14$  & $1.34 \pm 0.14$  & $1.25 \pm 0.12$ \\
(2)  Linear Tasks                 & $1.45 \pm 0.28$ & $1.43 \pm 0.25$   & $1.44 \pm 0.24$  & $1.45 \pm 0.29$  & $1.45 \pm 0.29$        \\
(3)  Regression                 & $1.83 \pm 0.41$ & $1.05  \pm 0.041$ & $1.07 \pm 0.031$ & $1.03 \pm 0.011$ & $1.11 \pm 0.11$ \\
\bottomrule
\end{tabular}
\end{sc}
\end{small}
\end{center}
\vskip -0.1in
\end{table}

\textbf{Separation between training with a fully-informative single task and multi-tasking.} Finally, we empirically verify the improved sample complexity of multi-tasking vs single-tasking in the same setting as Figure \ref{fig:multi} (whose full version is Figure \ref{fig:std} in this Appendix \ref{sec:sims}) with $T\in \{1,16\}$, but always using the full sparse parity task as the single training task in the $T=1$ case, unlike the figure in in the paper, in which tasks were drawn from $\mathcal{T}_{\text{s.p.}}$ in all cases. In each case we vary $n = n_1 = n_2$, where $n_1$ is the number of samples used per batch to compute the gradient with respect to the head $a$ and $n_2$ is the number of samples used per batch compute the gradients with respect the neuron weights $\mathbf{W}$ and bias $b$. We use $d=32,r=3$ and $m=16$ neurons. All cases use Gaussian initialization. We alternate between updates of the head and representation, as we did not observe any significant change in performance by running simultaneous updates of the head and representation for the single-task case. Learning rates and regularization parameters were tuned separately for $T=1$ and $T=16$, resulting in $(\eta=0.01, \lambda_{\mathbf{w}}=0.05, \lambda_{\mathbf{a}}=0.5)$ for $T=1$ and $(\eta=0.5, \lambda_{\mathbf{w}}=0.05, \lambda_{\mathbf{a}}=0.5)$ for $T=16$. We run 5 independent random trials for 800 iterations and plot means plus or minus standard deviations. As in Tables \ref{tab:1} and \ref{tab:2}, we evaluate the subspace learning error $\frac{\sigma_1(\mathbf{W}^t(\mathbf{I}_d-\mathbf{M}^{\top} \mathbf{M}))}{\sigma_r(\mathbf{W}^t \mathbf{M}^\top \mathbf{M})}$ in Table \ref{tab:3} and the condition number of the learned representation in the ground-truth subspace $\frac{\sigma_1(\mathbf{W}^t \mathbf{M}^\top \mathbf{M})}{\sigma_r(\mathbf{W}^t \mathbf{M}^\top \mathbf{M})}$ in Table \ref{tab:4}.
{\em We can see that for all $n$, multi-tasking leads to a representation that is much closer to a projection onto the ground-truth subspace, achieving $10-100\times$ smaller subspace learning error and approximately $5\times$ smaller condition number in the ground-truth subspace than single-task training on the full parity task.}


\begin{table}[t]
\caption{Subspace learning error ($\frac{\sigma_1(\mathbf{W}^t(\mathbf{I}_d-\mathbf{M}^\top \mathbf{M}))}{\sigma_r(\mathbf{W}^t \mathbf{M}^\top \mathbf{M})}$) vs number of training iterations $t$. All values are means plus or minus standard deviation over 5 independent random trials.}
 \label{tab:3}
\vskip 0.15in
\begin{center}
\begin{small}
\begin{sc}
\begin{tabular}{lccccc}
\toprule
              & $t=0$           & $t=200$         & $t=400$         & $t=600$         & $t=800$          \\
\midrule
$T=1,n=8$    & $2.89 \pm 0.36$ & $2.86 \pm 0.36$ & $2.80 \pm 0.39$ & $2.81 \pm 0.38$ & $2.86 \pm 0.37$  \\
$T=1,n=64$   & $2.89 \pm 0.36$ & $2.87 \pm 0.37$ & $2.81 \pm 0.37$ & $2.80 \pm 0.37$ & $2.84 \pm 0.34$\\
$T=1,n=512$  & $2.89 \pm 0.36$ & $2.86 \pm 0.36$ & $2.80 \pm 0.37$ & $2.79 \pm 0.37$ & $2.83 \pm 0.35$\\
$T=16,n=8$   & $2.89 \pm 0.36$ & $0.50 \pm 0.33$ & $0.27 \pm 0.03$ & $0.26 \pm 0.02$ & $0.27 \pm 0.03$      \\
$T=16,n=64$  & $2.89 \pm 0.36$ & $0.23 \pm 0.14$ & $0.08 \pm 0.01$ & $0.08 \pm 0.01$ & $0.08 \pm 0.01$  \\
$T=16,n=512$ & $2.89 \pm 0.36$ & $0.11 \pm 0.07$ & $0.03 \pm 0.01$ & $0.03 \pm 0.01$ & $0.03 \pm 0.01$ \\
\bottomrule
\end{tabular}
\end{sc}
\end{small}
\end{center}
\vskip -0.1in
\end{table}


\begin{table}[t] 
\caption{Condition number of $\mathbf{W}^t$ in ground-truth subspace ($\frac{\sigma_1(\mathbf{W}^t \mathbf{M}^\top \mathbf{M})}{\sigma_r(\mathbf{W}^t \mathbf{M}^\top \mathbf{M})}$) vs number of training iterations $t$. All values are means plus or minus standard deviation over 5 independent random trials. The closer the condition number is to 1, the better.}
\label{tab:4}
\vskip 0.15in
\begin{center}
\begin{small}
\begin{sc}
\begin{tabular}{lccccc}
\toprule
              & $t=0$           & $t=200$         & $t=400$         & $t=600$         & $t=800$          \\
\midrule
$T=1,n=8$    & $1.72 \pm 0.25$ & $1.91 \pm 0.13$ & $2.60 \pm 0.36$ & $3.94 \pm 1.08$ & $6.72 \pm 2.81$ \\
$T=1,n=64$   & $1.72 \pm 0.25$ & $1.92 \pm 0.13$ & $2.59 \pm 0.35$ & $3.98 \pm 1.08$ & $6.70 \pm 2.61$ \\
$T=1,n=512$  & $1.72 \pm 0.25$ & $1.92 \pm 0.13$ & $2.57 \pm 0.36$ & $3.93 \pm 1.08$ & $6.61 \pm 2.63$ \\
$T=16,n=8$   & $1.72 \pm 0.25$ & $2.63 \pm 1.89$ & $1.31 \pm 0.22$ & $1.27 \pm 0.18$ & $1.27 \pm 0.19$       \\
$T=16,n=64$  & $1.72 \pm 0.25$ & $3.38 \pm 2.68$ & $1.31 \pm 0.28$ & $1.25 \pm 0.21$ & $1.23 \pm 0.18$\\
$T=16,n=512$ & $1.72 \pm 0.25$ & $3.38 \pm 2.69$ & $1.32 \pm 0.30$ & $1.26 \pm 0.23$ & $1.24 \pm 0.20$\\
\bottomrule
\end{tabular}
\end{sc}
\end{small}
\end{center}
\vskip -0.1in
\end{table}

\textbf{Additional hyperparameters.}
Unless otherwise noted, we used  $\lambda_{\mathbf{w}} = 0.05$, $\lambda_{\mathbf{a}}=0.5$ and $\eta =0.1$ (learning rate for both $\mathbf{a}_i$ and $\mathbf{W}$).
after tuning each parameter in the set $ \{0.01,0.05,0.1,0.5,1\}$, separately for single task and multi-task cases, unless $r=4$. We tuned $\nu_{\mathbf{w}} \in \{0.001,0.01,0.1,1\}$, and used $\nu_{\mathbf{w}} = 0.01$ for $r\leq 3$, unless otherwise noted. For $r=4$, we found that setting $\lambda_{\mathbf{w}}=0.1$ and $\nu_{\mathbf{w}}=0.001$ improved performance, but did not see improvement by changing the other parameters, so kept them the same. We used a smaller learning rate of 0.001 for the bias in all cases, although we reset the bias randomly before downstream evaluation.
\end{document}